\theoremstyle{plain}
\newtheorem{theorem}{Theorem}[section]
\newtheorem{proposition}[theorem]{Proposition}
\newtheorem{lemma}[theorem]{Lemma}
\newtheorem{corollary}[theorem]{Corollary}
\theoremstyle{definition}
\newtheorem{definition}[theorem]{Definition}
\newtheorem{assumption}[theorem]{Assumption}
\theoremstyle{remark}
\newtheorem{remark}[theorem]{Remark}
\newcommand{\rb}[1]{\left(#1\right)} 
\newcommand{\bigrb}[1]{\big({#1}\big)} 
\newcommand{\bigcb}[1]{\big\{{#1}\big\}} 
\newcommand{\Bigrb}[1]{\Big({#1}\Big)} 
\newcommand{\bb}[1]{\left[#1\right]} 
\newcommand{\cb}[1]{\left\{{#1}\right\}} 
\newcommand{\midv}{\middle\vert}
\newcommand{\hfrac}[2]{{#1}/{#2}}
\newcommand{\bigabs}[1]{\big|{#1}\big|}
\newcommand{\ie}{\textit{i.e., }}
\newcommand{\eg}{\textit{e.g., }}
\newcommand{\abs}[1]{{\left\lvert{#1}\right\rvert}}
\newcommand{\vect}[1]{\mathbf{#1}} 
\newcommand{\xor}{\mathrm{XOR}}
\newcommand{\Id}{\mathrm{Id}}
\newcommand{\calD}{\mathcal{D}}
\newcommand{\calP}{\mathcal{P}}
\newcommand{\calQ}{\mathcal{Q}}
\newcommand{\calH}{\mathcal{H}}
\newcommand{\calL}{\mathcal{L}}
\newcommand{\calX}{\mathcal{X}}
\newcommand{\trueh}{{h^\star}}
\newcommand{\bbP}{\mathbb{P}}
\newcommand{\bbE}{\mathbb{E}}
\newcommand{\bbR}{\mathbb{R}}
\newcommand{\bbN}{\mathbb{N}}
\newcommand{\bbF}{\mathbb{F}}
\newcommand{\bbZ}{\mathbb{Z}}
\newcommand{\btnparams}[1]{\Theta^{\text{\fcbtn}} \rb{{#1}}}
\def\fcbtn{{BTN}}
\newcommand{\dims}{\underline{d}}
\newcommand{\tdims}{\dims^{\star}}
\newcommand{\memdims}{\tilde{\dims}_{S}}
\newcommand{\dimsmax}{\dims_{\mathrm{max}}}
\newcommand{\tdimsmax}{\dims^{\star}_{\mathrm{max}}}
\newcommand{\memdimsmax}{\tilde{\dims}_{S,\mathrm{max}}}
\newcommand{\xordims}{\dims_{\xor}}
\newcommand{\exError}[1]{\calL_{\calD} \rb{#1}}
\newcommand{\emError}[1]{\calL_{S} \rb{#1}}
\newcommand{\errorRate}[0]{\varepsilon^{\star}}
\DeclareMathOperator*{\argmin}{\arg\!\min}
\DeclareMathOperator{\poly}{poly}
\DeclareMathOperator{\polylog}{polylog}
\newcommand{\btheta}{\boldsymbol{\theta}}
\newcommand{\bgamma}{\boldsymbol{\gamma}}
\newcommand{\interprob}{p_S}
\newcommand{\ind}[1]{\mathbb{I} \cb{#1}}
\newcommand{\inter}{\emError{h} = 0}
\newcommand{\interrule}{\emError{\lrule} = 0}
\newcommand{\Dmax}{\mathcal{D}_{\mathrm{max}}}
\newcommand{\epstr}{\hat{\varepsilon}_{\text{tr}}}
\newcommand{\epsgen}{\hat{\varepsilon}_{\text{gen}}}
\newcommand{\lrule}{A \rb{S}}
\newcommand{\intind}{\ind{\inter}}
\newcommand{\prior}{\calP}
\newcommand{\posterior}{\calP_{S}}
\newcommand{\sfc}{h_{\btheta}}
\newcommand{\Hbtn}[1]{\mathcal{H}^{\mathrm{\fcbtn}}_{#1}}
\newcommand{\Ncorrupt}{N_1}
\newcommand{\LT}{\widehat{\mathrm{LT}}}
\newcommand{\concat}{\mathbin{\&}}
\newcommand{\fdomain}{\hat{\mathcal{X}}}
\newcommand{\cnsstnts}{{\mathrm{consistent} \, S}}
\newcommand{\incnsstnts}{{\mathrm{inconsistent} \, S}}
\definecolor{optimal}{rgb}{0.846 , 0.2511, 0.396}
\definecolor{arbitrary}{rgb}{0.06, 0.48705882, 0.628}
\definecolor{independent}{rgb}{0.95, 0.65294118, 0.10235294}
\def\secref#1{Section~\ref{#1}}
\def\lemref#1{Lemma~\ref{#1}}
\def\thmref#1{Theorem~\ref{#1}}
\def\corref#1{Corollary~\ref{#1}}
\def\defref#1{Def.~\ref{#1}}
\def\asmref#1{Assumption~\ref{#1}}
\def\appref#1{Appendix~\ref{#1}}
\def\figref#1{Figure~\ref{#1}}
\def\remref#1{Remark~\ref{#1}}
\newenvironment{recall}[1][\proofname]{\par
\normalfont \topsep6\p@\@plus6\p@\relax
\trivlist
\item\relax
{\bfseries
Recall #1}%
{\bfseries\@addpunct{.}}\hspace\labelsep\ignorespaces
}
\title{Provable Tempered Overfitting of\\ Minimal Nets and Typical Nets}
\author{%
  Itamar Harel \\
  Technion
  \\
  \!\!\texttt{itamarharel01@gmail.com}
  \And
  William M. Hoza \\
  The University of Chicago
  \And
  Gal Vardi \\
  Weizmann Institute of Science\!
  %
  \And
  Itay Evron \\
  Technion 
  %
  \And
  Nathan Srebro \\
  Toyota Technological Institute at Chicago
  %
  \And
  Daniel Soudry \\
  Technion 
}
\begin{document}

\maketitle

\begin{abstract}
We study the overfitting behavior of fully connected deep Neural Networks (NNs) with binary weights fitted to perfectly classify a noisy training set. We consider interpolation using both the smallest NN (having the minimal number of weights) and a random interpolating NN. For both learning rules, we prove overfitting is tempered.  
Our analysis rests on a new bound on the size of a threshold circuit consistent with a partial function. To the best of our knowledge, ours are the first theoretical results on benign or tempered overfitting that: (1) apply to deep NNs, and (2) do not require a very high or very low input dimension.
\end{abstract}

\section{Introduction}

Neural networks (NNs) famously exhibit strong generalization capabilities, seemingly in defiance of traditional generalization theory.
Specifically, NNs often generalize well empirically even when trained to interpolate the training data perfectly \citep{zhang2021understanding}. 
This motivated an extensive line of work attempting to explain the overfitting behavior of NNs, and particularly their generalization capabilities when trained to perfectly fit a training set with corrupted labels 
(\eg \cite{bartlett2020benignpnas,frei2022benign,mallinar2022benign,kornowski2024tempered}).

In an attempt to better understand the aforementioned generalization capabilities of NNs,
\citet{mallinar2022benign} proposed a taxonomy of benign, tempered, and catastrophic overfitting.
An algorithm that perfectly interpolates a training set with corrupted labels, \ie an interpolator, is said to have tempered overfitting if its generalization error is neither benign nor catastrophic ---
not optimal but much better than trivial. 
However, the characterization of overfitting in NNs is still incomplete, especially in \emph{deep} NNs when the input dimension is neither very high nor very low. 
In this paper, we aim to understand the overfitting behavior of deep NNs in this regime.

We start by analyzing tempered overfitting in ``min-size'' NN interpolators,
\ie whose neural layer widths are selected to minimize the total number of weights.
The number of parameters in a model is a natural complexity measure in learning theory and practice.
For instance, it is theoretically well understood that $L_1$ regularization in a sparse linear regression setting yields a sparse regressor. 
Practically, finding small-sized deep models is a common objective used in pruning (\eg \citep{han2015learning}) and neural architecture search (\eg \citep{liu2017learning}).
Recently, \citet{manoj2023interpolation} proved that the \emph{shortest program} (Turing machine) that perfectly interpolates noisy datasets exhibits tempered overfitting, illustrating how a powerful model can avoid catastrophic overfitting by returning a min-size interpolator.

Furthermore, we study tempered overfitting in random (``typical'') interpolators ---
NNs sampled uniformly from the set of parameters that perfectly fit the training set.
Given a narrow teacher model and no label noise,
\citet{buzaglo2024uniform} recently proved that such typical interpolators, which may be \emph{highly overparameterized}, generalize well.
This is remarkable since these interpolators do not rely on explicit regularization or the implicit bias of any gradient algorithm.
An immediate question arises --- 
what kind of generalization behavior do typical interpolators exhibit in the presence of label noise?
\linebreak
This is especially interesting in light of theoretical and empirical findings that typical NNs implement low-frequency functions \citep{rahaman2019spectral,teney2024neural}, while interpolating noisy training sets may require high frequencies.

For both the min-size and typical NN interpolators, we study the generalization behavior under an underlying \emph{noisy} teacher model.
We focus on deep NNs with binary weights and activations (similar NNs are used in resource-constrained environments; \eg \citep{hubara2016binarized}).
Our analysis reveals that these models exhibit a tempered overfitting behavior that depends on the statistical properties of the label noise. 
For independent noise, in addition to an upper bound we also find a lower bound on the expected generalization error.
Our results are illustrated in \figref{fig:tempered} below, in which the yellow line in the right panel is similar to empirically observed linear behavior \citep[e.g.,][Figures 2, 3, and 6]{mallinar2022benign}.

\begin{figure}[ht]
    \centering
    \includegraphics[width=0.9\linewidth]{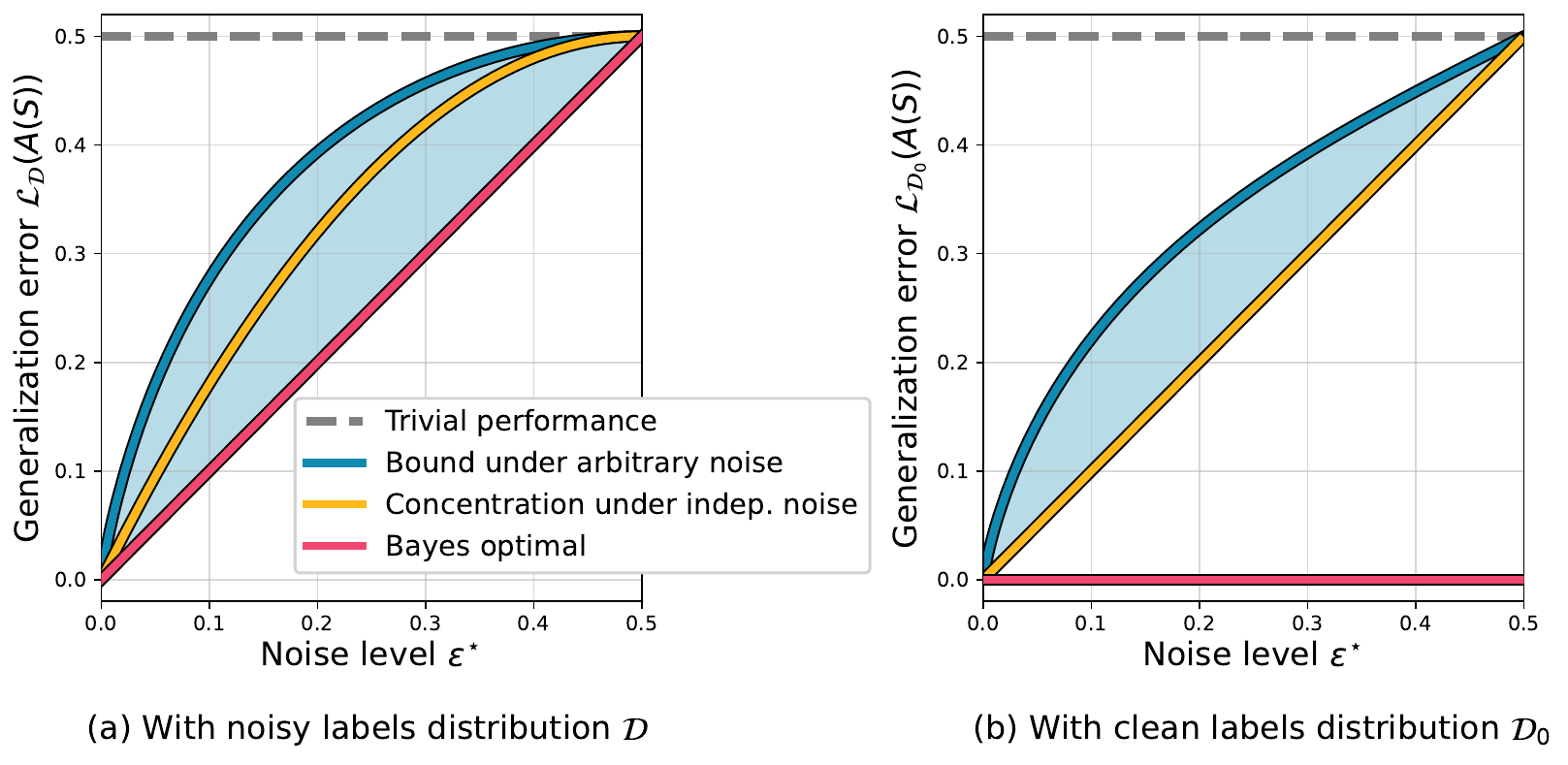}
  \caption{
  \label{fig:tempered}
  \textbf{Types of overfitting behaviors.}
  Consider a binary classification problem of learning a realizable distribution $\calD_0$.
  Let $\calD$ be the distribution induced by adding an {$\varepsilon^{\star}$-probability} for a data point's label to be flipped relative to $\calD_0$.
  Suppose a model is trained with data from $\calD$. \linebreak
  Then, assuming the classes are balanced,
  the trivial generalization performance is $0.5$ 
  (in \textcolor{darkgray}{gray}; \eg with a constant predictor).
  \textbf{Left.}
  Evaluating the model on $\calD$, a Bayes-optimal hypothesis (in \textcolor{optimal}{red}) obtains a generalization error of $\varepsilon^{\star}$.
  For large enough training sets,
  our results (\secref{sec:tempered}) dictate a tempered overfitting behavior illustrated above.
  For arbitrary noise, the error is
  {approximately}
  bounded by 
  ${1\!-\!
    {\errorRate}^{\errorRate} \rb{1- \errorRate}^{1-\errorRate}}
    $
  ({\textcolor{arbitrary}{blue}}). 
  For independent noise, the error is concentrated around the tighter 
  ${2 \errorRate \rb{1- \errorRate}}$
  ({\color{independent} yellow}).
  A similar figure was previously shown in \citet{manoj2023interpolation} for shortest-program interpolators. 
  \textbf{Right.} Assuming independent noise, the left figure can be transformed into the error of the model on $\calD_0$ (see \lemref{app-lem: indep noisy clean relation}).
  The linear behavior in the independent setting ({\color{independent} yellow}) is similar to the behavior observed empirically in \citet[Figures 2, 3, and 6]{mallinar2022benign}.
  }
\end{figure}

The contributions of this paper are:
\vspace{-1mm}
\begin{itemize}[leftmargin=5mm]\itemsep3.pt
    \item Returning a min-size NN interpolator is a natural learning rule that follows the Occam's-razor principle. 
    We show that this learning rule exhibits tempered overfitting
    (\secref{sec:min-size-interpolators}). 
       
    \item We prove that overparameterized random NN interpolators typically  exhibit tempered overfitting with generalization close to a min-size NN interpolator
    (\secref{sec:posterior-sampling}).

    \item To the best of our knowledge, ours are the first theoretical results on benign or tempered overfitting that: (1) apply to deep NNs, and (2) do not require a very high or very low input dimension.
    
    \item 
    The above results rely on a key technical result ---
    datasets generated by a constant-size teacher model with label noise can be interpolated\footnote{As long as it has no repeated data points with opposite labels.
    See our \defref{def:consistent} of consistent datasets.
    } 
    using a NN of constant depth with threshold activations, binary weights, a width sublinear in $N$, 
    and roughly $H(\errorRate) \cdot N$ weights, where $H(\errorRate)$ is the binary entropy function of the fraction of corrupted labels (\secref{sec:memorization}).
\end{itemize}

\section{Setting}

\paragraph{Notation.} 
We reserve bold lowercase characters for vectors, bold uppercase characters for matrices, and regular uppercase characters for random elements.
We use $\log$ to denote the base 2 logarithm, and $\ln$ to denote the natural logarithm.
For a pair of vectors $\dims=\rb{d_1,\dots,d_L},\dims^{\prime}=\rb{d_1^{\prime},\dots,d_L^{\prime}} \in \mathbb{N}^L$ we denote $\dims \le \dims^{\prime}$ if for all $l\in\bb{L}$, $d_l \le d_l^{\prime}$. 
We use $\oplus$ to denote the XOR between two binary $\cb{0, 1}$ values, and $\odot$ to denote the Hadamard (elementwise) product between two vectors.
We use $H \rb{\calD}$ to denote the entropy of some distribution $\calD$.
Finally, we use $\mathrm{Ber} \rb{\varepsilon}$ for the Bernoulli distribution with parameter $\varepsilon$, and $H\rb{\varepsilon}$ for its entropy, which is the binary entropy function.

\subsection{Model: Fully connected threshold NNs with binary weights}
\label{sec:model}

Similarly to \citet{buzaglo2024uniform}, we define the following model.

\begin{definition}[Binary threshold networks]  \label{def:ftn}
For a depth $L$, widths ${\dims=\rb{d_1,\dots,d_L}}$, input dimension $d_0$, a scaled-neuron fully connected binary threshold NN, or binary threshold network, is a mapping $\btheta \mapsto h_{\btheta}$ such that $h_{\btheta} : \cb{0, 1}^{d_0} \to \cb{0, 1}^{d_L}$, parameterized by
\begin{align*}
 \btheta = \left\{ \mathbf{W}^{\left(l\right)},\mathbf{b}^{\left(l\right)},\bgamma^{\left(l\right)}  \right\}_{l=1}^{L} \,,
\end{align*}
where for every layer $l\in \bb{L}$,
\begin{align*}
    \mathbf{W}^{\left(l\right)} \!\in \calQ^W_l \!=\! \cb{0,1}^{d_{l}\times d_{l-1}}, \; \bgamma^{\left(l\right)} \!\in \calQ_l^{\gamma} \!=\! \cb{-1,0,1}^{d_{l}}, \; \mathbf{b}^{\left(l\right)} \!\in \calQ_l^b \!=\! \cb{-d_{l-1} + 1, \dots, d_{l-1}}^{d_{l}} \,.
\end{align*}
This mapping is defined recursively as $\sfc \rb{\mathbf{x}} = h^{\rb{L}} \rb{\mathbf{x}}$ where
\begin{align*}
    h^{(0)} \left(\mathbf{x}\right) &= \mathbf{x} \,, \\
    \forall l\in\bb{L} \quad h^{(l)} \left(\mathbf{x}\right) &= \ind{ \left(\bgamma^{\rb{l}}\odot \bigrb{\mathbf{W}^{\left(l\right)} h^{(l-1)} \left(\mathbf{x}\right)} + \mathbf{b}^{\left(l\right)}\right) > \mathbf{0} } \,.
\end{align*}
\end{definition}
We denote the number of weights by $w(\dims)=\sum_{l=1}^L d_l d_{l-1}$, and the total number of neurons by $n \rb{\dims} = \sum_{l=1}^L d_l$.
The total number of parameters in such a NN is ${M\rb{\dims} = w\rb{\dims} + 2 n\rb{\dims}}$.
We denote the set of functions representable as binary networks of widths $\dims$ by $\Hbtn{\dims}$ and their corresponding parameter space by $\btnparams{\dims}$. 

\begin{remark}
Our generalization results are for the above formulation of neuron scalars $\boldsymbol{\gamma}$, \ie ternary scaling \textit{before} the activation. However, we could have derived similar results if, instead, we changed the scale $\bgamma$ to appear \textit{after} the activation and also adjusted the range of the biases (see \appref{app-sec: dale}).
Although we chose the former for simplicity, the latter is similar to the ubiquitous phenomenon in neuroscience known as ``Dale's Law'' \citep{STRATA1999349}.
This law, in a simplified form, means that all outgoing synapses of a neuron have the same effect, \eg are all excitatory (positive) or all inhibitory (negative). 
\end{remark}

\begin{remark}[Simple counting argument]
    Let ${\dimsmax \triangleq \max\cb{d_1, \dots, d_{L-1}}}$ be the maximal hidden-layer width. 
    Then, combinatorially, it holds that
    \begin{align*}
        \log \underbrace{\bigabs{\Hbtn{\dims}}}_{
        \text{\# hypotheses}
        } 
        \le
        \log 
        \underbrace{\bigabs{\btnparams{\dims}}}_{
        \substack{\text{\# parameter}
        \\
        \text{~~assignments}}
        } 
        \le
        \underbrace{w\rb{\dims}}_{
        \text{\# weights}
        } + 
        \underbrace{n\rb{\dims}}_{
        \text{\# neurons}
        } 
        \!
        \big( \log (3) + \log\, 
        (
        \!\!
        \underbrace{2\dimsmax}_{
            \substack{\text{maximal}
            \\
            \text{quantization}
            }
        }
        \!\!)
        \big) \,.
    \end{align*}
    This implies, using classical PAC bounds
    \cite{shalev2014understanding},
    that the sample complexity of learning with the \emph{finite} hypothesis class $\Hbtn{\dims}$ is $O \rb{w\rb{\dims} + n\rb{\dims} \log \dimsmax}$ (a more refined bound on $\big|\Hbtn{\dims}\big|$ is given in \lemref{app-lem: single archi description length}).
    In \secref{sec:tempered} we show how this generalization bound can be improved in our setting.
\end{remark}

\subsection{Data model: A teacher network and label-flip noise}
\label{sec:data-model}

\paragraph{Data distribution.} 
Let $\calX = \cb{0, 1}^{d_0}$ and let ${\cal D}$ be some joint distribution over a finite sample space ${\cal X}\times\left\{ 0,1\right\} $ of features and labels.

\vspace{2mm}

\begin{assumption}[Teacher assumption]
\label{asm:teacher}
We assume a ``teacher NN'' $h^{\star}$ generating the labels.
A label flipping noise is then added with a noise level of $\errorRate = \bbP_{(X, Y) \sim \calD} \rb{Y \neq h^{\star}(X)}$, or equivalently
\begin{align*}
Y \oplus h^\star \rb{X} \sim \mathrm{Ber} \rb{\errorRate}\,.
\end{align*}
The label noise is \emph{independent} when $Y \oplus h^\star \rb{X}$ is independent of the features $X$ (in \secref{sec:tempered} it leads to stronger generalization results compared to ones for arbitrary noise).
\end{assumption}

\subsection{Learning problem: Classification with interpolators}

We consider the problem of binary classification over a training set $S = \cb{\rb{\vect{x}_i, y_i}}_{i=1}^N$ with $N$ data points, sampled  
from the noisy joint distribution $\mathcal{D}$ described above.
We always assume that $S$ is sampled i.i.d., and therefore, with some abuse of notation, 
we use $\calD \rb{S} = \calD^N \rb{S} = \prod_{i=1}^N \calD \rb{\vect{x}_i, y_i}$. 
For a hypothesis $h: \calX \to \cb{0, 1}$, we define the risk, \ie the generalization error w.r.t. $\calD$,
as $$\exError{h} \triangleq \bbP_{\rb{X, Y} \sim \calD} \left(h(X) \neq Y \right)\,.$$
We also define the empirical risk, \ie the training error,
$$\emError{h} \triangleq \frac{1}{N} \sum_{n=1}^N \ind{ h(\vect{x}_n)\neq y_n}\,.$$
We say a hypothesis is an \emph{interpolator} if $\emError{h} = 0$.

In this paper, we are specifically interested in \emph{consistent} datasets that can be perfectly fit.
This is formalized in the following definition.
\begin{definition}[Consistent datasets]
\label{def:consistent}
A dataset $S = \cb{\rb{\mathbf{x}_i, y_i}}_{i=1}^N$ is consistent if
\begin{align*}
    \forall i,j\in\bb{N} \; \vect{x}_i = \vect{x}_j \Longrightarrow y_i = y_j\,.
\end{align*}
\end{definition}

Motivated by modern NNs which are often extremely overparameterized, we are interested in the generalization behavior of interpolators, \ie models that fit a consistent training set perfectly.
Specifically, we consider Framework~\ref{alg: inter frame}.
While this framework is general enough to fit any minimal training error models, we shall be interested in the generalization of $\lrule$ {in cases where} the training set is {most likely} consistent (\defref{def:consistent}).

\begin{algorithm}[ht]
   \caption{Learning interpolators}
   \label{alg: inter frame}
\begin{algorithmic}
   \STATE {\bfseries Input:} A training set $S$. 
   \STATE \textbf{Algorithm:}
   \STATE \hskip1.5em {\bfseries if} $S$ is consistent:
   \STATE \hskip3em {\bfseries return} an interpolator $\lrule = h$ (such that $\emError{h} = 0$)
   \STATE \hskip1.5em {\bfseries else:}
   \STATE \hskip3em {\bfseries return} 
   {an arbitrary hypothesis
   $\lrule = h$
   (\eg $h(\mathbf{x})=0, \forall \mathbf{x}$)}
\end{algorithmic}
\end{algorithm}

In \secref{sec:tempered}, we analyze the generalization of two learning rules that fall under this framework:
(1) learning min-size NN interpolators
and (2) sampling random NN interpolators.
Our analysis reveals a tempered overfitting behavior in both cases.

\section{Interpolating a noisy training set}
\label{sec:memorization}
Our main generalization results rely on a key technical result, which shows how to memorize any consistent training set generated according to our noisy teacher model.
We prove that the memorizing ``student'' NN can be small enough to yield meaningful generalization bounds in the next sections.

We begin by noticing that under a teacher model $h^\star$ (\asmref{asm:teacher}), the labels of a consistent dataset $S$
(\defref{def:consistent}) can be decomposed as
\begin{align} \label{eq: xor decomp}
\forall i\in\bb{N}\; y_i = h^\star \rb{\mathbf{x}_i} \oplus f\rb{\mathbf{x}_i}\,,
\end{align}
where $f: \cb{0, 1}^{d_0} \to \cb{0,1}$ indicates a label flip in the $i\textsuperscript{th}$ example, and can be defined arbitrarily for $\mathbf{x} \notin S$.
Motivated by this observation, we now show an upper bound for the dimensions of a network interpolating $S$, by bounding the dimensions of an NN implementing an arbitrary ``partial'' function $f$ defined on $N$ points.
\pagebreak
\begin{theorem} [Memorizing the label flips] \label{thm: partial fctn net cxty}
Let $f \colon \{0, 1\}^{d_0} \to \{0, 1, \star\}$ be any  function.\footnote{When $f(\vect{x}) = \star$, the interpretation is that $f$ is ``undefined'' on $\vect{x}$, \ie $f$ is a ``partial'' function.} 
\linebreak
Let $N = |f^{-1}(\{0, 1\})|$ and $\Ncorrupt = |f^{-1}(1)|$. 
There exists a depth-$14$ binary threshold network 
$\tilde{h} \colon \{0, 1\}^{d_0} \to \{0, 1\}$, with widths $\tilde{\dims}$, satisfying the following.
\vspace{-.2em}
\begin{enumerate}[leftmargin=5mm]\itemsep.7pt
\item $\tilde{h}$ is consistent with $f$, \ie for every $\vect{x} \in \{0, 1\}^{d_0}$, if $f(\vect{x}) \in \{0, 1\}$, then $\tilde{h}(\vect{x}) = f(\vect{x})$.
\item The total number of weights in $\tilde{h}$ is at most $(1 + o(1)) \cdot \log \binom{N}{N_1} + \poly(d_0)$. 
More precisely,
\[
w\rb{\tilde{\dims}} = \log \binom{N}{N_1} + \left(\log \binom{N}{\Ncorrupt}\right)^{3/4} \cdot \polylog N + O(d_0^2 \cdot \log N)\,.
\]
\item Every layer of $\tilde{h}$ has width at most $(\log \binom{N}{\Ncorrupt})^{3/4} \cdot \poly(d_0)$. 
More precisely,
\[
\tilde{\dims}_{\mathrm{max}}= \left(\log \binom{N}{\Ncorrupt}\right)^{3/4} \cdot \polylog N + O(d_0 \cdot \log N)\,.
\]
\end{enumerate}
\end{theorem}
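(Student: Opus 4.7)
The plan is to reduce the input dimension via a small hashing module and then memorize the resulting low-dimensional partial function using an entropy-efficient blockwise encoding of the labels. First I would construct a depth-$O(1)$ binary threshold module $\phi\colon\{0,1\}^{d_0}\to\{0,1\}^m$ with $m=O(\log N)$ that is injective on the $N$ defined inputs of $f$; thresholded random halfspaces suffice by a union bound over the $\binom{N}{2}$ pairs of defined inputs. This module has width $O(d_0\log N)$ and $O(d_0^2\log N)$ weights --- precisely the additive slack appearing in the theorem's bounds. After applying $\phi$, it suffices to memorize the induced partial function $\hat f\colon\{0,1\}^m\to\{0,1,\star\}$ on the same $N$ points.

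\textbf{Blockwise entropy coding.} Order the defined points by their hash values, partition them into $T=\lceil N/B\rceil$ blocks of size $B$ (to be tuned below), and for each block $t$ encode the label pattern by a single integer $\alpha_t\in\{0,\dots,\binom{B}{k_t}-1\}$, where $k_t$ is the number of $1$-labels in block $t$. These $\alpha_t$'s are stored directly in the network parameters. By concavity of the binary entropy,
\[
\sum_{t}\log\binom{B}{k_t}\;\le\;\log\binom{N}{N_1}+O(T\log B),
\]
so the payload matches the leading $\log\binom{N}{N_1}$ term up to lower-order slack. At inference, the network extracts from $\phi(\mathbf{x})$ the block index $t(\mathbf{x})$ and within-block position $p(\mathbf{x})$ by constant-depth arithmetic on $m$ bits, routes $\alpha_{t(\mathbf{x})}$ into a shared decoder via a multiplexer, and returns the $p(\mathbf{x})$-th bit of the combinatorial decoding of $\alpha_{t(\mathbf{x})}$.

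\textbf{Tuning and depth accounting.} The non-leading cost decomposes into the shared decoder of size $\poly(B)$, the block-selection multiplexer of size roughly $T\cdot\poly(B)$, and the entropy slack $O(T\log B)$. Choosing $B\approx(\log\binom{N}{N_1})^{1/4}\polylog N$ balances these three contributions at $(\log\binom{N}{N_1})^{3/4}\polylog N$, matching the non-leading term in both the weight and the width bounds. The total depth is the sum of the constant depths of the hash, the arithmetic extracting $t(\mathbf{x})$ and $p(\mathbf{x})$, the multiplexer, and the decoder; with careful layer sharing this fits within the stated $14$.

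\textbf{Main obstacle.} The most delicate step is implementing the block-selection multiplexer in constant threshold-depth while staying within the $(\log\binom{N}{N_1})^{3/4}\polylog N$ budget, subject to the restricted parameter sets of \defref{def:ftn} (binary $\mathbf{W}^{(l)}$, ternary $\bgamma^{(l)}$, bounded integer $\mathbf{b}^{(l)}$). The exponent $3/4$ arises as the natural optimum of the three-way trade-off between decoder size, routing cost, and entropy slack, and verifying that a multiplexer-plus-decoder realization is compatible with the quantized parameter sets --- without incurring extra width or depth --- is where I expect the bookkeeping to be most intricate.
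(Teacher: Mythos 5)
Your high-level picture is closer to the paper's than it might appear: the paper also uses a dimension-reducing hash, a block decomposition with $T$ blocks whose boundaries/keys are hard-coded into the weights, a multiplexer that routes the current block's key to a shared ``decoder,'' and a balance that produces the exponent $3/4$. But your instantiation has two gaps that I do not see how to close.

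\textbf{Fixed blocks give the wrong non-leading term.} Write $M = \log\binom{N}{N_1}$. With blocks of fixed size $B$ you have $T = N/B$ blocks, and the multiplexer (a function constant on intervals, as in the paper's Lemma on that topic) incurs a routing cost of order $T \cdot \polylog$, on top of the $M$-bit payload. Your choice $B \approx M^{1/4}$ gives $T \approx N/M^{1/4}$, so the routing cost is $\approx (N/M^{1/4})\polylog N$. This equals $M^{3/4}\polylog N$ only when $M = \Theta(N)$, i.e.\ when $N_1/N$ is bounded away from $0$ and $1$. When $N_1 \ll N$ (so $M \ll N$), you get $N/M^{1/4} \gg M^{3/4}$, and the theorem's bound fails. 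The paper avoids this by making the block boundaries \emph{adaptive}: the interval endpoints $\ell_0 \le \cdots \le \ell_T$ are part of the hard-coded seed, chosen so that each block carries roughly $M/T$ bits of entropy regardless of where the $1$-labels sit; then $T = M^{3/4}$ suffices unconditionally. You would need to replicate this.

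\textbf{The decoder cost is tied to the wrong parameter.} Once you move to entropy-balanced blocks, a block's \emph{spatial extent} (number of defined points or interval length it spans) is not controlled by its entropy budget; a block contributing $M^{1/4}$ bits can span $\Theta(R)$ indices when $\alpha$ is small. Your decoder performs combinatorial-number-system decoding, whose constant-depth threshold realization costs $\poly$ in the \emph{block size}, not in the payload length. So the decoder blows up exactly when you move to the adaptive blocks needed to fix the first gap (and even for fixed blocks, you would need to verify that combinatorial decoding — which is inherently a $B$-step sequential scan over binomial coefficients — admits a constant-depth threshold circuit of size $O(B^3)$; this is not addressed and is nontrivial). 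The paper circumvents this entirely: the per-block ``decoder'' is a low-error hitting set generator for conjunctions (built from a constant-depth $k$-wise uniform generator plus pairwise hashing, following the error-reduction technique of Hoza and Zuckerman), whose single-bit evaluation cost scales with the seed length $\approx M^{1/4} + \log R$, \emph{not} with the block's spatial extent. Correctness comes from the hitting property rather than from exact decoding, which is what makes the per-block compression possible. One more minor note: your first-layer bookkeeping mixes two constructions — thresholded $\pm 1$ halfspaces need only $O(\sqrt{d_0}\log N)$ output bits and give $O(d_0^{3/2}\log N)$ weights (plus an $O(d_0^2)$ negation layer under the binary-weight model), which is the Remark~3.2 variant; the $O(d_0^2 \log N)$ bound in the theorem actually comes from an $\bbF_2$-linear (parity) hash implemented via the depth-two parity gadget.
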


The main takeaway from \thmref{thm: partial fctn net cxty} is that label flips can be memorized with networks with a number of parameters that is optimal in the leading order $N \cdot \emError{h^\star}$, i.e., not far from the minimal information-theoretical value.
The proofs for this section are given in \appref{apx:mem}. 
\paragraph{Proof idea.} Denote $S = f^{-1} \rb{\cb{0,1}}$.
We employ established techniques from the pseudorandomness literature to construct an efficient \textit{hitting set generator} (HSG)\footnote{A variant of the \textit{pseudorandom generator} (PRG) concept.} for the class of all conjunctions of literals. 
The HSG definition implies that there exists a seed on which the generator outputs a truth table that agrees with $f$ on $S$. 
The network $\tilde{h}$ computes any requested bit of that truth table.

\vspace{0.2em}

\begin{remark} [Dependence on $d_0$] \label{rem: dep on d0}
In \appref{apx:d02} we show that the $O \rb{ d_0^2 \cdot \log N }$ term is nearly tight, yet it can be relaxed when using some closely related NN architectures.
For example, with a single additional layer of width $\Omega \rb{ \sqrt{d_0} \cdot \log N }$ with ternary weights in the first layer, \ie ${\calQ_1^W \!=\! \cb{-1, 0, 1}}$ instead of $\cb{0, 1}$, the $O \rb{ d_0^2 \cdot \log N }$ term of \thmref{thm: partial fctn net cxty} can be improved to 
$O \rb{ d_0^{3/2} \!\cdot \log N + d_0 \!\cdot \log^3 N }$.
\end{remark}

\vspace{0.2em}

Next, with the bound on the dimensions of a NN implementing $f$, we can bound the dimensions of a min-size interpolating NN by bounding the dimensions of a NN implementing the XOR of $\tilde{h}$ and $h^\star$.

\begin{lemma}[XOR of two NNs]
\label{lem: xor of two nns}
    Let $h_1, h_2$ be two binary NNs with depths $L_1\le L_2$ and widths $\dims^{\rb{1}},\dims^{\rb{2}}$,
    respectively.
    Then, there exists a NN $h$ with depth $L_{\xor}\triangleq L_2 + 2$ and widths 
    \begin{align*}
    \xordims \triangleq \rb{d_1^{\rb{1}} + d_1^{\rb{2}},
    \,
    \dots,\,d_{L_1}^{\rb{1}} + d_{L_1}^{\rb{2}},\,
    d_{L_1 + 1}^{\rb{2}} + 1,\,\dots,\,
    d_{L_2}^{\rb{2}} + 1,\, 2,\, 1}\,,
    \end{align*}
    such that for all inputs $\mathbf{x} \in \cb{0, 1}^{d_0}$, $h\rb{\mathbf{x}} = h_1 \rb{\mathbf{x}} \oplus h_2 \rb{\mathbf{x}}$.
\end{lemma}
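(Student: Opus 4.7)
I would build $h$ in three segments, mirroring the three blocks of $\xordims$. For $l = 1, \dots, L_1$, I run $h_1$ and $h_2$ in parallel on the shared input via block-diagonal weight matrices: the first $d_l^{(1)}$ rows of $\mathbf{W}^{(l)}$ implement $h_1$'s layer $l$ (reading only the first $d_{l-1}^{(1)}$ coordinates of the previous state), and the last $d_l^{(2)}$ rows implement $h_2$'s layer $l$ (reading only the remaining coordinates). The $\bgamma^{(l)}$ and $\mathbf{b}^{(l)}$ entries are inherited from the two networks verbatim and stay admissible because the enlarged bias range $\cb{-d_{l-1}^{(1)} - d_{l-1}^{(2)} + 1, \dots, d_{l-1}^{(1)} + d_{l-1}^{(2)}}$ contains both original ones. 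For $l = L_1 + 1, \dots, L_2$, the first $d_l^{(2)}$ neurons continue $h_2$'s forward pass (reading only the coordinates holding $h_2^{(l-1)}(\mathbf{x})$), and the extra neuron carries the scalar $h_1(\mathbf{x})$ forward via an identity gate $\ind{h_1(\mathbf{x}) > 0} = h_1(\mathbf{x})$ ($\gamma = 1$, bias $0$, a single weight of $1$ on the carry coordinate). After layer $L_2$ the state is the two-bit vector $(h_2(\mathbf{x}), h_1(\mathbf{x}))$.

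\textbf{XOR gadget.} Let $a = h_2(\mathbf{x})$ and $b = h_1(\mathbf{x})$. In layer $L_2 + 1$ (width $2$), one neuron computes $\text{NOR}(a,b)$ via $\bgamma = -1, \mathbf{w} = (1,1), \mathbf{b} = -1$ (giving $\ind{a + b < 1} = \ind{a + b = 0}$), and the other computes $\text{AND}(a,b)$ via $\bgamma = 1, \mathbf{w} = (1,1), \mathbf{b} = -1$ (giving $\ind{a + b > 1}$). Both biases lie in the allowed range $\cb{-1, 0, 1, 2}$, since the preceding layer has width $2$. Layer $L_2 + 2$ then applies a single NOR gate with the same parameters to these two outputs, producing $\ind{\text{NOR}(a,b) + \text{AND}(a,b) = 0}$. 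A direct truth-table check shows this equals $1$ exactly when $(a, b) \in \cb{(0,1), (1,0)}$, i.e., exactly when $a \oplus b = 1$, so $h(\mathbf{x}) = h_1(\mathbf{x}) \oplus h_2(\mathbf{x})$.

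\textbf{Main obstacle.} The only nontrivial ingredient is the XOR gadget. Because $\mathbf{W}$ is constrained to $\cb{0, 1}$, a single threshold neuron can only realize monotone Boolean functions of its two binary inputs, so XOR is unreachable in one layer; and the textbook two-layer route via $\text{AND}(\text{OR}, \text{NAND})$ is also blocked, because the NAND gate would require bias $-2$, which is outside the range $\cb{-1, 0, 1, 2}$ forced by the width-$2$ penultimate layer. The NOR-based decomposition above sidesteps this, since every gate in the final two layers uses bias $-1$. Everything else --- the parallel composition, the identity carry, and the bookkeeping that all biases, weights, and $\bgamma$ entries stay in the prescribed ranges --- reduces to a routine check.
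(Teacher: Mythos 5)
Your overall construction is essentially the paper's: block-diagonal parallel composition through layer $L_1$, a width-$1$ identity ``carry'' for the extra $L_2 - L_1$ layers, and a two-layer XOR gadget. The only difference is the gadget itself: you use $\mathrm{NOR}\circ(\mathrm{NOR},\mathrm{AND})$, the paper uses $\mathrm{AND}\circ(\mathrm{OR},\mathrm{NAND})$; both are valid decompositions. However, there are two small slips. First, under the paper's recursion $\ind{\bgamma \odot (\mathbf{W}\,\mathbf{h}) + \mathbf{b} > 0}$, a gate with $\gamma=-1$, $w=(1,1)$, $b=-1$ computes $\ind{-(a+b)-1>0}$, which is identically $0$; to realize the $\ind{a+b<1}$ you intend, the bias for the NOR gates must be $+1$ (still in the allowed range $\{-1,0,1,2\}$). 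Second, your ``main obstacle'' paragraph incorrectly asserts that the textbook $\mathrm{AND}(\mathrm{OR},\mathrm{NAND})$ route is blocked because NAND would need bias $-2$: the paper's own gadget takes exactly this route by realizing NAND with $\gamma=-1$ and bias $+2$ — the same sign-flip trick you yourself use for NOR. So that route was never closed, and your gadget is a cosmetic rather than necessary alternative. With the bias fix, your proof is correct and matches the paper's.
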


\vspace{0.2em}

Combining \thmref{thm: partial fctn net cxty} and \lemref{lem: xor of two nns} results in the following corollary.

\begin{corollary} [Memorizing a consistent dataset] \label{cor: mem constnt data}
For any teacher $h^\star$ of depth $L^\star$ and
dimensions $\tdims$ and any consistent training set $S$ generated from it, 
there exists an interpolating NN $h$ (\ie $\inter$) of depth $L = \max \cb{L^\star, 14} + 2$ and dimensions $\dims$, 
such that the number of weights is 
\begin{align*}
w \rb{\dims} & \le w\rb{\tdims} + N \cdot H\rb{\emError{h^\star}} + 2 n \rb{\tdims} N^{3/4} H\rb{\emError{h^\star}}^{3/4} \mathrm{polylog} N \\
& \quad + O \rb{ d_0 \rb{d_0 + n \rb{\tdims}} \cdot \log N }
\end{align*}
and the maximal width is 
$$
\dimsmax \le \tdimsmax + N^{3/4} \cdot H\rb{\emError{h^\star}}^{3/4} \cdot \mathrm{polylog} \rb{N} + O\rb{d_0 \cdot \log \rb{N}}\,.
$$
\end{corollary}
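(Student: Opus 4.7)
The plan is to apply \thmref{thm: partial fctn net cxty} to a partial ``flip indicator'' function and then XOR the resulting network with the teacher via \lemref{lem: xor of two nns}. Since $S$ is consistent (\defref{def:consistent}), I can unambiguously define
\[
f(\mathbf{x}) = \begin{cases} y_i \oplus h^\star(\mathbf{x}_i) & \text{if } \mathbf{x} = \mathbf{x}_i \text{ for some } i\in[N],\\ \star & \text{otherwise.}\end{cases}
\]
By the decomposition in \eqref{eq: xor decomp}, $f$ records exactly the corrupted coordinates, so $N_1 = |f^{-1}(1)| = N\cdot \emError{h^\star}$. Applying \thmref{thm: partial fctn net cxty} yields a depth-$14$ binary threshold network $\tilde{h}$ with dimensions $\tilde{\dims}$ that agrees with $f$ wherever $f\in\{0,1\}$, and whose width/weight parameters satisfy the quantitative bounds of that theorem in terms of $\log\binom{N}{N_1}$.

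Next, I would combine $\tilde{h}$ and the teacher with \lemref{lem: xor of two nns}, producing a network $h$ with $h(\mathbf{x}) = h^\star(\mathbf{x}) \oplus \tilde{h}(\mathbf{x})$ of depth $\max\{L^\star, 14\}+2$ and widths given by the componentwise sums (plus the two trailing layers of widths $2,1$). Interpolation is immediate: for every $(\mathbf{x}_i,y_i)\in S$,
\[
h(\mathbf{x}_i) = h^\star(\mathbf{x}_i)\oplus \tilde{h}(\mathbf{x}_i) = h^\star(\mathbf{x}_i)\oplus f(\mathbf{x}_i) = h^\star(\mathbf{x}_i)\oplus \bigrb{y_i\oplus h^\star(\mathbf{x}_i)} = y_i.
\]
The depth bound matches the corollary, so it remains to convert the theorem's bounds in terms of $\log\binom{N}{N_1}$ into the stated bounds in terms of $N\cdot H(\emError{h^\star})$, and to handle the XOR merge.

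The main bookkeeping step is controlling the width and weight counts of the XOR network. For widths I would simply use $\dimsmax \le \tdimsmax + \tilde{\dims}_{\max}$ (ignoring the $+1$'s absorbed into polylogs), then substitute $\tilde{\dims}_{\max}\le (\log\binom{N}{N_1})^{3/4}\polylog N + O(d_0\log N)$ and the standard bound $\log\binom{N}{N_1} \le N\cdot H(N_1/N) = N\cdot H(\emError{h^\star})$. For weights, I would expand the layerwise product $(d_l^{(1)}+d_l^{(2)})(d_{l-1}^{(1)}+d_{l-1}^{(2)})$ into the four terms $d_l^{(1)}d_{l-1}^{(1)} + d_l^{(2)}d_{l-1}^{(2)} +$ cross. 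Summing the first two gives $w(\tdims)+w(\tilde{\dims})$; the cross terms $d_l^{(1)}d_{l-1}^{(2)} + d_l^{(2)}d_{l-1}^{(1)}$ are bounded by $\tilde{\dims}_{\max}(d_l^{(1)}+d_{l-1}^{(1)})$, so summing over all layers yields at most $2\tilde{\dims}_{\max}\cdot n(\tdims) + O(d_0\tilde{\dims}_{\max})$. The two trailing XOR layers and the contribution of the input dimension across both networks add only $O(d_0(d_0 + n(\tdims))\log N)$. Plugging in the theorem's bounds on $w(\tilde{\dims})$ and $\tilde{\dims}_{\max}$, together with the entropy bound, reproduces the leading $N\cdot H(\emError{h^\star})$ term, the cross-term $2 n(\tdims) N^{3/4} H(\emError{h^\star})^{3/4}\polylog N$, and the residual $O(d_0(d_0+n(\tdims))\log N)$.

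The only mild obstacle is the bookkeeping for the cross-terms and the $d_0$-dependent residuals, including the asymmetric case $L^\star > 14$ in \lemref{lem: xor of two nns}; in that regime the widths beyond layer $14$ are those of the teacher plus one, so the same cross-term accounting works after routine index shifts. Everything else is algebraic manipulation using $\log\binom{N}{N_1}\le N\cdot H(\emError{h^\star})$ and the monotonicity of the involved bounds.
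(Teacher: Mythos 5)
Your proposal follows exactly the paper's proof: define the flip indicator $f$ via \eqref{eq: xor decomp}, apply \thmref{thm: partial fctn net cxty} to obtain a memorizing network $\tilde{h}$, merge with $h^\star$ by the XOR construction of \lemref{lem: xor of two nns}, and then convert $\log\binom{N}{N_1}$ to $N\cdot H(\emError{h^\star})$ and bound the cross terms. One small bookkeeping correction: the two sub-networks share the input layer, so the first-layer weight count is simply $(d_1^{(1)}+d_1^{(2)})\,d_0$ with no cross contribution; the cross-term sum $\sum_l\bigl[d_l^{(1)}d_{l-1}^{(2)}+d_l^{(2)}d_{l-1}^{(1)}\bigr]$ should therefore start from $l=2$, which eliminates the spurious $O(d_0\,\tilde{\dims}_{\max})$ term you introduced (that term is not dominated when $d_0 > n(\tdims)$) and recovers the stated bound exactly, matching the paper's \corref{app-cor: xor nn dim bounds}.
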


\paragraph{Proof idea.} We explicitly construct a NN with the desired properties.
We can choose a subset of neurons to implement the teacher NN and another subset to implement the NN memorizing the label flips.
Furthermore, we zero the weights between the two subsets.
Two additional layers compute the XOR of the outputs, thus yielding the labels as in \eqref{eq: xor decomp}.
This is illustrated in \figref{fig:network}.


\begin{figure*}[!ht]
\centering

\hspace{.1em}
\begin{subfigure}[t]{0.2548\textwidth}
    \centering
    {\includegraphics[width=.99\textwidth]{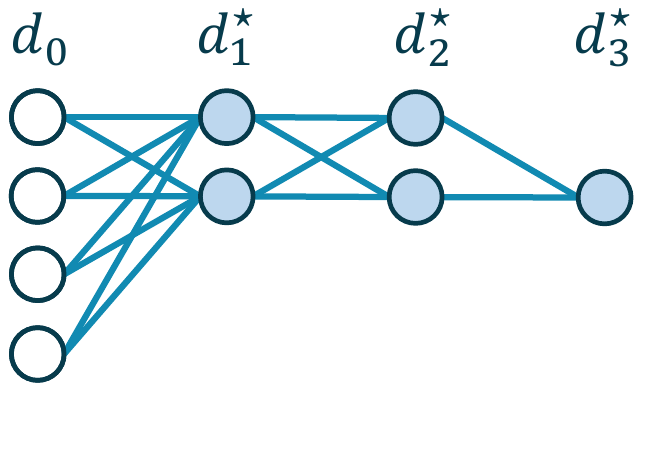}}
    \vspace{-1.9em}
    \caption{
    A teacher model $h^{\star}$.
    }
\end{subfigure}
\hfill
\begin{subfigure}[t]{0.2548\textwidth}
    \centering
    {\includegraphics[width=.99\textwidth]{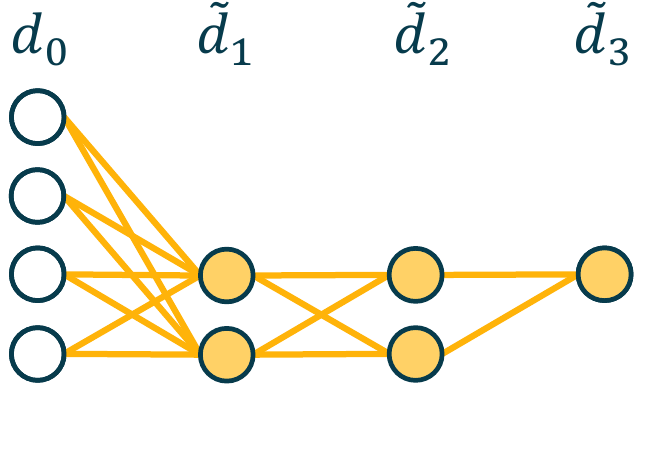}}
    \vspace{-2.0em}
    \caption{
    A NN $\tilde{h}_S$ memorizing the label flips w.r.t $h^\star$ in $S$.
    }
\end{subfigure}
\hfill
\begin{subfigure}[t]{0.417\textwidth}
    \centering
    {\includegraphics[width=.99\textwidth]{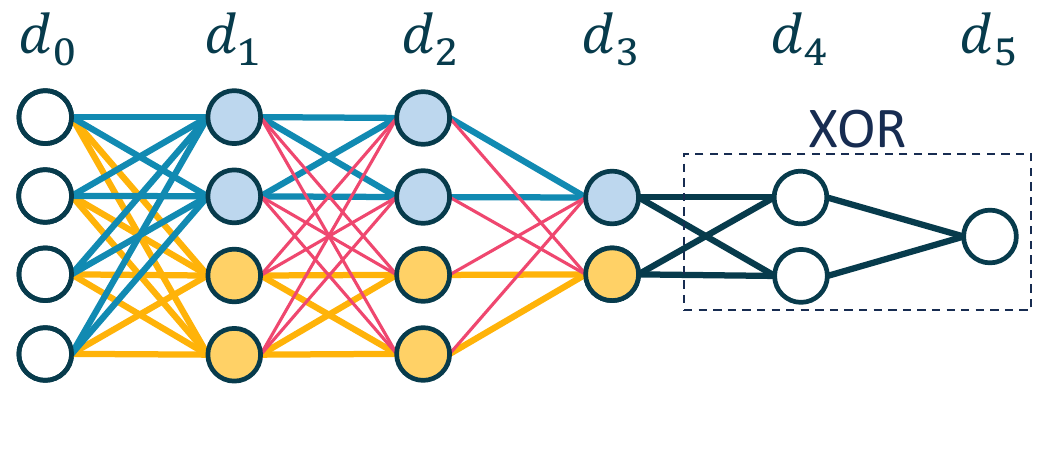}}
    \vspace{-1.9em}
    \caption{
    A wider student NN can interpolate the training set $S$, 
    \eg using an XOR construction.
    }
\end{subfigure}
\hspace{.1em}
\caption{
\label{fig:network}
\textbf{Interpolating a dataset.} 
To memorize the training set, we use a subset of the parameters to match those of the teacher and another subset to memorize the noise (label flips). 
Then, we ``merge'' these subsets to interpolate the noisy training set.
In our figure, (1) blue edges represent weights identical to the teacher's;
(2) yellow edges memorize the noise; 
(3) red edges are set to 0; 
and 
two additional layers implement the XOR between outputs, thus memorizing the training set.
}
\end{figure*}

\pagebreak

\section{Tempered overfitting of min-size and random interpolators}
\label{sec:tempered}

In this section, we provide our main results on the overfitting behavior of interpolating NNs. 
We consider min-size NN interpolators and random NN interpolators. 
For both learning rules, we prove tempered overfitting. 
Namely, we show that the test performance of the learned interpolators is not much worse than the Bayes optimal error.

First, for the sake of readability, let us define the marginal peak probability of the distribution.

\begin{definition} [Peak marginal probability] \label{def: peak marginal probability}
    $\Dmax \triangleq \max_{\vect{x}\in\calX} \bbP_{(X, Y) \sim \calD} \rb{X = \vect{x}}$.
\end{definition}

Our results in this section focus on cases where the number of training samples is 
$N=\omega\rb{d_0^{2}\log d_0}$
and
$N=o\rb{1/\sqrt{\Dmax}}$.
In such regimes, the data consistency probability is high\footnote{$N=o\rb{1/\sqrt{\Dmax}}$ implies a rough bound on the consistency probability via the union bound.
For example, when the marginal of $X$ under $\calD$ is uniform ($\Dmax = 1 / \abs{\calX}$), and the inconsistency probability corresponds to the well-known birthday problem.} and our bounds are meaningful.
Note that given the binarization of the data, ${N=o\rb{1/\sqrt{\Dmax}}}$ implies an exponential upper bound of $N=o\rb{2^{d_0 / 2}}$, achieved by the uniform distribution, \ie when $\Dmax = 2^{-d_0}$.
Due to the exponential growth of the sample space w.r.t. the input dimension, we find this assumption to be reasonable.
Also, $N = \omega \rb{d_0^2 \log d_0}$ implies that the input dimension cannot be arbitrarily large, but may still be non trivially small (see comparison to previous work in \secref{sec:related}).

\subsection{Min-size interpolators}
\label{sec:min-size-interpolators}

We consider min-size NN interpolators of a fixed depth, \ie networks with the smallest
number of weights for a certain depth that interpolate a given training set. 
In realizable settings, achieving good generalization performance by restricting the number of parameters in the learned interpolating model is a natural and well-understood approach. 
Indeed, in such cases, generalization follows directly from standard VC-dimension bounds \citep{bartlett2019nearly,shalev2014understanding}. 
However, when interpolating \emph{noisy} data, the size of the returned model increases with the number of samples
(in order to memorize the noise; see \eg \citet{vardi2021optimal}), 
making it challenging to guarantee generalization.
In what follows, we prove that even when interpolating noisy data, min-size NNs exhibit good generalization performance.

\paragraph{Learning rule: Min-size NN interpolator.}
Given a consistent dataset $S$ and a fixed depth $L$, a min-size NN interpolator, or min-\#weights interpolator, is a binary threshold network $h$ (see \defref{def:ftn}) that achieves $\inter$ using a minimal number of weights.
Recall that ${w(\dims)=\sum_{l=1}^L d_l d_{l-1}}$ and
define the \emph{minimal} number of weights 
required to implement a given hypothesis $h$,
\begin{align*}
w_L \rb{h} \triangleq 
{\min}_{\dims \in \bbN^L\,} w\rb{\dims} \; \mathrm{s.t.} \; h \in \Hbtn{\dims}
\,.
\end{align*}
The learning rule is then defined as
\begin{align*}
A_L \rb{S} \in {\argmin}_{h\,} w_L \rb{h} \; \mathrm{s.t.} \; \inter\,.
\end{align*}

\pagebreak

\begin{theorem} [Tempered overfitting of min-size NN interpolators]
\label{thm: minsize interpolator bound}
Let $\calD$ be a distribution induced by a noisy teacher of depth $L^\star$, widths $\tdims$,  {$n(\tdims)$ neurons}, and a noise level of $\errorRate < \nicefrac{1}{2}$
(\asmref{asm:teacher}).
\linebreak
There exists $c>0$ such that the following holds.
Let $S\sim \calD^{N}$ be a training set such that
${N = \omega \bigrb{ n\rb{\tdims}^4 H\rb{\errorRate}^{3} \log \rb{n \rb{\tdims}}^c + d_0^2 \log d_0}}$ 
and 
${N=o({\sqrt{1/\Dmax}})}$.
Then, for any fixed depth 
${L \ge \max \cb{L^\star, 14} + 2}$,
the generalization error of the min-size depth-$L$ NN interpolator satisfies the following.
\begin{itemize}[leftmargin=5mm]\itemsep1.5pt
    \item \textbf{Under arbitrary label noise},
    \hfill
$
\mathbb{E}_{S}\left[{\cal L}_{{\cal D}}\left(A_L \rb{S}\right)\right]
\le
1 - 2^{-H\rb{\errorRate}} + o \rb{1}
$.

\item \textbf{Under independent label noise},
\hfill
$
\abs{ \mathbb{E}_{S} \bb{ {\cal L}_{{\cal D}}\left(A_L \rb{S}\right) } -2\varepsilon^{\star} \left(1\!-\!\varepsilon^{\star}\right) } = o \rb{1}
$.
\end{itemize}
\end{theorem}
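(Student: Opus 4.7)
My plan is to combine the size bound from \corref{cor: mem constnt data} with a multiplicative Occam-style tail bound (for the arbitrary-noise case), and to extend it with an entropy-matching / exchangeability argument (for the independent-noise case).

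First I would show that the min-size interpolator satisfies $w\rb{A_L\rb{S}} \le NH\rb{\errorRate}\rb{1+o\rb{1}}$ with probability $1-o\rb{1}$. On the ``good'' event that $S$ is consistent (which holds with high probability by a birthday-style union bound from $N = o\rb{\sqrt{1/\Dmax}}$) and that $\Ncorrupt = N\errorRate\rb{1+o\rb{1}}$ (Chernoff on the Bernoulli flips of \asmref{asm:teacher}), \corref{cor: mem constnt data} exhibits an explicit depth-$L$ interpolator $h_0$ with
\[
w\rb{h_0} = N H\rb{\errorRate} + O\bigrb{ n\rb{\tdims} N^{3/4} H\rb{\errorRate}^{3/4} \polylog N } + O\bigrb{ d_0 \rb{d_0 + n\rb{\tdims}} \log N }.
\]
The hypotheses $N = \omega\rb{n\rb{\tdims}^4 H\rb{\errorRate}^3 \polylog n\rb{\tdims}}$ and $N = \omega\rb{d_0^2 \log d_0}$ are tuned precisely so both correction terms are $o\rb{NH\rb{\errorRate}}$, and the optimality of $A_L\rb{S}$ then yields $w^\star \triangleq w\rb{A_L\rb{S}} \le NH\rb{\errorRate}\rb{1+o\rb{1}}$. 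The counting bound from \secref{sec:model}, applied after summing over the polynomially many width vectors with a fixed weight budget, gives $\log\bigabs{\calH_{\le w^\star}} \le NH\rb{\errorRate}\rb{1+o\rb{1}}$, where $\calH_{\le w^\star}$ denotes the set of depth-$L$ hypotheses with at most $w^\star$ weights.

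For the arbitrary-noise case, I would invoke the multiplicative Occam bound: for any fixed $h$, the i.i.d.\ sampling gives $\bbP_S\rb{\emError{h}=0} = \rb{1-\exError{h}}^N$, so a union bound over $\calH_{\le w^\star}$ at threshold $\tau = 1 - 2^{-H\rb{\errorRate}\rb{1+\epsilon_N}}$ with a slowly decaying $\epsilon_N = o\rb{1}$ drives $\bbP\bigrb{\exists h \in \calH_{\le w^\star}\colon \emError{h}=0 \text{ and } \exError{h} \ge \tau}$ to $o\rb{1}$. Since $A_L\rb{S} \in \calH_{\le w^\star}$ is itself an interpolator, this forces $\exError{A_L\rb{S}} \le 1 - 2^{-H\rb{\errorRate}} + o\rb{1}$ with probability $1-o\rb{1}$, which combined with the trivial $\exError{\cdot} \le 1$ on the failure event delivers the claimed expectation bound.

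For the independent-noise case the decomposition $\exError{h} = \errorRate + D\rb{h}\rb{1-2\errorRate}$, with $D\rb{h} \triangleq \bbP_X\rb{h\rb{X} \neq h^\star\rb{X}}$, reduces the claim to $\bbE_S\bb{D\rb{A_L\rb{S}}} = \errorRate + o\rb{1}$. I would write $A_L\rb{S} = h^\star \oplus \tilde{h}_S$ and study $\tilde{h}_S$ as a min-size fitter of the noise labels $E_i$ on features $X_i$, noting that the empirical identity $\frac{1}{N}\sum_i \ind{\tilde{h}_S\rb{X_i} = 1} = \Ncorrupt/N \to \errorRate$ already holds on the training set. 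Because $w^\star \approx NH\rb{\errorRate}$ tightly matches the source-coding entropy of the noise pattern --- the lower bound $w^\star \ge NH\rb{\errorRate}\rb{1-o\rb{1}}$ following from the fact that the label pattern $\rb{E_i}$ must be recoverable from $A_L\rb{S}$ together with the features --- the interpolator $\tilde{h}_S$ is ``maximally compressed'' and its value on a fresh $X_0$ behaves as an essentially independent $\mathrm{Ber}\rb{\errorRate}$ sample, yielding $\bbP\rb{\tilde{h}_S\rb{X_0} \neq E_0} \approx \rb{1-\errorRate}\errorRate + \errorRate\rb{1-\errorRate} = 2\errorRate\rb{1-\errorRate}$. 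Concretely, I would match the multiplicative tail of Step 2 (now parametrized by $D\rb{\tilde{h}_S}$ rather than $\exError$) against the counting upper bound to cap $\bbE\bb{D}$ from above, and pass from the tight empirical $\Ncorrupt/N$ to the population $D$ via the entropy-matching of $w^\star$. The main obstacle I foresee is exactly this last step: additive Hoeffding-Rademacher uniform convergence delivers only a $\Theta\bigrb{\sqrt{H\rb{\errorRate}}}$ gap between empirical and population $D$, which is $\Theta\rb{1}$ for fixed $\errorRate$ and therefore far too coarse, so the $o\rb{1}$ sharpness must be extracted by pairing the multiplicative likelihood bound with the information-theoretic tightness of $w^\star$ and the independence of the noise from the features.
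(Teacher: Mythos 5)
Your arbitrary-noise argument is a valid alternative to the paper's. The paper bounds $\mathbb{E}_S[\mathcal{L}_\mathcal{D}(A_L(S))]$ via the mutual-information inequality of \lemref{app-lem: agnostic bound}, combined with the source-coding estimate $I(S;A(S)) \le \mathbb{E}|A(S)|$ (using \corref{app-cor: fixed depth edges description length} on top of \corref{cor: mem constnt data}); you instead run a classical multiplicative Occam/union-bound argument over the deterministic class $\mathcal{H}_{\le W^\star}$ after first concentrating $w^\star \le W^\star := N H(\varepsilon^\star)(1+o(1))$ via Chernoff. Both routes work, and yours is arguably more elementary; the paper's mutual-information route is cleaner precisely because the description length is random and appears naturally inside an expectation, avoiding the preliminary ``whp cap'' step and any explicit threshold parameter $\epsilon_N$ to tune.

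The independent-noise half has a genuine gap, which you have correctly sensed but not closed. Your plan is to argue that because $w^\star$ is ``maximally compressed'' down to $NH(\varepsilon^\star)$, the learned flip-pattern $\tilde{h}_S$ must behave like fresh $\mathrm{Ber}(\varepsilon^\star)$ noise on a held-out point, and you acknowledge that uniform convergence alone only buys a $\Theta(\sqrt{H(\varepsilon^\star)})$ additive gap — too coarse — so you gesture at ``pairing the multiplicative likelihood bound with the information-theoretic tightness.'' That pairing is exactly the nontrivial step, and it does not follow from anything you have set up. The paper's mechanism (\lemref{app-lem: generalization bound conditional independent noise}) is a specific chain-rule decomposition
\[
I(S;A(S)) \ge N\,I(X_1;A(S)) + N\,I(Y_1;A(S)\mid X_1),
\]
in which the \emph{label} term $I(Y_1;A(S)\mid X_1)$ is shown to be $\ge H(\varepsilon^\star) - o(1)$ (because, conditioned on $A(S)$ and $X_1$, the label is deterministic on consistent $S$), so the description-length budget $w^\star \approx N H(\varepsilon^\star)$ is essentially exhausted by the labels and forces the \emph{feature} term $I(X_1;A(S))$ to be $o(1)$. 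The crucial technical move is a variational lower bound on $I(X_1;A(S))$ (via \lemref{app-lem: variational info lower bound}) using the cleverly tilted conditional distribution $dq(x_1\mid h) \propto \frac{dq_{\hat{\varepsilon}_{\mathrm{tr}}}(h(x_1)\oplus h^\star(x_1))}{dq_{\hat{\varepsilon}_{\mathrm{gen}}}(h(x_1)\oplus h^\star(x_1))}\,dp(x_1)$, which lower-bounds $I(X_1;A(S))$ by $(1-o(1))\,D_{KL}(q_{\hat{\varepsilon}_{\mathrm{tr}}}\|q_{\hat{\varepsilon}_{\mathrm{gen}}})$; Pinsker then converts the resulting KL bound into $|\hat{\varepsilon}_{\mathrm{tr}} - \hat{\varepsilon}_{\mathrm{gen}}| = o(1)$, and \lemref{lem:eps-tr-star} ties $\hat{\varepsilon}_{\mathrm{tr}}$ back to $\varepsilon^\star$. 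Without this tilted variational inequality — or something equivalent to it — the ``compression implies independence'' heuristic remains only a heuristic, and your plan does not yield the $o(1)$ rate. You also state a pointwise (rather than in-expectation) lower bound $w^\star \ge NH(\varepsilon^\star)(1-o(1))$ as following from recoverability of the flips, which needs to be an expectation or high-probability statement with an actual concentration argument; it is not load-bearing in the paper's proof, which avoids needing it.
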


Here, $o\rb{1}$ indicates terms that become insignificant when the number of samples $N$ is large.
We illustrate these behaviors in \figref{fig:tempered}. 
Moreover, we discuss these results and the proof idea in \secref{sec: main results discussion} after presenting the corresponding results for posterior sampling.
The complete proof with detailed characterization of the $o(1)$ terms is given in \appref{app-sec:min-size-proofs}.

\subsection{Random NN interpolators (posterior sampling)}
\label{sec:posterior-sampling}

Recent empirical \citep{vallepérez2019deep,chiang2023loss} and theoretical \citep{buzaglo2024uniform} works have shown that, somewhat surprisingly, 
randomly sampled deep NNs that interpolate a training set often generalize well.
We now turn to analyzing such random interpolators under our teacher assumption and noisy labels (\asmref{asm:teacher}).
As with min-size NN interpolators, our analysis here reveals a tempered overfitting behavior.

\paragraph{Prior distribution.}
A distribution over parameters induces a prior distribution over hypotheses by
\begin{align*}
\calP \rb{h} = \mathbb{P}_{\btheta} \rb{h_{\btheta}=h}\,.
\end{align*}
We focus on the prior induced by the \emph{uniform prior} over the parameters of binary threshold networks.
Specifically, for a fixed depth $L$ and dimensions $\dims$, we consider $\btheta \sim
\mathrm{Uniform}\left(\btnparams{\dims}\right)$.
In other words, to generate $h\sim\mathcal{P}$, each weight, bias, and neuron scalar in the NN is sampled independently and uniformly from its respective domain.

\paragraph{Learning rule: Posterior sampling.} 
For any training set $S$, denote the probability to sample an interpolating NN by $p_{S} \triangleq\ \prior \rb{\emError{h} = 0}$.
When $p_S > 0$, define the posterior distribution $\posterior$ as 
\begin{equation} \label{eq: posterior}
    \posterior \rb{h} \triangleq \prior \rb{h \mid \inter} = \frac{\prior (h)}{\interprob} \intind \,.
\end{equation} 
When $p_S \!=\! 0$, use an arbitrary $\posterior$. 
Finally, the posterior sampling rule is $A_{\dims} \rb{S} \sim \posterior$. 

\medskip

\begin{remark}[Hypothesis expressivity]
The following result requires that the student NN is large enough to interpolate \emph{any} consistent $S$ (see \corref{cor: mem constnt data}), thus, $p_S > 0$ and $\posterior$ is defined as in \eqref{eq: posterior}. 
\end{remark}

\medskip

\begin{theorem} [Tempered overfitting of random NN interpolators] 
\label{thm: posterior sampling bound}
Let $\calD$ be a distribution induced by \linebreak a noisy teacher of depth $L^\star$, widths $\tdims$,  {$n(\tdims)$ neurons}, and a noise level of $\errorRate < \nicefrac{1}{2}$
(\asmref{asm:teacher}).
\linebreak
There exists a constant $c>0$ such that the following holds.
Let $S\sim \calD^{N}$ be a training set such that
${N = \omega \bigrb{ n\rb{\tdims}^4 \log \rb{n \rb{\tdims}}^c + d_0^2 \log d_0}}$ 
and ${N=o({\sqrt{1/\Dmax}})}$.
Then, for any student network of depth 
${L \ge \max \cb{L^\star, 14} + 2}$
and widths $\dims\in \bbN^{L}$ holding
\begin{align} \label{eq: sufficient width}
\forall l = 1, \dots, L^{\star}\!-\!1\quad
d_l \ge d_l^\star + N^{3/4} \cdot \rb{\log N}^{c} + c \cdot d_0 \cdot \log \rb{N} \,,
\end{align}
the generalization error of posterior sampling satisfies the following.
\begin{itemize}[leftmargin=5mm]\itemsep.8pt
\item \textbf{Under arbitrary label noise},
\begin{align*}
    \mathbb{E}_{S,A_{\dims} \rb{S}}\left[{\cal L}_{{\cal D}}\left(A_{\dims} \rb{S}\right)\right]
    & \le
    1 - 2^{ - H\rb{\errorRate}} + O \rb{\frac{n\rb{\dims} \cdot \log \rb{\dimsmax + d_0}}{N}} \,.
\end{align*}

\item \textbf{Under independent label noise},
\vspace{-.2em}
\begin{align*}
&\left|\mathbb{E}_{S,A_{\dims} \rb{S}}\left[{\cal L}_{{\cal D}}\left(A_{\dims} \rb{S}\right)\right]-2\varepsilon^{\star}
\left(1\!-\!\varepsilon^{\star}\right)\right| \le O \rb{\sqrt{\frac{n\rb{\dims} \cdot \log \rb{\dimsmax + d_0}}{N}}} \,.
\end{align*}
\end{itemize}
\end{theorem}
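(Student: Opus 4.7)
My plan is a PAC-Bayesian analysis built on the memorization result of \corref{cor: mem constnt data}. I take the uniform prior $\calP$ on the parameter space $\btnparams{\dims}$. Since the posterior $\posterior$ of \eqref{eq: posterior} is $\calP$ restricted to interpolators, $\mathrm{KL}(\posterior \,\|\, \calP) = \ln(1/\interprob)$. Maurer's binary-KL PAC-Bayes inequality, applied separately to the $0/1$-loss (for arbitrary noise) and to the teacher-disagreement loss $\ell(h, \vect{x}) \triangleq \ind{h(\vect{x}) \neq h^\star(\vect{x})}$ (for independent noise), will control the expected generalization of $h \sim \posterior$ in terms of $\ln(1/\interprob)/N$.

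The central technical step is lower bounding $\interprob$. For a consistent $S$ with $N_1$ label flips, \corref{cor: mem constnt data} constructs an interpolator with widths $\dims^{0} \le \dims$ (guaranteed by the sufficient-width condition \eqref{eq: sufficient width}) with weight count roughly $w(\tdims) + N \cdot H(N_1/N)$. I count parameter assignments in $\btnparams{\dims}$ that realize interpolators by a \emph{padding} argument: each neuron beyond those of $\dims^{0}$ is silenced by fixing its bias to $-d_{l-1}$ (with $\gamma = 1$), which forces its output to $0$ irrespective of its incoming weights, while its outgoing weights into the next layer are likewise free. A careful enumeration then yields $\log_2(1/\interprob) \le N \cdot H(N_1/N) + O\bigl(n(\dims)\log(\dimsmax + d_0)\bigr)$. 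Taking expectation over $S$ and using Chernoff concentration of $N_1/N$ around $\errorRate$ gives $\mathbb{E}_S[\ln(1/\interprob)]/N \le H(\errorRate)\ln 2 + O\bigl(n(\dims)\log(\dimsmax + d_0)/N\bigr)$.

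For arbitrary noise, $\mathbb{E}_{\posterior}[\emError{h}] = 0$ on the posterior support, so Maurer's PAC-Bayes reduces to $-\ln\bigl(1 - \mathbb{E}_{\posterior}[\exError{h}]\bigr) \le \ln(1/\interprob)/N + O(\log N)/N$. Taking $\mathbb{E}_S$, substituting the previous bound, and Taylor expanding $\exp(-x) \approx 1 - x$ yields the first bullet. For independent noise, I use the identity $\exError(h) = \errorRate + (1 - 2\errorRate)\, d_H(h, h^\star)$, where $d_H(h, h^\star) \triangleq \mathbb{P}_{X \sim \calD}[h(X) \neq h^\star(X)]$; crucially, the empirical counterpart $\hat d_H$ equals $N_1/N$ exactly on the posterior support since $h$ interpolates the noisy labels. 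The sought rate $|\mathbb{E}_{\posterior}[d_H] - \errorRate| = O\bigl(\sqrt{n(\dims)\log(\dimsmax + d_0)/N}\bigr)$ is then obtained by a uniform convergence argument for the residual class $\{h \oplus h^\star : h \in \Hbtn{\dims}\}$, which by \lemref{lem: xor of two nns} embeds into $\Hbtn{\dims''}$ for a slightly enlarged $\dims''$, combined with a refined description-length bound $\log|\Hbtn{\dims''}| = O\bigl(n(\dims)\log(\dimsmax + d_0)\bigr)$ (a tight version of the counting in the ``Simple counting argument'' remark). Multiplying by $|1 - 2\errorRate| \le 1$ produces the second bullet.

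\textbf{Main obstacle.} The independent-noise case is the most delicate. A direct PAC-Bayes application to the $d_H$-loss yields only $kl(N_1/N \,\|\, \mathbb{E}_{\posterior}[d_H]) \le H(\errorRate)\ln 2 + o(1)$, a constant-order slack that allows $\mathbb{E}_{\posterior}[d_H]$ to sit well above $\errorRate$. Sharpening to the $\sqrt{n(\dims)\log(\dimsmax+d_0)/N}$ rate therefore requires both (i) a hypothesis-class description whose log-size is $O(n(\dims)\log(\dimsmax + d_0))$ rather than the naive $O(w(\dims))$, which must exploit structural redundancy of binary threshold networks, and (ii) the XOR reparameterization that replaces $\exError$ by $d_H$, whose empirical value on the posterior support is precisely the noise frequency $N_1/N$. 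Executing this padding-plus-counting carefully enough for the $O(n(\dims)\log(\dimsmax+d_0))$ overhead to be preserved through both steps is the crux of the argument.
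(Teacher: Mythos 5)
Your PAC-Bayes framing for the \textbf{arbitrary noise} case is essentially equivalent to the paper's mutual-information argument: $\mathrm{KL}(\posterior\,\|\,\calP)=\log(1/\interprob)$, and bounding $\mathbb{E}_S[\log(1/\interprob)]$ via a padding/counting argument (your silencing-by-bias is a minor cosmetic variant of the paper's $\gamma=0,\,b=0$ construction in Lemma~\ref{app-lem: log constrained ratio bound}). That part is sound.

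The \textbf{independent noise} case has a genuine gap, and you flag the right symptom but propose a fix that cannot work. You correctly note that PAC-Bayes applied to $d_H$ only gives $kl(N_1/N\,\|\,\mathbb{E}_{\posterior}[d_H]) \lesssim H(\errorRate) + o(1)$, a constant-order slack. Your proposed rescue is a uniform-convergence argument over $\{h\oplus h^\star : h\in\Hbtn{\dims}\}$ with a ``refined description-length bound'' $\log|\Hbtn{\dims''}| = O(n(\dims)\log(\dimsmax+d_0))$. That bound does not exist: by Lemma~\ref{app-lem: single archi description length} and Remark 2.3, $\log|\Hbtn{\dims}| = w(\dims) + \Theta(\sqrt{w(\dims)}\log w(\dims))$, and the $w(\dims)$ term is unavoidable (the weight bits genuinely encode distinct functions; there is no ``structural redundancy'' dropping this to $n(\dims)\log\dimsmax$). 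Since the width requirement \eqref{eq: sufficient width} forces $\dimsmax = \Omega(N^{3/4})$, we have $w(\dims) = \Omega(N^{3/2}) = \omega(N)$, so uniform convergence over this class is vacuous. The paper avoids this entirely: rather than bounding $\sup_h |\hat d_H(h) - d_H(h)|$, it uses the chain rule $I((X_1,Y_1);A(S)) = I(X_1;A(S)) + I(Y_1;A(S)\mid X_1)$ and a carefully chosen variational \emph{lower} bound on $I(X_1;A(S))$ (Lemma~\ref{app-lem: generalization bound conditional independent noise}) showing $I(X_1;A(S)) \geq (1 - \bbP(\incnsstnts))\, D_{KL}(q_{\epstr}\|q_{\epsgen})$, while $I(Y_1;A(S)\mid X_1) \geq H(\errorRate) - \bbP(\incnsstnts)$. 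Combined with the \emph{upper} bound $I(S;A(S)) \leq \mathbb{E}_S[\log(1/p_S)] + O(1)$, this squeezes $D_{KL}(q_{\epstr}\|q_{\epsgen}) = O(n(\dims)\log(\dimsmax+d_0)/N)$, and Pinsker yields the $\sqrt{\cdot}$ rate. That information-theoretic decomposition, not uniform convergence, is the missing ingredient.
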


The proof and a detailed description of the error terms are given in \appref{app-sec:posterior-proofs}.

Remarkably, note that the interpolating NN in the theorem might be highly overparameterized, and that for such NNs good generalization is not guaranteed by standard generalization bounds \citep{bartlett2019nearly, shalev2014understanding}. This theorem complements a similar result by \citet{buzaglo2024uniform} for the realizable setting.


\begin{subsection}{Discussion} \label{sec: main results discussion}
The overfitting behaviors described in this section are illustrated in \figref{fig:tempered}.

\paragraph{Proof idea.} We extend the information-theoretical generalization bounds from \cite{manoj2023interpolation} to this paper's setting in which label collisions in the datasets have a non-zero probability. 
In particular, we bound the interpolator's complexity from below by the mutual information between the model and the training set.
Since the model is interpolating, we can further bound the mutual information by a quantity dependent on the population error.
From the other direction, we bound the model's complexity from above by (1) its size in the min-size setting of \secref{sec:min-size-interpolators},
and (2) by the negative log interpolation probability for the posterior sampling of \secref{sec:posterior-sampling}.
Together with \corref{cor: mem constnt data} we obtain the bounds above on the expected generalization error.

In \figref{fig:network} we illustrated the construction of a memorizing network used to bound the complexity of the min-size interpolator. 
In the following \figref{fig:network_overparameterized} we illustrate how the interpolation probability $p_{S}$ can be bounded to induce a meaningful generalization bound. 

\vspace{0.5em}

\begin{figure}[h!]
  \begin{minipage}[c]{0.56\textwidth}
    \caption{
\label{fig:network_overparameterized}
\textbf{Interpolating a dataset with an overparameterized student.} 
We build on the construction from \figref{fig:network}
that memorizes a dataset using a subset of the parameters (blue, yellow, and red edges).
Then, redundant neurons (gray) can be effectively ignored by setting their neuron scaling parameters ($\bgamma$) to 0, leaving the redundant weights (gray edges) unconstrained.
Thus, the interpolation probability $p_{S}$ can be bounded by a quantity exponentially decaying in the number of neurons $n\rb{\dims}$ rather than in the number of weights $w \rb{\dims} = \omega \rb{N}$.
}
  \end{minipage}
  \hfill
  \begin{minipage}[c]{0.42\textwidth}
    \includegraphics[width=.97\textwidth]{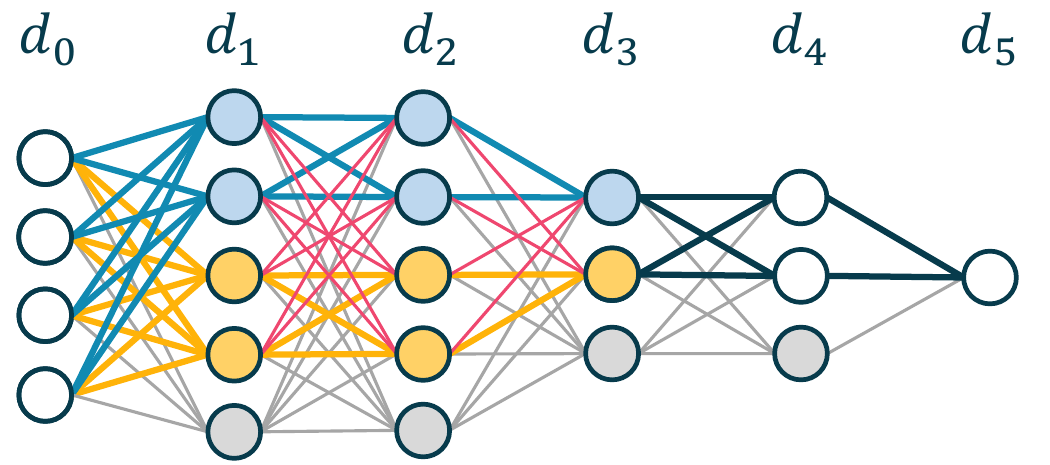}
  \vspace{.4cm}
  \end{minipage}
\end{figure}
\vspace{-0.5em}

Following \remref{rem: dep on d0}, the assumption $N = \omega\rb{d_0^2 \log d_0}$ can be relaxed in some related architectures.
For example, with a single additional layer of width $O \rb{ \sqrt{d_0} \cdot \log N }$ and ternary weights in the first layer ${\calQ_1^W = \cb{-1, 0, 1}}$, 
the requirement can be relaxed to ${N = \omega \big( d_0^{3/2} \log d_0} \big)$.

\bigskip

\begin{remark} [Higher weight quantization] \label{rem: high quantization}
    The bounds in the arbitrary noise setting can easily be extended to NNs with higher quantization levels.
    For example, letting $\calQ_l^W$ such that $\abs{\calQ_l^W} = Q$ and $\cb{0, 1} \subseteq \calQ_l^W$, under the appropriate assumptions, we get that
    $$
    \mathbb{E}_{\left(S,A\left(S\right)\right)}\left[{\cal L}_{{\cal D}}\left(A\left(S\right)\right)\right]
    \lessapprox
    1-
    Q^{-H\left(\varepsilon^{\star}\right)}
    \,,
    $$
    which is a meaningful bound for noise levels $\errorRate \le \varepsilon \rb{Q}$ for some $\varepsilon \rb{Q} < \nicefrac{1}{2}$.\footnote{Specifically, $\varepsilon \rb{Q}$ such that $1-Q^{-H\rb{\varepsilon \rb{Q}}} \le
    {1}/{2}$.}
    Tighter results would require utilizing the additional quantization levels to achieve smaller dimensions of the interpolating network, and are left to future work.
\end{remark}

\end{subsection}

\section{Related work}
\label{sec:related}

\paragraph{Benign and tempered overfitting.} 

The benign overfitting phenomenon has been extensively studied in recent years. 
Previous works analyzed the conditions in which benign overfitting occurs in linear regression \citep{hastie2020surprises,belkin2020two,bartlett2020benignpnas,muthukumar2020harmless,negrea2020defense,chinot2020robustness,koehler2021uniform,wu2020optimal,tsigler2020benign,zhou2022non,ju2020overfitting,wang2022tight,chatterji2021interplay,bartlett2021failures,shamir2022implicit,ghosh2022universal}, 
kernel regression \citep{liang2020just,mei2019generalization,liang2020multipledescent,mallinar2022benign,rakhlin2019consistency,belkin2018overfittingperfectfitting,mcrae2022harmless,beaglehole2022kernel,lai2023generalization, zhou2023agnostic, barzilai2024generalization}, 
and linear classification \citep{chatterji2020linearnoise,wang2021binary,cao2021benign,muthukumar2021classification,montanari2020maxmarginasymptotics,shamir2022implicit,liang2021interpolating,thrampoulidis2020theoretical,wang2021benignmulticlass,donhauser2022fastrates}. 
Moreover, several works proved benign overfitting in classification using nonlinear NNs \citep{frei2022benign,frei2023benign,cao2022benign,kou2023benign, xu2023benign, xu2023benignxor, meng2023benign, kornowski2024tempered, george2024training, karhadkar2024benign}. 
All the aforementioned benign overfitting results require high-dimensional settings, namely, the input dimension is larger than the number of training samples.


\citet{mallinar2022benign} suggested the taxonomy of benign, tempered, and catastrophic overfitting, which we use in this work. 
They demonstrated empirically that nonlinear NNs in classification tasks exhibit tempered overfitting.
As mentioned in the introduction, our theoretical results for the independent noise case closely resemble these empirical findings (see \figref{fig:tempered}).
Tempered overfitting in kernel ridge regression was theoretically studied in \citet{mallinar2022benign, zhou2023agnostic, barzilai2024generalization}. 
In univariate ReLU NNs (namely, for input dimension $1$), tempered overfitting was obtained for both classification \citep{kornowski2024tempered} and regression \citep{joshi2023noisy}.
 \citet{manoj2023interpolation} proved tempered overfitting for a learning rule returning short programs in some programming language.
Finally, tempered overfitting is well understood for the $1$-nearest-neighbor learning rule, where the asymptotic risk is roughly twice the Bayes risk \citep{cover1967nearest}.

\paragraph{Circuit complexity.} \thmref{thm: partial fctn net cxty} (our NN for memorizing label flips) is in a similar spirit as several prior theorems in the area of \emph{circuit complexity}. 
For example, Lupanov famously proved that every function $f \colon \{0, 1\}^{d_0} \to \{0, 1\}$ can be computed by a circuit consisting of $(1 + o(1)) \cdot 2^{d_0} / d_0$ many AND/OR/NOT gates, where the AND/OR gates have fan-in two~\cite{lupanov1958method}. 
Lupanov's bound, which is tight~\cite{shannon1949synthesis}, is analogous to Theorem~\ref{thm: partial fctn net cxty}, because a NN can be considered a type of circuit.

Even more relevant is a line of work that analyzes the circuit complexity of an arbitrary partial function $f \colon \{0, 1\}^{d_0} \to \{0, 1, \star\}$ with a given domain size $N$ and a given number of $1$-inputs $\Ncorrupt$, similar to the setup of Theorem~\ref{thm: partial fctn net cxty}. 
See Jukna's textbook for an overview~\cite[Section 1.4.2]{jukna2012boolean}. 
We highlight the work of Chashkin, who showed that every such function can be computed by a circuit (of unbounded depth and bounded fan-in) with $(1 + o(1)) \cdot \frac{\log \binom{N}{\Ncorrupt}}{\log \log \binom{N}{\Ncorrupt}} + O(d_0)$ gates~\cite{chashkin2006realization}.

To the best of our knowledge, prior to our work, nothing analogous to Chashkin's theorem~\cite{chashkin2006realization} was known regarding constant-depth threshold networks. 
It is conceivable that one could adapt Chashkin's construction~\cite{chashkin2006realization} to the binary threshold network setting as a method of proving Theorem~\ref{thm: partial fctn net cxty}, 
but our proof of Theorem~\ref{thm: partial fctn net cxty} uses a different approach. 
Our proof relies on shallow threshold networks computing \emph{$k$-wise independent generators}~\cite{healy2006constant} and an \emph{error-reduction} technique that was developed in the context of space-bounded derandomization~\cite{hoza2020simple}, among other ingredients.

\paragraph{Memorization.}
Our construction shows how noisy data can be interpolated using a small threshold NN with binary weights. 
It essentially requires memorizing the noisy examples. 
The task of memorization, namely, finding a smallest NN that allows for interpolation of arbitrary data points, has been extensively studied in recent decades. 
Memorization of $N$ arbitrary points in general position in $\mathbb{R}^d$ with a two-layer NN can be achieved using $O\left(\lceil \frac{N}{d} \rceil\right)$ hidden neurons \citep{baum1988capabilities, soudry2017exponentially, bubeck2020network}. 
Memorizing arbitrary $N$ points, even if they are not in general position, can be done using two-layer networks with $O(N)$ neurons \citep{huang1991bounds, sartori1991simple, huang1998upper, zhang2021understanding}. 
With three-layer networks, $O(\sqrt{N})$ neurons suffice, but the number of parameters is still linear in $N$ \citep{huang2003learning, yun2019small, vershynin2020memory, rajput2021exponential}. 
Using deeper networks allows for memorization with a sublinear number of parameters \citep{park2021provable,vardi2021optimal}. 
For example, memorization with networks of depth $\sqrt{N}$ requires only $\tilde{O}(\sqrt{N})$ parameters \citep{vardi2021optimal}. 
However, we note that in the aforementioned results, the number of quantization levels is not constant, namely, the number of bits in the representation of each weight depends on $N$.\footnote{We note that in most papers, the required number of quantization levels is implicit in the constructions, and is not discussed explicitly.
} 
Moreover, even in the sublinear constructions of \citep{park2021provable,vardi2021optimal}, the number of bits required to represent the network is $\omega(N)$. 
As a result, in this work we cannot rely on these constructions to obtain meaningful bounds. 

\paragraph{Posterior sampling and guess and check.}
The generalization of random interpolating neural networks has previously been studied, both empirically and theoretically \citep{vallepérez2019deep,mingard2021sgd,theisen2021good,chiang2023loss,buzaglo2024uniform}.
\linebreak
\citet{theisen2021good} studied the generalization of interpolating random linear and random features classifiers.
\citet{vallepérez2019deep,mingard2021sgd} considered the Gaussian process approximation to random NNs which typically requires networks with infinite width.
\citet{buzaglo2024uniform} provided a method to obtain generalization results for quantized random NNs of general \linebreak
architectures --- possibly deep and with finite width, under the assumption of a narrow teacher model.
A variant of this approach was used to prove our generalization results of posterior sampling, with the XOR network (\lemref{lem: xor of two nns}) used in the role of the teacher.


\section{Extensions, limitations, and future work}
\label{sec:limitations}

In this work, we focused on binary (fully connected) threshold networks of depth $L \ge 16$ (\secref{sec:model}) with binary input features (\secref{sec:data-model}), for which we were able to derive nontrivial generalization bounds.

Our results can be extended with simple modifications to derive bounds in other settings.
For instance, to NNs with higher weight quantization (see \remref{rem: high quantization}), or to ReLU networks (since any threshold network with binary weights can be computed by a not-much-larger ReLU network with a constant quantization level).
Unfortunately, without more sophisticated arguments these extensions result in looser generalization bounds.
The ``bottleneck'' of our approach is the reliance on (nearly) tight bounds on the widths of interpolating NNs.

Extending the results to other architectures (\eg CNNs or fully connected without neuron scaling) and other quantization schemes (\eg floating point representations) will mainly require utilizing their specific structure to derive tighter bounds on the complexity (\eg number of weights or number of bits) needed to interpolate consistent datasets. Furthermore, our bounds require the depth of the networks to be at least $16$, and the width to be $\omega\rb{N^{\nicefrac{3}{4}}}$, which might be deemed impractical for real datasets.\footnote{One can relax the width requirement by strengthening the depth requirement; see \remref{rem: alternative hash families}.}
The key to alleviating these requirements is, again, 
obtaining tighter complexity results. 

Our paper focused on consistent training sets (\defref{def:consistent}), in order to allow perfect interpolation.
Realistically, models do not always perfectly interpolate the training set, and therefore it is interesting to find generalization bounds for non-interpolating models, depending on the training error.
In addition, it is interesting to relate the generalization to the training \emph{loss}, and not just to the training accuracy.
Such extensions will require either broadening our generalization results or deriving new ones.


\begin{ack}
We thank Alexander Chashkin for generously providing English-language expositions of some results from his work~\cite{chashkin2006realization} as well as some results from Lupanov's work~\cite{lupanov1965approach} (personal communication).
The research of DS was Funded by the European Union (ERC, A-B-C-Deep, 101039436). Views and opinions expressed are however those of the author only and do not necessarily reflect those of the European Union or the European Research Council Executive Agency (ERCEA). Neither the European Union nor the granting authority can be held responsible for them. DS also acknowledges the support of the Schmidt Career Advancement Chair in AI.
GV is supported by research grants from the Center for New Scientists at the Weizmann Institute of Science, and the Shimon and Golde Picker -- Weizmann Annual Grant.
Part of this work was done as part of the NSF-Simons funded Collaboration on the Mathematics of Deep Learning.  
\linebreak
NS was partially supported by the NSF TRIPOD Institute on Data Economics Algorithms and Learning (IDEAL) and an NSF-IIS award.
\end{ack}

\bibliography{999_biblio}
\bibliographystyle{abbrvnat}

\newpage

\appendix

\section{Preliminaries and Auxiliary Results}

\subsection{Preliminaries}

Before moving to the proofs of the main results, we recall and introduce some notation that will be used throughout the supplementary material.

\paragraph{Notation.} We denote a (possibly random) learning algorithm by $\lrule$, and assume that it takes values in some hypothesis class $\calH$.
We use ${\cal D}$ to denote the joint distribution over a finite
sample space ${\cal X}\times\left\{ 0,1\right\} $ of the features
and labels, $\nu$ to denote the marginal distribution of the algorithm,
and $p$ to denote the joint distribution of a training set $S\sim{\cal D}^{N}$
and the algorithm $A\left(S\right)$. Specifically, the training set
is a random element 
\[
S=\left\{ \left(X_{1},Y_{1}\right),\dots,\left(X_{N},Y_{N}\right)\right\} \sim{\cal D}^{N}
\]
where $\left(X_{i},Y_{i}\right)$ is reserved for the $i$-th example
in $S$.
That is $\left(X_{i},Y_{i}\right)$ is always a sample in
$S$, whereas $\left(X,Y\right)$ is used to denote a data point which is independent of $S$. 
We use $d{\cal D}\left(x,y\right)$, $d\nu\left(h\right)$ and $dp\left(s,h\right)=dp\left(\left\{ \left(x_{1},y_{1}\right),\dots,\left(x_{N},y_{N}\right)\right\} ,h\right)$
to denote the corresponding probability mass functions.
With some abuse of notation, we use $d{\cal D}\left(x\right)$ for the probability mass function of the marginal of ${\cal D}$ over ${\cal X}$
\[
d{\cal D}\left(x\right)=\mathbb{P}_{\left(X,Y\right)\sim{\cal D}}\left(X=x\right)\,,
\]
and $dp\left(\left(x_{1},y_{1}\right),h\right)$ for the
marginal of the joint probability of a single point from $S$ and
the output of the algorithm, \ie 
\[
dp\left(\left(x_{1},y_{1}\right),h\right)=\mathbb{P}_{\left(S,A\left(S\right)\right)\sim p}\left(X_{1}=x_{1},Y_{1}=y_{1},A\left(S\right)=h\right)\,.
\]
Similarly, we use $dp\left(x_{1},h\right)$, $dp\left(y_{1}\mid x_{1},h\right)$, etc., for the probability mass functions of the appropriate marginal and conditional distributions. 

\textbf{Interpolating algorithm.} In order to simplify the analysis, we introduce a framework of interpolation learning related to the one introduced in Framework \ref{alg: inter frame}. 

Let $\Tilde{A}\rb{S}$ be a learning rule satisfying Framework \ref{alg: inter frame}, and let $\star$ be some arbitrary token distinct from any hypothesis the algorithm may produce.
We define a modified learning rule $\lrule$\footnote{As most of the appendix deals with the modified learning rule, we use $\tilde{A}\rb{S}$ for the original one and $\lrule$ for the modified one. } such that 
\begin{itemize}
    \item If $S$ is inconsistent then $\lrule = \star$.
    \item Otherwise, if $S$ is consistent then $\lrule = \tilde{A}\rb{S}$, so in particular $\interrule$. 
\end{itemize}
Notice that since the $\lrule = \tilde{A}\rb{S}$ when $S$ is consistent
\begin{align*}
\bbE \bb{\exError{\lrule} \mid \text{consistent }S} = \bbE \bb{\exError{\tilde{A}\rb{S}} \mid \text{consistent }S}
\end{align*}
and therefore we can find bounds for the generalization error of $\tilde{A}\rb{S}$ by analyzing $\lrule$.
In addition, when it can be inferred from context we use $\lrule$ to denote the min-size and posterior sampling interpolators (instead of $A_L (S)$ or $A_{\dims} (S)$, respectively).

For ease of exposition, throughout the appendix, we rephrase the assumptions made in \secref{sec:tempered}, namely, that $N=\omega\rb{d_{0}^{2}\log d_{0}}$ and $N=o\rb{1/\sqrt{\Dmax}}$, as follows.

\begin{assumption} [Bounded input dimension] \label{asm: bounded input dim}
$d_0 = o \rb{\sqrt{N / \log N}}$.
\end{assumption}

\begin{assumption} [Data distribution flatness] \label{asm: bounded inconsistency}
$\Dmax = o \rb{1 / N^2}$.
\end{assumption}

\newpage

\subsection{Auxiliary results}

We start by citing several standard results from information theory and lemmas from \citet{manoj2023interpolation} which will be useful throughout our supplementary materials.

\begin{lemma} [Chain rule of mutual information] \label{app-lem: info chain rule}
    For any random variables $A_1, A_2$ and $B$
    \begin{align*}
        I \rb{\rb{A_1, A_2}; B} = I \rb{A_2; B \mid A_1} + I\rb{A_1;B}\,.
    \end{align*}
\end{lemma}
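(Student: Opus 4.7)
The plan is to derive the identity directly from the definitions of mutual information and conditional mutual information together with the chain rule for entropy, which is the canonical route for this classical fact. Concretely, I would expand both sides in terms of Shannon entropies and show that the two expressions coincide algebraically, relying on no probabilistic structure beyond the existence of the relevant joint distribution on $(A_1, A_2, B)$.

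First I would recall the two identities that do all the work: the definitional form $I(X;Y) = H(X) - H(X\mid Y)$ (and its conditional analogue $I(X;Y\mid Z) = H(X\mid Z) - H(X\mid Y,Z)$), and the chain rule for entropy $H(A_1, A_2) = H(A_1) + H(A_2 \mid A_1)$, which also holds conditionally: $H(A_1, A_2 \mid B) = H(A_1 \mid B) + H(A_2 \mid A_1, B)$. Each of these is immediate from the definition of entropy and the factorization of joint probability mass functions, and all variables in the statement take values in discrete/finite ranges, so there are no measure-theoretic subtleties to address.

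Next I would write
\begin{align*}
I((A_1, A_2); B) &= H(A_1, A_2) - H(A_1, A_2 \mid B) \\
&= \bigl[H(A_1) + H(A_2 \mid A_1)\bigr] - \bigl[H(A_1 \mid B) + H(A_2 \mid A_1, B)\bigr] \\
&= \bigl[H(A_1) - H(A_1 \mid B)\bigr] + \bigl[H(A_2 \mid A_1) - H(A_2 \mid A_1, B)\bigr] \\
&= I(A_1; B) + I(A_2; B \mid A_1),
\end{align*}
which is the claimed equality. No step requires more than substitution and rearrangement.

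There is no real obstacle here: the entire argument is a two-line manipulation once the entropy chain rule is invoked, and the result is a textbook identity (see, e.g., Cover and Thomas). The only thing worth flagging is that one should use the \emph{conditional} chain rule $H(A_1, A_2 \mid B) = H(A_1 \mid B) + H(A_2 \mid A_1, B)$ rather than re-deriving it on the fly, since that is where the conditioning on $A_1$ appears in the final term $I(A_2; B \mid A_1)$.
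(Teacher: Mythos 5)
Your derivation is correct: it is the standard textbook proof via the entropy chain rule (both unconditional and conditional), and every step is a valid algebraic manipulation of Shannon entropies for discrete random variables. Note that the paper itself does not supply a proof of this lemma—it is stated in the ``Auxiliary results'' section as a cited standard fact from information theory—so there is no paper proof to compare against; your argument is exactly the canonical one (e.g., Cover and Thomas) that the authors implicitly rely on.
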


\begin{lemma} \label{app-lem: variational info lower bound}
    Let $A$ and $B$ be any two random variables with associated marginal distributions $p_A$, $p_B$, and joint $p_{A,B}$. 
    Let $q_{A\mid B}$ be any conditional distribution (i.e. such that for any $b$, $q_{A\mid B} \rb{\cdot, b}$ is a normalized non-negative measure). 
    Then:
    \begin{align*}
        I\rb{A;B} \ge \bbE_{A,B \sim p_{A,B}} \bb{\log \rb{\frac{dq_{A \mid B} \rb{A \midv B}}{dp_A \rb{A}}}}\,. 
    \end{align*}
\end{lemma}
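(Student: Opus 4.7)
The plan is to reduce the claimed inequality to the non-negativity of the Kullback--Leibler divergence (Gibbs' inequality), treating the right hand side as a ``surrogate'' that can only underestimate the true posterior. First I would rewrite the mutual information in its standard conditional form,
\begin{align*}
I \rb{A;B} = \bbE_{A,B \sim p_{A,B}} \bb{\log \rb{\frac{dp_{A \mid B} \rb{A \midv B}}{dp_A \rb{A}}}} \,,
\end{align*}
using the factorization $dp_{A,B}\rb{a,b} = dp_{A \mid B}\rb{a \midv b} \cdot dp_B\rb{b}$.

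Next, I would subtract the proposed lower bound from $I\rb{A;B}$ and combine the two expectations inside a single logarithm:
\begin{align*}
I\rb{A;B} - \bbE_{A,B \sim p_{A,B}} \bb{\log \rb{\frac{dq_{A \mid B} \rb{A \midv B}}{dp_A \rb{A}}}}
= \bbE_{A,B \sim p_{A,B}} \bb{\log \rb{\frac{dp_{A \mid B} \rb{A \midv B}}{dq_{A \mid B} \rb{A \midv B}}}} \,.
\end{align*}
Conditioning on $B$, the right hand side becomes $\bbE_{B \sim p_B} \bb{\mathrm{KL} \rb{p_{A \mid B}\rb{\cdot \midv B} \,\|\, q_{A \mid B}\rb{\cdot \midv B}}}$, which is non-negative pointwise in $B$ by Gibbs' inequality --- itself an instance of Jensen's inequality applied to the concave function $-\log$, valid precisely because the hypothesis on $q_{A\mid B}$ ensures that $q_{A\mid B}\rb{\cdot \midv b}$ is a probability distribution for every $b$. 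Taking the expectation over $B$ preserves the sign, which immediately gives the desired inequality.

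I do not anticipate any genuine obstacle, as the lemma is essentially a repackaging of the variational characterization of KL divergence. The only points worth handling carefully are the usual measure-theoretic conventions: contributions from $a,b$ with $dp_{A \mid B}\rb{a \midv b} = 0$ are declared to vanish, and if $q_{A \mid B}$ assigns mass zero to a $p_{A,B}$-positive event then the right hand side diverges to $-\infty$ and the bound becomes vacuous. Both conventions are standard and need only a one-line remark.
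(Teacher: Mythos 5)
Your proof is correct. The paper does not actually include a proof of this lemma; it is listed among the ``auxiliary results'' cited as standard facts from information theory (and from \citet{manoj2023interpolation}), so there is no paper argument to compare against. Your derivation is the canonical one: writing $I(A;B) = \bbE\bigl[\log\bigl(dp_{A\mid B}/dp_A\bigr)\bigr]$, subtracting the proposed bound, and recognizing the difference as $\bbE_{B}\bigl[\mathrm{KL}\bigl(p_{A\mid B}(\cdot\mid B)\,\|\,q_{A\mid B}(\cdot\mid B)\bigr)\bigr]\ge 0$, with the hypothesis that each $q_{A\mid B}(\cdot\mid b)$ is a normalized non-negative measure exactly what is needed for Gibbs' inequality to apply pointwise in $b$. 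Your closing remark on the degenerate case (where $q_{A\mid B}$ puts zero mass on a $p_{A,B}$-positive event, making the right-hand side $-\infty$ and the bound vacuous) is the right caveat and matches how the lemma is actually invoked later in the paper, where absolute continuity of the proposed $q$ with respect to the true conditional is verified before applying it.
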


\begin{lemma} \label{app-lem: independence info bound}
    Let $A_1, A_2, B$ be random variables where $A_1$ and $A_2$ are independent. 
    Then
    \begin{align*}
        I\rb{\rb{A_1, A_2}; B} \ge I\rb{A_1; B} + I\rb{A_2; B}\,.
    \end{align*}
\end{lemma}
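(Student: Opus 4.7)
The plan is to combine the chain rule of mutual information (\lemref{app-lem: info chain rule}) with the elementary fact that conditioning reduces entropy, where the assumed independence of $A_1$ and $A_2$ is used only to convert one conditional-entropy term into an unconditional one.

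First, I would apply the chain rule to decompose
$$I\rb{\rb{A_1, A_2}; B} = I\rb{A_1; B} + I\rb{A_2; B \mid A_1}.$$
Consequently, it suffices to establish that $I\rb{A_2; B \mid A_1} \ge I\rb{A_2; B}$, after which adding $I\rb{A_1; B}$ to both sides gives the claim.

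Next, I would expand the conditional mutual information as
$$I\rb{A_2; B \mid A_1} = H\rb{A_2 \mid A_1} - H\rb{A_2 \mid A_1, B}.$$
The independence of $A_1$ and $A_2$ gives $H\rb{A_2 \mid A_1} = H\rb{A_2}$, handling the first term. For the second term, the universal ``conditioning reduces entropy'' inequality yields $H\rb{A_2 \mid A_1, B} \le H\rb{A_2 \mid B}$, which does not require independence. Chaining these,
$$I\rb{A_2; B \mid A_1} \ge H\rb{A_2} - H\rb{A_2 \mid B} = I\rb{A_2; B},$$
which is exactly what we needed.

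There is no real obstacle here --- the statement is a textbook consequence of the chain rule plus monotonicity of entropy under conditioning, and the only point of care is to notice that the independence hypothesis is needed precisely to strip the conditioning in $H\rb{A_2 \mid A_1}$, while the inequality $H\rb{A_2 \mid A_1, B} \le H\rb{A_2 \mid B}$ holds unconditionally. The argument goes through identically whether $A_1, A_2, B$ are discrete or one passes to differential entropy in a more general setting, so no measure-theoretic subtleties arise.
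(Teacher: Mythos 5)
Your proof is correct. The paper does not prove this lemma---it is listed among the standard information-theoretic facts cited at the start of the auxiliary-results section---so there is no in-paper argument to compare against. Your derivation is the textbook one: apply the chain rule $I((A_1, A_2); B) = I(A_1; B) + I(A_2; B \mid A_1)$, use independence to rewrite $H(A_2 \mid A_1) = H(A_2)$, and invoke $H(A_2 \mid A_1, B) \le H(A_2 \mid B)$, which is just the non-negativity of $I(A_2; A_1 \mid B)$ and holds unconditionally. Every step is valid, and you correctly isolate the single place where the independence hypothesis is actually used.
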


\begin{lemma}[Lemma A.4 from \citet{manoj2023interpolation}]
\label{lem:entropy_power}
For $C\ge 0$ and $0\le\alpha\le 1$ it holds that
$$1-2^{-H(\alpha)-C}\le 1-2^{-H(\alpha)}+C\,.$$
\end{lemma}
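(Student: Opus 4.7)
The plan is to prove the inequality by reducing it, via elementary manipulations, to a one-variable statement that does not involve $\alpha$ at all, and then dispatch that statement with a standard concavity/convexity bound.

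First I would rewrite the target inequality $1-2^{-H(\alpha)-C}\le 1-2^{-H(\alpha)}+C$ in the equivalent form
\begin{equation*}
2^{-H(\alpha)}\bigl(1-2^{-C}\bigr)\le C,
\end{equation*}
by subtracting $1$ from both sides and factoring the resulting difference $2^{-H(\alpha)}-2^{-H(\alpha)-C}$ on the left. Since $H(\alpha)\in[0,1]$ for $\alpha\in[0,1]$, we have $2^{-H(\alpha)}\le 1$, and both factors on the left are nonnegative (because $C\ge 0$ forces $1-2^{-C}\ge 0$). Hence it suffices to establish the $\alpha$-free inequality $1-2^{-C}\le C$ for all $C\ge 0$.

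Finally, I would dispatch $1-2^{-C}\le C$ by rewriting $2^{-C}=e^{-C\ln 2}$ and invoking the standard inequality $1-e^{-x}\le x$ for $x\ge 0$ with $x=C\ln 2$, which gives $1-2^{-C}\le C\ln 2\le C$ since $\ln 2<1$. Alternatively, one can argue directly by considering $g(C)=C-1+2^{-C}$, noting that $g(0)=0$ and $g'(C)=1-\ln(2)\cdot 2^{-C}\ge 1-\ln 2>0$ for $C\ge 0$, so $g$ is nondecreasing and hence nonnegative on $[0,\infty)$. There is no real obstacle here; the only thing to be careful about is the direction of the inequality after factoring and the use of $2^{-H(\alpha)}\le 1$, which is where the assumption $\alpha\in[0,1]$ (equivalently $H(\alpha)\ge 0$) is actually used.
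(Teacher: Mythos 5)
Your proof is correct. Each step checks out: subtracting $1$ from both sides and factoring gives the equivalent form $2^{-H(\alpha)}\bigl(1-2^{-C}\bigr)\le C$; the assumption $\alpha\in[0,1]$ ensures $H(\alpha)\ge 0$ and hence $2^{-H(\alpha)}\le 1$, while $C\ge 0$ ensures $1-2^{-C}\ge 0$, so the left side is bounded by $1-2^{-C}$; and the reduction to $1-2^{-C}\le C$ is dispatched cleanly by $1-e^{-x}\le x$ with $x=C\ln 2$, or equivalently by your monotonicity argument for $g(C)=C-1+2^{-C}$. Note, however, that the paper does not supply a proof of this lemma at all: it simply cites it as Lemma~A.4 of Manoj and Srebro (2023), so there is no internal proof in this document to compare against. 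Your argument is essentially the canonical one for this kind of ``entropy-power''-flavored bound, and it supplies a self-contained justification the paper leaves implicit.
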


\begin{lemma} \label{app-lem: phi propeprties}
Let $\varepsilon\in\left(0,\frac{1}{2}\right)$ and
$$
\phi\left(t\right)\triangleq\phi_{\varepsilon}\left(t\right)=\frac{\varepsilon^{t}}{\varepsilon^{t}+\left(1-\varepsilon\right)^{t}}=\frac{1}{1+\left(\frac{1}{\varepsilon}-1\right)^{t}} \,. 
$$ 
Then, $\phi$ is monotonically decreasing as a function of $t$, and convex in $\rb{0, \infty}$.
\end{lemma}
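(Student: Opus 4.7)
The plan is to reduce $\phi_\varepsilon$ to a standard sigmoidal function and then verify the two claims directly. First I would use the second form of the definition and set $a \triangleq \tfrac{1}{\varepsilon} - 1$, which is strictly greater than $1$ since $\varepsilon < \tfrac{1}{2}$. Writing $c \triangleq \ln a > 0$, the function becomes
\[
\phi(t) = \frac{1}{1 + a^t} = \frac{1}{1 + e^{ct}},
\]
i.e., (up to reparametrization) a logistic function.

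Monotonicity is then immediate: $c > 0$ makes $t \mapsto e^{ct}$ strictly increasing, hence $t \mapsto 1 + e^{ct}$ is strictly increasing and positive, so its reciprocal $\phi$ is strictly decreasing on all of $\mathbb{R}$, and in particular on $(0,\infty)$.

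For convexity on $(0,\infty)$, I would differentiate twice. A short calculation (quotient rule, then simplification) gives
\[
\phi''(t) = \frac{c^2\, e^{ct}\,(e^{ct} - 1)}{(1 + e^{ct})^3}.
\]
For $t > 0$ we have $e^{ct} > 1$ (since $c > 0$), and the remaining factors are positive, so $\phi''(t) > 0$ on $(0,\infty)$, which gives convexity there. I do not expect any real obstacle; the only mild subtlety is that convexity actually fails on $(-\infty,0)$ (where $\phi''<0$), which is why the lemma restricts attention to positive $t$, and this is automatically handled by the sign of $e^{ct}-1$ in the formula above.
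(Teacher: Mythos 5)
Your proposal is correct and takes essentially the same approach as the paper: the paper also substitutes $\alpha = \tfrac{1}{\varepsilon}-1$, computes $\phi''(t) = \ln(\alpha)^2\,\alpha^t\,\frac{\alpha^t-1}{(1+\alpha^t)^3}$, and observes that $\alpha>1$ makes this positive for $t>0$; your reparametrization via $c=\ln a$ and $e^{ct}$ is just notational. The one minor difference is that you deduce monotonicity directly from $1+e^{ct}$ being increasing rather than from the sign of $\phi'$, which is slightly cleaner but not a different argument.
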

\begin{proof}
Denote $\alpha\triangleq\frac{1}{\varepsilon}-1$ then
\begin{align*}
\phi\left(t\right)
&
=
\frac{1}{1+\alpha^{t}}
\\
\phi^{\prime}\left(t\right)
&
=\frac{-\ln\left(\alpha\right)\alpha^{t}}{\left(1+\alpha^{t}\right)^{2}}
=-\ln\left(\alpha\right)\cdot\frac{\alpha^{t}}{1+2\alpha^{t}+\alpha^{2t}}
\\
\phi^{\prime\prime}\left(t\right) & =-\ln\left(\alpha\right)\cdot\frac{\ln\left(\alpha\right)\alpha^{t}\left(1+\alpha^{t}\right)^{2}-\alpha^{t}\cdot2\left(1+\alpha^{t}\right)\cdot\ln\left(\alpha\right)\alpha^{t}}{\left(1+\alpha^{t}\right)^{4}}\\
 & =-\ln\left(\alpha\right)^{2}\cdot\alpha^{t}\cdot\frac{\left(1+\alpha^{t}\right)-2\alpha^{t}}{\left(1+\alpha^{t}\right)^{3}}=\ln\left(\alpha\right)^{2}\cdot\alpha^{t}\cdot\frac{\alpha^{t}-1}{\left(1+\alpha^{t}\right)^{3}}\,.
\end{align*}
Notice that for any $\varepsilon\in\left(0,\frac{1}{2}\right)$, $\alpha=\frac{1}{\varepsilon}-1>1$
so for all $t>0$ 
\[
\alpha^{t}-1>0
\]
and $\phi^{\prime\prime}\left(t\right)>0$ so the function is indeed
convex, and $-\ln\left(\alpha\right)<0$ so $\phi$ is decreasing.
\end{proof}

\pagebreak

\begin{corollary} \label{app-cor: phi convexity ineq}
For all $t>0$ it holds that
\[
\phi\left(t\right)\ge\phi\left(1\right)+\phi^{\prime}\left(1\right)\left(t-1\right)=\varepsilon+\ln2\left(\varepsilon\log\left(\varepsilon\right)+\varepsilon H\left(\varepsilon\right)\right)\left(t-1\right)
\,.
\]
\end{corollary}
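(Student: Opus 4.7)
The plan is to derive this as a direct consequence of the convexity of $\phi$ established in the previous lemma, combined with explicit computations of $\phi(1)$ and $\phi'(1)$. Since \lemref{app-lem: phi propeprties} already shows $\phi'' > 0$ on $(0, \infty)$, the tangent line inequality at any point in that interval is immediate: $\phi(t) \ge \phi(t_0) + \phi'(t_0)(t - t_0)$. I would apply this with $t_0 = 1$ and then simplify the two constants.

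First, I would compute $\phi(1) = \frac{\varepsilon}{\varepsilon + (1-\varepsilon)} = \varepsilon$, which is the claimed intercept term. Next, using the expression $\phi'(t) = -\ln(\alpha)\cdot \frac{\alpha^t}{(1+\alpha^t)^2}$ from the previous lemma with $\alpha = \frac{1-\varepsilon}{\varepsilon}$, and noting that $1 + \alpha = \frac{1}{\varepsilon}$ so that $(1+\alpha)^2 = \varepsilon^{-2}$, I would evaluate
\[
\phi'(1) = -\ln(\alpha)\cdot \varepsilon(1-\varepsilon) = \varepsilon(1-\varepsilon)\bigl[\ln \varepsilon - \ln(1-\varepsilon)\bigr].
\]

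The only remaining step is the bookkeeping to verify that this expression equals $\ln 2 \cdot (\varepsilon \log \varepsilon + \varepsilon H(\varepsilon))$. Expanding $H(\varepsilon) = -\varepsilon \log \varepsilon - (1-\varepsilon)\log(1-\varepsilon)$ gives
\[
\varepsilon \log \varepsilon + \varepsilon H(\varepsilon) = \varepsilon(1-\varepsilon)\bigl[\log \varepsilon - \log(1-\varepsilon)\bigr],
\]
and multiplying by $\ln 2$ converts the base-2 logarithms to natural logarithms, matching the formula for $\phi'(1)$ above.

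There is no genuine obstacle here; the result is a routine application of the tangent line inequality, and the only minor issue is keeping track of the conversion between $\log$ and $\ln$ so that the two forms of $\phi'(1)$ agree.
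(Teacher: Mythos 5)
Your proposal is correct and matches the paper's proof in all essentials: both invoke the tangent-line inequality at $t_0 = 1$ afforded by the convexity from \lemref{app-lem: phi propeprties}, compute $\phi(1) = \varepsilon$, and simplify $\phi'(1) = -\ln(\alpha)\,\alpha/(1+\alpha)^2 = \varepsilon(1-\varepsilon)[\ln\varepsilon - \ln(1-\varepsilon)]$ before reconciling it with $\ln 2\,(\varepsilon\log\varepsilon + \varepsilon H(\varepsilon))$. The only cosmetic difference is that you expand the target expression and match it against $\phi'(1)$, whereas the paper pushes $\phi'(1)$ forward through a chain of algebraic rewrites into the target form.
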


\begin{proof}
Substituting $t=1$, 
\begin{align*}
\phi\left(1\right)
&=\frac{\varepsilon}{\varepsilon+\left(1-\varepsilon\right)}=\varepsilon
\\
\phi^{\prime}\left(1\right) 
& =-\ln\left(\alpha\right)\cdot\frac{\alpha}{\left(1+\alpha\right)^{2}}
 =
 -\ln\left(\frac{1}{\varepsilon}-1\right)\cdot\frac{\frac{1}{\varepsilon}-1}{\left(1+\left(\frac{1}{\varepsilon}-1\right)\right)^{2}}
 =-\ln\left(\frac{1-\varepsilon}{\varepsilon}\right)\cdot\frac{\frac{1}{\varepsilon}-1}{\left(\frac{1}{\varepsilon}\right)^{2}}
 \\
 & 
 =-\left(\ln\left(1-\varepsilon\right)-\ln\left(\varepsilon\right)\right)\cdot\left(\varepsilon-\varepsilon^{2}\right)
 =\varepsilon\left(1-\varepsilon\right)\ln\left(\varepsilon\right)-\varepsilon\left(1-\varepsilon\right)\ln\left(1-\varepsilon\right)
 \\
 & 
 =\varepsilon\ln\left(\varepsilon\right)-\varepsilon\left(\varepsilon\ln\left(\varepsilon\right)+\left(1-\varepsilon\right)\ln\left(1-\varepsilon\right)\right)
 \\
 & 
 =\varepsilon\ln2\left(\log\left(\varepsilon\right)-\left(\varepsilon\log\left(\varepsilon\right)+\left(1-\varepsilon\right)\log\left(1-\varepsilon\right)\right)\right)
 \\
 & =\varepsilon\ln2\left(\log\left(\varepsilon\right)+H\left(\varepsilon\right)\right)
 =\ln2\left(\varepsilon\log\left(\varepsilon\right)+\varepsilon H\left(\varepsilon\right)\right)\,.
\end{align*}
The inequality then holds due to convexity.
\end{proof}

Finally, for completeness, we derive the relationship between the generalization error with respect to the noisy distribution $\exError{h}$, and the generalization error with respect to the clean distribution $\calL_{\calD_0} \rb{h}$.

\begin{lemma} \label{app-lem: indep noisy clean relation}
    Let $\calD$ be a distribution as in \secref{sec:data-model}, with independent noise with label flipping probability $\varepsilon^\star \in \rb{0, \frac{1}{2}}$.
    Let $\calD_0$ be the clean distribution, \ie the distribution with label flipping probability $0$.
    If
    \begin{align*}
        \exError{h} = \bbP_{\rb{X, Y} \sim \calD} \rb{h\rb{X} \neq Y}
    \end{align*}
    and 
    \begin{align*}
        \calL_{\calD_0} \rb{h} = \bbP_{\rb{X, Y} \sim \calD_0} \rb{h\rb{X} \neq Y} = \bbP_{\rb{X} \sim \calD} \rb{h\rb{X} \neq h^\star \rb{X}}
    \end{align*}
    then
    \begin{align*}
        \calL_{\calD_0} \rb{h} = \frac{\exError{h} - \errorRate}{1 - 2\errorRate} \,.
    \end{align*}
\end{lemma}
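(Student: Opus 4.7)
The plan is to carry out a short direct computation using the independent-noise model from Assumption~\ref{asm:teacher}. Under independent noise, we can write $Y = h^\star(X) \oplus Z$ where $Z \sim \mathrm{Ber}(\varepsilon^\star)$ is independent of $X$. The first step will be to observe that the event $\{h(X) \neq Y\}$ rewrites as the event $\{h(X) \oplus h^\star(X) \oplus Z = 1\}$, so the analysis reduces to computing the probability of an XOR of two Bernoulli variables, one of which ($Z$) is independent of $X$ and hence independent of the indicator $\mathbf{1}\{h(X) \neq h^\star(X)\}$.

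Next, let $p \triangleq \calL_{\calD_0}(h) = \bbP_{X \sim \calD}(h(X) \neq h^\star(X))$. Conditioning on whether $h$ matches $h^\star$ and using the independence of $Z$, I would compute
\begin{align*}
\exError{h} &= \bbP(h(X) \neq h^\star(X), Z = 0) + \bbP(h(X) = h^\star(X), Z = 1) \\
&= p(1-\varepsilon^\star) + (1-p)\varepsilon^\star \\
&= p(1 - 2\varepsilon^\star) + \varepsilon^\star.
\end{align*}

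Finally, since $\varepsilon^\star \in (0, 1/2)$, the coefficient $1 - 2\varepsilon^\star$ is strictly positive, so we may invert the relation to solve for $p$, yielding $p = (\exError{h} - \varepsilon^\star)/(1 - 2\varepsilon^\star)$, which is the claim. There is no real obstacle here; the only point requiring care is to make explicit use of the independence of $Z$ from $X$ (hence from $\mathbf{1}\{h(X) \neq h^\star(X)\}$), which is precisely the hypothesis that the noise is independent rather than arbitrary.
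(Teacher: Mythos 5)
Your proof is correct and follows essentially the same route as the paper's: both decompose $\exError{h}$ according to whether the label was flipped, invoke independence to pull out $\errorRate$ and $1-\errorRate$ as the conditional weights, and obtain $\exError{h} = p(1-2\errorRate) + \errorRate$ before solving for $p$. Introducing the explicit noise variable $Z$ is a slightly cleaner way to phrase the conditioning, but the computation is identical.
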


\begin{proof}
By definition,
\begin{align*}
\exError{h} &= \bbP_{\rb{X, Y} \sim \calD} \rb{h\rb{X} \neq Y} \\
& = \bbP_{\rb{X, Y} \sim \calD} \rb{h\rb{X} \neq Y \mid h^\star \rb{X} \oplus Y = 0} \bbP_{\rb{X, Y} \sim \calD} \rb{h^\star \rb{X} \oplus Y = 0} \\
& \quad + \bbP_{\rb{X, Y} \sim \calD} \rb{h\rb{X} \neq Y \mid h^\star \rb{X} \oplus Y = 1} \bbP_{\rb{X, Y} \sim \calD} \rb{h^\star \rb{X} \oplus Y = 1} \,. 
\end{align*}
Since we assume that the noise is independent, 
\begin{align*}
\exError{h} &= \rb{1 - \errorRate} \bbP_{\rb{X, Y} \sim \calD} \rb{h\rb{X} \neq Y \mid h^\star \rb{X} \oplus Y = 0} \\
& \quad + \errorRate \bbP_{\rb{X, Y} \sim \calD} \rb{h\rb{X} \neq Y \mid h^\star \rb{X} \oplus Y = 1} \\
& = \rb{1 - \errorRate} \bbP_{\rb{X, Y} \sim \calD} \rb{h\rb{X} \neq h^\star \rb{X}} + \errorRate \bbP_{\rb{X, Y} \sim \calD} \rb{h\rb{X} = h^\star \rb{X}} \\
& = \rb{1 - \errorRate} \calL_{\calD_0} \rb{h} + \errorRate \rb{1 - \calL_{\calD_0} \rb{h}} \\
& = \errorRate + \rb{1 - 2\errorRate} \calL_{\calD_0} \rb{h} \,,
\end{align*}
or equivalently,
\begin{align*}
    \calL_{\calD_0} \rb{h} = \frac{\exError{h} - \errorRate}{1 - 2\errorRate} \,.
\end{align*}
\end{proof}

\begin{remark}
    In particular, under the assumptions of the lemma, $\exError{h} = 2\errorRate \rb{1 - \errorRate}$ is equivalent to $\calL_{\calD_0} \rb{h} = \errorRate$.
\end{remark}

\newpage

\section{Data consistency} \label{app: data consistency}

Before moving on to generalization, we address some key properties of the training set's consistency.

\begin{lemma} \label{app-lem: data consistency upper bound}
    For any distribution over the data $\calD$, $\bbP \rb{\text{inconsistent } S} \le 
    \frac{1}{2} N^2 \Dmax \,.$
\end{lemma}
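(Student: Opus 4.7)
The plan is a straightforward union bound over pairs of training examples. By \defref{def:consistent}, the sample $S = \{(X_i, Y_i)\}_{i=1}^N$ is inconsistent if and only if there exist indices $i < j$ in $[N]$ such that $X_i = X_j$ but $Y_i \ne Y_j$. In particular, inconsistency implies the strictly weaker event that two features collide, so
\[
\bbP(\text{inconsistent } S) \le \bbP\bigl(\exists\, i < j : X_i = X_j\bigr) \le \sum_{1 \le i < j \le N} \bbP(X_i = X_j).
\]

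For each fixed pair $i < j$, since $X_i$ and $X_j$ are i.i.d.\ with marginal $d\calD(\cdot)$ on the finite set $\calX$, I would compute
\[
\bbP(X_i = X_j) = \sum_{x \in \calX} d\calD(x)^2 \le \Dmax \cdot \sum_{x \in \calX} d\calD(x) = \Dmax,
\]
where the inequality uses \defref{def: peak marginal probability} to pull out one factor of $d\calD(x) \le \Dmax$ and the remaining sum is $1$. Combining this with the union bound and $\binom{N}{2} \le \frac{1}{2} N^2$ yields the claimed inequality.

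There is no real obstacle here; the only modeling choice is that I use the strictly weaker event ``two features collide'' rather than ``two features collide with conflicting labels,'' which is adequate for the stated bound and avoids any dependence on the conditional label distribution (and, crucially, does not require the noise to be independent).
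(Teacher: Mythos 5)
Your proof is correct and follows essentially the same approach as the paper's: drop the label condition to bound by a feature-collision event, apply a union bound over pairs, and bound the collision probability by $\Dmax$. The only cosmetic difference is that you index the union bound over unordered pairs $i<j$ explicitly, which is slightly cleaner than the paper's $\sum_{i\neq j}$ notation.
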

\begin{proof}
    Using the union bound,
    \begin{align*}
        \bbP \rb{\text{inconsistent } S} & = \bbP \rb{\exists i \neq j \in \bb{N} \, : \, X_i = X_j, Y_i \neq Y_j} \\
        & \le \bbP \rb{\exists i \neq j \in \bb{N} \, : \, X_i = X_j} 
        \\
        & \le \sum_{i\neq j} \bbP \rb{X_i = X_j} 
        = 
        \binom{N}{2} \bbP \rb{X_1 \!=\! X_2} 
        =
        \binom{N}{2} \!\sum_{x \in \calX} \bbP \rb{X_1 = x} \bbP \rb{X_2 = x} 
        \\
        & \le \binom{N}{2} \sum_{x \in \calX} \Dmax \bbP \rb{X = x} 
        = \binom{N}{2} \Dmax 
        \le \frac{1}{2} N^2 \Dmax \,.
    \end{align*}
\end{proof}

Hence, under Assumption~\ref{asm: bounded inconsistency} we have $\bbP \rb{\text{inconsistent } S} = o \rb{1}$, \ie the inconsistency probability is asymptotically small.

\subsection{Independent label noise} \label{app: independent noise data consistency}

We now focus on the case of independent label noise, \ie $Y \oplus \trueh \rb{X} \mid \cb{X=x} \sim \mathrm{Ber} \rb{\varepsilon^\star}$ for any $x \in \calX$. 
Recall the noise level
\[
\varepsilon^{\star}=\mathbb{P}_{\left(X,Y\right)\sim{\cal D}}\left(Y\neq h^{\star}\left(X\right)\right)=\mathbb{P}_{S}\left(Y_{1}\neq h^{\star}\left(X_{1}\right)\right)
\]
and we define the ``effective'' noise level in a \emph{consistent} training set
\begin{align} \label{eq: epstrain def}
\epstr\triangleq\mathbb{P}_{S}\left(Y_{1}\neq h^{\star}\left(X_{1}\right)\mid\text{consistent \ensuremath{S}}\right)\,.
\end{align}
We relate $\epstr$ to $\errorRate$ in the following lemma. 

\begin{lemma}
\label{lem:eps-tr-star}
In the independent noise setting, it holds that
\[
\left|{\epstr}-\varepsilon^{\star}\right| \le \left| \ln2 \left( \varepsilon^{\star} \log \left(\varepsilon^{\star}\right) + \varepsilon^{\star} H \left(\varepsilon^{\star}\right) \right) \right| \cdot \left(N-1\right) \frac{\Dmax}{\bbP \rb{\cnsstnts}} \,,
\]
and moreover, $\epstr \le \errorRate$.
\end{lemma}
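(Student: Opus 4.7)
The plan is to reparameterize the data and condition on the feature vector. Write $Y_i = h^\star(X_i) \oplus Z_i$ where, by the independent-noise assumption, $Z_1,\dots,Z_N \sim \mathrm{Ber}(\errorRate)$ are i.i.d.\ and independent of $X_1,\dots,X_N$. Since $h^\star(X_i) = h^\star(X_j)$ whenever $X_i = X_j$, consistency of $S$ is equivalent to the event that within every ``collision group'' $G := \{i : X_i = x\}$ of a feature value $x$, the random variables $\{Z_i : i \in G\}$ are all equal. In particular, the event $\cnsstnts$ is determined by the $Z_i$'s and the group structure induced by $X_{1:N}$, and the $Z_i$'s in different groups are conditionally independent given $X_{1:N}$.

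Let $G_1 = \{i \in [N] : X_i = X_1\}$, so $|G_1| \ge 1$ always. Conditional on $X_{1:N}$ and on $\cnsstnts$, each group independently takes all-$0$ or all-$1$ $Z$-values, with the $|G_1|$-group conditional probability of ``all $1$'' equal to
\[
\frac{\errorRate^{|G_1|}}{\errorRate^{|G_1|} + (1-\errorRate)^{|G_1|}} = \phi_{\errorRate}(|G_1|),
\]
with $\phi_\errorRate$ as in Lemma~\ref{app-lem: phi propeprties}. Since the event $\{Y_1 \neq h^\star(X_1)\}$ is exactly $\{Z_1 = 1\}$, this yields
\[
\epstr = \bbP_S(Z_1 = 1 \mid \cnsstnts) = \bbE\!\left[\phi_\errorRate(|G_1|)\midv \cnsstnts\right].
\]

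From here both claims follow quickly. For the inequality $\epstr \le \errorRate$: $\phi_\errorRate$ is decreasing (Lemma~\ref{app-lem: phi propeprties}) and $|G_1| \ge 1$, so $\phi_\errorRate(|G_1|) \le \phi_\errorRate(1) = \errorRate$ pointwise, and taking conditional expectation preserves this. For the quantitative bound, I apply the convexity inequality from Corollary~\ref{app-cor: phi convexity ineq} in the form $\phi_\errorRate(1) - \phi_\errorRate(t) \le |\phi_\errorRate'(1)|(t-1)$ for $t \ge 1$, which gives
\[
\errorRate - \epstr \le |\phi_\errorRate'(1)| \cdot \bbE[|G_1| - 1 \mid \cnsstnts].
\]
Drop the conditioning using $\bbE[A\mid B]\le \bbE[A]/\bbP(B)$ for $A\ge 0$, and bound $\bbE[|G_1|-1] = \sum_{j=2}^N \bbP(X_j = X_1) = (N-1)\sum_x \calD(x)^2 \le (N-1)\Dmax$. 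Substituting the explicit value $\phi_\errorRate'(1) = \ln 2\,(\errorRate\log\errorRate + \errorRate H(\errorRate))$ from Corollary~\ref{app-cor: phi convexity ineq} yields the stated bound, and since $\epstr \le \errorRate$ we have $|\epstr - \errorRate| = \errorRate - \epstr$.

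\textbf{Main obstacle.} The only delicate step is the conditional calculation giving $\epstr = \bbE[\phi_\errorRate(|G_1|)\mid \cnsstnts]$; one has to be careful that, conditional on $X_{1:N}$, the constraints imposed by $\cnsstnts$ decouple across collision groups so that the restriction affects $Z_1$ only through the size $|G_1|$ of its own group. Once this formula is in hand, monotonicity and convexity of $\phi_\errorRate$ (already established in the cited lemmas) deliver both conclusions essentially for free.
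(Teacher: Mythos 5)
Your proof is correct and follows essentially the same path as the paper's: reduce to $\epstr = \bbE[\phi_{\errorRate}(|G_1|)\mid\cnsstnts]$ via the collision-group conditional, then use monotonicity of $\phi$ for $\epstr\le\errorRate$ and the tangent-line estimate at $t=1$ (Corollary~\ref{app-cor: phi propeprties}/\ref{app-cor: phi convexity ineq}) together with $\bbE[|G_1|-1\mid\cnsstnts]\le (N-1)\Dmax/\bbP(\cnsstnts)$ for the quantitative bound. The only cosmetic difference is that you apply the linear tangent bound pointwise and then take expectation (by linearity), whereas the paper first applies Jensen to move inside $\phi$ and then the tangent bound at the mean — these yield identical inequalities.
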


\begin{proof}
Conditioning on $S$ being consistent (having no label ``collisions''), all occurrences of $x$
in $S$ must have the same label so 
\[
\mathbb{P}_{S}\left(Y_{1}\neq h^{\star}\left(X_{1}\right)\vert\text{\ensuremath{\left(X_{1},Y_{1}\right)} appears \ensuremath{k} times in \ensuremath{S}},\,\text{consistent \ensuremath{S}}\right)=\frac{\varepsilon^{\star k}}{\varepsilon^{\star k}+\left(1-\varepsilon^{\star}\right)^{k}}
\]
Therefore,
\begin{align}
\begin{split}
\label{eq:eps_tr_upper_bound}
{\epstr} & =\mathbb{P}_{S}\left(Y_{1}\neq h^{\star}\left(X_{1}\right)\mid
\text{consistent \ensuremath{S}}\right)
 \\
 & =\sum_{k=1}^{N}\mathbb{P}_{S}\left(Y_{1}\neq h^{\star}\left(X_{1}\right)\vert\text{\ensuremath{\left(\ensuremath{X_{1}},Y_{1}\right)} appears \ensuremath{k} times in \ensuremath{S}},\,\text{consistent \ensuremath{S}}\right) 
 \\
 & \qquad \quad \cdot \mathbb{P}\left(\text{\ensuremath{\left(\ensuremath{X_{1}},Y_{1}\right)} appears \ensuremath{k} times in \ensuremath{S}}\mid\text{consistent \ensuremath{S}}\right)
 \\
 & =\sum_{k=1}^{N}\frac{\varepsilon^{\star k}}{\varepsilon^{\star k}+\left(1-\varepsilon^{\star}\right)^{k}}\cdot\mathbb{P}\left(\text{\ensuremath{\left(\ensuremath{X_{1}},Y_{1}\right)} appears \ensuremath{k} times in \ensuremath{S}}\mid\text{consistent \ensuremath{S}}\right)
 \\
 & \le\sum_{k=1}^{N}\frac{\varepsilon^{\star1}}{\varepsilon^{\star1}+\left(1-\varepsilon^{\star}\right)^{1}}\cdot\mathbb{P}\left(\text{\ensuremath{\left(\ensuremath{X_{1}},Y_{1}\right)} appears \ensuremath{k} times in \ensuremath{S}}\mid\text{consistent \ensuremath{S}}\right)
 \\
 & =\varepsilon^{\star}\underset{\text{sums to }1}{\underbrace{\sum_{k=1}^{N}\cdot\mathbb{P}\left(\text{\ensuremath{\left(X_{1},Y_{1}\right)} appears \ensuremath{k} times in \ensuremath{S}}\mid\text{consistent \ensuremath{S}}\right)}}
 =
 \varepsilon^{\star}\,.
\end{split}
\end{align}
On the other hand, define 
\[
K\left(S\right)\triangleq\left|\left\{ i\in\left[N\right]\mid X_{i}=X_{1}\right\} \right|=\sum_{i=1}^{N}\ind{ X_{i}=X_{1}} 
\]
then 
\begin{align*}
\mathbb{E}_{S}\left[K\left(S\right)\mid\text{consistent \ensuremath{S}}\right] & =1+\sum_{i=2}^{N}\mathbb{E}_{S}\left[\ind{ X_{i}=X_{1}} \mid\text{consistent \ensuremath{S}}\right]\\
 & =1+\left(N-1\right)\mathbb{P}_{S}\left(X_{2}=X_{1}\mid\text{consistent \ensuremath{S}}\right)\,.
\end{align*}
Next,
\begin{align*}
\bbP \rb{X_2 = X_1 \mid \cnsstnts} = \frac{\bbP \rb{X_2 = X_1, \cnsstnts}}{\bbP \rb{\cnsstnts}} \le \frac{\bbP \rb{X_1 = X_2}}{\bbP \rb{\cnsstnts}} \,.
\end{align*}
Since $d{\cal D}\left(x\right)\le\Dmax$
for all $x\in{\cal X}$, as in the proof of \lemref{app-lem: data consistency upper bound}
\begin{align*}
\mathbb{E}_{S}\left[K\left(S\right)\mid\text{consistent \ensuremath{S}}\right] & \le 1 + \left(N-1\right) \frac{\bbP \rb{X_1 = X_2}}{\bbP \rb{\cnsstnts}}
\le 1 + \left(N-1\right) \frac{\Dmax}{\bbP \rb{\cnsstnts}} \,.
\end{align*}
Then, using \lemref{app-lem: phi propeprties} we get,
\begin{align*}
{\epstr} & =\sum_{k=1}^{N}\frac{\varepsilon^{\star k}}{\varepsilon^{\star k}+\left(1-\varepsilon^{\star}\right)^{k}}\cdot\mathbb{P}\left(\text{\ensuremath{X_{1}} appears \ensuremath{k} times in \ensuremath{S}}\mid\text{consistent \ensuremath{S}}\right)\\
 & =\sum_{k=1}^{N}\phi_{\varepsilon^{\star}}\left(k\right)\cdot\mathbb{P}\left(\text{\ensuremath{X_{1}} appears \ensuremath{k} times in \ensuremath{S}}\mid\text{consistent \ensuremath{S}}\right)\\
 & =\mathbb{E}_{S}\left[\underset{\text{convex in \ensuremath{k}}}{\underbrace{\phi_{\varepsilon^{\star}}\left(K\left(S\right)\right)}}\mid\text{consistent \ensuremath{S}}\right]\\
\left[\text{Jensen}\right] & \ge\phi_{\varepsilon^{\star}}\left(\mathbb{E}_{S}\left[K\left(S\right)\mid\text{consistent \ensuremath{S}}\right]\right)
\\
\left[\text{decreasing}\right] 
&
\ge\phi_{\varepsilon^{\star}}\left(1+\left(N-1\right) \cdot \frac{\Dmax}{\bbP \rb{\cnsstnts}} \right) \,.
\end{align*}
Corollary \ref{app-cor: phi convexity ineq} implies that 
\begin{align*}
{\epstr} & \ge\phi_{\varepsilon^{\star}}\left(1 + \left(N-1\right) \cdot \frac{\Dmax}{\bbP \rb{\cnsstnts}} \right)
\\ & 
\ge \varepsilon^{\star} + \ln2 \left( \varepsilon^{\star} \log \left(\varepsilon^{\star}\right) + \varepsilon^{\star} H \left(\varepsilon^{\star}\right) \right) \cdot \left(N-1\right)\frac{\Dmax}{\bbP \rb{\cnsstnts}}\,.
\end{align*}
Combining the bounds we get 
\[
\left|{\epstr}-\varepsilon^{\star}\right| \le \left| \ln2 \left( \varepsilon^{\star} \log \left(\varepsilon^{\star}\right) + \varepsilon^{\star} H \left(\varepsilon^{\star}\right) \right) \right| \cdot \left(N-1\right) \frac{\Dmax}{\bbP \rb{\cnsstnts}} \,.
\]
\end{proof}

\newpage

\section{Generalization bounds}

We present two generalization bounds for the population error of an interpolating algorithm in terms of the mutual information of it with the training set.

\begin{remark} [High consistency probability] \label{app-rem: simplifying p consistent}
Throughout the appendix we assume that the consistency satisfies $\mathbb{P}_{S} \left(\text{consistent } S \right) \ge \frac{1}{2}$.
While this assumption is not without loss of generality, it is a weaker version of \asmref{asm: bounded inconsistency} and implied by it (asymptotically). 
As \asmref{asm: bounded inconsistency} is assumed in all ``downstream results'' that this appendix aims to support, we find it is reasonable to assume here.
\end{remark}

\subsection{Arbitrary label noise}

In this subsection, we provide a generalization bound for interpolating algorithms without any assumptions on the distribution of the noise $Y \oplus h^\star \rb{X} \mid \cb{X=x}$, other than $\exError{h^\star} = \varepsilon^\star$.

\begin{lemma} \label{app-lem: agnostic bound}
For any interpolating learning algorithm $\lrule$, 
\begin{align*}
-\log & \rb{1-\mathbb{E}_{S,\lrule}\bb{ \exError{\lrule} 
\mid \cnsstnts }}  \le \frac{I\left(S;A\left(S\right)\right)}{N\cdot\mathbb{P}_{S}\left( \cnsstnts \right)}\,.
\end{align*}
\end{lemma}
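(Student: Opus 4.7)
The plan is to lower bound $I\rb{S; A(S)}$ by a quantity that grows with $\mathbb{E}_{S, A(S)}\bb{\exError{A(S)} \mid \cnsstnts}$ and then rearrange. This is the standard information-theoretic generalization recipe, built here around \lemref{app-lem: variational info lower bound} and \lemref{app-lem: independence info bound}, adapted to tolerate the event that $S$ is inconsistent (on which $A(S) = \star$ by construction).

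First I would reduce to a single-sample statement. Since the $\rb{X_i, Y_i}$ are i.i.d., iterated application of \lemref{app-lem: independence info bound} gives $I\rb{S; A(S)} \ge \sum_{i=1}^N I\rb{\rb{X_i, Y_i}; A(S)}$, and by the symmetry of the setup all $N$ summands are equal, yielding $I\rb{S; A(S)} \ge N \cdot I\rb{\rb{X_1, Y_1}; A(S)}$. It therefore suffices to show
$$I\rb{\rb{X_1, Y_1}; A(S)} \ge -\mathbb{P}_S\rb{\cnsstnts} \cdot \log\Bigrb{1 - \mathbb{E}_{S, A(S)}\bb{\exError{A(S)} \mid \cnsstnts}}.$$

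Next I would apply \lemref{app-lem: variational info lower bound} with a variational distribution $q$ tailored to interpolators: for every hypothesis $h \ne \star$ set
$$dq\rb{(x, y) \mid h} = \frac{d\calD(x, y) \cdot \ind{h(x) = y}}{1 - \exError{h}},$$
and set $dq(\cdot \mid \star) = d\calD(\cdot)$. The log-ratio against the marginal $d\calD$ that appears inside the variational bound then equals $\log \frac{\ind{A(S)(X_1) = Y_1}}{1 - \exError{A(S)}}$ whenever $A(S) \ne \star$ and equals $0$ otherwise. On the event $\cnsstnts$, the indicator is forced to $1$ because $A(S)$ interpolates $S$ and $\rb{X_1, Y_1} \in S$, so the log-ratio collapses to $-\log\rb{1 - \exError{A(S)}}$; on the complementary event the contribution vanishes.

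Conditioning on consistency and applying Jensen's inequality to the convex function $t \mapsto -\log(1 - t)$ then gives
$$I\rb{\rb{X_1, Y_1}; A(S)} \ge -\mathbb{P}_S\rb{\cnsstnts} \cdot \log\Bigrb{1 - \mathbb{E}_{S, A(S)}\bb{\exError{A(S)} \mid \cnsstnts}},$$
and combining with the independence step and dividing by $N \cdot \mathbb{P}_S\rb{\cnsstnts}$ yields the lemma. The only delicate point is checking that $q$ is a bona fide conditional distribution, i.e.\ that the normalizer $1 - \exError{h}$ is strictly positive for every $h$ arising with positive probability. This holds because any such $h$ interpolates some consistent $S$ in the support of $\calD^N$; in particular there is a point $\rb{x_1, y_1}$ in the support of $\calD$ with $h\rb{x_1} = y_1$, so $\calD\rb{\cb{(x,y) : h(x) = y}} > 0$ and $\exError{h} < 1$.
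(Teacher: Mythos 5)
Your proof is correct, but it follows a different route than the paper's. The paper applies the variational lower bound (\lemref{app-lem: variational info lower bound}) once, directly to $I(S; A(S))$, with a proposal distribution over the entire $N$-sample training set, $dq(s\mid h)\propto\ind{\calL_s(h)=0}\,d\calD^N(s)$ with normalizer $(1-\exError{h})^N$; this yields $-N\log(1-\exError{A(S)})$ inside an expectation in a single step, and Jensen's inequality plus rearranging then finishes the argument. You instead first invoke \lemref{app-lem: independence info bound} and exchangeability to reduce to $N\cdot I\rb{(X_1,Y_1);A(S)}$, then apply the variational bound with a per-sample proposal normalized by $1-\exError{h}$, which again collapses to $-\log(1-\exError{A(S)})$ on the consistency event. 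The two approaches are equivalent here because the training set is i.i.d.\ and the per-sample proposal is just the $N$-th root of the paper's product proposal; both give the identical final inequality. Your decomposition-first route is slightly more modular, and in fact mirrors the structure the paper itself uses in the harder independent-noise bound (\lemref{app-lem: generalization bound conditional independent noise}); the paper's direct route avoids the decomposition step at the cost of working with $\calD^N$. Your handling of the $\star$ branch, the interpolation forcing the indicator to $1$, the absolute-continuity check, and Jensen's inequality on $t\mapsto -\log(1-t)$ are all in order.
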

\begin{proof}
We rely on \lemref{app-lem: variational info lower bound}. 
Specifically,
we shall use the following suggested conditional distribution. For
$h\neq\star$ let
\[
dq\left(s\vert h\right)=\frac{1}{Z_{h}}\ind{ {\cal L}_{s}\left(h\right)=0} d{\cal D}^{N}\left(s\right)
\]
where
\begin{align*}
Z_{h} & =\sum_{s}\ind{ {\cal L}_{s}\left(h\right)=0} d{\cal D}^{N}\left(s\right)\\
 & =\mathbb{E}_{S}\ind{ {\cal L}_{S}\left(h\right)=0} \\
 & =\mathbb{P}_{S}\left({\cal L}_{S}\left(h\right)=0\right)\\{}
 & =\left(1-{\cal L}_{{\cal D}}\left(h\right)\right)^{N}\,.
\end{align*}
For $h=\star$ let 
\[
dq\left(s\vert\star\right)=\frac{\ind{ \text{inconsistent }s} d{\cal D}^{N}\left(s\right)}{\sum_{s^{\prime}}\ind{ \text{inconsistent }s^{\prime}} d{\cal D}^{N}\left(s^{\prime}\right)} = \ind{ \text{inconsistent }s} \frac{d{\cal D}^{N}\left(s\right)}{\mathbb{P}_{S}\left(\text{inconsistent }S\right)}\,.
\]
Clearly, if $h\neq\star$ and $dq\left(s\vert h\right)=0$ then either
$d{\cal D}^{N}\left(s\right)=0$ so $dp\left(s,h\right)=0$ as well,
or ${\cal L}_{s}\left(h\right) \neq 0$.
So, since $h\neq\star$, $s$ can be interpolated and $dp\left(s\mid h\right)=0$.
That is, the proposed conditional distribution is absolutely continuous
w.r.t.\ the true conditional distribution. When $h=\star$, $q$ is the
true conditional distribution given that $h=\star$ so it is also
absolutely continuous w.r.t.\ it. That is, the proposed distribution is 
\[
dq\left(s\mid h\right)=\frac{\ind{\text{inconsistent }s} }{\mathbb{P}_{S}\left(\text{inconsistent }S\right)}\ind{ h=\star} d{\cal D}^{N}\left(s\right)+\frac{\ind{ {\cal L}_{s}\left(h\right)=0} }{\left(1-{\cal L}_{{\cal D}}\left(h\right)\right)^{N}}\ind{ h\neq\star} d{\cal D}^{N}\left(s\right)\,.
\]
From \lemref{app-lem: variational info lower bound}
\begin{align*}
I\left(S;A\left(S\right)\right) & \ge\mathbb{E}_{S,A\left(S\right)}\left[\log\left(\frac{dq\left(S\middle\vert A\left(S\right)\right)}{d{\cal D}^{N}\left(S\right)}\right)\right]\\
 & =\mathbb{E}_{S,A\left(S\right)}
 \left[\log\left(\tfrac{\ind{ \text{inconsistent }S} }{\mathbb{P}_{S^{\prime}}\left(\text{inconsistent }S^{\prime}\right)}\ind{ A\left(S\right)=\star} 
 +
 \tfrac{\ind{ {\cal L}_{S}\left(A\left(S\right)\right)=0} }{\left(1-{\cal L}_{{\cal D}}\left(A\left(S\right)\right)\right)^{N}}\ind{ A\left(S\right)\neq\star} \right)\right] \,.
\end{align*}
$\ind{ A\left(S\right)=\star} $ and $\ind{ A\left(S\right)\neq\star} $
are mutually exclusive so
\begin{align*}
& I\left(S;A\left(S\right)\right) 
\\
&\quad  \ge\mathbb{E}_{S,A\left(S\right)}
\left[\log\left(\tfrac{\ind{ \text{inconsistent }S} }{\mathbb{P}_{S^{\prime}}\left(\text{inconsistent }S^{\prime}\right)}\right)\ind{ A\left(S\right)=\star} +
\log\left(\tfrac{\ind{ {\cal L}_{S}\left(A\left(S\right)\right)=0} }{\left(1-{\cal L}_{{\cal D}}\left(A\left(S\right)\right)\right)^{N}}\right)\ind{ A\left(S\right)\neq\star} \right]\,.
\end{align*}
The first term is 0 when $\lrule \neq \star$ and positive when $\lrule = \star$ (and so always non-negative).
Furthermore, whenever $dp\left(S,A\left(S\right)\right)>0$ and $\ind{ A\left(S\right)\neq\star} =1$ hold 
together, they imply that $\ind{ {\cal L}_{S}\left(A\left(S\right)\right)=0} =1$ so we have
\begin{align*}
I\left(S;A\left(S\right)\right) & \ge \mathbb{E}_{S,A\left(S\right)} \left[\log\left(\frac{1}{\left(1-{\cal L}_{{\cal D}}\left(A\left(S\right)\right)\right)^{N}}\right)\ind{ A\left(S\right)\neq\star} \right]\\
 & = -\mathbb{E}_{S,A\left(S\right)}\left[N\log\left(1-{\cal L}_{{\cal D}}\left(A\left(S\right)\right)\right)\ind{ A\left(S\right)\neq\star} \right]\,.
\end{align*}
Using Jensen's inequality,
\begin{align*}
-\mathbb{E}_{S,A\left(S\right)} & \left[N\log\left(1-{\cal L}_{{\cal D}}\left(A\left(S\right)\right)\right)\ind{ A\left(S\right)\neq\star} \right] \\
& =-N\mathbb{E}_{S,A\left(S\right)}\left[\log\left(1-{\cal L}_{{\cal D}}\left(A\left(S\right)\right)\right)\mid\ind{ A\left(S\right)\neq\star} \right]\mathbb{P}_{S,A\left(S\right)}\left(A\left(S\right)\neq\star\right)\\{}
 & \ge-N\log\left(1-\mathbb{E}_{S,A\left(S\right)}\left[{\cal L}_{{\cal D}}\left(A\left(S\right)\right)\mid\ind{ A\left(S\right)\neq\star} \right]\right)\mathbb{P}_{S,A\left(S\right)}\left(A\left(S\right)\neq\star\right)\\
 & =-N\log\left(1-\mathbb{E}_{S,A\left(S\right)}\left[{\cal L}_{{\cal D}}\left(A\left(S\right)\right)\mid\text{consistent } S \right]\right)\mathbb{P}_{S}\left(\text{consistent } S \right)
\end{align*}
so 
\begin{align*}
I\left(S;A\left(S\right)\right) \ge & -N\log\left(1-\mathbb{E}_{S,A\left(S\right)}\left[{\cal L}_{{\cal D}}\left(A\left(S\right)\right)\mid\text{consistent } S \right]\right)\mathbb{P}_{S}\left(\text{consistent } S \right)\,.
\end{align*}
Rearranging the inequality 
\begin{align*}
-\log & \left(1-\mathbb{E}_{S,A\left(S\right)}\left[{\cal L}_{{\cal D}}\left(A\left(S\right)\right)\mid\text{consistent } S \right]\right) \le\frac{I\left(S;A\left(S\right)\right)}{N\cdot\mathbb{P}_{S}\left(\text{consistent } S \right)}
\end{align*}
\end{proof}

\newpage

\subsection{Independent Noise}

\begin{lemma}
\label{app-lem: generalization bound conditional independent noise}
Assuming independent noise, the generalization error of interpolating learning rules satisfies the following.
\begin{align*}
&
\left|\mathbb{E}_{S,A\left(S\right)}\left[{\cal L}_{{\cal D}}\left(A\left(S\right)\right)\mid \cnsstnts \right]
-2\varepsilon^{\star}\left(1-\varepsilon^{\star}\right)\right|
\\
&
\hspace{10em}
\le \left( 1 - 2 \varepsilon^{\star} \right)
{
\sqrt{C\left(N\right)}
}
+
\frac{\left(N-1\right)\Dmax}{3} \,,
\end{align*}
where 
$
C\left(N\right)
\triangleq
\frac{I\left(S;A\left(S\right)\right)-N\cdot\left(H\left(\varepsilon^{\star}\right)-\mathbb{P}\left( \incnsstnts \right)\right)}{N
\left(
{1-
\mathbb{P}\left( \incnsstnts \right)}
\right)}$.
\end{lemma}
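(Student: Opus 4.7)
The strategy is to exploit the independent-noise structure to reduce the question to a \emph{generalization gap}, which is then controlled by the mutual information $I(S; A(S))$ via a change-of-measure argument. Under independent noise, for any hypothesis $h$ one has $\calL_\calD(h) = \errorRate + (1-2\errorRate) d(h)$ where $d(h) := \bbP_X(h(X) \neq h^\star(X))$, so
\[\bbE\bb{\calL_\calD(A(S)) \mid \cnsstnts} - 2\errorRate(1-\errorRate) = (1-2\errorRate)\bigrb{\bbE\bb{d(A(S)) \mid \cnsstnts} - \errorRate}.\]
A triangle inequality splits $\abs{\bbE[d(A(S)) \mid \cnsstnts] - \errorRate}$ into a generalization gap $\abs{\bbE[d(A(S)) - \hat{d}(A(S)) \mid \cnsstnts]}$ and a sampling correction $\abs{\epstr - \errorRate}$, where $\hat{d}(h) := \frac{1}{N}\abs{\cb{i : h(X_i) \neq h^\star(X_i)}}$. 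For an interpolator on a consistent sample, $\hat{d}(A(S))$ equals the fraction of label flips, so $\bbE[\hat{d}(A(S)) \mid \cnsstnts] = \epstr$. The sampling correction is handled by \lemref{lem:eps-tr-star}; combined with the $(1-2\errorRate)$ prefactor and $\bbP(\cnsstnts) \ge 1/2$ (\remref{app-rem: simplifying p consistent}), it produces the $(N-1)\Dmax/3$ term after using the elementary estimate $(1-2\errorRate)\errorRate(1-\errorRate)\ln((1-\errorRate)/\errorRate) \le 1/6$ for $\errorRate \in (0, 1/2)$, which follows from the convex-analytic properties of $\phi$ established in \lemref{app-lem: phi propeprties}.

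For the generalization gap I would apply a Donsker--Varadhan / Xu--Raginsky-style change-of-measure argument to the \emph{feature-only} disagreement loss $\ell(h, x) := \ind{h(x) \neq h^\star(x)}$. Since this loss depends on $S$ only through $X = (X_1,\dots,X_N)$ and is $\cb{0,1}$-valued, its empirical mean $\hat{d}(h)$ is $\frac{1}{4N}$-sub-Gaussian for every fixed $h$. Plugging the product reference $\nu_{A(S)} \otimes \calD_X^N$ into Donsker--Varadhan and optimizing the conjugate parameter yields the unconditional bound $\abs{\bbE[d(A(S)) - \hat{d}(A(S))]} \le \sqrt{I(A(S); X)/(2N)}$, and carrying out the same argument under the conditional law given $\cnsstnts$ yields the analogous inequality in terms of the conditional mutual information $I(A(S); X \mid \cnsstnts) = I(A(S); X)/\bbP(\cnsstnts)$ (the identity holds because $A(S) = \star$ deterministically on $\incnsstnts$).

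The key technical step is to upper bound $I(A(S); X)$ by the noise-subtracted quantity in the numerator of $C(N)$. By the chain rule $I(S; A(S)) = I(X; A(S)) + I(Y; A(S) \mid X)$, and the independent-noise assumption gives $H(Y \mid X) = N H(\errorRate)$. Any interpolator determines $Y$ given $X$ on $\cnsstnts$, while on $\incnsstnts$ we have $A(S) = \star$ and the trivial bound $H(Y \mid X, A(S) = \star) \le N$; hence $H(Y \mid X, A(S)) \le N\bbP(\incnsstnts)$, so $I(Y; A(S) \mid X) \ge N H(\errorRate) - N\bbP(\incnsstnts)$, and therefore
\[I(A(S); X) \le I(S; A(S)) - N H(\errorRate) + N\bbP(\incnsstnts) = N\bbP(\cnsstnts) \cdot C(N).\]
Substituting into the conditional Donsker--Varadhan bound gives $\abs{\bbE[d - \hat{d} \mid \cnsstnts]} \le \sqrt{C(N)/2} \le \sqrt{C(N)}$, and the stated inequality follows after multiplying by $(1-2\errorRate)$ and adding the sampling-correction term from the first paragraph.

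The main obstacle will be executing the conditional Donsker--Varadhan step cleanly: under the conditional law $p(\cdot \mid \cnsstnts)$ the marginal of $X$ is not exactly $\calD_X^N$, so for a fixed hypothesis $h$ the conditional expectation of $\hat{d}(h)$ differs from $d(h)$ by an amount controlled by $\Dmax$. This discrepancy has to be absorbed into the already-present error terms using \asmref{asm: bounded inconsistency}, and care is needed so that the resulting constant in front of $\sqrt{C(N)}$ does not blow up past $1$ (rather than the $\sqrt{2}$ that the naive calculation would yield) when the conditional and unconditional quantities are reconciled.
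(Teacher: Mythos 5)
Your overall architecture is the same as the paper's: write $\exError{h} = \errorRate + (1-2\errorRate)\,d(h)$ with $d(h) = \bbP_X(h(X)\neq h^\star(X))$, decompose $\abs{\bbE[d(A(S))\mid\cnsstnts]-\errorRate}$ into the gap $\abs{\epsgen-\epstr}$ plus the sampling correction $\abs{\epstr-\errorRate}$ (handled by \lemref{lem:eps-tr-star} plus the $\le 1/6$ algebraic estimate), and use the chain rule $I(S;A(S))=I(X;A(S))+I(Y;A(S)\mid X)$ together with a bound $I(Y;A(S)\mid X)\ge N(H(\errorRate)-\bbP(\incnsstnts))$. Where you genuinely diverge is in how $I(X;A(S))$ is turned into a bound on $\abs{\epsgen-\epstr}$: the paper feeds a hand-crafted conditional density (a Bernoulli likelihood ratio indexed by $\epstr,\epsgen$) into the variational lower bound of \lemref{app-lem: variational info lower bound}, getting $I(X_1;A(S))\ge\bbP(\cnsstnts)\,D_{\mathrm{KL}}(q_{\epstr}\,\|\,q_{\epsgen})$ and then Pinsker; you instead invoke an Xu--Raginsky style sub-Gaussian change-of-measure on the $\{0,1\}$-valued feature-only loss, treating $I(X;A(S))$ as the quantity to be upper-bounded. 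The tradeoff: the paper's route is surgical (per-sample mutual information, no conditioning on the full $X$-vector, no need to reason about the conditional marginal of $X$), whereas your route is more generic and reveals the argument as an instance of a standard information-theoretic generalization bound.

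The execution, however, has a genuine gap. The asserted \emph{identity} $I(A(S);X\mid\cnsstnts)=I(A(S);X)/\bbP(\cnsstnts)$ is false. Writing $Z=\ind{\cnsstnts}$, since $Z$ is a deterministic function of $A(S)$, one has $I(A(S);X)=I(A(S);X\mid Z)+I(X;Z)=\bbP(\cnsstnts)\,I(A(S);X\mid\cnsstnts)+I(X;Z)$, and $I(X;Z)>0$ in general (consistency depends on the collision pattern of $X$). So you only get the inequality $I(A(S);X\mid\cnsstnts)\le I(A(S);X)/\bbP(\cnsstnts)$, which happens to be the direction you need, but the stated reason (``because $A(S)=\star$ on $\incnsstnts$'') proves the wrong thing. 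More seriously, the conditional Donsker--Varadhan step as you set it up does not go through as stated: under the conditional law given $\cnsstnts$, the marginal of $X=(X_1,\dots,X_N)$ is neither a product measure nor equal to $\calD_X^N$, so the MGF of $\hat{d}(h)-d(h)$ no longer factorizes across coordinates, and $\bbE[\hat{d}(h)\mid\cnsstnts]\neq d(h)$ for a fixed $h$. You flag this obstacle but do not resolve it, and absorbing the error into $\Dmax$-terms while also keeping the constant in front of $\sqrt{C(N)}$ at $1$ is exactly the delicate part. The clean fix, which you do not present, is to run the Donsker--Varadhan argument \emph{unconditionally} after adopting the convention $d(\star)=\hat{d}(\star)=0$ (which makes the MGF step trivial for $h=\star$): then $\bbE[d-\hat{d}]=\bbP(\cnsstnts)(\epsgen-\epstr)$, so $\abs{\epsgen-\epstr}\le \sqrt{I(A(S);X)/(2N)}\,/\,\bbP(\cnsstnts)\le\sqrt{C(N)/(2\bbP(\cnsstnts))}\le\sqrt{C(N)}$ using $I(A(S);X)\le N\bbP(\cnsstnts)C(N)$ and $\bbP(\cnsstnts)\ge 1/2$ (\remref{app-rem: simplifying p consistent}). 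With that repair, your argument recovers the lemma and even matches the constant; without it, the conditional DV step is not established.
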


\begin{proof}
As in the proof of Lemma~4.2 in \citet{manoj2023interpolation},
since $S$ is sampled i.i.d., we have 
\begin{align}
\label{eq:info_lower}
I\left(S;A\left(S\right)\right) & 
\stackrel{\ref{app-lem: independence info bound}}{\ge}
\sum_{i=1}^{N}I\left(\left(X_{i},Y_{i}\right);A\left(S\right)\right)
=N\cdot I\left(\left(X_{1},Y_{1}\right);A\left(S\right)\right)
\nonumber
\\
 &
 \stackrel{\ref{app-lem: info chain rule}}{=}
 N\cdot I\left(X_{1};A\left(S\right)\right)+N\cdot I\left(Y_{1};A\left(S\right)\mid X_{1}\right)\,.
\end{align}
Using properties of conditional mutual information, 
\begin{align}
\label{eq:conditional_info}
I\left(Y_{1};A\left(S\right)\mid X_{1}\right)=H\left(Y_{1}\mid X_{1}\right)-H\left(Y_{1}\mid A\left(S\right),X_{1}\right)\,.
\end{align}
For the first term in \eqref{eq:conditional_info},
we employ the fact that for any $x\in \calX$, either $Y_1 \mid X_1 = x \sim \mathrm{Ber} \rb{\errorRate}$ or $Y_1 \mid X_1 = x \sim \mathrm{Ber} \rb{1 - \errorRate}$ to get
\begin{align*}
H \rb{Y_1 \mid X_1} = \bbE_{\rb{X, Y} \sim \calD} \bb{ H \rb{Y_1 \mid X_1 = X} } = \bbE_{\rb{X, Y} \sim \calD} \bb{ H \rb{ \errorRate }} = H \rb{ \errorRate }\,.
\end{align*}
For the second term in \eqref{eq:conditional_info}, 
we again employ the definition of conditional entropy,
\begin{align*}
H\left(Y_{1}\mid A\left(S\right),X_{1}\right) & =-\sum_{x_{1}\in{\cal X}}
\sum_{h\in{\cal H\cup\left\{\star\right\}}} \Bigg[
dp\left(\left(x_{1},0\right),h\right)\log\left(\frac{dp\left(\left(x_{1},0\right),h\right)}{dp\left(x_{1},h\right)}\right) \\
& \qquad\qquad\qquad\qquad\quad + dp\left(\left(x_{1},1\right),h\right)\log\left(\frac{dp\left(\left(x_{1},1\right),h\right)}{dp\left(x_{1},h\right)}\right) \Bigg]
\,.
\end{align*}
When $h\neq\star$, the marginal distribution of a training data point and a hypothesis is $${dp\left(\left(x,y\right),h\right) =
dp\left(y\mid x,h\right)dp\left(x,h\right)
 =\ind{ y=h\left(x\right)} dp\left(x,h\right)}\,,$$
 and the inner sum becomes a sum over expressions of the form:
\begin{align*}
 &
 dp\left(\left(x_{1},h\left(x_{1}\right)\right),h\right)\log\left(\frac{dp\left(\left(x_{1},h\left(x_{1}\right)\right),h\right)}{dp\left(x_{1},h\right)}\right) 
 =
 dp\left(x_{1},h\right)
 \underset{=0}{\underbrace{\log\left(\frac{dp\left(x_{1},h\right)}{dp\left(x_{1},h\right)}\right)}}
 =0\,.
\end{align*}

Therefore, we have that,
\begin{align*}
 &H\left(Y_{1}\mid A\left(S\right),X_{1}\right)
 \\
 & =-\sum_{x_{1}\in{\cal X}}\left[ dp\left(\left(x_{1},0\right),\star\right)\log\left(\frac{dp\left(\left(x_{1},0\right),\star\right)}{dp\left(x_{1},\star\right)}\right)+dp\left(\left(x_{1},1\right),\star\right)\log\left(\frac{dp\left(\left(x_{1},1\right),\star\right)}{dp\left(x_{1},\star\right)}\right)
 \right]
\end{align*}
Employing conditional probabilities 
(notice that $\frac{dp\left(\left(x_{1},0\right),\star\right)}{dp\left(x_{1},\star\right)}
\!=\!
\frac{dp\left(0\mid x_{1},\star\right)dp\left(x_{1},\star\right)}{dp\left(x_{1},\star\right)}
\!=\!
dp\left(0\mid x_{1},\star\right)
$),
\linebreak
we get,
\begin{align*}
 &H\left(Y_{1}\mid A\left(S\right),X_{1}\right)
 \\
 & =-\sum_{x_{1}\in{\cal X}}dp\left(x_{1},\star\right)\left[dp\left(0\mid x_{1},\star\right)\log\left(dp\left(0\mid x_{1},\star\right)\right)+dp\left(1\mid x_{1},\star\right)\log\left(dp\left(1\mid x_{1},\star\right)\right)\right]
 \\
 & =\sum_{x_{1}\in{\cal X}}dp
 \left(x_{1},\star\right)
 {H\left(dp\left(0\mid x_{1},\star\right)\right)}
 =dp\left(\star\right)\sum_{x_{1}\in{\cal X}}dp\left(x_{1}\mid\star\right)H\left(dp\left(0\mid x_{1},\star\right)\right)
 \\
 & =\mathbb{P}\left(A\left(S\right)=\star\right) \mathbb{E}_{\left(S,A\left(S\right)\right)\sim p}\Big[
 \underbrace{H\left(dp\left(0\mid X_{1},\star\right)\right)}_{\le 1}
 \mid
 \text{inconsistent \ensuremath{S}}
 \Big]
 \le\mathbb{P}_{S}\left(\text{inconsistent \ensuremath{S}}\right)\,.
\end{align*}
Overall, the right term in \eqref{eq:info_lower} is lower bounded by,
\begin{align*}
I\left(Y_{1};A\left(S\right)\mid X_{1}\right) & =H\left(Y_{1}\mid X_{1}\right)-H\left(Y_{1}\mid A\left(S\right),X_{1}\right)
 \ge H\left(\varepsilon^{\star}\right)-\mathbb{P}\left(\text{inconsistent \ensuremath{S}}\right)\,.
\end{align*}


For the left term, 
\ie
$I\left(X_{1};A\left(S\right)\right)$, we use
the variational bound \lemref{app-lem: variational info lower bound} with the following
suggested conditional distribution.
\begin{itemize}[leftmargin=7mm]\itemsep.8pt
\item For $h=\star$, choose
$
dq\left(x_{1}\mid\star\right)=dp\left(x_{1}\right)
$
(notice that
$
\sum_{x_{1}\in{\cal X}}dq\left(x_{1}\mid\star\right)=\sum_{x_{1}}dp\left(x_{1}\right)=1$.

\item Otherwise, if $h\neq\star$, denote $q_{\varepsilon}=\mathrm{Ber}\left(\varepsilon\right)$,
and 
\begin{align*}
{\epstr}&=\mathbb{P}_{S}\left(Y_{1}\neq h^{\star}\left(X_{1}\right)\mid\text{consistent \ensuremath{S}}\right)
\\
{\epsgen}&=\mathbb{E}_{\left(S,A\left(S\right)\right)\sim p}\left[\mathbb{P}_{X\sim{\cal D}}\left(A\left(S\right)\left(X\right)\neq h^{\star}\left(X\right)\right)\mid A\left(S\right)\neq\star\right]
\,.
\end{align*}

Note that ${\epstr}$ may differ from $\varepsilon^{\star}$.
We choose the following conditional distribution 
\[
dq\left(x_{1}\mid h\right)=\frac{1}{Z_{h}}\cdot\frac{dq_{{\epstr}}\left(h\left(x_{1}\right)\oplus h^{\star}\left(x_{1}\right)\right)}{dq_{{\epsgen}}\left(h\left(x_{1}\right)\oplus h^{\star}\left(x_{1}\right)\right)}dp\left(x_{1}\right)\,.
\]
\end{itemize}
In total, we choose,
\[
dq\left(x_{1}\mid h\right)=\frac{1}{Z_{h}}\cdot\frac{dq_{{\epstr}}\left(h\left(x_{1}\right)\oplus h^{\star}\left(x_{1}\right)\right)}{dq_{{\epsgen}}\left(h\left(x_{1}\right)\oplus h^{\star}\left(x_{1}\right)\right)}\cdot\ind{ h\neq\star} dp\left(x_{1}\right)+\ind{ h=\star} dp\left(x_{1}\right)\,,
\]
where $Z_h$ is the corresponding partition function.

Then, we use \eqref{app-lem: variational info lower bound} and properties of logarithms and indicators to show that,
\begin{align*}
& I\left(X_{1};A\left(S\right)\right)
\ge
\mathbb{E}_{S,A\left(S\right)}\left[\log\left(\frac{dq\left(X_{1}\vert A\left(S\right)\right)}{dp\left(X_{1}\right)}\right)\right]\\
 & =\mathbb{E}_{S,A\left(S\right)}\left[\log\!\left(\frac{\frac{1}{Z_{A\left(S\right)}}
 \frac{dq_{{\epstr}}\left(h\left(X_{1}\right)\oplus h^{\star}\left(X_{1}\right)\right)}{dq_{{\epsgen}}\left(h\left(X_{1}\right)\oplus h^{\star}\left(X_{1}\right)\right)}dp\left(X_{1}\right)\ind{ A\left(S\right)\neq\star} +dp\left(X_{1}\right)\ind{ A\left(S\right)=\star} }{dp\left(X_{1}\right)}\right)\right]\\
 & =\mathbb{E}_{S,A\left(S\right)}\left[\log\left(\frac{1}{Z_{A\left(S\right)}}
 \frac{dq_{{\epstr}}\left(h\left(X_{1}\right)\oplus h^{\star}\left(X_{1}\right)\right)}{dq_{{\epsgen}}\left(h\left(X_{1}\right)\oplus h^{\star}\left(X_{1}\right)\right)}\ind{ A\left(S\right)\neq\star} +\ind{ A\left(S\right)=\star} \right)\right]\\
 & =\mathbb{E}_{S,A\left(S\right)}\left[\log\left(\frac{1}{Z_{A\left(S\right)}}
 \frac{dq_{{\epstr}}\left(h\left(X_{1}\right)\oplus h^{\star}\left(X_{1}\right)\right)}{dq_{{\epsgen}}\left(h\left(X_{1}\right)\oplus h^{\star}\left(X_{1}\right)\right)}\right)\ind{ A\left(S\right)\neq\star} \right]
 +
 \\
 &
 \hspace{.45cm}
 \mathbb{E}_{S,A\left(S\right)}\Big[\underset{=0}{\underbrace{\log\left(1\right)\ind{ A\left(S\right)=\star} }}\Big]
 \\
 & =\mathbb{E}_{S,A\left(S\right)}\left[\log\left(\frac{1}{Z_{A\left(S\right)}}
 \frac{dq_{{\epstr}}\left(h\left(X_{1}\right)\oplus h^{\star}\left(X_{1}\right)\right)}{dq_{{\epsgen}}\left(h\left(X_{1}\right)\oplus h^{\star}\left(X_{1}\right)\right)}\right)\ind{ A\left(S\right)\neq\star} \right]
 \,.
\end{align*}
Using the law of total expectation,
the above becomes,
\begin{align*}
=
\mathbb{P}\left(A\left(S\right)\neq\star\right)
\mathbb{E}_{S,A\left(S\right)}\left[\log\left(\frac{1}{Z_{A\left(S\right)}}\cdot\frac{dq_{{\epstr}}\left(h\left(X_{1}\right)\oplus h^{\star}\left(X_{1}\right)\right)}{dq_{{\epsgen}}\left(h\left(X_{1}\right)\oplus h^{\star}\left(X_{1}\right)\right)}\right)
\,\middle|~
A\left(S\right)\neq\star\right]\,,
\end{align*}
where we also use Jensen's inequality to show,
\begin{align*}
 & \mathbb{E}_{S,A\left(S\right)}\left[\log\left(\frac{1}{Z_{A\left(S\right)}}\cdot\frac{dq_{{\epstr}}\left(h\left(X_{1}\right)\oplus h^{\star}\left(X_{1}\right)\right)}{dq_{{\epsgen}}\left(h\left(X_{1}\right)\oplus h^{\star}\left(X_{1}\right)\right)}\right)
 \,\middle|~
 A\left(S\right)\neq\star\right]
 \\
 & =\mathbb{E}_{S,A\left(S\right)}\left[\log\left(\frac{dq_{{\epstr}}\left(h\left(X_{1}\right)\oplus h^{\star}\left(X_{1}\right)\right)}{dq_{{\epsgen}}\left(h\left(X_{1}\right)\oplus h^{\star}\left(X_{1}\right)\right)}\right)
 \,\middle|~
 A\left(S\right)\!\neq\!\star\right]-\mathbb{E}_{S,A\left(S\right)}\left[\log\left(Z_{A\left(S\right)}\right)
 \,\middle|~
 A\left(S\right)\!\neq\!\star\right]\\
 & \ge\mathbb{E}_{S,A\left(S\right)}\left[\log\left(\frac{dq_{{\epstr}}\left(h\left(X_{1}\right)\oplus h^{\star}\left(X_{1}\right)\right)}{dq_{{\epsgen}}\left(h\left(X_{1}\right)\oplus h^{\star}\left(X_{1}\right)\right)}\right)
 \,\middle|~
 A\left(S\right)\!\neq\!\star\right]-\log\left(\mathbb{E}_{S,A\left(S\right)}\left[Z_{A\left(S\right)}
 \,\middle|~
 A\left(S\right)\!\neq\!\star\right]\right)\,.
\end{align*}
The partition function satisfies for all $h\neq\star$,
\begin{align*}
Z_{h} & =\sum_{x_{1}\in{\cal X}}\frac{dq_{{\epstr}}\left(h\left(x_{1}\right)\oplus h^{\star}\left(x_{1}\right)\right)}{dq_{{\epsgen}}\left(h\left(x_{1}\right)\oplus h^{\star}\left(x_{1}\right)\right)}dp\left(x_{1}\right)
 =\mathbb{E}_{X\sim{\cal D}}\left[\frac{dq_{{\epstr}}\left(h\left(X\right)\oplus h^{\star}\left(X\right)\right)}{dq_{{\epsgen}}\left(h\left(X\right)\oplus h^{\star}\left(X\right)\right)}\right]\\
 & =\mathbb{P}_{X\sim{\cal D}}\left(h\left(X\right)=h^{\star}\left(X\right)\right)\cdot\frac{1-{\epstr}}{1-{\epsgen}}+\mathbb{P}_{X\sim{\cal D}}\left(h\left(X\right)\neq h^{\star}\left(X\right)\right)\cdot\frac{{\epstr}}{{\epsgen}}\,.
\end{align*}
Taking the expectation w.r.t.\ $\left(S,A\left(S\right)\right)\sim p$,
we get 
\begin{align*}
&\mathbb{E}_{\left(S,A\left(S\right)\right)\sim p}\left[Z_{A\left(S\right)}\mid A\left(S\right)\neq\star\right] 
\\
&=
\frac{1-{\epstr}}{1-{\epsgen}}\cdot\underset{=1-{\epsgen}}{\underbrace{\mathbb{E}_{S,A\left(S\right)}\left[\mathbb{P}\left(A\left(S\right)\left(X\right)=h^{\star}\left(X\right)\right)\mid A\left(S\right)\neq\star\right]}}
\,+
\\
&
\hspace{5cm}
+
\frac{{\epstr}}{{\epsgen}}\cdot\underset{={\epsgen}}{\underbrace{\mathbb{E}_{S,A\left(S\right)}\left[\mathbb{P}\left(A\left(S\right)\left(X\right)\neq h^{\star}\left(X\right)\right)\mid A\left(S\right)\neq\star\right]}}
= 
 1\,.
\end{align*}
Combining the above, we have that,
\[
I\left(X_{1};A\left(S\right)\right)
\ge
\mathbb{P}\left(A\left(S\right)\neq\star\right)
\mathbb{E}_{S,A\left(S\right)}\left[\log\left(\frac{dq_{{\epstr}}\left(A\left(S\right)\left(X_{1}\right)\oplus h^{\star}\left(X_{1}\right)\right)}{dq_{{\epsgen}}\left(A\left(S\right)\left(X_{1}\right)\oplus h^{\star}\left(X_{1}\right)\right)}\right)\mid A\left(S\right)\neq\star\right]\,.
\]
Notice that
\begin{align*}
\left(A\left(S\right)\left(X_{1}\right)\oplus h^{\star}\left(X_{1}\right)\mid\left\{ A\left(S\right)\neq\star\right\} \right)=\left(Y_{1}\oplus h^{\star}\left(X_{1}\right)\mid\left\{ \text{consistent \ensuremath{S}}\right\} \right) \,,
\end{align*}
so $A\left(S\right)\left(X_{1}\right)\oplus h^{\star}\left(X_{1}\right)\mid\left\{ A\left(S\right)\neq\star\right\} \sim\mathrm{Ber}\left({\epstr}\right)$,
and thus 
$$
{I\left(X_{1};A\left(S\right)\right)
\ge
\mathbb{P}\left(A\left(S\right)\neq\star\right)
D_{KL}\left(q_{{\epstr}}||q_{{\epsgen}}\right)
=
\big(
1-
\mathbb{P}\left(\text{inconsistent \ensuremath{S}}\right)
\big)
D_{KL}\left(q_{{\epstr}}||q_{{\epsgen}}\right)
}
\,.
$$

\bigskip

Putting this all together, \eqref{eq:info_lower} is lower bounded by,
\[
I\left(S;A\left(S\right)\right)\ge 
N
\big(
{1-
\mathbb{P}\left(\text{inconsistent \ensuremath{S}}\right)
}
\big)
D_{KL}\left(q_{{\epstr}}||q_{{\epsgen}}\right)
+
N\big(H\left(\varepsilon^{\star}\right)-\mathbb{P}\left(\text{inconsistent \ensuremath{S}}\right)\big)\,.
\]

Rearranging the inequality 
\[
D_{KL}\left(q_{{\epstr}}||q_{{\epsgen}}\right)
\le
\frac{I\left(S;A\left(S\right)\right)-N\cdot\left(H\left(\varepsilon^{\star}\right)-\mathbb{P}\left(\text{inconsistent \ensuremath{S}}\right)\right)}{N
\left(
{1-
\mathbb{P}\left(\text{inconsistent \ensuremath{S}}\right)}
\right)}
\triangleq
C\left(N\right)\,.
\]

\bigskip

Using Pinsker's inequality, we have,
\begin{align*}
\left|{\epstr}-{\epsgen}\right|
&\le
\sqrt{\frac{1}{2} D_{KL}\left(q_{{\epstr}}||q_{{\epsgen}}\right)}
\le \sqrt{C\left(N\right)} \,.
\end{align*}
We proceed to bound $\mathbb{E}_{S,A\left(S\right)}\left[{\cal L}_{{\cal D}}\left(A\left(S\right)\right)\mid\text{consistent \ensuremath{S}}\right]$
in terms of $\left|{\epstr}-{\epsgen}\right|$.
Notice that 
\begin{align*}
& \mathbb{E}_{S,A\left(S\right)}\left[{\cal L}_{{\cal D}}\left(A\left(S\right)\right)\mid\text{consistent \ensuremath{S}}\right] 
\\
& 
=
\mathbb{E}_{S,A\left(S\right)}\left[\mathbb{P}_{\left(X,Y\right)\sim{\cal D}}\left(A\left(S\right)\left(X\right)\ne Y\right)\mid\text{consistent \ensuremath{S}}\right]\\
 & =\mathbb{E}_{S,A\left(S\right)}
 \Big[
 \,
 \mathbb{P}\left(A\left(S\right)\left(X\right)\ne Y\mid Y=h^{\star}\left(X\right)\right)
 \underbrace{\mathbb{P}\left(Y=h^{\star}\left(X\right)\right)}_{\text{no label flip}}
 ~\Big|~
 \text{consistent \ensuremath{S}}
 \Big]
 +
 \\
 &
 \hspace{1.4em}
 +
 \mathbb{E}_{S,A\left(S\right)}
 \Big[
 \mathbb{P}\left(A\left(S\right)\left(X\right)\ne Y\mid Y\neq h^{\star}\left(X\right)\right)
 \underbrace{\mathbb{P}\left(Y\neq h^{\star}\left(X\right)\right)
 }_{\text{label flip}}
 ~\Big|~
 \text{consistent \ensuremath{S}}
 \,
 \Big]
 \\
 & =\mathbb{E}_{S,A\left(S\right)}\left[\mathbb{P}\left(A\left(S\right)\left(X\right)\ne h^{\star}\left(X\right)\right)\left(1-\varepsilon^{\star}\right)+\mathbb{P}\left(A\left(S\right)\left(X\right)=h^{\star}\left(X\right)\right)\varepsilon^{\star}
 ~\Big|~
 \text{consistent \ensuremath{S}}\right]
 \\
 & =\left(1-\varepsilon^{\star}\right)\mathbb{E}_{S,A\left(S\right)}\left[\mathbb{P}\left(A\left(S\right)\left(X\right)\ne h^{\star}\left(X\right)\right)\mid\text{consistent \ensuremath{S}}\right]
 +
 \\
 &
 \hspace{1.5em}
 +
 \varepsilon^{\star}
 \mathbb{E}_{S,A\left(S\right)}\left[\mathbb{P}\left(A\left(S\right)\left(X\right)=h^{\star}\left(X\right)\right)\mid\text{consistent \ensuremath{S}}\right]
 \\
 & 
 =\left(1-\varepsilon^{\star}\right){\epsgen}+\varepsilon^{\star}\left(1-{\epsgen}\right)\,.
\end{align*}
Then, using the triangle inequality, 
\begin{align*}
&
\left|\mathbb{E}_{S,A\left(S\right)}\left[{\cal L}_{{\cal D}}\left(A\left(S\right)\right)\mid\text{consistent \ensuremath{S}}\right]-2\varepsilon^{\star}\left(1-\varepsilon^{\star}\right)\right| 
\\
& =\left|\left(1-\varepsilon^{\star}\right){\epsgen}+\varepsilon^{\star}\left(1-{\epsgen}\right)-2\varepsilon^{\star}\left(1-\varepsilon^{\star}\right)\right|
=\left|{\epsgen}-{\epsgen}\varepsilon^{\star}+\varepsilon^{\star}-{\epsgen}\varepsilon^{\star}-2\varepsilon^{\star}+2\varepsilon^{\star2}\right|
\\
 & 
 =\left|{\epsgen}-2{\epsgen}\varepsilon^{\star}-\varepsilon^{\star}+2\varepsilon^{\star2}\right|
 =\left|{\epsgen}\left(1-2\varepsilon^{\star}\right)-\varepsilon^{\star}\left(1-2\varepsilon^{\star}\right)\right|
 =\left(1-2\varepsilon^{\star}\right)\left|{\epsgen}-\varepsilon^{\star}\right|\\
 & \le\left(1-2\varepsilon^{\star}\right)\left(\left|{\epsgen}-{\epstr}\right|+\left|{\epstr}-\varepsilon^{\star}\right|\right)\,.
\end{align*}
Combining with the result from \lemref{lem:eps-tr-star} and \remref{app-rem: simplifying p consistent}
\begin{align*}
\left|{\epstr}-\varepsilon^{\star}\right| &\le \left|\ln2\left(\varepsilon^{\star}\log\left(\varepsilon^{\star}\right)+\varepsilon^{\star}H\left(\varepsilon^{\star}\right)\right)\right|\cdot
\left(N-1\right) \frac{\Dmax}{\bbP \rb{\cnsstnts}} \\
& \le \left| \ln2 \left(\varepsilon^{\star} \log \left(\varepsilon^{\star}\right) + \varepsilon^{\star} H \left(\varepsilon^{\star}\right) \right) \right| \cdot
2 \left(N-1\right) \Dmax \,.
\end{align*}
we conclude that 
\begin{align*}
&
\left|\mathbb{E}_{S,A\left(S\right)}\left[{\cal L}_{{\cal D}}\left(A\left(S\right)\right)\mid\text{consistent \ensuremath{S}}\right]-2\varepsilon^{\star}\left(1-\varepsilon^{\star}\right)\right| 
\\
& \le \left( 1 - 2 \varepsilon^{\star} \right) \left( \sqrt{C\left(N\right)} + \ln2 \left| \left( \varepsilon^{\star} \log \left( \varepsilon^{\star} \right) + \varepsilon^{\star} H \left( \varepsilon^{\star} \right) \right) \right| \cdot
2 \left( N - 1 \right) \Dmax \right) \,.
\end{align*}

Finally, we can use the algebraic property that
$
\left(1-2\varepsilon^{\star}\right)\left|\ln2\left(\varepsilon^{\star}\log\left(\varepsilon^{\star}\right)+\varepsilon^{\star}H\left(\varepsilon^{\star}\right)\right)\right|\le\frac{1}{6}
$,
\linebreak
to get 
\begin{align*}
&
\left|\mathbb{E}_{S,A\left(S\right)}\left[{\cal L}_{{\cal D}}\left(A\left(S\right)\right)\mid\text{consistent \ensuremath{S}}\right]
-2\varepsilon^{\star}\left(1-\varepsilon^{\star}\right)\right|
\\
&
\hspace{10em}
\le \left( 1 - 2 \varepsilon^{\star} \right)
{
\sqrt{C\left(N\right)}
}
+
\frac{\left(N-1\right)\Dmax}{3}\,.
\end{align*}
\end{proof}

We can now bound the expected generalization error without conditioning on the consistency of the training set.

\begin{lemma}
\label{lem:general_with_triangle}
It holds that,
\begin{align*}
&
\left|\mathbb{E}_{S,A\left(S\right)}\left[{\cal L}_{{\cal D}}\left(A\left(S\right)\right)\right]
-2\varepsilon^{\star}\left(1-\varepsilon^{\star}\right)\right|
\\
&
\hspace{4em}
\le
\left|\mathbb{E}_{S,A\left(S\right)}\left[{\cal L}_{{\cal D}}\left(A\left(S\right)\right) \mid \cnsstnts \right]
-2\varepsilon^{\star}\left(1-\varepsilon^{\star}\right)\right|
+
\mathbb{P}_{S}
\left(
\incnsstnts
\right)\,.
\end{align*}
\end{lemma}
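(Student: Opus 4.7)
The plan is to apply the law of total expectation to split the unconditional expectation into the consistent and inconsistent cases, and then use the triangle inequality together with the trivial bound that the $0/1$ loss lies in $[0,1]$.

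Concretely, I would first write
\begin{align*}
\mathbb{E}_{S,A(S)}\bb{\exError{\lrule}}
&= \mathbb{E}_{S,A(S)}\bb{\exError{\lrule} \mid \cnsstnts} \cdot \mathbb{P}_S(\cnsstnts) \\
&\quad + \mathbb{E}_{S,A(S)}\bb{\exError{\lrule} \mid \incnsstnts} \cdot \mathbb{P}_S(\incnsstnts).
\end{align*}
Letting $p = \mathbb{P}_S(\cnsstnts)$ and $q = 1-p = \mathbb{P}_S(\incnsstnts)$, and writing $t = 2\varepsilon^\star(1-\varepsilon^\star)$, the identity $p + q = 1$ gives
\[
\mathbb{E}_{S,A(S)}\bb{\exError{\lrule}} - t = p \cdot \rb{\mathbb{E}_{S,A(S)}\bb{\exError{\lrule} \mid \cnsstnts} - t} + q \cdot \rb{\mathbb{E}_{S,A(S)}\bb{\exError{\lrule} \mid \incnsstnts} - t}.
\]

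Applying the triangle inequality and using $p \le 1$, I obtain
\[
\abs{\mathbb{E}_{S,A(S)}\bb{\exError{\lrule}} - t} \le \abs{\mathbb{E}_{S,A(S)}\bb{\exError{\lrule} \mid \cnsstnts} - t} + q \cdot \abs{\mathbb{E}_{S,A(S)}\bb{\exError{\lrule} \mid \incnsstnts} - t}.
\]
Since $\exError{\cdot} \in [0,1]$ and $t \in [0, 1/2]$, the absolute difference in the second term is at most $1$, so the second term collapses to $q = \mathbb{P}_S(\incnsstnts)$, yielding the claimed bound.

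There is no real obstacle here — the argument is a standard conditioning-plus-triangle-inequality decomposition. The only thing to double-check is that bounding $\abs{\mathbb{E}[\exError{\lrule} \mid \incnsstnts] - t}$ by $1$ is tight enough for the downstream use (it is, since by \asmref{asm: bounded inconsistency} and \lemref{app-lem: data consistency upper bound} we already have $\mathbb{P}_S(\incnsstnts) = o(1)$), so this crude bound does not degrade the final generalization result.
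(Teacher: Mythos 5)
Your proof is correct and essentially the same as the paper's: both condition on the event that $S$ is consistent, apply the triangle inequality, and bound the inconsistent contribution by $\mathbb{P}_S(\incnsstnts)$ using the fact that the 0/1 loss lies in $[0,1]$. The paper organizes the algebra slightly differently — it first establishes the exact identity $\mathbb{E}[X]-\mathbb{E}[X\mid Y]=\mathbb{P}(\neg Y)\left(\mathbb{E}[X\mid\neg Y]-\mathbb{E}[X\mid Y]\right)$ and then triangulates through $\mathbb{E}[X\mid Y]$, whereas you decompose $\mathbb{E}[X]-t$ directly into a weighted sum — but the underlying argument is identical.
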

\begin{proof}
    Let $X$ be an arbitrary RV in $[0,1]$ and $Y$ be a binary RV.
    Then,
    \begin{align*}        \mathbb{E}\bb{X}&=\mathbb{P}\rb{Y}\mathbb{E}\bb{X|Y}+\mathbb{P}\rb{\neg Y}\mathbb{E}\bb{X|\neg Y}\\
    \mathbb{E}\bb{X}-\mathbb{E}\bb{X|Y}&=\mathbb{P}\rb{Y}\mathbb{E}\bb{X|Y} - \mathbb{E}\bb{X \mid Y} + \mathbb{P}\rb{\neg Y}\mathbb{E}\bb{X|\neg Y}\\
    & = \mathbb{E} \bb{X|Y} \left(\mathbb{P} \rb{Y} - 1\right) + \mathbb{P}\rb{\neg Y} \mathbb{E} \bb{X|\neg Y} \\
    & = - \mathbb{E}\bb{X|Y} \mathbb{P} \rb{\neg Y} + \mathbb{P} \rb{\neg Y} \mathbb{E} \bb{X|\neg Y} = \mathbb{P} \rb{\neg Y} \left(\mathbb{E} \bb{X|\neg Y} - \mathbb{E} \bb{X|Y} \right) \\
    \left|\mathbb{E}\bb{X} - \mathbb{E} \bb{X|Y} \right| &= \mathbb{P} \rb{\neg Y}\underbrace{\left|\mathbb{E}\bb{X|\neg Y} -\mathbb{E} \bb{X|Y} \right|}_{\le1} \le \mathbb{P}\rb{\neg Y} \,.
    \end{align*}
    As a result
    $$
    \left|
    \mathbb{E}_{S,A\left(S\right)}\left[{\cal L}_{{\cal D}}\left(A\left(S\right)\right)\right]
    -
    \mathbb{E}_{S,A\left(S\right)}\left[{\cal L}_{{\cal D}}\left(A\left(S\right)\right)\mid\text{consistent \ensuremath{S}}\right]\right|
    \le 
    \mathbb{P}_{S} (\text{inconsistent \ensuremath{S}})
    \,.$$
    Then, the required inequality is obtained by simply using the triangle inequality on
    $$\left|\mathbb{E}_{S,A\left(S\right)}\left[{\cal L}_{{\cal D}}\left(A\left(S\right)\right)\right]
    -2\varepsilon^{\star}\left(1-\varepsilon^{\star}\right)\right|\,.$$
\end{proof}

\newpage

\newpage

\section{Memorizing the label flips (proofs for \secref{sec:memorization})}
\label{apx:mem}

In this section, we prove Theorem~\ref{thm: partial fctn net cxty}. We begin with an informal outline of the proof idea. Inspired by Manoj and Srebro's analysis~\cite{manoj2023interpolation}, our proof of Theorem~\ref{thm: partial fctn net cxty} is based on the concept of a \emph{pseudorandom generator}, defined below.
\begin{definition}[Pseudorandom generator] \label{def:prg}
    Let $G \colon \{0, 1\}^r \to \{0, 1\}^R$ be a function, let $\mathcal{V}$ be a class of functions $V \colon \{0, 1\}^R \to \{0, 1\}$, let $\mathcal{D}$ be a distribution over $\{0, 1\}^R$, and let $\epsilon > 0$. We say that $G$ is an \emph{$\epsilon$-pseudorandom generator} ($\epsilon$-PRG) for $\mathcal{V}$ with respect to $\mathcal{D}$ if for every $V \in \mathcal{V}$, we have
    \[
        \left|\bbP_{\vect{y} \sim \mathcal{D}}(V(\vect{y}) = 1) - \bbP_{\vect{u} \in \{0, 1\}^r}(V(G(\vect{u})) = 1)\right| \leq \epsilon,
    \]
    where $\vect{u}$ is sampled uniformly at random from $\{0, 1\}^r$.
\end{definition}

To connect Definition~\ref{def:prg} to Theorem~\ref{thm: partial fctn net cxty}, let $R = 2^{d_0}$. Let $\mathcal{V}$ be the class of all conjunctions of literals, such as $V(\vect{y}) = \vect{y}_1 \wedge \bar{\vect{y}}_2 \wedge \vect{y}_4$. Let $\fdomain = f^{-1}(\{0, 1\})$. There is a function $V_f \in \mathcal{V}$ such that given the entire truth table of a NN $\tilde{h}$, the function $V_f$ verifies that $\tilde{h}$ agrees with $f$ on $\fdomain$. This function $V_f$ is a conjunction of $N_1$ many variables and $(N - N_1)$ many negated variables.

Let $\alpha = \Ncorrupt / N$, and let $\mathcal{D} = \mathrm{Ber}(\alpha)^R$. Suppose $G$ is an $\epsilon$-PRG for $\mathcal{V}$ with respect to $\mathcal{D}$, where $\epsilon < \bbP_{\vect{y} \sim \mathcal{D}}(V_f(\vect{y}) = 1)$. Then $\bbP_{\vect{u} \in \{0, 1\}^r}(V_f(G(\vect{u})) = 1) \neq 0$, i.e., there exists some $\vect{u}^{\star} \in \{0, 1\}^r$ such that $V_f(G(\vect{u}^{\star})) = 1$. Therefore, if we let $\tilde{h}$ be a NN that computes the function
\begin{equation} \label{eqn:network-vs-generator}
    \tilde{h}(\vect{x}) = G(\vect{u}^{\star})_{\vect{x}},
\end{equation}
then $\tilde{h}$ agrees with $f$ on $\fdomain$. In the equation above, $G(\vect{u}^{\star})_{\vect{x}}$ denotes the $\vect{x}$-th bit of $G(\vect{u}^{\star})$, thinking of $\vect{x}$ as a number from $0$ to $R - 1$ represented by its binary expansion.

There is a large body of well-established techniques for constructing PRGs. (See, for example, Hatami and Hoza's recent survey~\cite{hatami2024paradigms}.) Therefore, constructing a suitable PRG might seem like a promising approach to proving Theorem~\ref{thm: partial fctn net cxty}. However, this approach is flawed. The issue concerns the seed length ($r$). According to the plan outlined above, the seed $\vect{u}^{\star}$ is effectively hard-coded into the neural network $\tilde{h}$, which means that, realistically, the number of weights in $\tilde{h}$ will be at least $r$. Meanwhile, for the plan above to make sense, our PRG's error parameter ($\epsilon$) must satisfy
\begin{equation} \label{eqn:prg-error}
    \epsilon < \bbP_{\vect{y} \sim \mathcal{D}}(V_f(\vect{y}) = 1) = 2^{-H(\alpha) \cdot N} \approx 2^{-\binom{N}{\Ncorrupt}}.
\end{equation}
Comparing \eqref{eqn:prg-error} to Theorem~\ref{thm: partial fctn net cxty}, we see that we would need a PRG with seed length
\[
    r = (1 + o(1)) \cdot \log(1/\epsilon).
\]
But this is too much to ask. There is no real reason to expect such a PRG to exist, even if we ignore explicitness considerations. Indeed, in some cases, it is provably impossible to achieve a seed length smaller than $(2 - o(1)) \cdot \log(1/\epsilon)$~\cite{alon1992simple}.

To circumvent this issue, we will work with a more flexible variant of the PRG concept called a \emph{hitting set generator} (HSG).
\begin{definition}[Hitting set generator] \label{def:hsg}
    Let $G \colon \{0, 1\}^r \to \{0, 1\}^R$ be a function, let $\mathcal{V}$ be a class of functions $V \colon \{0, 1\}^R \to \{0, 1\}$, let $\mathcal{D}$ be a distribution over $\{0, 1\}^R$, and let $\epsilon > 0$. We say that $G$ is an \emph{$\epsilon$-hitting set generator} ($\epsilon$-HSG) for $\mathcal{V}$ with respect to $\mathcal{D}$ if for every $V \in \mathcal{V}$ such that $\bbP_{\vect{y} \sim \mathcal{D}}(V(\vect{y}) = 1) > \epsilon$, there exists $\vect{u}^{\star} \in \{0, 1\}^r$ such that $V(G(\vect{u}^{\star})) = 1$.
\end{definition}

Definition~\ref{def:hsg} is weaker than Definition~\ref{def:prg}, but an HSG is sufficient for our purposes. Crucially, one can show nonconstructively that for every $\mathcal{V}$, $\mathcal{D}$, and $\epsilon$, there exists an HSG with seed length
\[
    1 \cdot \log(1/\epsilon) + \log \log |\mathcal{V}| + O(1),
\]
whereas the nonconstructive PRG seed length is $2 \cdot \log(1/\epsilon) + \cdots$. To prove Theorem~\ref{thm: partial fctn net cxty}, we will construct an \emph{explicit} HSG for conjunctions of literals with respect to $\mathrm{Ber}(\alpha)^R$ with a seed length of $(1 + o(1)) \cdot \log(1/\epsilon) + \polylog R$. We will ensure that our HSG is ``explicit enough'' to enable computing the function $\tilde{h}$ defined by \eqref{eqn:network-vs-generator} using a constant-depth NN with approximately $r$ many weights.

Our HSG construction uses established techniques from the pseudorandomness literature. In brief, we use $k$-wise independence to construct an initial HSG with a poor dependence on $\epsilon$, and then we apply an error reduction technique due to Hoza and Zuckerman~\cite{hoza2020simple}. Details follow.

\subsection{Preprocessing the input to reduce the dimension}

Before applying an HSG as outlined above, the first step of the proof of Theorem~\ref{thm: partial fctn net cxty} is actually a preprocessing step that reduces the dimension to approximately $2 \log N$. This step is not completely essential, but it helps to improve the dependence on $d_0$ in Theorem~\ref{thm: partial fctn net cxty}. The preprocessing step is based on a standard trick, namely, we treat the input as a vector in $\bbF_2^{d_0}$ and apply a random matrix, where $\bbF_2$ denotes the field with two elements. That is:
\begin{definition}[$\bbF_2$-linear and $\bbF_2$-affine functions]
    A function $C \colon \{0, 1\}^d \to \{0, 1\}^{d'}$ is $\bbF_2$-linear if it has the form
    \[
        C(\vect{x}) = \vect{W}\vect{x},
    \]
    where $\vect{W} \in \{0, 1\}^{d' \times d}$ and the arithmetic is mod $2$. More generally, we say that $C$ is $\bbF_2$-affine if it has the form
    \[
        C(\vect{x}) = \vect{W}\vect{x} + \vect{b},
    \]
    where $\vect{W} \in \{0, 1\}^{d' \times d}$, $\vect{b} \in \{0, 1\}^{d'}$, and the arithmetic is mod $2$.
\end{definition}

The following fact is standard; we include the proof only for completeness.
\begin{lemma}[Preprocessing to reduce the dimension] \label{lem:preprocessing}
    Let $d_0 \in \bbN$, let $\fdomain \subseteq \{0, 1\}^{d_0}$, and let $N = |\fdomain|$. There exists an $\bbF_2$-linear function $C_0 \colon \{0, 1\}^{d_0} \to \{0, 1\}^{2 \lceil \log N \rceil}$ that is injective on $\fdomain$.
\end{lemma}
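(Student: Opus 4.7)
The plan is a standard probabilistic-existence argument. I would define $d' = 2\lceil \log N \rceil$ and sample a random matrix $\vect{W} \in \{0,1\}^{d' \times d_0}$ uniformly at random (entries i.i.d.\ uniform over $\bbF_2$), then let $C_0(\vect{x}) = \vect{W}\vect{x}$. The goal is to show that, with positive probability over $\vect{W}$, the map $C_0$ is injective on $\fdomain$; the claim then follows by choosing any such $\vect{W}$.

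The key observation is that, by $\bbF_2$-linearity, $C_0(\vect{x}) = C_0(\vect{y})$ for distinct $\vect{x}, \vect{y} \in \fdomain$ is equivalent to $\vect{W}\vect{z} = \vect{0}$ where $\vect{z} = \vect{x} + \vect{y} \in \bbF_2^{d_0} \setminus \{\vect{0}\}$. For a fixed nonzero $\vect{z}$, each row of $\vect{W}$ independently has inner product $0$ with $\vect{z}$ with probability exactly $1/2$, so
\begin{align*}
\bbP_{\vect{W}}\bigrb{\vect{W}\vect{z} = \vect{0}} = 2^{-d'}.
\end{align*}

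Next I would union-bound over the at most $\binom{N}{2} \le N^2/2$ unordered pairs of distinct elements of $\fdomain$ to get
\begin{align*}
\bbP_{\vect{W}}\bigrb{C_0 \text{ not injective on } \fdomain} \le \binom{N}{2} \cdot 2^{-d'} \le \tfrac{1}{2} N^2 \cdot 2^{-2\lceil \log N \rceil} \le \tfrac{1}{2}.
\end{align*}
Hence there exists at least one $\vect{W}$ for which $C_0$ is injective on $\fdomain$, which proves the lemma.

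This is entirely routine; there is no real obstacle. The only subtlety worth double-checking is that the row-wise independence argument gives \emph{exact} probability $2^{-d'}$ (not just an upper bound), which uses that a uniformly random $\bbF_2$-linear functional on $\bbF_2^{d_0}$ evaluated at any fixed nonzero vector is a uniform bit. The dimension bound $d' = 2\lceil \log N \rceil$ is what makes the union bound close with a constant gap; no tighter counting is needed here since this preprocessing is only used to replace $d_0$ by $O(\log N)$ downstream and a constant factor loss in $d'$ is absorbed into the $O(d_0^2 \log N)$ term of \thmref{thm: partial fctn net cxty}.
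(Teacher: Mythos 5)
Your proof is correct and takes essentially the same approach as the paper: a uniformly random matrix over $\bbF_2$, the observation that a fixed nonzero difference vector maps to zero with probability exactly $2^{-d'}$, and a union bound over the at most $\binom{N}{2}$ pairs to conclude that injectivity holds with positive probability.
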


\begin{proof}
    Pick $\vect{W} \in \{0, 1\}^{2 \lceil \log N \rceil \times d_0}$ uniformly at random and let $C_0(\vect{x}) = \vect{W} \vect{x}$. For each pair of distinct points $\vect{x}, \vect{x}' \in \fdomain$, we have
    \[
        \bbP(\vect{W}\vect{x} = \vect{W}\vect{x}') = \bbP(\vect{W}(\vect{x} - \vect{x}') = \vect{0}) = 2^{-2 \lceil \log N \rceil} < 1/N^2.
    \]
    Therefore, by the union bound over all pairs $\vect{x}, \vect{x}'$, there is a nonzero chance that $C_0$ is injective on $\fdomain$. Therefore, there is some fixing of $\vect{W}$ such that $C_0$ is injective on $\fdomain$.
\end{proof}

We will choose $C_0$ to be injective on the domain of $f$. That way, after applying $C_0$ to the input, our remaining task is to compute some other partial function $f' \colon \{0, 1\}^{2 \lceil \log N \rceil} \to \{0, 1, \star\}$, namely, the function $f'$ such that $f' \circ C_0 = f$. This function $f'$ has the same domain size ($N$), and it takes the value $1$ on the same number of points ($\Ncorrupt$), so the net effect is that we have decreased the dimension from $d_0$ down to $2 \lceil \log N \rceil$. This same technique appears in the circuit complexity literature, along with more sophisticated variants. For example, see Jukna's textbook~\cite[Section 1.4.2]{jukna2012boolean}.

To apply Lemma~\ref{lem:preprocessing} in our setting, we rely on the well-known fact that $\bbF_2$-linear functions, and more generally $\bbF_2$-affine functions, can be computed by depth-two binary threshold networks. More precisely, we have the following.
\begin{lemma}[Binary threshold networks computing $\bbF_2$-linear functions] \label{lem:parity-networks}
    If $C \colon \{0, 1\}^d \to \{0, 1\}$ is the parity function or its negation, then there exists a depth-one binary threshold network $C_0 \colon \{0, 1\}^d \to \{0, 1\}^{(d + 2)}$ and a number $b \in \bbR$ such that for every $\vect{x} \in \{0, 1\}^d$, we have
    \[
        C(x) = \vect{1}^T C_0(\vect{x}) + b,
    \]
    where $\vect{1}$ denotes the all-ones vector. Moreover, every affine function $C \colon \{0, 1\}^d \to \{0, 1\}^{d'}$ can be computed by a depth-two binary threshold network with $d' \cdot (d + 2)$ nodes in the hidden layer.
\end{lemma}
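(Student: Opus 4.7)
The plan is to exploit the fact that parity depends only on the integer sum $s \triangleq \sum_i \vect{x}_i$, which is easy to form using the all-ones weight vector allowed by the $\{0,1\}$ quantization. I would build $d$ threshold neurons over $s$ whose outputs add up to $\text{parity}(s)$ plus a fixed integer offset, relying on the sign scalar $\gamma \in \{-1, 0, 1\}$ to realize both ``$\geq$'' and ``$<$'' indicators on $s$; two always-zero neurons would pad the output width from $d$ to $d+2$, and the offset would be absorbed into the real scalar $b$.

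Concretely, for each odd $k \in \{1, 3, \dots\}$ with $k \leq d$ I would use a neuron with $\vect{w} = \vect{1}$, $\gamma = +1$, bias $-k+1$, computing $\ind{s \geq k}$; for each even $k \in \{2, 4, \dots\}$ with $k \leq d$ I would use $\vect{w} = \vect{1}$, $\gamma = -1$, bias $k$, computing $\ind{s < k}$. Applying the pointwise identity $\ind{s \geq k} + \ind{s < k} = 1$ termwise rewrites the all-ones aggregation of these $d$ outputs as
\[
\sum_{k \text{ odd},\, k \leq d} \ind{s \geq k} \;-\; \sum_{k \text{ even},\, k \leq d} \ind{s \geq k} \;+\; \lfloor d/2 \rfloor,
\]
and the alternating telescope in front evaluates to $\lceil s/2 \rceil - \lfloor s/2 \rfloor = s \bmod 2$. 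Padding with two always-zero neurons (\eg $\gamma = 0$, bias $-1$) yields $C_0$ with $d+2$ outputs, and setting $b = -\lfloor d/2 \rfloor \in \bbR$ gives $\vect{1}^T C_0(\vect{x}) + b = \text{parity}(s)$. For the negation of parity, I would swap the roles of odd and even thresholds and shift $b$ by one.

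For an arbitrary $\bbF_2$-affine $C \colon \{0,1\}^d \to \{0,1\}^{d'}$, each coordinate $C_i(\vect{x})$ is the parity (possibly negated by a constant bit $c_i$) of $\vect{W}_i^T \vect{x}$. I would allocate a disjoint group of $d+2$ hidden neurons per output, repeating the single-output construction with $\vect{W}_i \in \{0,1\}^d$ replacing $\vect{1}$ as the input weight vector and with thresholds running only up to the Hamming weight $w_i \triangleq \|\vect{W}_i\|_1 \leq d$ (the remaining slots become additional padding). The output layer would then consist of $d'$ threshold neurons, the $i$-th of which selects its own hidden group by an all-ones weight vector on those $d+2$ coordinates, uses $\gamma = +1$, and has bias $-\lfloor w_i/2 \rfloor$ (with the analogous negated-parity variant when $c_i = 1$); this threshold-reads the identity of the first part to recover $C_i(\vect{x})$. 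The total hidden width is exactly $d'(d+2)$.

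The hard part will be reconciling the $\{0,1\}$ restriction on input weights with the requirement of a pure all-ones readout: parity is fundamentally a signed alternating sum of threshold indicators, so one needs $\gamma \in \{\pm 1\}$ to encode ``less-than'' indicators and make the signed cancellations collapse under all-ones aggregation. Beyond this, the remaining work is bookkeeping—checking that every bias I introduce lies in the admissible integer range $\{-d_{l-1}+1, \dots, d_{l-1}\}$, which holds at the hidden layer because $k \leq d$, and at the output layer because $|\lfloor w_i/2 \rfloor| \leq d \ll d'(d+2)$.
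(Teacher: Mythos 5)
Your proof is correct and follows essentially the same approach as the paper's: build roughly $d$ threshold neurons on the Hamming-weight $s = \sum_i \vect{x}_i$ (using the all-ones $\{0,1\}$ weight vector and the $\gamma \in \{\pm 1\}$ scalar to realize both $\geq$ and $<$ cutoffs), observe that the all-ones aggregation cancels down to $\mathrm{parity}(s)$ plus a fixed integer offset absorbed into $b$, and reduce the affine case to parities of subsets of coordinates, one disjoint block of $d+2$ hidden neurons per output bit. The only cosmetic deviations are that you handle negated parity by swapping the odd/even threshold roles rather than via the paper's reduction $1 - \mathsf{PARITY}(\vect{x}) = \mathsf{PARITY}(\vect{x},1)$, and you write out the depth-two readout more explicitly than the paper does.
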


\begin{proof}
    First, suppose $C$ is the parity function. For each $i \in [d]$, let $\phi_{\leq i} \colon \{0, 1\}^d \to \{0, 1\}$ be the function
    \[
        \phi_{\leq i}(\vect{x}) = 1 \iff \sum_{j = 1}^d \vect{x}_j \leq i,
    \]
    and similarly define $\phi_{\geq i} \colon \{0, 1\}^d \to \{0, 1\}$ by
    \[
        \phi_{\geq i}(\vect{x}) = 1 \iff \sum_{j = 1}^d \vect{x}_j \geq i.
    \]
    Then
    \[
        \phi_{\leq 1}(\vect{x}) + \phi_{\geq 1}(\vect{x}) + \phi_{\leq 3}(\vect{x}) + \phi_{\geq 3}(\vect{x}) + \dots = \begin{cases}
            \lceil d/2 \rceil + 1 & \text{if } \mathsf{PARITY}(\vect{x}) = 1 \\
            \lceil d/2 \rceil & \text{if } \mathsf{PARITY}(\vect{x}) = 0,
        \end{cases}
    \]
    so we can take $b = - \lceil d/2 \rceil$. Now, suppose $C$ is the negation of the parity function. This reduces to the case of the parity function because $1 - \mathsf{PARITY}(\vect{x}) = \mathsf{PARITY}(\vect{x}, 1)$. Finally, the ``moreover'' statement follows because if $C$ is $\bbF_2$-affine, then every output bit of $C$ is either the parity function or the negated parity function applied to some subset of the inputs.
\end{proof}

Lemma~\ref{lem:parity-networks} can be generalized to the case of any symmetric function instead of $\mathsf{PARITY}$. This technique is well-known in the circuit complexity literature; for example, see the work of Hajnal, Maass, Pudl\'{a}k, Szegedy, and Tur\'{a}n~\cite{hajnal1993threshold}.

\subsection{Threshold networks computing \texorpdfstring{$k$}{k}-wise independent generators}

One of the ingredients of our HSG will be a family of pairwise independent hash functions. We will use the following family, notable for its low computational complexity.
\begin{lemma}[Affine pairwise independent hash functions] \label{lem:affine-hash}
    For every $a, r \in \bbN$, there is a family $\mathcal{H}$ of hash functions $\mathrm{hash} \colon \{0, 1\}^a \to \{0, 1\}^r$ with the following properties.
    \begin{itemize}[leftmargin=7mm]\itemsep2pt
        \item $|\mathcal{H}| \leq 2^{O(a + r)}$.
        \item $\mathcal{H}$ is pairwise independent. That is, for every two distinct $\vect{w}, \vect{w}' \in \{0, 1\}^a$, if we pick $\mathrm{hash} \in \mathcal{H}$ uniformly at random, then $\mathrm{hash}(\vect{w})$ and $\mathrm{hash}(\vect{w}')$ are independent and uniformly distributed over $\{0, 1\}^r$.
        \item Each function $\mathrm{hash} \in \mathcal{H}$ is $\bbF_2$-affine.
    \end{itemize}
\end{lemma}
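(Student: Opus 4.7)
The plan is to use the classical pairwise-independent hash family based on arithmetic in the extension field $\bbF_{2^m}$, where $m = \max(a, r)$. Fix an $\bbF_2$-basis of $\bbF_{2^m}$ once and for all, thereby identifying $\bbF_{2^m}$ with $\bbF_2^m$. Let $\iota \colon \{0,1\}^a \to \bbF_2^m$ denote zero-padding and let $T_r \colon \bbF_2^m \to \bbF_2^r$ denote projection onto the first $r$ coordinates. Parametrize $\mathcal{H}$ by pairs $(\alpha, \beta) \in \bbF_{2^m}^2$ and define
\begin{align*}
\mathrm{hash}_{\alpha,\beta}(\vect{w}) = T_r\rb{\alpha \cdot \iota(\vect{w}) + \beta},
\end{align*}
where the multiplication and addition are performed in $\bbF_{2^m}$.

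The bound $|\mathcal{H}| \le 2^{2m} \le 2^{2(a+r)} = 2^{O(a+r)}$ is immediate. For $\bbF_2$-affinity, the key observation is that for each fixed $\alpha$, the map $\vect{z} \mapsto \alpha \cdot \vect{z}$ on $\bbF_2^m$ is $\bbF_2$-linear, because it is represented by the matrix of left-multiplication by $\alpha$ in the chosen basis. Since the embedding $\iota$ and the truncation $T_r$ are also $\bbF_2$-linear and $\beta$ is a constant vector, the composition yields an $\bbF_2$-affine map from $\{0,1\}^a$ to $\{0,1\}^r$, as required.

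For pairwise independence, fix distinct $\vect{w}, \vect{w}' \in \{0,1\}^a$, so $\iota(\vect{w}) \ne \iota(\vect{w}')$ in $\bbF_{2^m}$. The $\bbF_{2^m}$-linear map $(\alpha, \beta) \mapsto \rb{\alpha \iota(\vect{w}) + \beta,\, \alpha \iota(\vect{w}') + \beta}$ has determinant $\iota(\vect{w}) - \iota(\vect{w}')$, a nonzero element of the field, and is therefore a bijection of $\bbF_{2^m}^2$. Consequently, if $(\alpha,\beta)$ is uniform over $\bbF_{2^m}^2$, then the pair $\rb{\alpha \iota(\vect{w}) + \beta,\, \alpha \iota(\vect{w}') + \beta}$ is also uniform over $\bbF_{2^m}^2$, and applying $T_r$ coordinate-wise yields a pair uniform over $\{0,1\}^r \times \{0,1\}^r$, which is precisely pairwise independence.

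This is essentially a routine verification rather than a theorem with a real obstacle; the only subtle point is to fix the $\bbF_2$-basis of $\bbF_{2^m}$ consistently so that ``multiplication by $\alpha$'' and ``truncation to $r$ bits'' are literally $\bbF_2$-linear operations on $\bbF_2^m$. Once that identification is set up, the three required properties drop out in one line each.
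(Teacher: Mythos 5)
Your proof is correct. The construction you give---affine maps $\vect{w} \mapsto T_r(\alpha\cdot\iota(\vect{w})+\beta)$ over $\bbF_{2^m}$ with $m=\max(a,r)$---is one of the classical pairwise-independent families, and your three verifications (size count, $\bbF_2$-affinity via linearity of the field multiplication $\vect{z}\mapsto\alpha\vect{z}$ in a fixed basis, and the bijectivity argument yielding uniformity of the pair $(\mathrm{hash}(\vect{w}),\mathrm{hash}(\vect{w}'))$ over $\{0,1\}^r\times\{0,1\}^r$) are all sound. The only point I would highlight as deserving a beat of care is the step from ``$(\alpha\iota(\vect{w})+\beta,\,\alpha\iota(\vect{w}')+\beta)$ is uniform on $\bbF_{2^m}^2$'' to ``$(T_r(\cdot),T_r(\cdot))$ is uniform on $\{0,1\}^{2r}$''---this holds because coordinate truncation of a uniform random element of $\bbF_2^m\cong\{0,1\}^m$ is uniform on $\{0,1\}^r$ and truncation preserves independence of the two coordinates; you handle this correctly but tersely.

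The route is different from the paper's in one sense: the paper does not actually prove this lemma but instead cites Claim~2.2 of Mansour, Nisan, and Tiwari~\cite{mansour1993computational}, a family chosen there for its low \emph{circuit} complexity. Your self-contained field-arithmetic construction is an acceptable substitute, and because it is $\bbF_2$-affine, it yields exactly the same downstream wire bounds via Lemma~\ref{lem:parity-networks} (and the alternative Remark~\ref{rem: alternative hash families} discussion of lower-complexity hash families remains unaffected). So the two approaches buy essentially the same thing here; yours is simply explicit where the paper defers to a reference.
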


\begin{proof}
    See the work of Mansour, Nisan, and Tiwari~\cite[Claim 2.2]{mansour1993computational}.
\end{proof}

\begin{remark}[Alternative hash families] \label{rem: alternative hash families}
    By Lemma~\ref{lem:parity-networks}, each function $\mathrm{hash} \in \mathcal{H}$ can be computed by a depth-two binary threshold network with $O(r^2 a + r a^2)$ wires (weights). There exist alternative pairwise independent hash function families with lower wire complexity. In particular, one could use hash functions based on integer arithmetic~\cite{dietzfelbinger1996universal}, which can be implemented with wire complexity $(a + r)^{1 + \gamma}$ for any arbitrarily small constant $\gamma > 0$~\cite{reif1992threshold}. This would lead to slightly better width and wire complexity bounds in Theorem~\ref{thm: partial fctn net cxty}: each occurrence of $3/4$ could be replaced with $2/3 + \gamma$. However, the downside of this approach is that the depth of the network would increase to a very large constant depending on $\gamma$.
\end{remark}

Another ingredient of our HSG will be a threshold network computing a ``$k$-wise uniform generator,'' defined below.
\begin{definition}[$k$-wise uniform generator]
    A \emph{$k$-wise uniform generator} is a function $G \colon \{0, 1\}^r \to \{0, 1\}^R$ such that if we sample $\vect{u} \in \{0, 1\}^r$ uniformly at random, then every $k$ of the output coordinates of $G(\vect{u})$ are independent and uniform. In other words, $G$ is a $0$-PRG for $\mathcal{V}$ with respect to the uniform distribution, where $\mathcal{V}$ consists of all Boolean functions that only depend on $k$ bits.    
\end{definition}
Prior work has shown that $k$-wise uniform generators can be implemented by constant-depth threshold networks~\cite{healy2006constant}. We will need to re-analyze the construction to get sufficiently fine-grained bounds. In the remainder of this subsection, we will prove the following.

\pagebreak

\begin{lemma}[Constant-depth $k$-wise uniform generator] \label{lem:kwise}
    Let $k, R \in \bbN$ where $R$ is a power of two. There exists a $k$-wise uniform generator $G \colon \{0, 1\}^r \to \{0, 1\}^R$, where $r = O(k \cdot \log R)$, such that for every $\bbF_2$-affine function $\mathrm{hash} \colon \{0, 1\}^a \to \{0, 1\}^r$, there exists a depth-$5$ binary threshold network $C \colon \{0, 1\}^{a + \log R} \to \{0, 1\}^{k \cdot \polylog R}$ with widths $\dims$ satisfying the following.
    \begin{enumerate}
        \item For every $\vect{w} \in \{0, 1\}^a$ and every $\vect{z} \in \{0, 1\}^{\log R}$, we have
        \[
            G(\mathrm{hash}(\vect{w}))_{\vect{z}} = \mathsf{PARITY}(C(\vect{w}, \vect{z})),
        \]
        thinking of $\vect{z}$ as a number in $\{0, 1, \dots, R - 1\}$.
        \item The maximum width $\dims_{\mathrm{max}}$ is at most $ak \cdot \polylog R$.
        \item The total number of weights $w\rb{\dims}$ is at most $(a + k) \cdot ak \cdot \polylog R$.
    \end{enumerate}
\end{lemma}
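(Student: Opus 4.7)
The plan is to realize $G$ via the standard polynomial-based $k$-wise uniform generator, then follow the constant-depth arithmetic layout of~\cite{healy2006constant} to exhibit a depth-$5$ threshold-network implementation that is compatible with an arbitrary $\bbF_2$-affine preprocessing $\mathrm{hash}$.

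\textbf{Setup.} Let $m = \log R$ and $r = k m = O(k \log R)$. Identify $\{0,1\}^r$ with $\bbF_{2^m}^k$ by writing a seed $u$ as $(c_0,\dots,c_{k-1})$ and defining
\begin{equation*}
G(u)_z \;\triangleq\; \text{bit $0$ of } p_u(z), \qquad p_u(x) = \sum_{i=0}^{k-1} c_i x^i \in \bbF_{2^m}[x],
\end{equation*}
with $z \in \{0,1\}^{\log R}$ identified with an element of $\bbF_{2^m}$. The standard Vandermonde/Reed--Solomon argument yields $k$-wise uniformity: any $k$ evaluation points $z^{(1)},\dots,z^{(k)}$ correspond to a non-singular $\bbF_{2^m}$-linear map of $u$, so the induced joint distribution over those $k$ output bits is uniform when $u$ is.

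\textbf{Reduction to a structured polynomial.} Since $\mathrm{hash}$ is $\bbF_2$-affine and multiplication in $\bbF_{2^m}$ is $\bbF_2$-bilinear, the function $\Phi(w,z) \triangleq G(\mathrm{hash}(w))_z$ is $\bbF_2$-linear in $w$ and of degree at most $k-1$ in $z$, hence
\begin{equation*}
\Phi(w,z) \;=\; \beta(z) \;\oplus\; \bigoplus_{t=1}^{a} w_t \cdot \alpha_t(z),
\end{equation*}
for certain $\alpha_t,\beta \in \bbF_2[z_1,\dots,z_m]$ of $z$-degree at most $k$. The task reduces to producing a depth-$5$ threshold network $C$ whose output-vector parity equals $\Phi(w,z)$, with the stated width and wire bounds.

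\textbf{Circuit construction.} Two primitives drive the construction: (i) an AND of up to $k$ inputs is realized by a single threshold gate (output $1$ iff the sum is at least $k$); and (ii) by Lemma~\ref{lem:parity-networks}, every $\bbF_2$-affine function can be emitted in depth $2$ as the parity of a vector of threshold outputs. Following Healy's layout, represent $\bbF_{2^m}$ in a basis (e.g.\ a normal basis, or with $m$ a power of two) in which the Frobenius powers $z \mapsto z^{2^j}$ are free bit-permutations; compute each $z^i$ as a constant-depth product of $O(\log k)$ Frobenius powers using the $O(m)$-sparse bilinear multiplication formula of that basis; fold in $c_i = \mathrm{hash}_i(w)$, which is affine in $w$; and finally emit a summand-vector whose parity is the low bit of $\sum_i c_i z^i$. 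A careful scheduling fits into five layers: two for arranging and multiplying Frobenius powers of $z$, one for mod-reduction, one for the $w$-dependent affine contributions, and one final parity-emission layer.

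\textbf{Main obstacle.} The hardest part is width control. A naive monomial expansion of $z^i$ has up to $m^{k-1}$ terms, which is unaffordable. The remedy is that multiplication in $\bbF_{2^m}$ is $O(m)$-sparse bilinearly in a good basis, so each intermediate stage stays in width $O(m \cdot k)$; the $a$-factor enters only where we incorporate the affine hash of $w$, giving an overall width of $O(a k) \cdot \polylog R$. The wire count inherits at most an additional per-layer fan-in of $\max(a,\, k \cdot \polylog R)$, yielding the stated $(a+k) \cdot a k \cdot \polylog R$ bound.
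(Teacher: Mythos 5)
The high-level plan matches the paper's (the polynomial/Reed--Solomon $k$-wise generator, Frobenius squaring to linearize, and a final parity-emission layer so that the network outputs a \emph{vector} whose parity is $G(\mathrm{hash}(\vect{w}))_{\vect{z}}$). But there is a genuine gap at the step you label ``two [layers] for arranging and multiplying Frobenius powers of $z$.'' After the Frobenius linearization you are left with the task of multiplying $\Theta(\log k)$ elements of $\bbF_{2^m}$ together in constant depth and polynomial size, and neither of the two primitives you name --- a threshold-AND of $k$ bits, and the depth-two affine/parity trick of Lemma~\ref{lem:parity-networks} --- does this. A single bit of the product of $\ell = \Theta(\log k)$ field elements is an $\bbF_2$-polynomial of degree $\Theta(\log k)$, and even in an optimal normal basis (where a single multiplication is $O(m)$-sparse) the iterated product naively expands to $m^{\Theta(\log k)}$ monomials, which is superpolynomial in $R$ when $k$ is a power of $\log R$. ``Each intermediate stage stays in width $O(mk)$'' is precisely the claim that needs justification: you cannot re-use intermediate stages without paying in depth, and you cannot flatten all $\ell$ stages into one bilinear step because iterated multiplication is not bilinear.

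The paper closes this gap by invoking a nontrivial imported result: Siu and Roychowdhury's depth-$4$ threshold circuit for iterated integer multiplication (\thmref{thm:iterated-mult}), adapted to iterated polynomial multiplication over $\bbF_2$ via Eberly's trick (\corref{cor:iterated-mult-poly}). This $\LT_4$ circuit computes the right-hand side of the Frobenius-decomposed product (equation~\eqref{eqn:healy-viola}) in one shot, with size $k \cdot \polylog R$. The remaining bookkeeping is exactly \lemref{lem:kwise-ltd} (which isolates the $\bbF_2$-linear preprocessing $C_0$ from the $\LT_4$ circuit $C_1$) followed by \lemref{lem:ltd-vs-btn} (which converts ``$\bbF_2$-affine $\circ$ $\LT_4$'' into a depth-$5$ binary threshold network, accounting for the $(d_0 + 2)$-factor blowup from simulating parities and from removing non-binary weights). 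Without a substitute for the iterated-multiplication circuit --- or some other explicit mechanism that bounds both the depth and the monomial blowup simultaneously --- your five-layer budget is unsubstantiated, and so is the $ak\cdot\polylog R$ width claim. I would also flag that the normal-basis device you invoke (``$O(m)$-sparse bilinearly in a good basis'') only exists for special $m$; the paper sidesteps this by fixing an irreducible $E(x)$ and precomputing the reduction polynomials $e_{m,j}$, which works for every $m$.
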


\bigskip

\begin{remark}[The role of the parity functions]
    One can combine Lemma~\ref{lem:kwise} with Lemma~\ref{lem:parity-networks} to obtain threshold networks computing the function $(\vect{u}, \vect{z}) \mapsto G(\vect{u})_\vect{z}$. In Lemma~\ref{lem:kwise}, instead of describing a network that computes the function $(\vect{u}, \vect{z}) \mapsto G(\vect{u})_\vect{z}$, we describe a network $C$ satisfying $G(\mathrm{hash}(\vect{w}))_{\vect{z}} = \mathsf{PARITY}(C(\vect{w}, \vect{z}))$. The only reason for this more complicated statement is that it leads to a slightly better depth complexity in Theorem~\ref{thm: partial fctn net cxty}.
\end{remark}

We reiterate that the proof of Lemma~\ref{lem:kwise} heavily relies on prior work. For the most part, this prior work studies a Boolean circuit model that is closely related to, but distinct from, the ``binary threshold network'' model in which we are interested. We introduce the circuit model next. 
\begin{definition}[$\LT_L$ circuits] \label{def:ltd}
    An $\LT_L$ circuit is defined just like a depth-$L$ binary threshold network (Definition~\ref{def:ftn}), except that we allow arbitrary integer weights ($\vect{W}^{(l)} \in \bbZ^{d_l \times d_{l - 1}}$); we allow arbitrary integer thresholds ($\vect{b}^{(l)} \in \bbZ^{d_l}$); and we do not allow any scaling ($\vect{\gamma}^{(l)} = \vect{1}^{d_l}$). The \emph{size} of the circuit is the sum of the absolute values of the weights, i.e.,
    \[
        \sum_{l = 1}^L \sum_{i = 1}^{d_l} \sum_{j = 1}^{d_{l - 1}} |\vect{W}^{(l)}_{ij}|.
    \]
\end{definition}

\begin{remark}[Parallel wires]
    In the context of circuit complexity, it is perhaps more natural to stipulate that the weights are always $\{\pm 1\}$; there can be any number of \emph{parallel wires} between two nodes, including zero; and the size of the circuit is the total number of wires. This is completely equivalent to Definition~\ref{def:ltd}.
\end{remark}

The proof of Lemma~\ref{lem:kwise} relies on circuits performing arithmetic. A long line of research investigated the depth complexity of (iterated) integer multiplication~\cite{chandra1984constant,beame1986log,pippenger1987complexity,siu1991power,hofmeister1991threshold,reif1992threshold,hajnal1993threshold,siu1993depth,siu1994optimal}, culminating in the following result by Siu and Roychowdhury~\cite{siu1994optimal}.

\begin{theorem}[Iterated multiplication in depth four~\cite{siu1994optimal}] \label{thm:iterated-mult}
    For every $n \in \bbN$, there exists an $\LT_4$ circuit of size $\poly(n)$ that computes the product of $n$ given $n$-bit integers.
\end{theorem}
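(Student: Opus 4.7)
The plan is to prove this via the classical reduction of iterated integer multiplication to iterated addition through the Chinese Remainder Representation (CRR), combined with known constant-depth threshold-circuit implementations of the individual sub-tasks.

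First, I would fix a collection of distinct small primes $p_1, \dots, p_m$, each of $O(\log n)$ bits, with $m = O(n)$, so that $P \triangleq \prod_j p_j > 2^{n^2}$ (such primes exist by the prime number theorem, and the product of $n$ many $n$-bit integers has at most $n^2$ bits, so CRR loses no information). Then I proceed in three conceptual stages. \emph{Stage 1 (into CRR):} for each input $x_i$ and each $j$, compute $r_{i,j} = x_i \bmod p_j$. Each such residue is an $O(\log n)$-bit symmetric function of the bits of $x_i$ weighted by $2^k \bmod p_j$, so it lies in $\LT_2$ (via the standard ``symmetric Boolean functions are in $\mathrm{TC}^0$'' construction, which is closely related to \lemref{lem:parity-networks}). \emph{Stage 2 (multiply in CRR):} for each $j$, compute $s_j = \prod_i r_{i,j} \bmod p_j$ by exploiting that $\bbF_{p_j}^{*}$ is cyclic of size $\polylog n$. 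Tabulate a generator, replace each nonzero $r_{i,j}$ by its discrete log $\ell_{i,j} \in \bbZ_{p_j - 1}$, and reduce the product to the iterated sum $\sum_i \ell_{i,j} \bmod (p_j - 1)$, which is iterated addition of $n$ integers of $O(\log n)$ bits; exponentiating back via another table lookup recovers $s_j$.

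\emph{Stage 3 (out of CRR):} reconstruct the $O(n^2)$-bit binary expansion of $\prod_i x_i$ from $(s_1, \dots, s_m)$. Writing $P_j = P / p_j$ and $c_j = (P_j^{-1} \bmod p_j)$, one has
\begin{equation*}
    \prod_i x_i \;=\; \left( \sum_j s_j \, c_j \, P_j \right) - k \cdot P
\end{equation*}
for an integer $k$ pinned down by a sufficiently accurate fixed-point approximation of $\sum_j s_j c_j / p_j \pmod 1$. Computing this approximation is itself iterated addition, after which a final round of iterated addition on $\poly(n)$-bit quantities produces the binary expansion.

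The main obstacle is fitting the whole construction into depth four rather than some larger constant. The cost bookkeeping is tight: each of Stages 1--3 naively costs constant depth, but their concatenation would easily exceed four. The Siu--Roychowdhury trick is to use that iterated addition of $n$ many $n$-bit integers lies in $\LT_3$, and to fold the table-lookup layers of Stages 1 and 2 into the bottom layer of the iterated-addition network, reusing the same threshold gates for the partial sums appearing in Stage 3. The resulting construction has $\poly(n)$ total wires (size), and verifying correctness reduces to checking that the approximation precision in Stage 3 suffices to determine $k$ exactly --- a standard computation showing $O(\log n)$ extra bits suffice.
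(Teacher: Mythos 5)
The paper does not actually prove Theorem~\ref{thm:iterated-mult}; it imports it verbatim from Siu and Roychowdhury~\cite{siu1994optimal} (that is the only role of the citation in the theorem header), and then uses it as a black box inside the proof of Corollary~\ref{cor:iterated-mult-poly}. So there is no ``paper's own proof'' against which to compare your argument --- what you have written is a reconstruction of how the \emph{cited} result is obtained.

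As a reconstruction, your high-level plan is the right one and matches the standard literature route: move to Chinese Remainder Representation with $\poly(n)$ many $O(\log n)$-bit prime moduli, reduce multiplication in each $\bbF_{p_j}$ to iterated addition via a discrete-log table, and reconstruct the $O(n^2)$-bit product from the residues using the CRR formula $\prod_i x_i = \bigl(\sum_j s_j c_j P_j\bigr) - kP$, with $k$ extracted from a sufficiently precise fixed-point approximation of $\sum_j s_j c_j / p_j \bmod 1$. That is indeed the skeleton of the Siu--Roychowdhury construction (and of the earlier Beame--Cook--Hoover and Reif--Tate reductions). The part you correctly identify as ``the main obstacle'' --- compressing the three stages into total depth four rather than four-plus-a-few --- is exactly where the technical content of~\cite{siu1994optimal} lives, and your sketch gestures at the right mechanism (collapsing table-lookup layers into the iterated-addition network) without carrying it out. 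Two minor calibration notes if you want to tighten this: iterated addition of $n$ many $n$-bit integers is known to be in depth $2$, not $3$, in the polynomial-size threshold model (this extra slack is what makes depth $4$ feasible at all); and the stated table sizes are $\poly(n)$ only because each $p_j$ has $O(\log n)$ bits, so each discrete-log and exponentiation table has $\polylog n$ rows --- worth saying explicitly since it is load-bearing for the $\poly(n)$ size bound. With those adjustments your sketch is a faithful summary of the cited result, but it should be presented as such rather than as a self-contained proof.
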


By a standard trick~\cite{eberly1989fast}, Theorem~\ref{thm:iterated-mult} implies circuits of the same complexity that compute the iterated product of \emph{polynomials} over $\bbF_2$. We include a proof sketch for completeness.

\begin{corollary}[Iterated multiplication of polynomials over $\bbF_2$] \label{cor:iterated-mult-poly}
    For every $n \in \bbN$, there exists an $\LT_4$ circuit of size $\poly(n)$ that computes the product of $n$ given polynomials in $\bbF_2[x]$, each of which has degree less than $n$ and is represented by an $n$-bit vector of coefficients.
\end{corollary}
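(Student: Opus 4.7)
\textbf{Proof proposal for Corollary~\ref{cor:iterated-mult-poly}.} The plan is to reduce iterated polynomial multiplication over $\bbF_2$ to iterated integer multiplication via the classical Kronecker/substitution trick, and then invoke Theorem~\ref{thm:iterated-mult} as a black box. Given $p_1, \dots, p_n \in \bbF_2[x]$ each of degree $< n$, I first lift them to $\bbZ[x]$ by regarding their $\{0,1\}$-coefficients as integers. Call the integer product $Q(x) = \prod_{i=1}^n p_i(x) \in \bbZ[x]$; this polynomial has degree $< n^2$, and the desired $\bbF_2$-product $q(x)$ satisfies $q = Q \bmod 2$, so it suffices to compute the parity of each coefficient of $Q$.

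Next I would choose a block size $m = \Theta(n \log n)$ that is large enough to prevent coefficient collisions when we encode $Q$ as a single integer. Concretely, each coefficient of $Q$ is at most $Q(1) = \prod_i p_i(1) \le n^n$, which fits in $m = \lceil n \log_2 n \rceil + 1$ bits. Substituting $x \mapsto 2^m$ turns each $p_i$ into an integer $P_i := p_i(2^m)$ of bit-length at most $mn = \poly(n)$; this substitution is a pure rewiring, spacing the input bits of $p_i$ by $m$ zeros, and uses no gates. Because the block size $m$ exceeds the bit-length of every coefficient of $Q$, the integer $Q(2^m) = \prod_i P_i$ has the coefficients of $Q$ appearing in disjoint $m$-bit blocks of its binary expansion. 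In particular, the coefficient of $x^k$ in $q$ equals bit number $k m$ of $Q(2^m)$.

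I then invoke Theorem~\ref{thm:iterated-mult} with parameter $n' := \max(n, mn) = \poly(n)$ to multiply $n$ integers of $\poly(n)$ bits (padding the count up to $n'$ with $1$'s and padding each integer to $n'$ bits). This produces an $\LT_4$ circuit of size $\poly(n') = \poly(n)$ computing $Q(2^m)$. The $< n^2$ output coefficients of $q$ are just specific output wires of this circuit (at positions $0, m, 2m, \dots$), so no extra gates are needed. The result is an $\LT_4$ circuit of size $\poly(n)$ as required.

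The only subtlety is the bookkeeping of bit-lengths: one must verify that the coefficient bound $n^n$ is correct so that $m = O(n \log n)$ suffices, and that Theorem~\ref{thm:iterated-mult} applied at scale $n' = \poly(n)$ indeed absorbs the increased input size without blowing up the depth beyond $4$. Both of these are routine. I do not expect any genuine obstacle, since the construction is the standard reduction used throughout the circuit complexity literature; the content is entirely in Theorem~\ref{thm:iterated-mult}.
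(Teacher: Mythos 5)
Your proposal is correct and follows essentially the same route as the paper's proof sketch: both lift the $\bbF_2[x]$ polynomials to $\bbZ[x]$, apply the Kronecker substitution $x \mapsto 2^m$ with block size $m = \Theta(n \log n)$ chosen so the coefficient bound $n^n$ fits without carries, observe that this substitution is free gate-wise (it only fixes the spacing zeros as constant inputs), invoke \thmref{thm:iterated-mult} at a $\poly(n)$-scaled parameter, and finally project out the low-order bit of each $m$-bit block. The only cosmetic difference is phrasing (you pad the integer count with $1$'s whereas the paper speaks of fixing inputs to zeroes), which amounts to the same thing.
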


\begin{proof}[Proof sketch]
    Think of the given polynomials as polynomials over $\bbZ$, say $q_1(x), \dots, q_n(x)$. If we evaluate one of these polynomials on a power of two, say $q_i(2^s)$, and then write the output in binary, the resulting string consists of the coefficients of $q_i$, with $s - 1$ zeroes inserted between every two bits. Therefore, by using the $\poly(ns)$-size circuit of Theorem~\ref{thm:iterated-mult} (with some of its inputs fixed to zeroes), we can compute the product $q_1(2^s) \cdot q_2(2^s) \cdots q_n(2^s) = q(2^s)$, where $q = q_1 \cdot q_2 \cdots q_n$. Every coefficient of $q$ is a nonnegative integer bounded by $n^n$, so if we choose $s = \lceil n \log n \rceil$, then the binary expansion of $q(2^s)$ is the concatenation of all of the binary expansions of the coefficients of $q$. To reduce mod $2$, we simply discard all but the lowest-order bit of each of those coefficients.
\end{proof}

\pagebreak

At this point, we are ready to construct a circuit that computes a $k$-wise uniform generator. The construction is based on the work of Healy and Viola~\cite{healy2006constant}.
\begin{lemma}[A $k$-wise uniform generator in the $\LT_L$ model] \label{lem:kwise-ltd}
    Let $k, R \in \bbN$ where $R$ is a power of two. There exists a $k$-wise uniform generator $G \colon \{0, 1\}^r \to \{0, 1\}^R$, an $\bbF_2$-linear function $C_0 \colon \{0, 1\}^{\log R} \to \{0, 1\}^{O(\log R \cdot \log k)}$, and an $\LT_4$ circuit $C_1 \colon \{0, 1\}^{O(k \cdot \log R)} \to \{0, 1\}^{O(k \cdot \log R)}$ with the following properties.
    \begin{itemize}
        \item The seed length is $r = O(k \cdot \log R)$.
        \item For every seed $\vect{u} \in \{0, 1\}^r$ and every $\vect{z} \in \{0, 1\}^{\log R}$, we have
        \[
            G(\vect{u})_{\vect{z}} = \mathsf{PARITY}(C_1(\vect{u}, C_0(\vect{z}))),
        \]
        thinking of $\vect{z}$ as a number in $\{0, 1, \dots, R - 1\}$.
        \item The circuit $C_1$ has size $k \cdot \polylog R$.
    \end{itemize}
\end{lemma}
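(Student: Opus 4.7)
The plan is to instantiate the textbook polynomial-based $k$-wise uniform hash family. I identify $\{0,1\}^{\log R}$ with the field $\bbF_{2^s}$ where $s = \log R$, fix an irreducible $q(x) \in \bbF_2[x]$ of degree $s$, and let the seed $\vect{u}$ encode coefficients $u_0, \dots, u_{k-1} \in \bbF_{2^s}$. Define $G(\vect{u})_{\vect{z}}$ to be the constant coefficient of $p(\vect{z}) \bmod q(x)$ where $p(x) = \sum_{j=0}^{k-1} u_j x^j$. A standard Reed--Solomon argument gives that for any distinct $\vect{z}_1, \dots, \vect{z}_k$ the evaluations $(p(\vect{z}_1), \dots, p(\vect{z}_k))$ are uniformly distributed over $\bbF_{2^s}^k$, and applying any fixed $\bbF_2$-linear functional (here, ``take the constant coefficient mod $q$'') preserves this, so $G$ is $k$-wise uniform with seed length $ks = O(k \log R)$. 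I may assume $k \leq R$, since $k$-wise uniformity on $R$ output coordinates coincides with full uniformity once $k \geq R$.

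For the $\bbF_2$-linear preprocessor $C_0$, I use the Frobenius endomorphism: squaring in $\bbF_{2^s}$ is $\bbF_2$-linear, hence so is each map $\vect{z} \mapsto \vect{z}^{2^i}$. I set
\[
C_0(\vect{z}) = \bigl(\vect{z}^{2^0}, \vect{z}^{2^1}, \dots, \vect{z}^{2^{\lceil \log k \rceil}}\bigr),
\]
which is $\bbF_2$-linear and outputs $O(s \log k) = O(\log R \cdot \log k)$ bits. Writing each $j < k$ in binary as $j = \sum_{i \in S_j} 2^i$ with $S_j \subseteq \{0, 1, \dots, \lceil \log k \rceil\}$, the element $\vect{z}^j$ equals $\prod_{i \in S_j} \vect{z}^{2^i}$ in $\bbF_{2^s}$, so $u_j \cdot \vect{z}^j$ is an iterated product of at most $\lceil \log k \rceil + 1$ polynomials in $\bbF_2[x]$ of degree less than $s$.

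The $\LT_4$ circuit $C_1$ consists of $k$ parallel copies of the iterated-multiplication circuit of Corollary~\ref{cor:iterated-mult-poly}, one per $j$, each computing the unreduced product $u_j \cdot \prod_{i \in S_j} \vect{z}^{2^i} \in \bbF_2[x]$. Each subcircuit has size $\poly(s, \log k) = \polylog R$, so the total size is $k \cdot \polylog R$. Crucially, I do \emph{not} perform the reduction modulo $q(x)$ inside $C_1$ --- that would add an extra linear layer and push the depth to five. Instead, I exploit two $\bbF_2$-linearity facts: (i) addition in $\bbF_{2^s}$ is coordinate-wise XOR, and (ii) any single bit of $r(x) \bmod q(x)$ is a fixed $\bbF_2$-linear functional of the coefficients of $r(x)$. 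Composing these, the constant coefficient of $p(\vect{z}) \bmod q(x)$ is a hard-coded $\bbF_2$-linear combination of the bits of the unreduced iterated products. I have $C_1$ output precisely the subset of those product bits participating in this linear combination, so that $\mathsf{PARITY}(C_1(\vect{u}, C_0(\vect{z}))) = G(\vect{u})_{\vect{z}}$.

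The main obstacle is the depth budget. The iterated-multiplication subcircuits sit exactly at depth four, and running them in parallel preserves depth, but any additional nonlinear processing would cost another layer. The trick of pushing the linear reduction modulo $q(x)$ into the outer PARITY, together with the fact that addition in $\bbF_{2^s}$ is just XOR, is what makes the construction fit into $\LT_4$ while retaining the claimed size $k \cdot \polylog R$.
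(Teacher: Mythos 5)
Your proof is correct and follows essentially the same route as the paper's: the same polynomial-evaluation $k$-wise uniform generator over $\bbF_{2^{\log R}}$, the same $\bbF_2$-linear preprocessor computing the Frobenius powers of $\vect{z}$, $k$ parallel copies of the $\LT_4$ iterated-multiplication circuit, and the same key observation that the modular reduction and the sum over terms can be absorbed into the outer $\mathsf{PARITY}$ to stay within depth four.
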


\begin{proof}
    If $k \geq R$, the lemma is trivial, so assume $k < R.$ We use the following standard example of a $k$-wise independent generator~\cite{chor1989power,alon1986fast}. Let $n = \log R$, let $E(x) \in \bbF_2[x]$ be an irreducible polynomial of degree $n$, and let $\bbF_{2^n}$ be the finite field consisting of all polynomials in $\bbF_2[x]$ modulo $E(x)$. The seed of the generator is interpreted as a list of field elements: $\vect{u} = (p_0, p_1, \dots, p_{k - 1}) \in \bbF_{2^n}^k$. Each index $\vect{z} \in \{0, 1, \dots, R - 1\}$ can be interpreted as a field element $\vect{z} \in \bbF_{2^n}$. The output of the generator is given by
    \[
        G(\vect{u})_{\vect{z}} = \text{the lowest order bit of } p_0 \cdot \vect{z}^0 + p_1 \cdot \vect{z}^1 + \dots + p_{k - 1} \cdot \vect{z}^{k - 1},
    \]
    where the arithmetic takes place in $\bbF_{2^n}$.

    To study the circuit complexity of this generator, let us first focus on a single term $p_i \cdot \vect{z}^i$. Since we are thinking of $\vect{z}$ as a field element $\vect{z} \in \bbF_{2^n}$, we can also think of it as a polynomial $\vect{z}(x) \in \bbF_2[x]$ of degree less than $n$. Write $\vect{z}(x) = \sum_{j = 0}^{n - 1} \vect{z}_j \cdot x^j$. We compute the power $\vect{z}^i$ by a ``repeated squaring'' approach. Write $i = \sum_{m \in M} 2^m$, where $M \subseteq \{0, 1, \dots, \lfloor \log i \rfloor \}$. Then
    \[
        p_i(x) \cdot \vect{z}(x)^i = p_i(x) \cdot \prod_{m \in M} \left(\sum_{j = 0}^{n - 1} \vect{z}_j \cdot x^j\right)^{2^m} = p_i(x) \cdot \prod_{m \in M} \sum_{j = 0}^{n - 1} \vect{z}_j \cdot x^{j \cdot 2^i},
    \]
    since we are working in characteristic two. For each $m \in M$ and each $j < n$, let $e_{m, j}(x) = x^{j \cdot 2^m} \bmod E(x)$, a polynomial of degree less than $n$. That way,
    \begin{equation} \label{eqn:healy-viola}
        p_i(x) \cdot \vect{z}(x)^i \equiv p_i(x) \cdot \prod_{m \in M} \sum_{j = 0}^{n - 1} \vect{z}_j \cdot e_{m, j}(x) \pmod{E(x)}.
    \end{equation}
    
    The function $C_0(\vect{z})$ computes $\sum_{j = 0}^{n - 1} \vect{z}_j \cdot e_{m, j}$ for every $m \in \{0, 1, \dots, \lfloor \log k\rfloor \}$. This function is $\bbF_2$-linear, because for each $m \in \{0, 1, \dots, \lfloor \log k \rfloor\}$ and each $s < n$, the $s$-th bit of $\sum_{j = 0}^{n - 1} \vect{z}_j \cdot e_{m, j}$ is given by
    \[
        \bigoplus_{j : e_{m, j, s} = 1} \vect{z}_j,
    \]
    where $e_{m, j, s}$ denotes the $s$-th coefficient of the polynomial $e_{m, j}$.

    Next, the circuit $C_1$ applies $k$ copies of the iterated multiplication circuit from Corollary~\ref{cor:iterated-mult-poly}, in parallel, to compute the polynomial on the right-hand side of \eqref{eqn:healy-viola} for each $0 \leq i < k$. Each iterated multiplication circuit has size $\polylog R$, so altogether, $C_1$ has size $k \cdot \polylog R$.

    At this point, we have computed polynomials $r_0, r_1, \dots, r_{k - 1} \in \bbF_2[x]$, each of degree $O(n \log k)$, such that $r_i(x) \equiv p_i(x) \cdot \vect{z}(x)^i \pmod{E(x)}$. Next, we need to sum these terms up, reduce mod $E(x)$, and output the lowest-order bit. For each $j \leq O(n \log k)$, let $r_{ij}$ be the $x^j$ coefficient of $r_i$. Our circuit needs to output the lowest-order bit of
    \[
        \sum_{i = 0}^{k - 1} r_i \bmod E(x) = \sum_{i = 0}^{k - 1} \sum_{j = 0}^{O(n \log k)} r_{ij} \cdot e_{0, j}.
    \]
    Now, we are working over characteristic two, so $\sum$ means bitwise XOR. In other words, the output is given by
    \[
        \bigoplus_{j : e_{0, j, 0} = 1} \bigoplus_{i = 0}^{k - 1} r_{ij},
    \]
    i.e., it is the parity function applied to some subset of the output bits of $C_1$. To complete the proof, modify $C_1$ by deleting the unused output gates.
\end{proof}

We have almost completed the proof of Lemma~\ref{lem:kwise}. The last step is to bridge the gap between $\LT_L$ circuits and binary threshold networks. We do so via the following lemma.

\begin{lemma}[Simulating $\LT_L$ circuits using binary threshold networks] \label{lem:ltd-vs-btn}
    Let $L\geq 1$ be a constant. Let $C_0 \colon \{0, 1\}^{d_0} \to \{0, 1\}^{d_1}$ be an $\bbF_2$-affine function, and let $C_1 \colon \{0, 1\}^{d_1} \to \{0, 1\}^{d_2}$ be an $\LT_L$ circuit of size $S$. Then the composition $C_1 \circ C_0$ can be computed by a depth-$(L + 1)$ binary threshold network with widths $\dims$ satisfying the following.
    \begin{enumerate}
        \item The maximum width $\dims_{\mathrm{max}}$ is at most $S \cdot (d_0 + 2)$.
        \item The total number of weights $w\rb{\dims}$ is at most $O(S^2 d_0 + S d_0^2)$.
    \end{enumerate}
\end{lemma}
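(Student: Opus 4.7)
The plan is to combine Lemma~\ref{lem:parity-networks} with a neuron-duplication technique that uses $\{0,1\}$-valued weights to simulate the integer weights of the $\LT_L$ circuit $C_1$. The crucial optimization for depth is to avoid first computing $C_0(\vect{x})$ explicitly and then feeding the result into $C_1$: instead, I would fold the final linear-threshold step of Lemma~\ref{lem:parity-networks}'s decomposition of $C_0$ directly into the first layer of $C_1$, saving one layer and producing a network of depth $L+1$ rather than $L+2$.

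Concretely, I would first apply Lemma~\ref{lem:parity-networks} to each output bit of $C_0$: for each $i \in [d_1]$, obtain a depth-one $\{0,1\}$-weighted network $C_0^{(i)} \colon \{0,1\}^{d_0} \to \{0,1\}^{d_0+2}$ and a bias $b_i$ with $C_0(\vect{x})_i = \vect{1}^T C_0^{(i)}(\vect{x}) + b_i$. For each neuron $v$ in $C_1$'s first layer with integer weights $w_i^v$ and threshold $b^v$, the computation at $v$ thereby reduces to a threshold of an integer combination of the $\{0,1\}$ outputs of the $C_0^{(i)}$'s. To realize this with $\{0,1\}$ weights, I would place in our first hidden layer $|w_i^v|$ identical groups of $d_0+2$ neurons per $(v,i)$ pair, where a positive $w_i^v$ corresponds to straight copies of $C_0^{(i)}$ and a negative $w_i^v$ corresponds to complementary copies (obtained by setting $\gamma=-1$ on those neurons and shifting their biases so that the output is the bit's logical negation). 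The neuron $v$ then lives in our second hidden layer, takes weight $1$ from every allocated group, and carries a bias absorbing the constant $-b^v-\sum_i w_i^v b_i$. For $C_1$'s deeper layers $\ell = 2, \dots, L$ I would apply the same duplication idea one layer at a time: to realize integer fan-in $w_i^v$ at a neuron $v$ of layer $\ell$, maintain $|w_i^v|$ signed copies of the predecessor neuron in our corresponding layer.

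To verify the stated bounds, let $W_\ell$ denote the total absolute weight entering $C_1$'s layer $\ell$, so $\sum_\ell W_\ell = S$ and the number of neurons in $C_1$'s layer $\ell$ is at most $W_\ell$. Our layer $1$ then has width at most $(d_0+2) W_1 \le (d_0+2) S$, and each of our layers $\ell+1$ for $\ell \ge 1$ has width at most $W_\ell + W_{\ell+1} \le 2S$, yielding $\dims_{\mathrm{max}} \le S(d_0+2)$. For the weights, the input-to-layer-$1$ block contributes $d_0 \cdot (d_0+2) S = O(S d_0^2)$; the layer-$1$-to-layer-$2$ block contributes $(d_0+2) S \cdot O(S) = O(S^2 d_0)$; and each of the $O(L)$ subsequent inter-layer blocks contributes $O(S^2)$, summing to $O(S^2 d_0 + S d_0^2)$ since $L$ is constant. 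The main obstacle is that in a binary threshold network the scaling vector $\bgamma$ acts per-neuron, not per-wire, so a negative integer weight in $C_1$ cannot be realized by negating a single entry of $\mathbf{W}$; this forces me to commit to a fixed polarity on each duplicated group of parity bits and, in particular, to keep separate positive and negative copies whenever the same $C_0^{(i)}$ (or the same intermediate neuron) feeds into downstream neurons with opposite weight signs. Verifying that the bias recalibrations on the $\gamma=-1$ neurons really yield bit-complements, and that the fold-in of Lemma~\ref{lem:parity-networks}'s summation into $C_1$'s first-layer threshold preserves the correct semantics, is the most delicate bookkeeping required.
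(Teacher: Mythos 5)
Your proposal is correct and follows essentially the same approach as the paper: absorb the summation layer of Lemma~\ref{lem:parity-networks} into $C_1$'s first layer (saving one layer so the depth is $L+1$), and then realize integer wire weights by replicating each predecessor gate $|w|$ times, using $\gamma = -1$ together with a bias shift to realize negative weights. The paper organizes the replication as a top-down induction peeling off $C_1$'s last layer, while you describe it layer-by-layer from the bottom; the two descriptions are equivalent, and your width/weight accounting (after correcting the minor sign and negation-constant slips in the bias formula, which you already flag as the delicate bookkeeping) reaches the same bounds.
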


\begin{proof}
    Let us define the \emph{cost} of a layer in an $\LT_L$ circuit to be the sum of the absolute values of the weights in that layer, so the size of the circuit is the sum of the costs. Lemma~\ref{lem:parity-networks} implies that $C_1 \circ C_0 = C_1' \circ C_0'$, where $C_0'$ is a depth-one binary threshold network and $C_1'$ is an $\LT_L$ circuit in which the first layer has cost at most $S \cdot (d_0 + 2)$ and all subsequent layers have cost at most $S$.

    To complete the proof, let us show by induction on $L$ that in general, if $C_0'$ is a depth-one binary threshold network and $C_1'$ is an $\LT_L$ circuit in which the layers have costs $S_1, S_2, \dots, S_L$, then $C_1' \circ C_0'$ can be computed by a depth-$(L + 1)$ binary threshold network in which the layers after the input layer have widths $S_1, S_2, \dots, S_L$. Let us write $C_1'$ as $C_3 \circ C_2$, where $C_3$ is the last layer of $C_1'$. By induction, $C_2 \circ C_0'$ can be computed by a depth-$L$ binary threshold network $C$ in which the layers after the input layer have widths $S_1, S_2, \dots, S_{L - 1}$. Now let us modify $C_3$ and $C$ so that every wire in $C_3$ has weight either $0$ or $1$. If a wire in $C_3$ has an integer weight $w \notin \{0, 1\}$, then we make $|w|$ many copies of the appropriate output gate of $C$, negate them if $w < 0$, and then split the wire into $|w|$ many wires, each with weight $+1$. This process has no effect on the cost of $C_3$. The process could potentially increase the width of the output layer of $C$, but its width will not exceed $S_L$, the cost of $C_3$. After this modification, we can simply think of $C_3$ as one more layer in our binary threshold network.
\end{proof}

Lemma~\ref{lem:kwise} follows immediately from Lemmas~\ref{lem:kwise-ltd} and~\ref{lem:ltd-vs-btn}.

\newpage
\subsection{A hitting set generator with a non-optimal dependence on \texorpdfstring{$\epsilon$}{epsilon}}

In this subsection, we will use the $k$-wise independent generators that we developed in the previous subsection to construct our first HSG:
\begin{lemma}[Non-optimal HSG for conjunctions of literals] \label{lem:non-optimal-hsg}
    Let $R$ be a power of two and let $\alpha, \epsilon \in (0, 1)$. Assume that $1/R \leq \alpha \leq 1 - 1/R$. Let $\mathcal{V}$ be the class of functions $V \colon \{0, 1\}^R \to \{0, 1\}$ that can be expressed as a conjunction of literals. There exists a generator $G \colon \{0, 1\}^r \to \{0, 1\}^R$ satisfying the following.
    \begin{enumerate}[leftmargin=7mm]\itemsep2pt
        \item For every $V \in \mathcal{V}$, if $\bbP_{\vect{y} \sim \mathrm{Ber}(\alpha)^R}(V(y) = 1) \geq 2\epsilon$, then $\bbP_{\vect{u} \in \{0, 1\}^r}(V(G(\vect{u})) = 1) \geq \epsilon$. \label{item:hsg-plus}
        \item The seed length is $r = O(\log(1/\epsilon) \cdot \log^2 R)$.
        \item For every $\bbF_2$-affine function $\mathrm{hash} \colon \{0, 1\}^a \to \{0, 1\}^r$, the function $C(\vect{w}, \vect{z}) = G(\mathrm{hash}(\vect{w}))_\vect{z}$ can be computed by a depth-$8$ binary threshold network with widths $\dims$ such that the maximum width $\dims_{\mathrm{max}}$ at most $a \cdot \log(1/\epsilon) \cdot \polylog R$ and the total number of weights $w\rb{\dims}$ is at most $(\log(1/\epsilon) \cdot a^2 + \log^2(1/\epsilon) \cdot a) \cdot \polylog R$.
    \end{enumerate}
\end{lemma}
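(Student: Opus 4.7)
The plan is to combine the constant-depth $k$-wise uniform generator of Lemma~\ref{lem:kwise} with a block-wise threshold trick that converts uniform bits into approximately $\mathrm{Ber}(\alpha)$ bits with enough independence to hit any high-probability conjunction. Concretely, set $t = \Theta(\log R)$ and $k = \Theta(\log(1/\epsilon)\cdot \log R)$, invoke Lemma~\ref{lem:kwise} with output length $R\cdot t$ to obtain a $k$-wise uniform generator $G' \colon \{0,1\}^r \to \{0,1\}^{Rt}$ with seed length $r = O(k\log R) = O(\log(1/\epsilon)\log^2 R)$, and approximate $\alpha$ by $\tilde\alpha = p/2^t$ with $p = \lfloor \alpha\cdot 2^t\rfloor$. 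Partitioning $G'(u)$ into $R$ blocks of $t$ bits and letting $v_z(u) \in \{0,\dots,2^t-1\}$ denote the integer encoded by the $z$-th block, set
\[
G(u)_z = \ind{v_z(u) < p}.
\]
The $R$ output bits of $G$ are then $m$-wise independent $\mathrm{Ber}(\tilde\alpha)$ bits for $m = k/t = \Theta(\log(1/\epsilon))$.

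\textbf{Hitting property.} Fix $V = \bigwedge_{i\in P} y_i \wedge \bigwedge_{j\in N} \bar y_j$ with $\alpha^{|P|}(1-\alpha)^{|N|}\geq 2\epsilon$, and assume $\alpha\leq 1/2$ (otherwise dualize by swapping zeros and ones). The probability constraint forces $|P|\leq \log(1/(2\epsilon))$, and using $\log(1/(1-\alpha))\geq \alpha/\ln 2$ it also forces $|N|\tilde\alpha = O(\log(1/\epsilon))$. Applying Bonferroni's inequality to the events $\{X_j=1\}_{j\in N}$ intersected with $E = \bigcap_{i\in P}\{X_i=1\}$, truncated at an odd level $m' = \Theta(\log(1/\epsilon))$ chosen so that $|P|+m'\leq m$, yields
\[
\Pr_u[V(G(u)) = 1] \;\geq\; \tilde\alpha^{|P|} \sum_{s=0}^{m'} (-1)^s \binom{|N|}{s} \tilde\alpha^s.
\]
Because $m' > e\cdot |N|\tilde\alpha$ for a sufficiently large constant in $m'$, the partial binomial expansion is past its turning point and its truncation error is bounded by its first omitted term, which is at most $(e|N|\tilde\alpha/(m'+1))^{m'+1}\leq \epsilon$. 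Combining with the approximation $\tilde\alpha^{|P|}(1-\tilde\alpha)^{|N|} = (1-o(1))\cdot \alpha^{|P|}(1-\alpha)^{|N|}$, which follows from $|\tilde\alpha-\alpha|\leq 2^{-t}$ together with the bounds on $|P|$ and $|N|\alpha$, gives $\Pr_u[V(G(u))=1]\geq 2\epsilon - O(\epsilon)\geq \epsilon$.

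\textbf{Network realization and main obstacle.} To compute $C(w,z) = G(\mathrm{hash}(w))_z$, invoke Lemma~\ref{lem:kwise} with output index $(z,j) \in \{0,1\}^{\log R} \times \{0,1\}^{\log t}$ to obtain a depth-$5$ network whose output vector's $\mathsf{PARITY}$ equals the $j$-th bit of the $z$-th block of $G'(\mathrm{hash}(w))$. Running $t$ such subnetworks in parallel (still depth $5$), then composing with the depth-$2$ parity-to-linear-combination gadget of Lemma~\ref{lem:parity-networks}, and finally adding a single threshold gate that computes $\ind{v_z < p}$ as a signed linear combination over the preceding $\phi$-outputs, yields a binary threshold network of total depth at most $5 + 2 + 1 = 8$. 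Parallel composition multiplies the width and weight bounds of Lemma~\ref{lem:kwise} by $t = O(\log R)$, which (after absorbing $\polylog R$ factors) gives $\dimsmax \leq a\log(1/\epsilon)\cdot \polylog R$ and $w(\dims) \leq (a^2\log(1/\epsilon) + a\log^2(1/\epsilon))\cdot \polylog R$, matching the claim. The main obstacle is the bookkeeping inside the Bonferroni step: one has to dualize cleanly when $\alpha>1/2$, handle the regime where $|P|+|N|$ may be as large as $R\log(1/\epsilon)$, and propagate the error $|\tilde\alpha-\alpha|$ through a possibly very long conjunction so that every error term (approximation, truncation, and $k$-wise slack) simultaneously stays below $\epsilon$; calibrating $t$, $k$, and $m'$ to make these estimates jointly valid is the most delicate part of the argument.
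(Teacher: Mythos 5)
Your construction is essentially the one in the paper: partition the output of the constant-depth $k$-wise uniform generator (Lemma~\ref{lem:kwise}) into $R$ blocks of $\Theta(\log R)$ bits each, threshold each block to obtain a bit that is approximately $\mathrm{Ber}(\alpha)$, and argue that $k$-wise independence with $k=\Theta(\log(1/\epsilon)\cdot\log R)$ suffices because a conjunction with $\mathrm{Ber}(\alpha)$-acceptance probability at least $2\epsilon$ has only $O(\log(1/\epsilon))$ ``effective'' literals. The one substantive difference is the correctness step: the paper invokes Lemma~\ref{lem:eglnv} (Even--Goldreich--Luby--Nisan--Veli\v{c}kovi\'{c}) as a black box for the deviation between independence and $k$-wise independence, whereas you re-derive the same estimate by hand via Bonferroni truncation of the inclusion--exclusion expansion. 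That re-derivation works --- and in fact Bonferroni bracketing holds unconditionally, so your turning-point discussion is unnecessary: the first omitted term always bounds the truncation error, and your estimate $\binom{|N|}{m'+1}\tilde\alpha^{m'+1}\le(e|N|\tilde\alpha/(m'+1))^{m'+1}$ is all you need. One imprecision worth flagging: with block length $t=\Theta(\log R)$, $|\tilde\alpha-\alpha|\le 2^{-t}$, and $\alpha\ge 1/R$, the quantity $\tilde\alpha^{|P|}(1-\tilde\alpha)^{|N|}$ is only a \emph{constant} multiplicative factor of $\alpha^{|P|}(1-\alpha)^{|N|}$, not $(1-o(1))$; the paper handles this exactly by choosing $Q\ge 4R^2$ and applying Bernoulli's inequality to conclude $(1-\tfrac{1}{4R})^R\ge\tfrac{3}{4}$. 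As long as you account for both the constant-factor loss from rounding and the additive $O(\epsilon)$ loss from Bonferroni, the factor-of-two slack between $2\epsilon$ and $\epsilon$ absorbs both, as you correctly note. The depth and width/weight bookkeeping match the paper's. The ``obstacles'' you flag at the end are not really obstacles: dualization for $\alpha>1/2$ is just the relabeling $y_j\mapsto 1-y_j$, and $|P|+|N|\le R$ trivially, with the relevant quantities $|P|$ and $|N|\alpha$ already bounded by your earlier observations.
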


\begin{remark}
    The parameters of Lemma~\ref{lem:non-optimal-hsg} are not yet sufficient to prove Theorem~\ref{thm: partial fctn net cxty}. Remember, we need the number of weights to be only $(1 + o(1)) \cdot \log(1/\epsilon)$. On the other hand, Item~\ref{item:hsg-plus} is stronger than what the HSG definition requires. This will enable us to improve the seed length of the generator later.
\end{remark}

The proof of Lemma~\ref{lem:non-optimal-hsg} is based on the work of Even, Goldreich, Luby, Nisan, and Veli\u{c}kovi\'{c}~\cite{even1998efficient}. In particular, we use the following lemma from their work.
\begin{lemma}[Implications of $k$-wise independence~\cite{even1998efficient}] \label{lem:eglnv}
    Let $X_1, \dots, X_R$ be independent $\{0, 1\}$-valued random variables. Let $\tilde{X}_1, \dots, \tilde{X}_R$ be $k$-wise independent $\{0, 1\}$-valued random variables such that $\tilde{X}_i$ is distributed identically to $X_i$ for every $i$. Then
    \[
        |\bbP(X_1 = X_2 = \cdots = X_R = 1) - \bbP(\tilde{X}_1 = \tilde{X}_2 = \cdots = \tilde{X}_R = 1)| \leq 2^{-\Omega(k)}.
    \]
\end{lemma}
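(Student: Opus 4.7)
The plan is to approximate the AND function by a degree-$k$ polynomial whose expectation is preserved by $k$-wise independence. Let $Y_i = 1 - X_i$, $\tilde Y_i = 1 - \tilde X_i$, write $p_i = \bbP(X_i = 1) = \bbP(\tilde X_i = 1)$ and $q_i = 1 - p_i$, and set $K = \sum_i Y_i$, $\tilde K = \sum_i \tilde Y_i$, and $\mu = \sum_i q_i$. Then the target reduces to $\bigabs{\bbP(K = 0) - \bbP(\tilde K = 0)} \leq 2^{-\Omega(k)}$, where $p := \bbP(K = 0) = \prod_i (1-q_i)$ and $\tilde p := \bbP(\tilde K = 0)$.

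First I would introduce the truncated inclusion-exclusion (Bonferroni) polynomial
\[
T_k(Y) = \sum_{j=0}^{k} (-1)^j e_j(Y), \qquad e_j(Y) := \sum_{|S|=j}\prod_{i \in S} Y_i = \binom{K}{j}.
\]
The classical identity $\sum_{j=0}^{k}(-1)^j \binom{K}{j} = (-1)^k \binom{K-1}{k}$ (for $K \geq 1$, with the trivial case $K = 0$ handled separately) yields the pointwise bound
\[
\bigabs{T_k(Y) - \ind{K = 0}} = \binom{K-1}{k} \ind{K \geq 1} \leq \binom{K}{k} = e_k(Y),
\]
and the same bound holds with $\tilde Y$ in place of $Y$. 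Since both $T_k$ and $e_k$ have degree at most $k$, $k$-wise independence gives $\bbE[T_k(Y)] = \bbE[T_k(\tilde Y)]$ and $\bbE[e_k(Y)] = \bbE[e_k(\tilde Y)] = e_k(q)$. Triangle inequality then produces
\[
|p - \tilde p| \leq \bbE[e_k(Y)] + \bbE[e_k(\tilde Y)] = 2\, e_k(q),
\]
and by Maclaurin's inequality combined with Stirling, $e_k(q) \leq \mu^k/k! \leq (e\mu/k)^k$.

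To close the argument I would split on $\mu$. When $\mu \leq k/(2e)$ the estimate above already gives $|p - \tilde p| \leq 2 \cdot 2^{-k} = 2^{-\Omega(k)}$. When $\mu \geq k/(2e)$ the true probability is itself small: $p \leq e^{-\mu} \leq e^{-k/(2e)} = 2^{-\Omega(k)}$, so it suffices to bound $\tilde p \leq 2^{-\Omega(k)}$. For that I would invoke an off-the-shelf Chernoff-type tail inequality for $k$-wise independent Bernoulli sums (e.g., Bellare-Rompel or Schmidt-Siegel-Srinivasan) applied to $\{\tilde K \leq \mu/2\} \supseteq \{\tilde K = 0\}$; these inequalities rest on the same degree-$k$ moment-matching idea used above, since $\bbE[(\tilde K - \mu)^k] = \bbE[(K - \mu)^k]$ is controlled by a classical Marcinkiewicz-Zygmund / Rosenthal bound for the Poisson binomial $K$. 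Combining, $|p - \tilde p| \leq p + \tilde p \leq 2^{-\Omega(k)}$.

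The main obstacle is the intermediate range $\mu = \Theta(k)$: there the Bonferroni error $e_k(q) \leq (e\mu/k)^k$ degenerates to a constant (or worse), and the standard $k$-wise tail estimate $(k/\mu)^{k/2}$ is likewise essentially trivial. Bridging this gap requires choosing the threshold between the two cases and tracking the constants in both the $e_k(q)$ bound and the $k$-wise concentration bound carefully, so that for every value of $\mu$ at least one of the two estimates is genuinely $2^{-\Omega(k)}$. This tight interplay is what ultimately pins down the hidden constant in the $\Omega(k)$ of the lemma.
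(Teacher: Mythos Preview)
The paper does not supply its own proof of this lemma; it simply quotes the result from Even, Goldreich, Luby, Nisan, and Veli\v{c}kovi\'{c}~\cite{even1998efficient}. Your Bonferroni/inclusion--exclusion reduction for the regime $\mu \le k/(2e)$ is clean and correct, and is indeed the standard opening move in that reference.

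The difficulty you flag at $\mu = \Theta(k)$ is genuine, but the resolution you sketch --- ``choose the threshold and track the constants'' in the degree-$k$ Bonferroni and degree-$k$ moment bounds --- does not work. At, say, $\mu = k/2$ the Bonferroni error $e_k(q) \le (e\mu/k)^k = (e/2)^k$ is exponentially large, while the degree-$k$ Markov bound $\bbE[(K-\mu)^k]/\mu^k$ is also $\gg 1$ (a saddle-point computation for the Poisson-binomial $K$ shows it only drops below $1$ once $\mu \gtrsim 0.6\,k$, whereas Bonferroni needs $\mu < k/e \approx 0.37\,k$). So for $\mu$ in the window $(k/e,\,0.6k)$ neither bound at degree $k$ is small, regardless of constants.

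The missing ingredient is to \emph{lower the degree}: since $k$-wise independence implies $k'$-wise independence for every $k' \le k$, in the regime $\mu \ge k/(2e)$ you should apply the Schmidt--Siegel--Srinivasan lower-tail bound at degree $k' = \lfloor c\mu \rfloor$ for a small absolute $c$ (this is exactly the hypothesis $k' \lesssim \mu$ their theorem requires). That yields $\tilde p \le e^{-\Omega(k')} = e^{-\Omega(\mu)} = e^{-\Omega(k)}$, which together with $p \le e^{-\mu} \le e^{-\Omega(k)}$ closes the case. With this adjustment your two-regime strategy goes through and matches the argument in the cited reference.
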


\begin{proof}[Proof of Lemma~\ref{lem:non-optimal-hsg}]
    Let $Q$ be the smallest positive integer such that $Q \geq 4 R^2$ and $\log \log Q$ is an integer. Let $\phi \colon \{0, 1, \dots, Q - 1\} \to \{0, 1\}$ be the function
    \[
        \phi(x) = 1 \iff x \leq \alpha \cdot Q.
    \]
    We think of $\phi$ as a function $\phi \colon \{0, 1\}^{\log Q} \to \{0, 1\}$ by representing $x$ in binary.

    Let $\vec{\phi} \colon \{0, 1\}^{R \log Q} \to \{0, 1\}^R$ be $R$ copies of $\phi$ applied to $R$ disjoint input blocks. Let $G_0 \colon \{0, 1\}^r \to \{0, 1\}^{R \log Q}$ be a $k$-wise independent generator for a suitable value $k = O(\log(1/\epsilon) \cdot \log R)$. Our generator $G$ is the composition $\vec{\phi} \circ G_0$.

    Now let us prove that $G$ has the claimed properties. The seed length bound is clear. Now let us analyze the computational complexity of $G$. To compute $G(\mathrm{hash}(\vect{w}))_{\vect{z}}$, we begin by computing $C_1(\vect{w}, \vect{z} \log Q + i)$ for every $i \in \{0, 1, \dots, \log Q - 1\}$, all in parallel, where $C_1$ is the depth-$5$ network from Lemma~\ref{lem:kwise}. Since $\log Q$ is a power of two, the binary expansions of the numbers $\vect{z} \log Q, \vect{z} \log Q + 1, \vect{z} \log Q + 2, \dots, \vect{z} \log Q + \log Q - 1$ simply consist of $\vect{z}$ followed by all possible bitstrings of length $\log \log Q$. The maximum width of one of these layers is bounded by $ak \cdot \polylog R = a \cdot \log(1/\epsilon) \cdot \polylog R$, and the total number of weights among these layers is at most $(a + k) \cdot ak \cdot \polylog R = (a + \log(1/\epsilon)) \cdot a \cdot \log(1/\epsilon) \cdot \polylog R$. Furthermore, the number of output bits is $\log(1/\epsilon) \cdot \polylog R$.

    Next, recall that to compute a single bit of the output of $G_0$, we need to apply the parity function to the outputs of $C_1$. Therefore, to compute an output bit of our generator $G$, we need to apply an $\bbF_2$-linear function followed by $\phi$. Observe that $\phi$ can be computed by a depth-two ``$\mathsf{AC}^0$ circuit,'' i.e., a circuit consisting of unbounded-fan-in AND and OR gates applied to literals, in which the total number of gates is $O(\log Q) = O(\log R)$. This can be viewed as a special case of an $\LT_2$ circuit of size $O(\log^2 R)$. Therefore, by Lemma~\ref{lem:ltd-vs-btn}, the $\bbF_2$-linear function followed by $\phi$ can be computed by a depth-$3$ binary threshold network in which every layer has width at most $\log(1/\epsilon) \cdot \polylog R$ and the total number of weights is at most $\log^2(1/\epsilon) \cdot \polylog R$. This completes the analysis of the computational complexity of $G$.

    Next, let us prove the correctness of $G$, i.e., let us prove Item~\ref{item:hsg-plus} of Lemma~\ref{lem:non-optimal-hsg}. Let $V \in \mathcal{V}$ and assume $\bbP_{\vect{y} \sim \mathrm{Ber}(\alpha)^R}(V(\vect{y}) = 1) \geq 2\epsilon$. Since $V$ is a conjunction of literals, we can write $V(\vect{y}) = V_1(\vect{y}_1) \cdot V_2(\vect{y}_2) \cdots V_R(\vect{y}_R)$ for some functions $V_1, V_2, \dots, V_R \colon \{0, 1\} \to \{0, 1\}$.
    
    We will analyze $\bbP_{\vect{u} \in \{0, 1\}^r}(V(G(\vect{u})) = 1)$ in two stages. First, we compare $V(\vec{\phi}(G_0(\vect{u})))$ to $V(\vec{\phi}(\bar{\vect{y}}))$, where $\bar{\vect{y}} \in \{0, 1\}^{R \log Q}$ is uniform random. Then, in the second stage, we will compare $V(\vec{\phi}(\bar{\vect{y}}))$ to $V(\vect{y})$, where $\vect{y} \sim \mathrm{Ber}(\alpha)^R$.

    For the first stage, we are in the situation of Lemma~\ref{lem:eglnv}, because the $R$ many $(\log Q)$-bit blocks of $G_0(\vect{u})$ are $(k/\log Q)$-wise independent. Therefore,
    \[
        \left|\bbP_{\vect{u} \in \{0, 1\}^r}(V(G(\vect{u}))=1) - \bbP_{\bar{\vect{y}} \in \{0, 1\}^{R \log Q}}(V(\vec{\phi}(\bar{\vect{y}})) = 1)\right| \leq \exp\left(-\Omega\left(\frac{k}{\log Q}\right)\right) \leq 0.5 \epsilon,
    \]
    provided we choose a suitable value $k = O(\log(1/\epsilon) \cdot \log R)$.

    Now, for the second stage, observe that if we sample $\bar{\vect{y}} \in \{0, 1\}^{\log Q}$ uniformly at random, then $|\bbP(\phi(\bar{\vect{y}}) = 1) - \alpha| \leq \frac{1}{Q} \leq \frac{1}{4R^2}$. For each $i$, since $1/R \leq \alpha \leq 1 - 1/R$, we have
    \begin{align*}
        \bbP_{\bar{\vect{y}} \in \{0, 1\}^{\log Q}}(V_i(\phi(\bar{\vect{y}})) = 1) &\geq \bbP_{\vect{y} \sim \mathrm{Ber}(\alpha)}(V_i(\vect{y}) = 1) - \frac{1}{4 R^2} \\
        &\geq \left(1 - \frac{1}{4 R}\right) \cdot \bbP_{\vect{y} \sim \mathrm{Ber}(\alpha)}(V_i(\vect{y}) = 1).
    \end{align*}
    Therefore,
    \begin{align*}
        \bbP_{\bar{\vect{y}} \in \{0, 1\}^{R \log Q}}(V(\vec{\phi}(\bar{\vect{y}})) = 1) &\geq \left(1 - \frac{1}{4 R}\right)^R \cdot \bbP_{\vect{y} \sim \mathrm{Ber}(\alpha)^R}(V(\vect{y}) = 1) \\
        &\geq 1.5 \epsilon
    \end{align*}
    by Bernoulli's inequality. Combining the bounds from the two stages completes the proof.
\end{proof}

\subsection{Networks for computing functions that are constant on certain intervals}

At this point, we have constructed an HSG for conjunctions of literals with a non-optimal dependence on the threshold parameter $\epsilon$ (Lemma~\ref{lem:non-optimal-hsg}). To improve the dependence on $\epsilon$, we will use a technique introduced by Hoza and Zuckerman~\cite{hoza2020simple}. They introduced this ``error-reduction'' technique in the context of derandomizing general space-bounded algorithms, but it is simpler in our context (conjunctions of literals).

The basic idea is as follows. Let $V$ be a conjunction of literals with a low acceptance probability: $\bbP_{\vect{y} \sim \mathrm{Ber}(\alpha)^R}(V(\vect{y}) = 1) = \epsilon$. We will split $V$ up as a product,
\[
    V(\vect{y}) = V^{(0)}(\vect{y}^{(0)}) \cdot V^{(1)}(\vect{y}^{(1)}) \cdots V^{(T - 1)}(\vect{y}^{(T - 1)}),
\]
where each $V^{(i)}$ is a conjunction of literals with a considerably higher acceptance probability:
\[
    \bbP_{\vect{y}^{(i)} \sim \mathrm{Ber}(\alpha)^{R_i}}(V^{(i)}(\vect{y}^{(i)}) = 1) \approx \epsilon_0 \gg \epsilon.
\]
We choose $V^{(0)}$ to be the conjunction of the first few literals in $V$; $V^{(1)}$ is the conjunction of the next few literals; etc. To hit a single $V^{(i)}$, we can use our initial HSG with a relatively high threshold parameter ($\epsilon_0$). Then, we use pairwise independent hash functions to ``recycle'' the seed of our initial HSG from one $V^{(i)}$ to the next.

To implement this technique, one of the ingredients we need is a network that figures out which block $V^{(i)}$ contains a particular given index $\vect{z} \in \{0, 1, \dots, R - 1\}$. In this subsection, we describe networks that handle that key ingredient. The constructions are elementary and straightforward.

First, we review standard circuits for integer comparisons.
\begin{lemma}[DNFs for comparing integers] \label{lem:comparison-dnf}
    Let $R$ be a power of two, let $I \subseteq [0, R)$ be an interval, and let $g_I \colon \{0, 1\}^{\log R} \to \{0, 1\}$ be the indicator function for $I \cap \{0, 1, \dots, R - 1\}$ (identifying numbers with their binary expansions). Then $g_I$ can be expressed as a DNF formula consisting of $O(\log^2 R)$ terms.
\end{lemma}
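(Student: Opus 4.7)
The plan is to express $g_I$ as the conjunction of two ``threshold indicator'' functions and then argue that each threshold indicator admits a small DNF, so the final DNF for $g_I$ is obtained by distributing the AND over the ORs.

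First I would reduce the interval to a pair of one-sided thresholds. After handling the trivial cases (empty $I$ or $I \supseteq \{0, 1, \dots, R - 1\}$), we can write $I \cap \{0, 1, \dots, R - 1\} = [a, b) \cap \{0, 1, \dots, R - 1\}$ for some integers $0 \le a < b \le R$, so
\[
g_I(\vect{z}) \;=\; [\vect{z} \ge a] \;\wedge\; [\vect{z} < b],
\]
where the bracketed expressions are $\{0,1\}$-valued indicators of integer comparisons and $\vect{z} \in \{0, 1\}^{\log R}$ is identified with a number via its binary expansion.

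Next I would show that each threshold indicator has a DNF with $O(\log R)$ terms. For $[\vect{z} \ge a]$, write $a$ in binary as $a = a_{n-1} a_{n-2} \cdots a_0$, where $n = \log R$. The condition $\vect{z} \ge a$ holds if and only if either (i) $\vect{z} = a$ bit-for-bit, or (ii) there exists some index $i$ with $a_i = 0$ such that $\vect{z}_j = a_j$ for all $j > i$, $\vect{z}_i = 1$, and $\vect{z}_{j}$ is arbitrary for $j < i$. This yields at most $n + 1$ terms, each a conjunction of at most $n$ literals. The analogous argument, applied to the negations of the bits of $b - 1$, gives a DNF for $[\vect{z} < b]$ with at most $n + 1$ terms.

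Finally, I would combine the two DNFs. Writing $[\vect{z} \ge a] = \bigvee_i A_i(\vect{z})$ and $[\vect{z} < b] = \bigvee_j B_j(\vect{z})$ with $O(\log R)$ terms each, distribution gives
\[
g_I(\vect{z}) \;=\; \bigvee_{i, j} \bigl(A_i(\vect{z}) \wedge B_j(\vect{z})\bigr),
\]
a DNF with $O(\log^2 R)$ terms (each still a conjunction of at most $2n$ literals). There is no real obstacle here; the only care needed is in the routine case analysis around endpoint conventions and the exact form of the ``prefix match'' DNF for integer comparison.
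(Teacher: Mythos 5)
Your proposal is correct and takes essentially the same approach as the paper: decompose the interval indicator as a conjunction of two one-sided threshold indicators, give an $O(\log R)$-term ``prefix match'' DNF for each, and distribute the AND over the ORs to obtain $O(\log^2 R)$ terms. The only cosmetic differences are that the paper uses strict comparisons $[\vect{z} < r]$ and $[\ell \le \vect{z}]$ directly (so no extra equality term is needed) and phrases the final step as the generic rewriting of $\mathsf{AND}_a \circ \mathsf{OR}_b$ as $\mathsf{OR}_{b^a} \circ \mathsf{AND}_a$.
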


\begin{proof}
    First, consider the case that $I = [0, r)$ for some $r \in \{1, 2, \dots, R\}$. If $r = R$, then the lemma is trivial, so assume $r < R$. Let $S$ be the set of indices at which the binary expansion of $r$ has a one. For each $i \in S$, we introduce a term that asserts that the input $\vect{z}$ agrees with the binary expansion of $r$ prior to position $i$, and then $\vect{z}$ has a zero in position $i$. The disjunction of these $|S|$ many terms computes $g_I$.

    The case $I = [\ell, R)$ for some $\ell \in \{0, 1, \dots, R - 1\}$ is symmetric. Finally, in the general case, we can assume that $I$ is an intersection of an interval of the form $[\ell, R)$ with an interval of the form $[0, r)$. Therefore, $g_I$ can be expressed in the form $\mathsf{AND}_2 \circ \mathsf{OR}_{\log R} \circ \mathsf{AND}_{\log R}$, where $\mathsf{AND}_k$ / $\mathsf{OR}_k$  denotes an AND / OR gate with fan-in $k$. To complete the proof, observe that every $\mathsf{AND}_a \circ \mathsf{OR}_b$ formula can be re-expressed as an $\mathsf{OR}_{b^a} \circ \mathsf{AND}_a$ formula.
\end{proof}

\begin{lemma}[Computing a function that is constant on intervals] \label{lem:const-on-intervals}
    Let $T$ and $R$ be powers of two. Suppose the interval $[0, R)$ has been partitioned into $T$ subintervals, say $[0, R) = I_0 \cup I_1 \cup \dots \cup I_{T - 1}$. Let $g \colon \{0, 1, \dots, R - 1\} \to \{0, 1\}^a$ be a function that is constant on each subinterval $I_j$. Then for every $\bbF_2$-affine function $C_0 \colon \{0, 1\}^{d_0} \to \{0, 1\}^{\log R}$, there is a depth-$6$ binary threshold network $C \colon \{0, 1\}^{d_0} \to \{0, 1\}^{a + \log R}$ with widths $\dims$ satisfying the following.
    \begin{enumerate}
        \item For every $\vect{x} \in \{0, 1\}^{d_0}$, we have
        \[
            C(\vect{x}) = (g(C_0(\vect{x})), C_0(\vect{x})).
        \]
        \item The maximum width $\dims_{\mathrm{max}}$ is at most $O(T \cdot \log^3 R + a + d_0 \cdot \log R)$.
        \item The total number of weights $w\rb{\dims}$ is at most
        \[
            aT + O(T \cdot \log^4 R + d_0^2 \cdot \log R + d_0 \cdot \log^2 R + a \cdot \log R).
        \]
    \end{enumerate}
\end{lemma}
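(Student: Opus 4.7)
The plan is to build $C$ as a three-stage composition, with identity threshold gates running alongside to carry the bits of $C_0(\vect{x})$ to the output layer. In Stage~1, I apply Lemma~\ref{lem:parity-networks} to compute $C_0(\vect{x})$ as a depth-$2$ binary threshold network with hidden-layer width $\log R \cdot (d_0 + 2)$ and output width $\log R$; this contributes $O(d_0^2 \log R + d_0 \log^2 R)$ weights.

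In Stage~2, for each interval $I_j$ I compute the indicator $\phi_j = g_{I_j}$. Lemma~\ref{lem:comparison-dnf} gives $\phi_j$ as a DNF with $O(\log^2 R)$ terms over the $\log R$ bits of $C_0(\vect{x})$, each term a conjunction of $O(\log R)$ literals; I realize each such DNF as a depth-$2$ threshold circuit (an $\mathsf{AND}$ layer of $O(\log^2 R)$ nodes, then a single $\mathsf{OR}$ gate), using the neuron scalings $\bgamma$ and biases to implement literal negations and to compute $\mathsf{AND}$ and $\mathsf{OR}$ as single thresholds. Doing this for all $T$ intervals in parallel yields an $\mathsf{AND}$ layer of width $O(T \log^2 R)$ with $O(T \log^3 R)$ weights, followed by an $\mathsf{OR}$ layer of width $T$ with $O(T \log^2 R)$ weights. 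In Stage~3, since $g$ is constant on each $I_j$, writing $g_j \in \{0,1\}^a$ for its value there, each output bit of $g$ satisfies
\[
g(C_0(\vect{x}))_k \;=\; \bigvee_{j:\, (g_j)_k = 1} \phi_j(C_0(\vect{x})),
\]
which is a single $\mathsf{OR}$ of fan-in at most $T$. Implementing this for all $k \in [a]$ yields a final layer of width $a$ with at most $aT$ weights. In parallel with Stages~2 and~3, the $\log R$ bits of $C_0(\vect{x})$ are forwarded through the layers by identity threshold gates (set $\bgamma = 1$, weight $1$, bias $0$), adding $O(\log R)$ to the width of each intermediate layer and $O(\log R)$ weights per layer.

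Summing contributions across layers gives total depth at most $6$, max width bounded by $O(T \log^3 R + a + d_0 \log R)$, and weight count $aT + O(T \log^4 R + d_0^2 \log R + d_0 \log^2 R + a \log R)$, matching the claim. The argument is more bookkeeping than ingenuity; the main technical care goes into translating the DNF expressions from Lemma~\ref{lem:comparison-dnf} into binary threshold layers that respect the parameter quantization of Definition~\ref{def:ftn} (in particular, bounding biases by $\pm d_{l-1}$ and handling negations through $\bgamma$), and into ensuring that the identity carries for $C_0(\vect{x})$ do not break the width or weight budget while the overall depth remains at most $6$.
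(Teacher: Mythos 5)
Your plan is essentially right except for one crucial point: you have mis-counted the number of weights in the binary threshold network model, and this gap is exactly what the paper's extra ``compute $j_*$ in binary'' stage is there to repair.

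In Definition~\ref{def:ftn} the weight matrices are full, $\mathbf{W}^{(l)} \in \{0,1\}^{d_l \times d_{l-1}}$, and $w(\dims) = \sum_l d_l d_{l-1}$ counts \emph{all} entries, including zeros. So the cost of a layer is the product of its width with the width of the preceding layer, not the number of non-zero wires. When you say the $\mathsf{OR}$ layer of width $T$ sitting on top of the $\mathsf{AND}$ layer of width $O(T\log^2 R)$ contributes $O(T\log^2 R)$ weights, you are counting fan-in as in the $\LT_L$ / ``parallel wires'' model of Definition~\ref{def:ltd}. In the BTN model that transition actually costs
\[
T \cdot O(T\log^2 R) = O(T^2 \log^2 R),
\]
which is not $O(T\log^4 R)$. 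In the downstream application of Theorem~\ref{thm:hz} one has $T \approx \log^{3/4}(1/\epsilon)$ with $\log(1/\epsilon) \approx N$ and $\log R = \Theta(\log N)$, so $T^2 \log^2 R \approx N^{3/2} \polylog N$ swamps the intended $T\log^4 R \approx N^{3/4}\polylog N$ budget and ultimately the $aT$ leading term. So the bound genuinely fails, not just by a log factor.

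This is precisely why the paper does not build a layer of $T$ indicator nodes $\phi_j$ directly adjacent to the wide $\mathsf{AND}$ layer. Instead it first compresses the interval identity into the $\log T$-bit binary representation of $j_*$ (using that the $I_j$ partition $[0,R)$, so exactly one $\phi_j$ fires), producing a narrow intermediate layer of width $O(\log R)$, and only then re-expands to width $T$ via brute-force $\mathsf{AND}$ gates checking $j_* = j$. Every pair of consecutive layers in that architecture then has width product at most $O(T\log^4 R)$, except for the last, whose product gives exactly the $aT$ leading term. Your construction skips the ``narrow waist'' and therefore cannot avoid a width-$T$ layer abutting a width-$\Omega(T)$ layer. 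This extra step is not overhead; it is the load-bearing idea of the lemma.

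A secondary, smaller slip: you cannot use $\bgamma$ to negate individual literals. The scaling $\bgamma^{(l)}_i$ multiplies the entire weighted sum of neuron $i$, so a single threshold gate with weights in $\{0,1\}$ and $\gamma \in \{-1,0,1\}$ is either monotone increasing or monotone decreasing in all its inputs, whereas a conjunction with mixed literals is neither. The standard fix (which the paper uses explicitly) is to have the $\bbF_2$-affine preprocessing layer output both $C_0(\vect{x})$ and its bitwise negation, which Lemma~\ref{lem:parity-networks} covers since negation is affine; then the $\mathsf{AND}$ gates only consume positive literals. If you correct this and, more importantly, insert the ``bits of $j_*$'' layer before re-expanding to width $T$, your argument lines up with the paper's.
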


We emphasize that the leading term in the weights bound is $aT$, with a coefficient of $1$. This is crucial. It is also important that the weights bound has only a linear dependence on $T$, the number of intervals.

\begin{proof}
    We begin by computing $C_0(\vect{x})$ and the negations of all of those bits. By Lemma~\ref{lem:parity-networks}, we can compute these bits using a depth-two network where the hidden layer has width $O(d_0 \cdot \log R)$ and the output layer has width $O(\log R)$.

    Let $\vect{z} = C_0(\vect{x}) \in \{0, 1\}^{\log R}$, and think of $\vect{z}$ as a number $\vect{z} \in \{0, 1, \dots, R - 1\}$. Our next goal is to compute the $(\log T)$-bit binary expansion of the unique $j_* \in \{0, 1, \dots, T - 1\}$ such that $\vect{z} \in I_{j_*}$. To do so, for each position $i \in \{0, 1, \dots, \log T - 1\}$, let $S_i$ be the set of $j \in \{0, 1, \dots, T - 1\}$ such that $j$ has a $1$ in position $i$ of its binary expansion. We have a disjunction, over all $j \in S_i$, of the DNF computing $g_{I_j}$ from Lemma~\ref{lem:comparison-dnf}. We also compute all the negations of the bits of $j_*$, and we also copy $\vect{z}$. Altogether, this is a depth-two network where the hidden layer has width $O(T \cdot \log T \cdot \log^2 R) = O(T \cdot \log^3 R)$ and the output layer has width $O(\log R)$.

    Our final goal is to compute $g(\vect{z})$, which can be written in the form $g'(j_*)$ since $g$ is constant on each subinterval. We use a ``brute-force DNF'' to compute $g'$. First, for every $j \in \{0, 1, \dots, T - 1\}$, we have an AND gate that checks whether $j_* = j$. Then each output bit of $g'$ is a disjunction of some of those AND gates. We also copy $\vect{z}$. Altogether, this is a depth-two network where the hidden layer has width $T + \log R$ and the output layer has width $a + \log R$.
\end{proof}

\newpage

\subsection{Error reduction}

In this subsection, we improve our HSG's dependence on $\epsilon$, as described in the previous subsection. The following theorem should be compared to Lemma~\ref{lem:non-optimal-hsg}. As discussed previously, the proof is based on a technique due to Hoza and Zuckerman~\cite{hoza2020simple}.

\begin{theorem}[HSG with near-optimal dependence on $\epsilon$] \label{thm:hz}
    Let $R$ be a power of two and let $\alpha, \epsilon \in (0, 1)$. Assume that $1/R \leq \alpha \leq 1 - 1/R$. Let $\mathcal{V}$ be the class of functions $V \colon \{0, 1\}^R$ that can be expressed as a conjunction of literals. There exists a generator $G \colon \{0, 1\}^r \to \{0, 1\}^R$ satisfying the following.
    \begin{enumerate}[leftmargin=7mm]\itemsep2pt
        \item $G$ is an $\epsilon$-HSG for $\mathcal{V}$ with respect to $\mathrm{Ber}(\alpha)^R$. That is, if ${\bbP_{\vect{y} \sim \mathrm{Ber}(\alpha)^R}(V(\vect{y}) = 1) > \epsilon}$ for every $V \in \mathcal{V}$, 
        then there exists a seed $\sigma \in \{0, 1\}^r$ such that $V(G(\sigma)) = 1$.
        \item The seed length is $r = \log(1/\epsilon) + \log^{3/4}(1/\epsilon) \cdot \polylog R$.
        \item For every $\bbF_2$-affine function $C_0 \colon \{0, 1\}^{d_0} \to \{0, 1\}^{\log R}$ and every fixed seed $\sigma \in \{0, 1\}^r$, the function $\tilde{h}(\vect{x}) = G(\sigma)_{C_0(\vect{x})}$ can be computed by a depth-$14$ binary threshold network with widths $\dims$ such that the maximum width $\dims_{\mathrm{max}}$ is at most
        \[
            \log^{3/4}(1/\epsilon) \cdot \polylog R + O(d_0 \cdot \log R),
        \]
        and the total number of weights $w\rb{\dims}$ is at most
        \[
            \log(1/\epsilon) + \log^{3/4}(1/\epsilon) \cdot \polylog R + O(d_0^2 \cdot \log R + d_0 \cdot \log^2 R).
        \]
    \end{enumerate}
\end{theorem}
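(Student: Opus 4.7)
The strategy is to perform the error-reduction procedure of Hoza and Zuckerman~\cite{hoza2020simple} on the non-optimal HSG of Lemma~\ref{lem:non-optimal-hsg}, exactly as sketched informally in the paragraphs immediately preceding the theorem. Given a conjunction $V$ with acceptance probability at least $2\epsilon$ under $\mathrm{Ber}(\alpha)^R$, partition the $R$ index positions into $T$ consecutive intervals $I_0,\dots,I_{T-1}$ and write $V$ as the product $V^{(0)} \cdot V^{(1)} \cdots V^{(T-1)}$, where $V^{(i)}$ collects the literals of $V$ whose constrained position lies in $I_i$. Choose the partition so that each factor satisfies $\bbP_{\vect{y}\sim \mathrm{Ber}(\alpha)^R}(V^{(i)}(\vect{y})=1) \ge \epsilon_0$ for an intermediate threshold $\epsilon_0$; since the factors are over disjoint coordinates, $\prod_i \bbP(V^{(i)}=1) = \bbP(V=1)\ge 2\epsilon$, so one may take $\epsilon_0 \approx \epsilon^{1/T}$. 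The parameter $T$ will be tuned to approximately $\log^{3/4}(1/\epsilon)/\polylog R$, so that the inner seed length from Lemma~\ref{lem:non-optimal-hsg} becomes $r_0 = O(\log(1/\epsilon_0)\log^2 R) = \log^{3/4}(1/\epsilon) \cdot \polylog R$.

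The key step is to combine the $T$ subproblems without paying $T$ independent inner seeds (which would cost $Tr_0 = \log(1/\epsilon)\cdot\polylog R$, an unwanted $\polylog R$ factor on the leading term). Instead, I would apply the ``seed-recycling'' idea: draw an $\bbF_2$-affine pairwise independent hash $h$ from the family of Lemma~\ref{lem:affine-hash}, mapping block indices $\{0,1,\dots,T-1\}$ into the inner seed space $\{0,1\}^{r_0}$, and use the outputs $\sigma_i = h(i)$ (together with a ``master offset'' component of the seed that is XORed in) as the inner seeds. The description of $h$ costs $O(r_0 + \log T)$ bits, while the master offset contributes the leading $\log(1/\epsilon)$ bits, giving the advertised total $\log(1/\epsilon) + \log^{3/4}(1/\epsilon)\cdot\polylog R$. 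For correctness, Item~\ref{item:hsg-plus} of Lemma~\ref{lem:non-optimal-hsg} applied with threshold $\epsilon_0/2$ says each ``good-seed'' set $B_i \subseteq \{0,1\}^{r_0}$ has density at least $\epsilon_0/2$; pairwise independence of $\sigma_0,\dots,\sigma_{T-1}$ then enables a second-moment argument over the seed that $\sigma_i \in B_i$ for every $i$ simultaneously with strictly positive probability, which gives the existence of the required hitting seed.

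For the network promised in Item~3, I would simply compose the two constructions already in hand. Given input $\vect{x}$, first apply the depth-$6$ network of Lemma~\ref{lem:const-on-intervals}, with target function $g(\vect{z}) = (i,\text{coefficients of the }\bbF_2\text{-affine map }\sigma \mapsto \sigma_i)$; this is constant on each interval $I_i$ by construction of the partition, so Lemma~\ref{lem:const-on-intervals} applies and its output is an $\bbF_2$-affine function $(\sigma,\vect{x}) \mapsto \sigma_i$ together with $\vect{z}=C_0(\vect{x})$. Then feed this into the depth-$8$ network of Lemma~\ref{lem:non-optimal-hsg}, whose $\mathrm{hash}$ parameter is precisely the $\bbF_2$-affine description just produced, yielding $G_0(\sigma_i)_{\vect{z}}$. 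The depths add to $6+8=14$; the width and weight bounds follow by summing the bounds in the two lemmas and plugging in $T \approx \log^{3/4}(1/\epsilon)/\polylog R$ and $r_0 = \log^{3/4}(1/\epsilon)\cdot\polylog R$.

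The main obstacle is the correctness analysis: establishing that pairwise independence of the $\sigma_i$'s suffices to hit \emph{all} $T$ blocks \emph{simultaneously}. A union bound is useless because each $\bbP(\sigma_i \notin B_i)$ is bounded away from zero, and a straightforward Chebyshev bound on $\sum_i \ind{\sigma_i \in B_i}$ only lower-bounds the \emph{expected} number of hits, not the probability that the count reaches $T$. Making the argument go through requires exploiting the master-offset randomness (which lets one effectively re-randomize the entire tuple $(\sigma_0,\dots,\sigma_{T-1})$) in conjunction with the second moment, and carefully balancing $T$ against $r_0$ so that the hash-description cost $O(r_0+\log T)$, the failure probability from the second-moment bound, and the inner-HSG seed length $r_0$ all fit inside the advertised $\log^{3/4}(1/\epsilon)\cdot\polylog R$ overhead.
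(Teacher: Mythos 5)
Your proof plan correctly identifies the high-level error-reduction strategy — split $V$ into $T$ factors with higher individual acceptance probabilities, apply Lemma~\ref{lem:non-optimal-hsg} to each factor, and recycle the inner seed across blocks using a pairwise-independent hash — and you correctly recognize that the corresponding hitting argument is the crux. But the specific recycling mechanism you propose (hash the block indices $0,\dots,T-1$ into $\{0,1\}^{r_0}$ and XOR a ``master offset'') does not work, and you are right to be suspicious of it. You then need all $T$ of the pairwise-independent outputs $\sigma_0,\dots,\sigma_{T-1}$ to land in their respective good-seed sets $B_i$ \emph{simultaneously}, and pairwise independence plus a second moment is nowhere near strong enough for that: each event $\{\sigma_i\in B_i\}$ has probability only $\Theta(\epsilon_0)$, which is itself small, so the all-at-once probability cannot be controlled without much higher-order independence. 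The ``master offset'' re-randomization does not rescue this, because XORing a common shift re-randomizes each $\sigma_i$ marginally but does not change the joint structure — the tuple is still only pairwise independent, and you have not gained any control over the intersection of $T$ events of vanishing probability.

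The paper's actual construction (following Hoza--Zuckerman) replaces ``hash the $T$ block indices'' with a structurally different step that sidesteps this obstacle entirely. The hash $\mathrm{hash}:\{0,1\}^a\to\{0,1\}^{r_0}$ has a domain of size $2^a>R/\epsilon_0$, much larger than $T$, and the seed of the final generator additionally stores $T$ \emph{pointers} $\vect{w}^0,\dots,\vect{w}^{T-1}\in\{0,1\}^a$ together with the interval endpoints $\ell_0,\dots,\ell_T$ (which depend on $V$ and therefore must be part of the seed — a detail your plan also leaves implicit). With this structure, the correctness argument never needs simultaneous hits by a single random tuple. For each block $i$, one considers the random variable $X_i=\sum_{\vect{w}\in\{0,1\}^a}\ind{\mathrm{hash}(\vect{w})\in S_i}$. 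Pairwise independence over the $2^a$ domain points gives $\bbE[X_i]=2^a\rho_i$ and $\mathrm{Var}(X_i)\le 2^a\rho_i$, so Chebyshev yields $\bbP(X_i=0)\le 1/(2^a\epsilon_0)<1/R$; a union bound over the $T<R$ blocks shows that with positive probability over $\mathrm{hash}$, every block has at least one good pointer $\vect{w}^i$ with $\mathrm{hash}(\vect{w}^i)\in S_i$. One then fixes such a hash and \emph{writes the specific pointers into the seed}. This is an existence argument over the whole seed space, not a concentration bound for a uniformly random tuple, and that is what makes Chebyshev sufficient. The seed length budget $aT+O(r_0+T\log R)=\log(1/\epsilon)+\log^{3/4}(1/\epsilon)\cdot\polylog R$ is then dominated by the $T$ pointers of $a\approx\log(1/\epsilon)/T+O(\log R)$ bits each. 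Your network construction (Lemma~\ref{lem:const-on-intervals} composed with the Lemma~\ref{lem:non-optimal-hsg} network) is in the right spirit, but it also needs to be adjusted to match this seed structure: the interval-constant function $g$ should output the pointer $\vect{w}^j$ (an $a$-bit string), which is then fed through the fixed affine hash into the inner network, rather than outputting ``coefficients of an affine map.''

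In short, the missing idea is to separate ``the hash is good'' (a high-probability event over $\mathrm{hash}$, established block-by-block by Chebyshev and unioned) from ``which candidate hits each block'' (information stored explicitly in the seed as pointers), rather than trying to realize all $T$ hits as a single low-probability event under a pairwise-independent tuple.
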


\begin{proof}
    First we will describe the construction of $G$; then we will verify its seed length and computational complexity; and finally we will verify its correctness.

    \paragraph{Construction.} Let $T$ be the smallest power of two such that $T \geq \log^{3/4}(1/\epsilon)$. Let
    \[
        \epsilon_0 = \frac{\epsilon^{1/T}}{2R},
    \]
    and note that $\log(1/\epsilon_0) = \Theta(\log^{1/4}(1/\epsilon) + \log R)$. Let $G_0 \colon \{0, 1\}^{r_0} \to \{0, 1\}^R$ be the generator of Lemma~\ref{lem:non-optimal-hsg} with error parameter $\epsilon_0$, i.e., for every $V \in \mathcal{V}$, if $\bbP_{\vect{y} \sim \mathrm{Ber}(\alpha)^R}(V(\vect{y}) = 1) \geq 2\epsilon_0$, then $\bbP_{\vect{u} \in \{0, 1\}^{r_0}}(V(G_0(\vect{u})) = 1) \geq \epsilon_0$. Let $a$ be the smallest positive integer such that $2^a > R / \epsilon_0$. Let $\mathcal{H}$ be the family of $\bbF_2$-affine hash functions $\mathrm{hash} \colon \{0, 1\}^a \to \{0, 1\}^{r_0}$ from Lemma~\ref{lem:affine-hash}.

    A seed for our generator $G$ consists of a function $\mathrm{hash} \in \mathcal{H}$, inputs $\vect{w}^0, \dots, \vect{w}^{T - 1} \in \{0, 1\}^a$, and nonnegative integers $0 = \ell_0 \leq \ell_1 \leq \dots \leq \ell_T = R$. Given this data $\sigma = (\mathrm{hash}, \vect{w}^0, \dots, \vect{w}^{T - 1}, \ell_0, \dots, \ell_T)$, the output of the generator is given by
    \[
        G(\sigma) = G_0(\mathrm{hash}(\vect{w}^0))_{0 \cdots \ell_1 - 1} \concat G_0(\mathrm{hash}(\vect{w}^1))_{\ell_1 \cdots \ell_2 - 1} \concat \cdots \concat G_0(\mathrm{hash}(\vect{w}^{T - 1}))_{\ell_{T - 1} \cdots \ell_T - 1}.
    \]
    In the equation above, $\vect{y}_{a \cdots b}$ denotes the substring of $\vect{y}$ consisting of the bits at positions $a, a + 1, a + 2, \dots, b$, and $\concat$ denotes concatenation.

    \paragraph{Seed length and computational complexity.} Since $|\mathcal{H}| \leq 2^{O(a + r_0)}$, the description length of $\mathrm{hash}$ is $O(a + r_0)$. The description length of $\vect{w}^0, \dots, \vect{w}^{T - 1}$ is $aT$, and the description length of $\ell^0, \dots, \ell^T$ is $O(T \log R)$. By our choices of $a$ and $\epsilon_0$, we have
    \[
        a \leq \log(1/\epsilon_0) + O(\log R) = \frac{\log(1/\epsilon)}{T} + O(\log R).
    \]
    Furthermore, by Lemma~\ref{lem:non-optimal-hsg}, we have
    \[
        r_0 = O(\log(1/\epsilon_0) \cdot \log^2 R).
    \]
    Therefore, the overall seed length of our generator is
    \begin{align*}
        aT + O(r_0 + T \log R + a) \leq \log(1/\epsilon) + \log^{3/4}(1/\epsilon) \cdot \polylog R.
    \end{align*}
    To analyze the computational complexity, fix an arbitrary seed
    \[
        \sigma = (\mathrm{hash}, \vect{w}^0, \dots, \vect{w}^{T - 1}, \ell_0, \dots, \ell_T).
    \]
    The numbers $\ell_0, \dots, \ell_T$ partition the interval $[0, R)$ into subintervals, namely $[0, R) = [\ell_0, \ell_1) \cup [\ell_1, \ell_2) \cup \dots \cup [\ell_{T - 1}, \ell_T)$. Define $g \colon \{0, 1, \dots, R - 1\} \to \{0, 1\}^a$ by the rule
    \[
        g(\vect{z}) = \vect{w}^j \text{ where $j$ is such that } \vect{z} \in [\ell_j, \ell_{j + 1}).
    \]
    Then $g$ is constant on each subinterval $[\ell_j, \ell_{j + 1})$, so we may apply Lemma~\ref{lem:const-on-intervals} to obtain a depth-$6$ binary threshold network $C_1 \colon \{0, 1\}^{d_0} \to \{0, 1\}^{a + \log R}$ computing the function $C(\vect{x}) = (g(C_0(\vect{x})), C_0(\vect{x}))$. In this network, every layer has width at most
    \[
        O(T \cdot \log^3 R + a + d_0 \cdot \log R) = \log^{3/4}(1/\epsilon) \cdot \polylog R + O(d_0 \cdot \log R),
    \]
    and the total number of weights is at most
    \begin{align*}
        &aT + O(T \cdot \log^4 R + d_0^2 \cdot \log R + d_0 \cdot \log^2 R + a \cdot \log R) \\
        &\qquad \leq \log(1/\epsilon) + \log^{3/4}(1/\epsilon) \cdot \polylog R + O(d_0^2 \cdot \log R + d_0 \cdot \log^2 R).
    \end{align*}
    
    Let $\vect{z} = C_0(\vect{x})$, and let $\vect{w} = g(\vect{z})$. Our remaining goal is to compute $G(\sigma)_{\vect{z}}$, which is equal to $G_0(\mathrm{hash}(\vect{w}))_{\vect{z}}$. To do so, we use the network guaranteed to exist by Lemma~\ref{lem:non-optimal-hsg}. This network, which we call $C_2$, has depth $8$. Every layer in this network has width at most
    \[
        a \cdot \log(1/\epsilon_0) \cdot \polylog R = \sqrt{\log(1/\epsilon)} \cdot \polylog R.
    \]
    The total number of weights in this network is at most
    \[
        (\log(1/\epsilon_0) \cdot a^2 + \log^2(1/\epsilon_0) \cdot a) \cdot \polylog R = \log^{3/4}(1/\epsilon) \cdot \polylog R.
    \]

    Composing $C_2$ with $C_1$ completes the analysis of the computational complexity of our HSG.

    \paragraph{Correctness.} Finally, let us prove the correctness of our HSG. For convenience, for any $n \in \bbN$ and any function $V \colon \{0, 1\}^n \to \{0, 1\}$, we write $\bbE(V)$ to denote the quantity $\bbP_{\vect{y} \sim \mathrm{Ber}(\alpha)^n}(V(\vect{y}) = 1)$.
    
    Fix any $V \in \mathcal{V}$ such that $\bbE(V) > \epsilon$. Since $V$ is a conjunction of literals, we can write $V$ in the form
    \[
        V(\vect{y}) = V_0(\vect{y}_0) \cdot V_1(\vect{y}_1) \cdots V_{R - 1}(\vect{y}_{R - 1})
    \]
    for some functions $V_0, V_1, \dots, V_{R - 1} \colon \{0, 1\} \to \{0, 1\}$. For each $0 \leq a \leq b \leq R - 1$, define
    \[
        V_{a \cdots b} = V_a \cdot V_{a + 1} \cdots V_b.
    \]
    We inductively define numbers $0 = \ell_0 \leq \ell_1 \leq \cdots \leq \ell_T$ as follows. Assume that we have already defined $\ell_0, \dots, \ell_i$. Let $\ell_{i + 1}$ be the smallest integer in $\{\ell_i + 1, \dots, R - 1\}$ such that
    \[
        \bbE(V_{\ell_i \cdots \ell_{i + 1} - 1}) \leq \epsilon^{1/T},
    \]
    or let $\ell_{i + 1} = R$ if no such $\ell_{i + 1}$ exists. Define $V^{(i)} = V_{\ell_i \cdots \ell_{i + 1} - 1}$. Observe that $\ell_T = R$, because otherwise we would have
    \[
        \epsilon < \bbE(V) \leq \prod_{i = 0}^{T - 1} \bbE(V_i) \leq (\epsilon^{1/T})^T = \epsilon,
    \]
    a contradiction. Furthermore, $\bbE(V_i) > \epsilon^{1/T} / R = 2\epsilon_0$, because each literal in $V$ is satisfied with probability at least $\min\{\alpha, 1 - \alpha\} \geq 1/R$. Therefore, if we define
    \[
        S_i = \{\vect{u} \in \{0, 1\}^{r_0} : V_i(G_0(\vect{u})_{\ell_i \cdots \ell_{i + 1} - 1}) = 1\}
    \]
    and $\rho_i = |S_i| / 2^{r_0}$, then the correctness of $G_0$ ensures that $\rho_i \geq \epsilon_0$.

    Next, we will show that there exist $\mathrm{hash}, \vect{w}^0, \dots, \vect{w}^{T - 1}$ such that for every $i$, we have $\mathrm{hash}(\vect{w}^i) \in S_i$. To prove it, pick $\mathrm{hash} \in \mathcal{H}$ at random. For each $i \in \{0, 1, \dots, T - 1\}$ and each $\vect{w} \in \{0, 1\}^a$, let $X_{i, \vect{w}}$ be the indicator random variable for the ``good'' event $\mathrm{hash}(\vect{w}) \in S_i$. Define $X_i = \sum_{\vect{w} \in \{0, 1\}^a} X_{i, \vect{w}}$. Then for every $i$, by pairwise independence, we have
    \begin{align*}
        \bbE(X_i) &= 2^a \cdot \rho_i \\
        \text{and } \mathrm{Var}(X_i) &= 2^a \cdot \rho_i \cdot (1 - \rho_i) \leq 2^a \cdot \rho_i.
    \end{align*}
    Therefore, by Chebyshev's inequality,
    \[
        \bbP(X_i = 0) \leq \frac{1}{2^a \cdot \rho_i} \leq \frac{1}{2^a \cdot \epsilon_0} < \frac{1}{R}.
    \]
    Consequently, by the union bound over all $i$, there is a nonzero chance that $X_0 = X_1 = \dots = X_{T - 1} = 0$, in which case there exist $\vect{w}^0, \dots, \vect{w}^{T - 1}$ such that $\mathrm{hash}(\vect{w}^i) \in S_i$ for every $i$.

    At this point, we have constructed our seed $\sigma = (\mathrm{hash}, \vect{w}^0, \dots, \vect{w}^{T - 1}, \ell_0, \dots, \ell_T)$. By the construction of $G$, we have $V(G(\sigma)) = 1$.
\end{proof}

\newpage

Theorem~\ref{thm: partial fctn net cxty} readily follows from Theorem~\ref{thm:hz}, as we now explain.

\begin{recall} [\thmref{thm: partial fctn net cxty}]
Let $f \colon \{0, 1\}^{d_0} \to \{0, 1, \star\}$ be any  function.\footnote{When $f(\vect{x}) = \star$, the interpretation is that $f$ is ``undefined'' on $\vect{x}$, \ie $f$ is a ``partial'' function.} 
Let $N = |f^{-1}(\{0, 1\})|$ and $\Ncorrupt = |f^{-1}(1)|$. 
There exists a depth-$14$ binary threshold network 
$\tilde{h} \colon \{0, 1\}^{d_0} \to \{0, 1\}$, with widths $\tilde{\dims}$, satisfying the following.
\vspace{-.2em}
\begin{enumerate}[leftmargin=5mm]\itemsep.7pt
\item $\tilde{h}$ is consistent with $f$, \ie for every $\vect{x} \in \{0, 1\}^{d_0}$, if $f(\vect{x}) \in \{0, 1\}$, then $\tilde{h}(\vect{x}) = f(\vect{x})$.
\item The total number of weights in $\tilde{h}$ is at most $(1 + o(1)) \cdot \log \binom{N}{N_1} + \poly(d_0)$. 
More precisely,
\[
w\rb{\tilde{\dims}} = \log \binom{N}{N_1} + \left(\log \binom{N}{\Ncorrupt}\right)^{3/4} \cdot \polylog N + O(d_0^2 \cdot \log N)\,.
\]
\item Every layer of $\tilde{h}$ has width at most $(\log \binom{N}{\Ncorrupt})^{3/4} \cdot \poly(d_0)$. 
More precisely,
\[
\tilde{\dims}_{\mathrm{max}}= \left(\log \binom{N}{\Ncorrupt}\right)^{3/4} \cdot \polylog N + O(d_0 \cdot \log N)\,.
\]
\end{enumerate}
\end{recall}

 \begin{proof}
    Let $R = 2^{2 \lceil \log N \rceil}$. Let $C_0 \colon \{0, 1\}^{d_0} \to \{0, 1\}^{\log R}$ be an $\bbF_2$-affine function that is injective on $\mathcal{X}$; such a function is guaranteed to exist by Lemma~\ref{lem:preprocessing}. Define $V \colon \{0, 1\}^R \to \{0, 1\}$ by the rule
    \[
        V(\vect{y}) = 1 \iff \forall \vect{x} \in \mathcal{X}, \; \vect{y}_{C_0(\vect{x})} = f(x).
    \]
    This function $V$ is a conjunction of $N_1$ variables and $N - N_1$ negated variables.

    If $\Ncorrupt \in \{0, N\}$, then the theorem is trivial, because we can take $\tilde{h}$ to be a constant function. Assume, therefore, that $0 < \Ncorrupt < N$. Let $\alpha = \Ncorrupt / N$, and note that $1/R \leq \alpha \leq 1 - 1/R$. Let $\epsilon = \frac{1}{2} \alpha^{\Ncorrupt} \cdot (1 - \alpha)^{N - \Ncorrupt} = 2^{-H(\alpha) \cdot N - 1}$, and note that
    \[
        \bbP_{\vect{y} \sim \mathrm{Ber}(\alpha)^R}(V(\vect{y}) = 1) = 2\epsilon.
    \]

    Let $G \colon \{0, 1\}^r \to \{0, 1\}^R$ be the HSG from Theorem~\ref{thm:hz}. There exists a seed $\sigma \in \{0, 1\}^r$ such that $V(G(\sigma)) = 1$. Our network $\tilde{h}$ computes the function $\tilde{h}(\vect{x}) = G(\sigma)_{C_0(\vect{x})}$. Since $V(G(\sigma)) = 1$, we must have $\tilde{h}(\vect{x}) = f(\vect{x})$ for every $x \in \mathcal{X}$.

    To bound the computational complexity, observe that $\log(1/\epsilon) = H(\alpha) \cdot N + 1 \leq \log \binom{N}{N_1} + O(\log N)$. Therefore, every layer of $\tilde{h}$ has width at most
    \[
        \left(\log \binom{N}{N_1}\right)^{3/4} \cdot \polylog N + O(d_0 \cdot \log N),
    \]
    and the total number of weights in $\tilde{h}$ is at most
    \[
        \log \binom{N}{N_1} + \left(\log \binom{N}{N_1}\right)^{3/4} \cdot \polylog N + O(d_0^2 \cdot \log N + d_0 \cdot \log^2 N).
    \]
    Finally, we have $N \leq 2^{d_0}$, so the last term above can be simplified to $O(d_0^2 \cdot \log N)$.
\end{proof}

\begin{remark}
    In \thmref{thm: partial fctn net cxty}, the weights bound has an $O(d_0^2 \cdot \log N)$ term. This term is close to optimal; see Appendix~\ref{apx:d02} for further details.
\end{remark}

\newpage

\subsection{XOR networks}

In what follows, we denote the activation function $\sigma \rb{x} = \ind{x > 0}$.

\begin{lemma} [XOR NN] \label{app-lem: xor nn}
The $\xor$ function can be implemented with a single-hidden-layer
fully connected binary threshold network with input dimension 2 and
$c_{\xor}$ parameters by  
\[
h_{\mathrm{XOR}}\left(\begin{array}{c}
x_{1}\\
x_{2}
\end{array}\right)=\sigma\left( 1 \odot \left(\begin{array}{cc}
1 & 1\end{array}\right)\cdot\sigma\left(\left(\begin{array}{c}
1\\
-1
\end{array}\right)\odot\left(\begin{array}{cc}
1 & 1\\
1 & 1
\end{array}\right)\left(\begin{array}{c}
x_{1}\\
x_{2}
\end{array}\right)+\left(\begin{array}{c}
0\\
2
\end{array}\right)\right)-1\right)\,.
\]
\end{lemma}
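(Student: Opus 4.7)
The plan is to prove this by exhaustive case analysis over the four possible binary inputs $(x_1, x_2) \in \{0,1\}^2$, since the XOR function is itself defined on only these four inputs. The proof is essentially a verification, and there is no real obstacle: all that is needed is a careful trace through the two layers of the network for each of the four cases.

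First I would verify that the proposed parameterization is a valid instance of a binary threshold network as defined in \defref{def:ftn}: the two weight matrices lie in $\{0,1\}^{2\times 2}$ and $\{0,1\}^{1\times 2}$, the neuron scalars $\bgamma^{(1)} = (1,-1)^\top$ and $\bgamma^{(2)} = 1$ lie in $\{-1,0,1\}^{d_l}$, and the biases $\mathbf{b}^{(1)} = (0,2)^\top$ and $\mathbf{b}^{(2)} = -1$ lie in the allowed integer range (noting that with $d_0 = 2$ and $d_1 = 2$, the biases must lie in $\{-1,0,1,2\}$, which they do).

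Next I would compute the hidden layer's pre-activation. For input $(x_1,x_2)$, the affine pre-activation is
\[
\bgamma^{(1)} \odot \bigrb{\mathbf{W}^{(1)} \mathbf{x}} + \mathbf{b}^{(1)}
= \begin{pmatrix} x_1 + x_2 \\ -(x_1 + x_2) + 2 \end{pmatrix},
\]
so after thresholding, the hidden activations are $(\sigma(x_1+x_2),\, \sigma(2-x_1-x_2))$. I would then tabulate the four cases: $(0,0) \mapsto (0,1)$, $(1,0) \mapsto (1,1)$, $(0,1) \mapsto (1,1)$, and $(1,1) \mapsto (1,0)$. Observe that here it is essential that $\sigma(0) = 0$ under the strict-inequality convention $\sigma(t)=\mathbb{I}\{t>0\}$, which is what makes the $(1,1)$ case output $(1,0)$ rather than $(1,1)$.

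Finally, I would compute the output layer: summing the hidden activations and subtracting $1$ gives the pre-activation $(h_1 + h_2) - 1$, which yields $-1, 1, 1, -1$ across the four inputs and, after applying $\sigma$, produces $0,1,1,0$; these are exactly the XOR values. Counting parameters across the two layers gives $w + n$ plus the biases and neuron scalars, a fixed constant $c_{\xor}$ independent of anything else, which completes the proof.
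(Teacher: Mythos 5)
Your proof is correct in substance, but you take a genuinely different route from the paper. You enumerate all four inputs $(x_1,x_2)\in\{0,1\}^2$, tabulating the hidden activations and then the output; the paper instead simplifies algebraically, recognizing the intermediate form $\sigma\rb{\sigma(x_1+x_2)+\sigma(2-x_1-x_2)-1}$, rewriting $\sigma(x_1+x_2)=\ind{(x_1,x_2)\neq(0,0)}$ and $\sigma(2-x_1-x_2)=\ind{(x_1,x_2)\neq(1,1)}$, and reading off that the output neuron computes the conjunction of these two indicators, which is precisely XOR. Both arguments are valid; yours is more mechanical and transparent for a function on a four-point domain, while the paper's identity-chaining yields a cleaner conceptual picture (XOR as the AND of ``not both zero'' and ``not both one'') without invoking a case split. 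Your preliminary check that the parameters actually fall in the ranges required by \defref{def:ftn} is a worthwhile addition that the paper omits. One small arithmetic slip: with the hidden activations you correctly tabulated as $(0,1),(1,1),(1,1),(1,0)$, the output pre-activation $h_1+h_2-1$ takes values $0,1,1,0$ across the four inputs, not $-1,1,1,-1$; this does not affect the conclusion since $\sigma(0)=\sigma(-1)=0$ under the strict-inequality convention, and applying $\sigma$ still yields $0,1,1,0$ as claimed.
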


\begin{proof}
We can simplify $h_{\mathrm{XOR}}$ as 
\begin{align*}
h_{\mathrm{XOR}}\left(\begin{array}{c}
x_{1}\\
x_{2}
\end{array}\right) & =\sigma\left(\left(\begin{array}{cc}
1 & 1\end{array}\right)\cdot\sigma\left(\left(\begin{array}{c}
1\\
-1
\end{array}\right)\odot\left(\begin{array}{cc}
1 & 1\\
1 & 1
\end{array}\right)\left(\begin{array}{c}
x_{1}\\
x_{2}
\end{array}\right)+\left(\begin{array}{c}
0\\
2
\end{array}\right)\right)-1\right)\\
 & =\sigma\left(\left(\begin{array}{cc}
1 & 1\end{array}\right)\cdot\sigma\left(\left(\begin{array}{c}
x_{1}+x_{2}\\
-x_{1}-x_{2}+2
\end{array}\right)\right)-1\right)\\
 & =\sigma\left(\sigma\left(x_{1}+x_{2}\right)+\sigma\left(2-x_{1}-x_{2}\right)-1\right)\\
 & =\ind{\ind{x_{1}+x_{2}>0}+\ind{x_{1}+x_{2}<2}>1}\\
 & =\ind{\ind{\left(\begin{array}{c}
x_{1}\\
x_{2}
\end{array}\right)\neq\left(\begin{array}{c}
0\\
0
\end{array}\right)}+\ind{\left(\begin{array}{c}
x_{1}\\
x_{2}
\end{array}\right)\neq\left(\begin{array}{c}
1\\
1
\end{array}\right)}>1}\\
 & =\ind{\left(\begin{array}{c}
x_{1}\\
x_{2}
\end{array}\right)\neq\left(\begin{array}{c}
0\\
0
\end{array}\right)\text{ and }\left(\begin{array}{c}
x_{1}\\
x_{2}
\end{array}\right)\neq\left(\begin{array}{c}
1\\
1
\end{array}\right)}\\
 & =\mathrm{XOR}\left(x_{1},x_{2}\right)\,.
\end{align*}
\end{proof}
\begin{remark} \label{app-rem: elongating nns}
Notice that the function $\Id:\left\{ 0,1\right\} \to\left\{ 0,1\right\} $
defined as $\Id\left(0\right)=0$ and $\Id\left(1\right)=1$ can be
implemented using any depth $L$ network with a single input dimension
and $c_{\Id}\cdot L$ parameters.
\end{remark}
Following this remark, for simplicity we shall assume that $h_1$ and $h_2$ in the following Lemma are of the same depth, as they can be elongated with $O \rb{L}$ additional parameters, which are negligible in the subsequent analysis.

\newpage

\begin{recall}[\lemref{lem: xor of two nns}]
    Let $h_1, h_2$ be two binary networks with depths $L_1\le L_2$ and widths $\dims^{\rb{1}},\dims^{\rb{2}}$,
    respectively.
    Then, there exists a network $h$ with depth $L_{\xor}\triangleq L_2 + 2$ and widths 
    $$\xordims \triangleq \rb{d_1^{\rb{1}} + d_1^{\rb{2}},
    \,
    \dots,\,d_{L_1}^{\rb{1}} + d_{L_1}^{\rb{2}},\,
    d_{L_1 + 1}^{\rb{2}} + 1,\,\dots,\,
    d_{L_2}^{\rb{2}} + 1,\, 2,\, 1}\,,
    $$ 
    such that for all inputs $\mathbf{x} \in \cb{0, 1}^{d_0}$, $h\rb{\mathbf{x}} = h_1 \rb{\mathbf{x}} \oplus h_2 \rb{\mathbf{x}}$.
\end{recall}

The lemma above is given immediately by the lemma we state and prove next.

\bigskip

\begin{lemma} [XOR of Two NNs] \label{app-lem: nn of xor of nns}
Let $h_{1},h_{2}:{\cal X}\to\left\{ 0,1\right\} $ be quantized fully
connected binary threshold networks with depths $L^{\prime}$ and
widths $\dims^{\left(1\right)},\dims^{\left(2\right)}$, respectively. 
Let $L\ge2+L^{\prime}$ and $\dims \ge \xordims$.
Let $\btnparams{\dims; h_1, h_2}$ be the subset of
$\btnparams{\dims}$ such that for all $\btheta \in \btnparams{\dims; h_1, h_2}$,
$\btheta$ has the following form:
\begin{itemize}[leftmargin=5mm]\itemsep.8pt
\item For $l=1$: 
\[
\mathbf{W}_{1}=\left(\begin{array}{c}
\mathbf{W}_{1}^{\left(1\right)}\\
\mathbf{W}_{1}^{\left(2\right)}\\
\tilde{\mathbf{W}_{1}}
\end{array}\right),\,\mathbf{b}_{1}=\left(\begin{array}{c}
\mathbf{b}_{1}^{\left(1\right)}\\
\mathbf{b}_{1}^{\left(2\right)}\\
\tilde{\mathbf{b}}_{1}
\end{array}\right),\,\mathbf{\gamma}_{1}=\left(\begin{array}{c}
\mathbf{1}_{d_{l}^{\left(1\right)}}\\
\mathbf{1}_{d_{l}^{\left(2\right)}}\\
\mathbf{0}_{d_{l}-d_{l}^{\left(1\right)}-d_{l}^{\left(2\right)}}
\end{array}\right)
\]
with arbitrary $\tilde{\mathbf{W}}_{1},\tilde{\mathbf{b}}_{1}$.
\item For $l=2,\dots,L^{\prime}$:
\begin{align*}
\mathbf{W}_{l}
&=\left(\begin{array}{ccc}
\mathbf{W}_{l}^{\left(1\right)} & \mathbf{0}_{d_{l}^{\left(1\right)}\times d_{l-1}^{\left(2\right)}} & \mathbf{\tilde{W}}_{l}^{1}\\
\mathbf{0}_{d_{l}^{\left(2\right)}\times d_{l-1}^{\left(1\right)}} & \mathbf{W}_{l}^{\left(2\right)} & \mathbf{\tilde{W}}_{l}^{2}\\
\mathbf{\tilde{W}}_{l}^{3} & \mathbf{\tilde{W}}_{l}^{4} & \mathbf{\tilde{W}}_{l}^{5}
\end{array}\right)\in\left\{ 0,1\right\} ^{d_{l}\times d_{l-1}},
\\
\mathbf{b}_{l}
&=\left(\begin{array}{c}
\mathbf{b}_{l}^{\left(1\right)}\\
\mathbf{b}_{l}^{\left(2\right)}\\
\tilde{\mathbf{b}}_{l}
\end{array}\right)\in\left\{ -d_{l-1},\dots,-1,0,1,\dots,d_{l-1}-1\right\} ^{d_{l}},
\\
\gamma_{l}&
=\left(\begin{array}{c}
\mathbf{1}_{d_{l}^{\left(1\right)}}\\
\mathbf{1}_{d_{l}^{\left(2\right)}}\\
\mathbf{0}_{d_{l}-d_{l}^{\left(1\right)}-d_{l}^{\left(2\right)}}
\end{array}\right)\in\left\{ 0,1\right\} ^{d_{l}}\,,
\end{align*}
with arbitrary 
$\tilde{\mathbf{W}}_{l}^{1},\tilde{\mathbf{W}}_{l}^{2},\tilde{\mathbf{W}}_{l}^{3},\tilde{\mathbf{W}}_{l}^{4},\tilde{\mathbf{W}}_{l}^{5},\tilde{\mathbf{b}}_{l}$.
\item For $l=L^{\prime}+k$, $k=1,2$: 
\begin{align*}
\mathbf{W}_{l}
&=\left(\begin{array}{cc}
\mathbf{W}_{k}^{\xor} & \mathbf{\tilde{W}}_{l}^{1}\\
\mathbf{\tilde{W}}_{l}^{2} & \mathbf{\tilde{W}}_{l}^{3}
\end{array}\right)\in\left\{ 0,1\right\} ^{d_{l}\times d_{l-1}},
\\
\mathbf{b}_{l}
&=\left(\begin{array}{c}
\mathbf{b}_{k}^{\xor}\\
\tilde{\mathbf{b}}_{l}
\end{array}\right)\in\left\{ -d_{l-1},\dots,-1,0,1,\dots,d_{l-1}-1\right\} ^{d_{l}},
\gamma_{l}
=\left(\begin{array}{c}
\mathbf{\gamma}_{k}^{\xor}\\
\mathbf{0}
\end{array}\right)\in\left\{ 0,\pm1\right\} ^{d_{l}}\,.
\end{align*}

\item And for $l>L^{\prime}+2$:
\begin{align*}
\mathbf{W}_{l}
&
=\left(\begin{array}{cc}
\mathbf{W}^{Id} & \mathbf{\tilde{W}}_{l}^{1}\\
\mathbf{\tilde{W}}_{l}^{2} & \mathbf{\tilde{W}}_{l}^{3}
\end{array}\right)\in\left\{ 0,1\right\} ^{d_{l}\times d_{l-1}},
\\
\mathbf{b}_{l}
&=\left(\begin{array}{c}
\mathbf{b}^{Id}\\
\tilde{\mathbf{b}}_{l}
\end{array}\right)\in\left\{ -d_{l-1},\dots,-1,0,1,\dots,d_{l-1}-1\right\} ^{d_{l}},\gamma_{l}=\left(\begin{array}{c}
\mathbf{\gamma}^{Id}\\
\mathbf{0}
\end{array}\right)\in\left\{ 0,\pm1\right\} ^{d_{l}}\,.
\end{align*}
\end{itemize}

Then for all $\btheta \in \btnparams{\dims; h_1, h_2}$
$h_{\btheta} = h_{1} \oplus h_{2}$.
\end{lemma}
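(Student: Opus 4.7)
The plan is to establish the lemma by a layer-by-layer induction on the layer index $l$, showing that for every $\btheta \in \btnparams{\dims; h_1, h_2}$, the first $d_l^{(1)}$ output coordinates of $h^{(l)}(\mathbf{x})$ equal $h_1^{(l)}(\mathbf{x})$ and the next $d_l^{(2)}$ coordinates equal $h_2^{(l)}(\mathbf{x})$, for every $\mathbf{x} \in \{0,1\}^{d_0}$ and every $l \in [L']$. Once this has been established, the two layers $L'+1$ and $L'+2$ apply the XOR gate of \lemref{app-lem: xor nn} to the two scalar outputs carried by those blocks, and the final layers beyond $L'+2$ propagate the result to the output via the identity construction of \remref{app-rem: elongating nns} on the first neuron.

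The base case $l = 1$ is immediate from the block structure of $\mathbf{W}_1$, $\mathbf{b}_1$, and $\bgamma_1$: the top $d_1^{(1)}$ rows contribute the pre-activation $\mathbf{W}_1^{(1)} \mathbf{x} + \mathbf{b}_1^{(1)}$ before the indicator, producing $h_1^{(1)}(\mathbf{x})$; the middle $d_1^{(2)}$ rows analogously produce $h_2^{(1)}(\mathbf{x})$; and the remaining ``extra'' rows have $\bgamma = 0$, so they emit the input-independent constants $\ind{\tilde{\mathbf{b}}_1 > 0}$. For the inductive step at $l \in \{2, \dots, L'\}$, the off-diagonal zero blocks of $\mathbf{W}_l$ prevent any signal from flowing between the $h_1$- and $h_2$-blocks, while the extras again contribute only constants $c_{l-1} = \ind{\tilde{\mathbf{b}}_{l-1} > 0}$. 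The contribution of $\tilde{\mathbf{W}}_l^1 c_{l-1}$ to the $h_1$-block pre-activation (and $\tilde{\mathbf{W}}_l^2 c_{l-1}$ to the $h_2$-block) is therefore an input-independent shift whose absolute value is at most the number of extras, hence at most $d_{l-1}$; since the ambient bias range $\{-d_{l-1}, \dots, d_{l-1}-1\}$ is wide enough to absorb this shift, there is a legitimate choice of $\mathbf{b}_l^{(1)}$ and $\mathbf{b}_l^{(2)}$ realizing the effective layer-$l$ biases of $h_1$ and $h_2$, so that the inductive hypothesis carries to layer $l$.

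At the end of the induction, the first neurons of the $h_1$- and $h_2$-blocks of layer $L'$ carry $h_1(\mathbf{x})$ and $h_2(\mathbf{x})$ respectively. The parameters $\mathbf{W}_k^{\xor}, \mathbf{b}_k^{\xor}, \bgamma_k^{\xor}$ for $k = 1, 2$ then reproduce the XOR subnetwork of \lemref{app-lem: xor nn} on these two scalar inputs, so that the first neuron of layer $L'+2$ outputs $h_1(\mathbf{x}) \oplus h_2(\mathbf{x})$; this bit is unchanged by the top-left identity blocks $\mathbf{W}^{Id}, \mathbf{b}^{Id}, \bgamma^{Id}$ in the subsequent layers, yielding $h_\btheta(\mathbf{x}) = h_1(\mathbf{x}) \oplus h_2(\mathbf{x})$. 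The main obstacle, and the step requiring the most care, is the bookkeeping around the arbitrary tilde weight blocks: one must track precisely how the input-independent constants produced by the extras propagate through $\tilde{\mathbf{W}}_l^1, \tilde{\mathbf{W}}_l^2$ and the XOR/identity layers, and verify in each case that the available bias range is large enough to absorb the resulting constant shifts so that the main $h_1, h_2$ sub-computations and the final XOR gate remain intact.
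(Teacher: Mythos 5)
Your induction mirrors the paper's in its overall structure, and your observation that the $\gamma=0$ neurons emit input-independent constants $\ind{\tilde{\mathbf{b}}_l > 0}$ is exactly what a literal reading of Definition~\ref{def:ftn} gives (the scaling multiplies the pre-bias term $\mathbf{W}_l h^{(l-1)}$, so a zeroed scalar leaves $\ind{b_i > 0}$, not $0$). However, the fix you propose --- re-choosing $\mathbf{b}_l^{(1)}$ and $\mathbf{b}_l^{(2)}$ to cancel the constant shift $\tilde{\mathbf{W}}_l^1 c_{l-1}$ --- does not prove the lemma as stated. The lemma fixes $\mathbf{b}_l^{(1)}, \mathbf{b}_l^{(2)}$ (they are the biases of $h_1, h_2$) and asserts $h_{\btheta}=h_1\oplus h_2$ for \emph{every} choice of the tilde blocks. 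The compensating bias you introduce depends on $\tilde{\mathbf{W}}_l^1$ and, through $c_{l-1}$, on $\tilde{\mathbf{b}}_{l-1}$; so what your argument actually shows is that for each tilde assignment, \emph{some other} $\btheta$ with tilde-dependent biases computes $h_1\oplus h_2$. That gives the existence statement underlying Corollary~\ref{app-cor: xor nn dim bounds}, but not the ``for all $\btheta\in\btnparams{\dims; h_1, h_2}$'' claim, which is precisely the form needed later (in Lemma~\ref{app-lem: log constrained ratio bound}) to lower-bound $p_S$ by freely counting the unconstrained tilde parameters.

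The paper's proof avoids this obstacle by writing the layer map as $h^{(l)}_{\btheta}(\mathbf{x})=\bgamma_l\odot\sigma\bigl(\mathbf{W}_l h^{(l-1)}_{\btheta}(\mathbf{x})+\mathbf{b}_l\bigr)$, under which a neuron with $\gamma_i=0$ outputs exactly $0$ regardless of $\tilde{b}_i$. With that formula the $h_1$- and $h_2$-blocks are reproduced verbatim, the tilde parameters never influence any non-zeroed pre-activation, and no bias bookkeeping is required. Note this formula places the scaling \emph{after} the bias and threshold, which is not literally what Definition~\ref{def:ftn} says; your analysis makes visible that the ``for all $\btheta$'' quantifier in the lemma implicitly relies on that reading, and that under a literal reading of the definition one gets back exactly the constant-pollution issue you identified, which cannot be repaired by adjusting fixed blocks.
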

An illustration of this construction is given in \figref{fig:network_overparameterized}.
\begin{proof}
We prove the claim by induction. For $l=1$ we have $d_{0}=d_{0}^{\left(1\right)}=d_{0}^{\left(2\right)}$
and 
\begin{align*}
h_{\theta}^{\left(1\right)}\left(\mathbf{x}\right)
&
=\gamma_{1}\odot\sigma\left(\mathbf{W}_{1}h_{\theta}^{\left(0\right)}\left(\mathbf{x}\right)+\mathbf{b}_{1}\right) 
\\
&
=\left(\begin{array}{c}
\mathbf{1}_{d_{1}^{\star}}\\
\mathbf{1}_{d_{1}^{f}}\\
\mathbf{0}_{d_{1}-d_{1}^{\star}-d_{1}^{f}}
\end{array}\right)\odot\sigma\left(\left(\begin{array}{c}
\mathbf{W}_{1}^{\left(1\right)}\\
\mathbf{W}_{1}^{\left(2\right)}\\
\mathbf{\tilde{W}}_{1}^{1}
\end{array}\right)\mathbf{x}+\left(\begin{array}{c}
\mathbf{b}_{1}^{\left(1\right)}\\
\mathbf{b}_{1}^{\left(2\right)}\\
\tilde{\mathbf{b}}_{1}
\end{array}\right)\right)\\
 & =\left(\begin{array}{c}
\sigma\left(\mathbf{W}_{1}^{\left(1\right)}\mathbf{x}+\mathbf{b}_{1}^{\left(1\right)}\right)\\
\sigma\left(\mathbf{W}_{1}^{\left(2\right)}\mathbf{x}+\mathbf{b}_{1}^{\left(2\right)}\right)\\
\mathbf{0}_{d_{1}-d_{1}^{\left(1\right)}-d_{1}^{\left(2\right)}}
\end{array}\right)
 =\left(\begin{array}{c}
h_{1}^{\left(1\right)}\left(\mathbf{x}\right)\\
h_{2}^{\left(1\right)}\left(\mathbf{x}\right)\\
\mathbf{0}_{d_{1}-d_{1}^{\star}-d_{1}^{f}}
\end{array}\right)\,.
\end{align*}
Assume that for some $l\le L^{\prime}$ we have 
$$h_{\theta}^{\left(l-1\right)}\left(\mathbf{x}\right)=\left(\begin{array}{c}
h_{1}^{\left(l-1\right)}\left(\mathbf{x}\right)\\
h_{2}^{\left(l-1\right)}\left(\mathbf{x}\right)\\
\mathbf{0}_{d_{l}-d_{l}^{\star}-d_{l}^{f}}
\end{array}\right)\,.$$
Then, 
\begin{align*}
&
h_{\theta}^{\left(l\right)}\left(\mathbf{x}\right)  
=\gamma_{l}\odot\sigma\left(\mathbf{W}_{l}h_{\theta}^{\left(l-1\right)}\left(\mathbf{x}\right)+\mathbf{b}_{l}\right)
\\
 & 
 =\left(\begin{smallmatrix}
\mathbf{1}_{d_{l}^{\left(1\right)}}\\
\mathbf{1}_{d_{l}^{\left(2\right)}}\\
\mathbf{0}_{d_{l}-d_{l}^{\left(1\right)}-d_{l}^{\left(2\right)}}
\end{smallmatrix}\right)
\!\odot\!
\sigma\!
\left(\left(
\begin{array}{ccc}
\mathbf{W}_{l}^{\left(1\right)} & \mathbf{0}_{d_{l}^{\left(1\right)}\times d_{l-1}^{\left(2\right)}} & \mathbf{\tilde{W}}_{l}^{1}\\
\mathbf{0}_{d_{l}^{\left(2\right)}\times d_{l-1}^{\left(1\right)}} & \mathbf{W}_{l}^{\left(2\right)} & \mathbf{\tilde{W}}_{l}^{2}\\
\mathbf{\tilde{W}}_{l}^{3} & \mathbf{\tilde{W}}_{l}^{4} & \mathbf{\tilde{W}}_{l}^{5}
\end{array}\right)
\!
\left(\begin{array}{c}
h_{1}^{\left(l-1\right)}\left(\mathbf{x}\right)\\
h_{2}^{\left(l-1\right)}\left(\mathbf{x}\right)\\
\mathbf{0}_{d_{l}-d_{l}^{\left(1\right)}-d_{l}^{\left(2\right)}}
\end{array}\right)
\!+\!
\left(\begin{array}{c}
\mathbf{b}_{l}^{\left(1\right)}\\
\mathbf{b}_{l}^{\left(2\right)}\\
\tilde{\mathbf{b}}_{l}
\end{array}\right)\right)
\\
 & =\left(\begin{array}{c}
\sigma\left(\mathbf{W}_{l}^{\left(1\right)}h_{1}^{\left(l-1\right)}\left(\mathbf{x}\right)+\mathbf{b}_{l}^{\left(1\right)}\right)\\
\sigma\left(\mathbf{W}_{l}^{\left(2\right)}h_{2}^{\left(l-1\right)}\left(\mathbf{x}\right)+\mathbf{b}_{l}^{\left(2\right)}\right)\\
\mathbf{0}_{d_{l}-d_{l}^{\left(1\right)}-d_{l}^{\left(2\right)}}
\end{array}\right)
 =\left(\begin{array}{c}
h_{1}^{\left(l\right)}\left(\mathbf{x}\right)\\
h_{2}^{\left(l\right)}\left(\mathbf{x}\right)\\
\mathbf{0}_{d_{l}-d_{l}^{\left(1\right)}-d_{l}^{\left(2\right)}}
\end{array}\right)\,.
\end{align*}

It is left to show that the claim holds for $l>L^{\prime}$. By the
previous steps, $h_{\theta}^{\left(L^{\prime}\right)}\left(\mathbf{x}\right)=\left(\begin{array}{c}
h_{1}\left(\mathbf{x}\right)\\
h_{2}\left(\mathbf{x}\right)\\
\mathbf{0}_{d_{L^{\prime}}-2}
\end{array}\right)$. Under the assumptions on $\mathbf{W}_{L^{\prime}+k},\mathbf{b}_{L^{\prime}+k}$
and $\gamma_{L^{\prime}+k}$, $k=1,2$ it holds that 
\[
h_{\theta}^{\left(L^{\prime}+2\right)}\left(\mathbf{x}\right)=\left(\begin{array}{c}
h_{1}\left(\mathbf{x}\right)\oplus h_{2}\left(\mathbf{x}\right)\\
\mathbf{0}_{d_{L^{\prime}}-1}
\end{array}\right)\,.
\]
Under the assumptions on layers $l>L^{\prime}+2$, 
$$h_{\theta}^{\left(l\right)}\left(\mathbf{x}\right)=\left(\begin{array}{c}
h_{1}\left(\mathbf{x}\right)\oplus h_{2}\left(\mathbf{x}\right)\\
\mathbf{0}_{d_{l}-1}
\end{array}\right)\,.$$

In particular, assuming that $d_{L}=1$, $h_{\theta}\left(\mathbf{x}\right)=h_{1}\left(\mathbf{x}\right)\oplus h_{2}\left(\mathbf{x}\right)$.
\end{proof}

\pagebreak

\begin{corollary} \label{app-cor: xor nn dim bounds}
Let $h_1, h_2$ be networks with depths $L_1, L_2$ and widths $\dims^{\rb{1}}, \dims^{\rb{2}}$.
Then $h_1 \oplus h_2$ can be implemented with a network $h$ of depth $L = \max \cb{L_1, L_2} + 2$ and widths $\dims$ such that
\begin{align*}
w\rb{\dims} \le w\left(\dims^{\left(1\right)}\right)+w\left(\dims^{\left(2\right)}\right)+ 2 \dimsmax^{\left(2\right)} \cdot n\left(\dims^{\left(1\right)}\right) + O(1)
\end{align*}
and
\begin{align*}
\dimsmax \le \dimsmax^{\rb{1}} + \dimsmax^{\rb{2}}\,.
\end{align*}
\end{corollary}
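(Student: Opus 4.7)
The plan is to read off the two bounds directly from \lemref{lem: xor of two nns}, after a single normalization step and some routine bookkeeping on the widths vector $\xordims$.

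\emph{Step 1 (normalization of depths).} Without loss of generality assume $L_1 \le L_2$. By \remref{app-rem: elongating nns}, I can elongate $h_1$ by $L_2 - L_1$ additional layers of width $1$ that implement the identity, incurring only $O(L_2 - L_1)$ extra weights and neurons. In the regime in which this corollary is applied (\corref{cor: mem constnt data}, with constant-depth teachers and the fixed depth $14$ from \thmref{thm: partial fctn net cxty}), this elongation cost is $O(1)$ and can be absorbed. \lemref{lem: xor of two nns} then delivers a binary threshold network $h$ computing $h_1 \oplus h_2$ of depth $L_2 + 2 = \max\{L_1,L_2\} + 2$ and widths $\xordims$.

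\emph{Step 2 (maximum width).} I would inspect each coordinate of $\xordims$: the first $L_1$ coordinates equal $d_l^{(1)} + d_l^{(2)} \le \dimsmax^{(1)} + \dimsmax^{(2)}$; the coordinates $L_1+1,\dots,L_2$ equal $d_l^{(2)} + 1 \le \dimsmax^{(2)} + \dimsmax^{(1)}$ (using $\dimsmax^{(1)} \ge 1$); and the last two coordinates are $2$ and $1$, both bounded by $\dimsmax^{(1)} + \dimsmax^{(2)}$. This gives the stated $\dimsmax \le \dimsmax^{(1)} + \dimsmax^{(2)}$.

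\emph{Step 3 (total weight count).} For the layers $l=1,\dots,L_1$, I would expand
\[
(d_l^{(1)} + d_l^{(2)})(d_{l-1}^{(1)} + d_{l-1}^{(2)}) = d_l^{(1)} d_{l-1}^{(1)} + d_l^{(2)} d_{l-1}^{(2)} + d_l^{(1)} d_{l-1}^{(2)} + d_l^{(2)} d_{l-1}^{(1)}\,.
\]
The first two sums telescope into contributions bounded by $w(\dims^{(1)})$ and $w(\dims^{(2)})$. For the cross terms, I bound $d_l^{(2)},\,d_{l-1}^{(2)} \le \dimsmax^{(2)}$ and use $\sum_{l=1}^{L_1}(d_l^{(1)} + d_{l-1}^{(1)}) \le 2 n(\dims^{(1)}) + d_0$ to obtain
\[
\sum_{l=1}^{L_1} \bigl(d_l^{(1)} d_{l-1}^{(2)} + d_l^{(2)} d_{l-1}^{(1)}\bigr) \le 2\, \dimsmax^{(2)} \cdot n(\dims^{(1)}) + O(1)\,.
\]
The remaining layers $l=L_1+1,\dots,L_2+2$ have widths of the form $d_l^{(2)}+1$ (or $2,1$ at the top) and contribute weights $(d_l^{(2)}+1)(d_{l-1}^{(2)}+1)$; the diagonal term $d_l^{(2)} d_{l-1}^{(2)}$ is absorbed into $w(\dims^{(2)})$, and the lower-order terms plus the XOR head (which uses the constant-parameter gadget of \lemref{app-lem: xor nn}) contribute $O(1)$ in the constant-depth regime.

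\emph{Main obstacle.} There is no substantive mathematical difficulty. The only delicate part is keeping the boundary effects — the $+1$ padding of the intermediate layers, the elongation cost for $h_1$, the input-layer cross term involving $d_0$, and the two final XOR layers — honest, and verifying that under the implicit ``constant-depth, constant $d_0$'' assumption each of these collapses into the claimed $O(1)$ slack; the substance of the claim is already contained in \lemref{lem: xor of two nns}.
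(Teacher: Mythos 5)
Your approach is essentially the same as the paper's — apply \lemref{lem: xor of two nns} after elongating the shallower network, read off $\dimsmax$ from $\xordims$, and expand the per-layer products — but there is a genuine error in Step~3 at the input layer. The merged network of \lemref{app-lem: nn of xor of nns} has input dimension $d_0$, not $2d_0$: the two sub-networks \emph{share} the input. So the layer-$1$ weight count is $(d_1^{(1)}+d_1^{(2)})\,d_0 = d_1^{(1)}d_0 + d_1^{(2)}d_0$, which is exactly the sum of the layer-$1$ contributions to $w(\dims^{(1)})$ and $w(\dims^{(2)})$ with \emph{no cross term at all}. By instead expanding $(d_1^{(1)}+d_1^{(2)})(d_{0}^{(1)}+d_{0}^{(2)})$ you manufacture a phantom cross term at $l=1$, and your bound for the cross sum then carries the extra $\dimsmax^{(2)}\cdot d_0$ from your estimate $\sum_{l=1}^{L_1}(d_l^{(1)}+d_{l-1}^{(1)})\le 2n(\dims^{(1)})+d_0$. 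You then absorb $\dimsmax^{(2)} d_0$ into the $O(1)$ by invoking an ``implicit constant-$d_0$'' assumption, but the corollary states no such assumption, and in the intended applications it is false: $d_0$ is allowed to grow (\asmref{asm: bounded input dim}), and $\dimsmax^{(2)}$ is the maximum width of the memorization network from \thmref{thm: partial fctn net cxty}, which is $\Theta(N^{3/4}\polylog N + d_0\log N)$. So $\dimsmax^{(2)} d_0$ is very far from $O(1)$.

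The fix is simply to treat $l=1$ correctly and begin the cross-term sum at $l=2$ — exactly what the paper does. Then $\sum_{l=2}^{L}(d_l^{(1)}+d_{l-1}^{(1)}) \le 2n(\dims^{(1)})$ with no $d_0$ appearing, and the claimed bound follows with no extra hypotheses. Your Step~2 (maximum width) and your handling of the elongation/XOR head are fine, in the same spirit as the paper's.
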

\begin{proof}
Following \ref{app-rem: elongating nns} we assume shall assume that $L_1 = L_2 = L$.
We know from \ref{app-lem: nn of xor of nns} that there exists a network $h$ with dimensions $\dims = \rb{\dims^{\rb{1}} + \dims^{\rb{2}}, 2, 1}$ such that $h = h_1 \oplus h_2$.
Therefore
\begin{align*}
w\rb{\dims} &= \rb{d_1^{\rb{1}} + d_1^{\rb{2}}} d_0 + \sum_{l=2}^{L}\left(d_{l}^{\left(1\right)}+d_{l}^{\left(2\right)}\right)\left(d_{l-1}^{\left(1\right)}+d_{l-1}^{\left(2\right)}\right) + O(1) \\ 
& =d_1^{\rb{1}} d_0 + \sum_{l=2}^{L} d_{l}^{\left(1\right)} d_{l-1}^{\left(1\right)} + d_1^{\rb{2}}d_0 + \sum_{l=2}^{L} d_{l}^{\left(2\right)} d_{l-1}^{\left(2\right)} + \sum_{l=2}^{L} \left[d_{l}^{\left(1\right)}d_{l-1}^{\left(2\right)} + d_{l}^{\left(2\right)} d_{l-1}^{\left(1\right)}\right] + O(1) \\
& =w\left(\dims^{\left(1\right)}\right)+w\left(\dims^{\left(2\right)}\right)+\sum_{l=2}^{L}\left[d_{l}^{\left(1\right)}d_{l-1}^{\left(2\right)}+d_{l}^{\left(2\right)}d_{l-1}^{\left(1\right)}\right] + O(1) \\
& \le w\left(\dims^{\left(1\right)}\right)+w\left(\dims^{\left(2\right)}\right)+\sum_{l=2}^{L}\left[d_{l}^{\left(1\right)}\dimsmax^{\left(2\right)}+\dimsmax^{\left(2\right)}d_{l-1}^{\left(1\right)}\right] + O(1) \\
& =w\left(\dims^{\left(1\right)}\right)+w\left(\dims^{\left(2\right)}\right)+\dimsmax^{\left(2\right)}\sum_{l=2}^{L}\left[d_{l}^{\left(1\right)}+d_{l-1}^{\left(1\right)}\right] + O(1) \\
& \le w\left(\dims^{\left(1\right)}\right)+w\left(\dims^{\left(2\right)}\right)+ 2 \dimsmax^{\left(2\right)} \cdot n\left(\dims^{\left(1\right)}\right) + O(1) \,.
\end{align*}
In addition, $\dimsmax \le \dimsmax^{\rb{1}} + \dimsmax^{\rb{2}}$ and $n\rb{\dims} = n\rb{\dims^{\rb{1}}} + n\rb{\dims^{\rb{2}}}$.
\end{proof}

\begin{recall}[\corref{cor: mem constnt data}]
For any teacher $h^\star$ of depth $L^\star$ and dimensions $\tdims$ and any consistent training set $S$ generated from it, 
there exists an interpolating network $h$ (\ie $\inter$) of depth $L = \max \cb{L^\star, 14} + 2$ and dimensions $\dims$, 
such that the number of edges is
\begin{align*}
w \rb{\dims} &\le w\rb{\tdims} + N \cdot H\rb{\emError{h^\star}} + 2 n \rb{\tdims} N^{3/4} H\rb{\emError{h^\star}}^{3/4} \mathrm{polylog} N \\
& \quad + O \rb{ d_0 \rb{d_0 + n \rb{\tdims}} \cdot \log N }
\end{align*}
and the dimensions are 
$$
\dimsmax \le \tdimsmax + N^{3/4} \cdot H\rb{\emError{h^\star}}^{3/4} \cdot \mathrm{polylog} \rb{N} + O\rb{d_0 \cdot \log \rb{N}} \,.
$$
\end{recall}
\begin{proof}
We use \corref{app-cor: xor nn dim bounds} with $h_1 = h^\star$ and $h_2 = \tilde{h}_S$, the noise memorizing network from \thmref{thm: partial fctn net cxty}, to get 
\begin{align*}
w\rb{\dims} & \le w\rb{\tdims} + w\rb{\memdims} + 2 \memdimsmax \cdot n\rb{\tdims} + O(1) \\
& \le w\rb{\tdims} + \log \binom{N}{\Ncorrupt} + \left(\log \binom{N}{\Ncorrupt}\right)^{3/4} \cdot \polylog N + O(d_0^2 \cdot \log N) \\
& \quad + 2 n\rb{\tdims} \rb{\left(\log \binom{N}{\Ncorrupt}\right)^{3/4} \cdot \polylog N + O(d_0 \cdot \log N)} + O(1)\,.
\end{align*}
Using Stirling's approximation
$$\log\binom{N}{\Ncorrupt} = N \cdot H\rb{\frac{\Ncorrupt}{N}} + O\rb{\log \rb{N}} = N \cdot H\rb{\emError{h^\star}} + O\rb{\log \rb{N}}\,.$$
Therefore
\begin{align*}
w\rb{\dims} & \le w\rb{\tdims} + N \cdot H\rb{\emError{h^\star}} + O\rb{\log \rb{N}} + N^{3/4} \cdot H\rb{\emError{h^\star}}^{3/4} \cdot \polylog N \\
& \quad + O \rb{ d_0^2 \cdot \log N} + 2 n\rb{\tdims} \rb{ N^{3/4} \cdot \polylog N + O \rb{ d_0 \cdot \log N}} \\
& = w\rb{\tdims} + N \cdot H\rb{\emError{h^\star}} + 2 n \rb{\tdims} N^{3/4} H\rb{\emError{h^\star}}^{3/4} \mathrm{polylog} N \\
& \quad + O \rb{ d_0 \rb{d_0 + n \rb{\tdims}} \cdot \log N } \,.
\end{align*}
The bound of $\dimsmax$ is derived similarly.
\end{proof}

\newpage

\section{The label-flip-memorization network's dependence on the dimension} \label{apx:d02}

In Theorem~\ref{thm: partial fctn net cxty}, the wire bound has an $O(d_0^2 \cdot \log N)$ term. (Recall that $d_0$ is the input dimension and $N$ is the domain size.) In this section, we discuss (a) approaches for improving this term and (b) a lower bound showing that it cannot be significantly improved.

\subsection{Improving the \texorpdfstring{$O(d_0^2 \cdot \log N)$}{O(d0 2 log N)} Term}

The $O(d_0^2 \cdot \log N)$ term in Theorem~\ref{thm: partial fctn net cxty} can be improved by using the following fact.
\begin{lemma}[Using a sign matrix for preprocessing] \label{lem:sign-matrix}
    Let $d_0 \in \bbN$, let $\fdomain \subseteq \{0, 1\}^{d_0}$, and let $N = |\fdomain|$. There exists $d_1 = O(\sqrt{d_0} \cdot \log N)$ and there exists a matrix $\vect{W} \in \{\pm 1\}^{d_1 \times d_0}$ such that the function $C_0 \colon \{0, 1\}^{d_0} \to \{0, 1\}^{d_1}$ defined by $C_0(\vect{x}) = \ind{\vect{W} \vect{x} > 0}$ is injective on $\fdomain$.
\end{lemma}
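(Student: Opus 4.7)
The plan is to apply the probabilistic method with a random sign matrix: sample $\vect{W} \in \{\pm 1\}^{d_1 \times d_0}$ with i.i.d.\ uniform entries, for $d_1 = C \cdot \sqrt{d_0} \cdot \log N$ with a sufficiently large absolute constant $C$, and show that with positive probability $C_0(\vect{x}) = \ind{\vect{W}\vect{x} > \vect{0}}$ is injective on $\fdomain$. The overall structure mirrors the proof of \lemref{lem:preprocessing}: for each pair $\vect{x} \neq \vect{x}' \in \fdomain$, I bound from below the probability that a single row $\vect{w}$ of $\vect{W}$ separates them; independence across rows then lets me apply a union bound over the at most $N^2/2$ pairs.

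The main obstacle is a per-pair anti-concentration bound: for every $\vect{x} \neq \vect{x}' \in \{0,1\}^{d_0}$ and a uniform $\vect{w} \in \{\pm 1\}^{d_0}$, I aim to show
\[
p_{\vect{x}, \vect{x}'} \;\triangleq\; \bbP_{\vect{w}} \bigl( \ind{\vect{w}\cdot\vect{x} > 0} \neq \ind{\vect{w}\cdot\vect{x}' > 0} \bigr) \;\geq\; c/\sqrt{d_0}
\]
for an absolute constant $c > 0$. If $\vect{x} = \vect{0}$ (or symmetrically $\vect{x}' = \vect{0}$), then $\ind{\vect{w}\cdot\vect{x} > 0} = 0$, and the bound follows from the symmetry $\vect{w} \leftrightarrow -\vect{w}$ together with $\bbP(\vect{w}\cdot\vect{x}' = 0) \le 1/2$ for $\vect{x}' \neq \vect{0}$. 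Otherwise, after swapping the roles of $\vect{x}$ and $\vect{x}'$ if necessary, pick a coordinate $i_0$ with $\vect{x}_{i_0} = 1$ and $\vect{x}'_{i_0} = 0$ and condition on $\vect{w}_i$ for all $i \neq i_0$. Then $s \triangleq \vect{w}\cdot\vect{x}'$ and $r \triangleq \vect{w}\cdot(\vect{x} - e_{i_0})$ are fixed integers, while $\vect{w}\cdot\vect{x} = r + \vect{w}_{i_0}$. Using the integer identities $\ind{r+1 > 0} = \ind{r \ge 0}$ and $\ind{r-1 > 0} = \ind{r \ge 2}$, a short case split on $\vect{w}_{i_0} \in \{\pm 1\}$ shows that whenever $r \in \{0, 1\}$ the conditional disagreement probability is exactly $1/2$, regardless of $s$, because $\ind{r \ge 0}$ and $\ind{r \ge 2}$ then disagree. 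Hence $p_{\vect{x}, \vect{x}'} \geq \tfrac{1}{2}\, \bbP_{\vect{w}}(r \in \{0, 1\})$, and since $r$ is a Rademacher sum of $\|\vect{x}\|_1 - 1 \le d_0$ terms, the standard Stirling estimate $\binom{m}{\lfloor m/2 \rfloor} 2^{-m} = \Theta(1/\sqrt{m})$ for the maximum-probability atom yields $\bbP(r \in \{0, 1\}) = \Omega(1/\sqrt{d_0})$ (the parity of the sum ensures exactly one of the two atoms is non-trivial, and the corner case $\|\vect{x}\|_1 = 1$ gives $r = 0$ deterministically).

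With the per-pair bound in hand, independence of the rows of $\vect{W}$ gives
\[
\bbP\bigl( C_0(\vect{x}) = C_0(\vect{x}') \bigr) \;\leq\; \bigl(1 - c/\sqrt{d_0}\bigr)^{d_1} \;\leq\; \exp\bigl(-cC \log N\bigr),
\]
which is at most $N^{-3}$ for $C$ chosen large enough. A union bound over the at most $N^2/2$ pairs in $\fdomain$ bounds the probability that the random $\vect{W}$ fails to be injective by $1/(2N) < 1$, so some fixed $\vect{W} \in \{\pm 1\}^{d_1 \times d_0}$ realizes the desired $C_0$.
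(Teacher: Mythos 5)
Your argument is correct and follows the same overall strategy as the paper's proof: sample a random sign matrix, show that for each pair $\vect{x}\ne\vect{x}'\in\fdomain$ a single row separates them with probability $\Omega(1/\sqrt{d_0})$, and then use independence across rows plus a union bound over the $O(N^2)$ pairs. The only difference is in how the per-row anti-concentration bound is established: the paper conditions on the random-walk contribution of the \emph{common} coordinates $\vect{x}\odot\vect{x}'$ landing in $\{0,1\}$ and then appeals to two independent residual walks, whereas you isolate a single coordinate $i_0\in\mathrm{supp}(\vect{x})\setminus\mathrm{supp}(\vect{x}')$, condition on all other entries so that $r=\vect{w}\cdot(\vect{x}-e_{i_0})$ is fixed, observe that flipping $\vect{w}_{i_0}$ toggles $\ind{\vect{w}\cdot\vect{x}>0}$ exactly when $r\in\{0,1\}$, and bound $\bbP(r\in\{0,1\})=\Omega(1/\sqrt{d_0})$ via the central binomial coefficient (with the $\|\vect{x}\|_1=1$ degenerate case noted). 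Both routes are sound; yours is a bit more explicit where the paper's ``two random walks landing on the same side of $1/2$'' step is left at the level of a sketch.
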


\begin{proof}
    Pick $\vect{W} \in \{\pm 1\}^{d_1 \times d_0}$ uniformly at random. We will show that there is a nonzero chance that $C_0$ is injective on $\fdomain$.

    Let $\vect{x}, \vect{x}'$ be any two distinct points in $\fdomain$. Consider a single row $\vect{W}_i$ of $\vect{W}$. Let $E$ be the good event that
    \[
        \vect{W}_i \cdot (\vect{x} \odot \vect{x}') \in \{0, 1\}.
    \]
    Then $\Pr[E] \geq \Omega(1/\sqrt{d_0})$, because we are taking a simple one-dimensional random walk of length at most $d_0$. Conditioned on $E$, there is an $\Omega(1)$ chance that $\ind{\vect{W}_i \cdot \vect{x} > 0} \neq \ind{\vect{W}_i \cdot \vect{x}' > 0}$, because we are taking two independent one-dimensional random walks starting from either $0$ or $1$, at least one of which has nonzero length, and asking whether they land on the same side of $1/2$. Therefore, unconditionally, $\Pr[\ind{\vect{W}_i \cdot \vect{x} > 0} \neq \ind{\vect{W}_i \cdot \vect{x}' > 0}] \geq \Omega(1/\sqrt{d_0})$. Consequently, by independence,
    \[
        \Pr[C_0(\vect{x}) = C_0(\vect{x}')] \leq (1 - \Omega(1/\sqrt{d_0}))^{d_1} < 1/N^2,
    \]
    provided we choose a suitable value $d_1 = O(\sqrt{d_0} \cdot \log N)$. By the union bound over all pairs $\vect{x}, \vect{x}'$, it follows that there is a nonzero chance that $C_0$ is injective on $\fdomain$.
\end{proof}

There are two approaches to using Lemma~\ref{lem:sign-matrix} for the sake of improving the $O(d_0^2 \cdot \log N)$ term in Theorem~\ref{thm: partial fctn net cxty}.
\begin{itemize}[leftmargin=5mm]\itemsep.8pt
    \item One approach would be to start with a trivial layer that copies the input $\vect{x} \in \{0, 1\}^{d_0}$ as well as computing all the negations of the bits of $\vect{x}$; then we have a layer that applies the function $C_0$ from Lemma~\ref{lem:sign-matrix} (using negated variables to implement $-1$ weights); and then we continue with the network of Theorem~\ref{thm: partial fctn net cxty}. The net effect is that the depth has increased by two (so the network now has depth 16 instead of 14), and in the weights bound, the $O(d_0^2 \cdot \log N)$ term has been slightly improved to $O(d_0^2 + d_0^{3/2} \cdot \log N + d_0 \cdot \log^3 N)$.
    \item A second approach would be to change the model. If we permit ternary edge weights (i.e., weights in the set $\{-1, 0, 1\}$), then the function $C_0$ of Lemma~\ref{lem:sign-matrix} can be implemented as the very first layer of our network, and then we can continue with the network of Theorem~\ref{thm: partial fctn net cxty}. Note that we need ternary edge weights only in the first layer; the edge weights in all subsequent layers are binary. The benefit of this approach is in the weights bound, the $O(d_0^2 \cdot \log N)$ term of Theorem~\ref{thm: partial fctn net cxty} would be improved to $O(d_0^{3/2} \cdot \log N + d_0 \cdot \log^3 N)$.
\end{itemize}

\pagebreak

\subsection{A \texorpdfstring{$d_0^2$}{d0 2} Lower Bound on the Number of Weights}

We now show that the $O(d_0^2 \cdot \log N)$ term in Theorem~\ref{thm: partial fctn net cxty} cannot be improved to something better than $d_0^2$, if we insist on using the ``binary threshold network'' model. The argument is elementary.

\begin{proposition}[$d_0^2$ wire lower bound]
    For every $d_0 \in \bbN$, there exists a partial Boolean function $f \colon \{0, 1\}^{d_0} \to \{0, 1, \star\}$, defined on a domain $\fdomain$ of size $d_0 + 1$, such that for every binary threshold network $\tilde{h}$, if $\tilde{h}$ agrees with $f$ everywhere in its domain and $\dims$ is the widths of $\tilde{h}$, then $w\rb{\dims} \geq d_0^2$.
\end{proposition}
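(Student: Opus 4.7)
My plan is to exhibit a hard partial function on a chain of nested inputs, and then to exploit the monotonicity that binary weights enforce along such a chain. Concretely, I would let $\vect{v}_i \in \{0,1\}^{d_0}$ denote the indicator vector of $\{1, 2, \ldots, i\}$, set $\fdomain = \{\vect{v}_0, \vect{v}_1, \ldots, \vect{v}_{d_0}\}$ (so $\vect{v}_0 = \vect{0}$ and $\vect{v}_i \le \vect{v}_{i+1}$ coordinatewise), and define the hard partial function by $f(\vect{v}_i) = i \bmod 2$. This choice produces $d_0$ label flips, one between every pair of consecutive points along the chain.

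The key lemma I would prove is that each first-layer neuron is monotone in $i$ along this chain. Writing $S_j = \{k : W^{(1)}_{j,k} = 1\} \subseteq [d_0]$, the pre-activation of the $j$th first-layer neuron on $\vect{v}_i$ equals $|S_j \cap \{1, \ldots, i\}| + b^{(1)}_j$, which is an integer-valued function of $i$ that is manifestly non-decreasing (each step in $i$ can only add $0$ or $1$ to the partial sum). Applying $\gamma^{(1)}_j \in \{-1, 0, +1\}$ followed by $\ind{\cdot > 0}$ then yields a Boolean output in $i$ that is constant (when $\gamma^{(1)}_j = 0$), non-decreasing (when $\gamma^{(1)}_j = +1$), or non-increasing (when $\gamma^{(1)}_j = -1$); in particular, it flips at most once as $i$ runs over $\{0, 1, \ldots, d_0\}$.

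From this I would assemble the width bound. Let $\phi(\vect{x}) \in \{0,1\}^{d_1}$ denote the first-layer output, and let $\psi \colon \{0,1\}^{d_1} \to \{0,1\}$ be the Boolean function implemented by layers $2$ through $L$, so that $\tilde h(\vect{x}) = \psi(\phi(\vect{x}))$. Because each of the $d_1$ coordinates of $\phi(\vect{v}_i)$ flips at most once in $i$, the joint map $i \mapsto \phi(\vect{v}_i)$ changes value at no more than $d_1$ of the $d_0$ consecutive pairs. Consequently $i \mapsto \tilde h(\vect{v}_i) = \psi(\phi(\vect{v}_i))$ can also change value at no more than $d_1$ consecutive pairs, since $\psi \circ \phi$ is constant on every maximal interval of $i$ on which $\phi$ is constant. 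The target $i \bmod 2$ changes at all $d_0$ consecutive pairs, forcing $d_1 \ge d_0$, and hence $w(\dims) = \sum_l d_l d_{l-1} \ge d_1 \cdot d_0 \ge d_0^2$.

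The only subtlety I anticipate is the case split on $\gamma^{(1)}_j \in \{-1,0,+1\}$ together with the integer bias range, and verifying that none of those parameter choices can produce a first-layer neuron that flips more than once along the monotone chain; once this single-flip property is nailed down, the rest of the argument is purely combinatorial bookkeeping that is insensitive to how layers $2$ through $L$ are implemented.
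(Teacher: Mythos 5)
Your proposal is correct and takes essentially the same route as the paper's own proof: the paper uses the mirrored chain $\vect{x}^{(i)}$ ($i$ zeroes followed by $d_0 - i$ ones) with the parity labeling, observes that each first-hidden-layer neuron is monotone or anti-monotone (hence flips at most once along the chain), and concludes $d_1 \geq d_0$ and thus $w(\dims) \geq d_1 d_0 \geq d_0^2$. The only trivial deviation is a small misreading of the model — $\gamma^{(1)}_j$ multiplies $(\mathbf{W}^{(1)}\vect{x})_j$ alone, not the bias-shifted pre-activation — but this has no effect on the monotonicity argument.
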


\begin{proof}
    For each $i \in \{0, 1, \dots, d_0\}$, let $\vect{x}^{(i)}$ be the vector consisting of $i$ zeroes followed by $d_0 - i$ ones. Let $\fdomain = \{\vect{x}^{(i)} : 0 \leq i \leq d_0\}$, and let
    \[
        f(\vect{x}) = \begin{cases}
            \mathsf{PARITY}(\vect{x}) & \text{if } \vect{x} \in \fdomain \\
            \star & \text{otherwise.}
        \end{cases}
    \]
    
    For the analysis, let $\tilde{h}$ be a fully connected binary threshold network that agrees with $f$ on all points in $\fdomain$. Consider the layer immediately following the input layer. Each node $g$ in this layer computes either a monotone Boolean function or an anti-monotone Boolean function of the input variables. Therefore, there is at most one value $i \in \{1, 2, \dots, d_0\}$ such that $g(\vect{x}^{(i - 1)}) \neq g(\vect{x}^{(i)})$. On the other hand, for every $i \in \{1, 2, \dots, d_0\}$, we have $\tilde{h}(\vect{x}^{(i - 1)}) \neq \tilde{h}(\vect{x}^{(i)})$, and hence there must be at least one node $g$ in this layer such that $g(\vect{x}^{(i - 1)}) \neq g(\vect{x}^{(i)})$. Therefore, there are at least $d_0$ many nodes $g$.

    Thus, the first two layers of $\tilde{h}$ both have widths of at least $d_0$, demonstrating that $\tilde{h}$ has at least $d_0^2$ many weights.
\end{proof}

\newpage

\section{Generalization results (Proofs for \secref{sec:tempered})} \label{app-sec: gen res proofs}

Denote by $\Hbtn{\dims}$ the set of functions representable as binary threshold networks with dimensions $\dims$ (given a fixed depth $L$).
We start by bounding the cardinality $\abs{\Hbtn{\dims}}$
in terms of the number of edges $w\left(\dims\right)$.
\begin{lemma} \label{app-lem: single archi description length}
Let $\dims$ be the dimensions of a binary threshold network with
$w\triangleq w\left(\dims\right)$ edges. 
Then there are $2^{w+O\left(\sqrt{w}\log\left(w\right)\right)}$
functions 
representable as networks with dimensions $\dims$. 
\end{lemma}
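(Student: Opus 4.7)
The natural route is to upper bound $|\Hbtn{\dims}|$ by the cardinality of the parameter space $|\btnparams{\dims}|$, since the map $\btheta \mapsto h_{\btheta}$ surjects onto $\Hbtn{\dims}$. Counting parameter choices layer by layer --- $2^{d_l d_{l-1}}$ for the binary weight matrix $\vect{W}^{(l)}$, $3^{d_l}$ for the ternary scaling vector $\bgamma^{(l)}$, and $(2 d_{l-1})^{d_l}$ for the integer bias vector $\vect{b}^{(l)}$ --- gives
\[
|\btnparams{\dims}| \;=\; \prod_{l=1}^L 2^{d_l d_{l-1}} \cdot 3^{d_l} \cdot (2 d_{l-1})^{d_l},
\]
and taking logarithms with $n(\dims)=\sum_l d_l$ and $w(\dims)=\sum_l d_l d_{l-1}$ yields
\[
\log|\Hbtn{\dims}| \;\le\; w(\dims) \;+\; n(\dims)\log 3 \;+\; \sum_{l=1}^L d_l \log(2 d_{l-1}).
\]
The first term already recovers the main $w$ in the claim; what remains is to show that the residual $n(\dims)\log 3 + \sum_l d_l \log(2 d_{l-1})$ is $O(\sqrt{w}\log w)$.

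I would handle the residual in two steps. First, use $d_{l-1}\le w$ trivially to get $\log(2 d_{l-1}) = O(\log w)$, so that the residual is at most $O(n(\dims)\log w)$, and it suffices to establish the structural inequality $n(\dims) = O(\sqrt{w(\dims)})$, up to a constant that may depend on the (fixed) depth $L$. Second, I would establish this width bound via an AM--GM / Cauchy--Schwarz argument on the identity $w = \sum_l d_l d_{l-1}$: intuitively, $\sum_l d_l$ is maximized under a fixed edge budget and positive-integer widths by a nearly balanced architecture $d_l \approx \sqrt{w/L}$, giving $n \approx \sqrt{Lw}$. A concrete route is to split layers by whether $d_l \ge \sqrt{w}$ (``wide'') or not (``narrow''): a wide layer's two adjacent edge blocks each contribute at least $\sqrt{w}$ to $w(\dims)$ (since neighboring widths are $\ge 1$), while the narrow layers' contributions to $n$ sum to at most $L\sqrt{w}$.

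The main obstacle is precisely this structural $n(\dims) = O(\sqrt{w(\dims)})$ step: the trivial bound $n\le w$ (from $d_l \le d_l d_{l-1}$) is loose by a full $\sqrt{w}$ factor, so one must genuinely exploit the multiplicative coupling of consecutive widths through $w$. Care is needed when a large input dimension $d_0$ or a single bottleneck layer dominates, which can be handled either by the wide/narrow case analysis above or by a Lagrangian-style extremization of $\sum_l d_l$ subject to $\sum_l d_l d_{l-1} = w$ and $d_l \ge 1$. Once $n(\dims) = O(\sqrt{L \cdot w(\dims)})$ is secured, substituting back gives $\sum_l d_l \log(2 d_{l-1}) = O(\sqrt{w}\log w)$, completing the proof of $\log|\Hbtn{\dims}| \le w + O(\sqrt{w}\log w)$.
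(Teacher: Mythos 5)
Your overall plan — bound $|\Hbtn{\dims}|$ by a count of bit-encodings and isolate a residual beyond the leading $w$ — is the right shape, but the specific route you propose cannot work, and the reason is instructive.

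The structural claim $n(\dims) = O(\sqrt{w(\dims)})$ (even with a constant depending on $L$) is \emph{false}. Take alternating widths $\dims = (1, d, 1, d, \dots)$ with a fixed even depth $L$: then $w = Ld$ while $n = \tfrac{L}{2}(d+1) = \Theta(Ld)$, so $n/\sqrt{w} = \Theta(\sqrt{Ld}) \to \infty$ as $d$ grows. The same example shows that the naive parameter count is genuinely too lossy: the residual $n \log 3 + \sum_l d_l \log(2 d_{l-1})$ is $\Theta(w)$ here (for odd $l$ the summand is $d_l \log(2d_{l-1}) = d\log 2$, contributing $\Theta(Ld) = \Theta(w)$), so the straightforward bound $\log|\btnparams{\dims}| \le w + n\log 3 + \sum_l d_l\log(2d_{l-1})$ only gives $2w$, not $w + O(\sqrt{w}\log w)$. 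Your "wide/narrow" split does not rescue this: a single wide layer can have $d_l = \Theta(w)$ and $d_{l-1} = 1$, dominating $n$ while contributing only $\Theta(w)$ to $w$.

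The missing idea is that $\btheta \mapsto h_{\btheta}$ is not close to injective: it is invariant under permuting the neurons within each layer (with the corresponding rows/columns of adjacent weight matrices), so you must encode an equivalence class, not a raw parameter tuple. The paper exploits this by sorting the neurons of layer $l$ by their $(b^{(l)}_i, \gamma^{(l)}_i)$ pair, so that the bias-and-scaling data becomes a multiset rather than a tuple. A multiset of $d_l$ elements from an alphabet of size $O(d_{l-1})$ can be encoded in $\min(d_l, d_{l-1}) \cdot O(\log(d_l d_{l-1}))$ bits — list the $d_l$ elements when $d_l < d_{l-1}$, or list the $O(d_{l-1})$ occurrence counts when $d_l \ge d_{l-1}$ — and $\min(d_l,d_{l-1}) \le \sqrt{d_l d_{l-1}}$. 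The weights still cost $d_l d_{l-1}$ bits each layer, so the per-layer encoding is $d_l d_{l-1} + O(\sqrt{d_l d_{l-1}}\log(d_l d_{l-1}))$, and Cauchy--Schwarz (Jensen) on $\sum_l \sqrt{d_l d_{l-1}} \le \sqrt{L\sum_l d_l d_{l-1}} = \sqrt{Lw}$ gives $w + O(\sqrt{w}\log w)$. This is the substitute for your (false) $n = O(\sqrt{w})$: the quantity that genuinely is $O(\sqrt{Lw})$ is $\sum_l \min(d_l, d_{l-1})$, not $\sum_l d_l$, and permutation invariance is what lets you pay only for the minimum.
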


\begin{proof}
We bound the number of function representable as binary threshold
networks with dimensions $\dims$ having $w$ edges by suggesting
a way to encode them, and then bounding the number of bits in the
encoding. First, permute each layer so the neurons are sorted by the
bias and neuron scaling terms $\left(b_{li},\gamma_{li}\right)$.
As NNs are invariant to permutations, this does not change the function.
Now, at each layer we encode the bias term based on one of two encodings.
\begin{itemize}[leftmargin=5mm]\itemsep3.pt
\item If $d_{l}<d_{l-1}$, then list each of the bias terms as a number
with $O\left(\log\left(d_{l-1}\right)\right)$ bits plus 2 bits for
the scaling term for a total of $O\left(d_{l}\left(\log\left(d_{l-1}\right)+2\right)\right)\le O\left(\sqrt{d_{l}d_{l-1}}\log\left(d_{l-1}\right)\right)$,
where the inequality is due to $d_{l}<d_{l-1}$.
\item If $d_{l}\ge d_{l-1}$, then we encode the bias and scaling terms
by listing the number of times each pair $\left(b_{li},\gamma_{li}\right)\in\left\{ -d_{l-1},\dots,d_{l-1}-1\right\} \times\left\{ -1,0,1\right\} $
appears in $\left(\mathbf{b}_{l},\boldsymbol{\gamma}_{l}\right)$
(recall that the neurons are ordered according to these pairs). Each
pair can appear at most $d_{l}$ times and so requires $O\left(\log\left(d_{l}\right)\right)$
bits to encode for a total of $O\left(6d_{l-1}\log\left(d_{l}\right)\right)=O\left(d_{l-1}\log\left(d_{l}\right)\right)\le O\left(\sqrt{d_{l}d_{l-1}}\log\left(d_{l}d_{l-1}\right)\right)$.
\end{itemize}
By encoding each weight with a single bit, this means that for all
layers, we can encode the weights, biases and scaling terms using
$d_{l}d_{l-1}+O\left(\sqrt{d_{l}d_{l-1}}\log\left(d_{l}d_{l-1}\right)\right)$
bits for a total of 
\begin{align*}
\sum_{l=1}^{L}
&
d_{l}d_{l-1}+O\left(\sqrt{d_{l}d_{l-1}}\log\left(d_{l}d_{l-1}\right)\right)
=w+O\left(\sum_{l=1}^{L}\sqrt{d_{l}d_{l-1}}\log\left(d_{l}d_{l-1}\right)\right)
\\
 & \le w+O\left(\sum_{l=1}^{L}\sqrt{d_{l}d_{l-1}}\log\left(\sum_{l=1}^{L}d_{l}d_{l-1}\right)\right)
 \\
 & \le w+O\left(\sum_{l=1}^{L}\sqrt{d_{l}d_{l-1}}\log\left(w\right)\right)
 =w+O\left(\log\left(w\right)\cdot L\sum_{l=1}^{L}\frac{1}{L}\sqrt{d_{l}d_{l-1}}\right)
 \\
\left[\text{Jensen}\right] & \le w+O\left(\log\left(w\right)\cdot L\sqrt{\sum_{l=1}^{L}\frac{1}{L}d_{l}d_{l-1}}\right)
=w+O\left(\log\left(w\right)\cdot\sqrt{L}\sqrt{\sum_{l=1}^{L}d_{l}d_{l-1}}\right)
\\
 & =w+O\left(\log\left(w\right)\cdot\sqrt{L}\sqrt{w}\right)
 \\
 & =w+O\left(\log\left(w\right)\cdot \sqrt{w}\right)\,.
\end{align*}
\end{proof}

\begin{corollary} \label{app-cor: fixed depth edges description length}
Assuming that the depth $L$ is fixed and known, a binary threshold network of depth $L$ with unknown number of weights $w$, can be encoded
with $w+O\left(\sqrt{w}\log\left(w\right)\right)$ bits.
\end{corollary}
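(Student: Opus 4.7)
The plan is to reduce this corollary to \lemref{app-lem: single archi description length} by prepending a short header that describes the architecture (the width vector $\dims$) and then using the fixed-architecture encoding from the lemma. Since $L$ is fixed and every individual width $d_l$ is bounded by the total weight count $w$ (in fact $d_l d_{l-1} \le w$), each width admits a self-delimiting encoding of length $O(\log w)$ bits, for instance via Elias $\gamma$ coding. The total overhead of the header is therefore $O(L \log w) = O(\log w)$ bits, absorbed by the $O(\sqrt{w}\log w)$ term.

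Concretely, I would first observe that if $w = 0$ the claim is trivial, so assume $w \ge 1$. Given a binary threshold network of (known) depth $L$ with widths $\dims = (d_1, \dots, d_L)$ and $w = w(\dims)$ weights, I would encode the network by writing (i) a self-delimiting description of each integer $d_0, d_1, \dots, d_L$, which costs $\sum_{l=0}^{L} O(\log d_l) = O(L \log w) = O(\log w)$ bits, followed by (ii) the $w + O(\sqrt{w}\log w)$-bit encoding of the weights, biases, and neuron scaling parameters guaranteed by \lemref{app-lem: single archi description length}. A decoder, knowing only $L$, reads the header to recover $\dims$, then invokes the decoding procedure of \lemref{app-lem: single archi description length} on the remaining bits. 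Summing the two contributions gives
\[
w + O(\sqrt{w}\log w) + O(\log w) = w + O(\sqrt{w}\log w)\,,
\]
as claimed.

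There is no real obstacle here: the only subtlety is that the encoding must be self-delimiting for the widths, since their number of bits depends on their magnitudes, which are themselves not known to the decoder a priori. This is handled by any standard prefix-free code for the positive integers (Elias $\gamma$ or $\delta$), whose overhead for a single integer $d_l$ is $O(\log d_l)$. Since $L$ is a constant and each $d_l \le w$, the resulting $O(\log w)$ header length is negligible compared to the $O(\sqrt{w}\log w)$ slack already present in \lemref{app-lem: single archi description length}.
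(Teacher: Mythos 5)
Your proof is correct and takes essentially the same approach as the paper: prepend an $O(\log w)$-bit header describing the architecture $\dims$, then invoke Lemma~\ref{app-lem: single archi description length} for the remaining $w + O(\sqrt{w}\log w)$ bits. The only cosmetic difference is that you self-delimit each width $d_l$ individually via an Elias-style prefix code, whereas the paper first self-delimits $w$ itself and then fixed-length encodes $\dims \in [w]^L$; both routes give an $O(\log w)$ header and the same final bound, and your version is arguably a bit more explicit about the prefix-freeness that the paper leaves implicit.
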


\begin{proof}
After specifying the architecture $\dims$, from \lemref{app-lem: single archi description length} we require $w+O\left(\sqrt{w}\log\left(w\right)\right)$ bits.
Therefore it remains to bound the length of the encoding of $\dims$.
We first use $O \rb{\log \rb{w}}$ bits to encode the number of weights, then, since  $\dims\in\left[w\right]^{L}$, we only need $O \rb{\log\left(w^{L}\right)}= O\rb{\log\left(w\right)}$ additional bits for a total of $w + O \left(\sqrt{w}\log\left(w\right)\right) + O\rb{\log\left(w\right)} = w + O\left(\sqrt{w}\log\left(w\right)\right)$. 
\end{proof}

\pagebreak

\subsection{Derivation of the min-size generalization bounds 
(Proofs for \secref{sec:min-size-interpolators})}
\label{app-sec:min-size-proofs}

Throughout this subsection, we use $\lrule$ to denote the min-size interpolating NN of depth $L$, $A_L \rb{S}$.

\begin{lemma} \label{app-lem: mutual info min size bound}
Let $L \ge 16$ be fixed. 
Then
\begin{align*}
I\left(S;A\left(S\right)\right) \le w\rb{\tdims} + N \!\cdot\! H\left(\varepsilon^{\star}\right) + O \rb{\delta \rb{N, d_0, \tdims}}
\end{align*}
where
\begin{align*}
\delta \rb{N, d_0, \tdims} &= n\rb{\tdims} \!\cdot\! N^{3/4} H\rb{\errorRate}^{3/4} \cdot \mathrm{polylog} \rb{N + n \rb{\tdims} + d_0} \\
& \quad + d_0^2 \cdot \log N + d_0 n \rb{\tdims} \log \rb{n\rb{\tdims} + N + d_0}^{3/2} \,.
\end{align*}
\end{lemma}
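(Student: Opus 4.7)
The plan is to bound the mutual information by the entropy of the output, then bound that entropy by the expected description length of $A(S)$ under a suitable prefix code, and finally control the description length using the explicit interpolator promised by \corref{cor: mem constnt data}.

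First I would use the elementary inequality $I(S; A(S)) \le H(A(S))$ and then upper-bound $H(A(S))$ by $\bbE[\ell(A(S))]$ for any prefix code $\ell$ on the range of $A$. I construct such a code in two cases: a single flag bit distinguishes the inconsistent outcome $A(S) = \star$ (constant overhead) from a genuine depth-$L$ network; in the latter case I apply the encoding of \corref{app-cor: fixed depth edges description length} to obtain $\ell(h) \le w_L(h) + O(\sqrt{w_L(h)} \log w_L(h))$, where $w_L(h)$ is the minimum number of weights needed to represent $h$ at depth $L$ (note: I encode a minimal representation, not the function, which is fine since $H$(representation)$\ge H$(function)). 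The inconsistent case contributes at most $\bbP(\incnsstnts) \cdot O(1) = o(1)$ by \lemref{app-lem: data consistency upper bound} and \asmref{asm: bounded inconsistency}, and is absorbed into $\delta$.

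Next I would exploit the min-size property pointwise. For every consistent $S$, \corref{cor: mem constnt data} (combined with \remref{app-rem: elongating nns} to match depths when $L > \max\{L^\star,14\}+2$) produces an interpolator of depth at most $L$ whose weight count bounds $w_L(A(S))$ from above:
\[
w_L(A(S)) \le w(\tdims) + N \cdot H(\emError{h^\star}) + 2 n(\tdims)\, N^{3/4} H(\emError{h^\star})^{3/4} \polylog N + O\bigrb{d_0(d_0 + n(\tdims)) \log N}.
\]
Call the right-hand side $W(S)$, and let $W_{\max}$ be the deterministic envelope obtained by replacing $H(\emError{h^\star})$ by $1$.

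Finally I take expectations. The concavity of $H$, and the concavity of $x \mapsto x^{3/4}$ composed with the concave nondecreasing-on-$[0,1/2]$ map $H$, give via Jensen's inequality that $\bbE[H(\emError{h^\star})] \le H(\errorRate)$ and $\bbE[H(\emError{h^\star})^{3/4}] \le H(\errorRate)^{3/4}$, using $\bbE[\emError{h^\star}] = \errorRate$ from \asmref{asm:teacher}. This yields the desired leading-order terms $w(\tdims) + N H(\errorRate)$ and the $N^{3/4} H(\errorRate)^{3/4}$ contribution to $\delta$. For the sub-leading $O(\sqrt{w_L(A(S))} \log w_L(A(S)))$ overhead from the code, I use monotonicity in $w$ and replace $w_L(A(S))$ by $W_{\max}$, then verify that $\sqrt{W_{\max}} \log W_{\max}$ decomposes into the $\polylog$ factors of the first term of $\delta$, the $d_0^2 \log N$ term, and a $d_0 n(\tdims) \log^{3/2}(n(\tdims) + N + d_0)$ term (via $\sqrt{d_0 n(\tdims) \log N}\cdot \log W_{\max} \le d_0 n(\tdims) \log^{3/2}(\cdot)$ by AM-GM). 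The main obstacle is not conceptual but bookkeeping: it is tracking the $\sqrt{\cdot}\log$ overhead of the encoding through the various heterogeneous terms of \corref{cor: mem constnt data} without accidentally inflating the $N H(\errorRate)$ leading order, and ensuring the Jensen step is applied before the non-homogeneous $\sqrt{\cdot}\log$ step rather than afterwards.
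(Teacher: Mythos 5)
Your overall structure matches the paper's proof: bound $I(S;A(S)) \le H(A(S)) \le \bbE\bigl[\,|A(S)|\,\bigr]$ via source coding, split the expectation on consistency of $S$, bound the description length of the min-size interpolator by $w_L(A(S)) + O\bigl(\sqrt{w_L(A(S))}\log w_L(A(S))\bigr)$ using \corref{app-cor: fixed depth edges description length}, invoke \corref{cor: mem constnt data} to upper bound $w_L(A(S))$, and finish with Jensen's inequality. Your explicit observation that Jensen must also be applied to the $H(\emError{h^\star})^{3/4}$ factor (not just to $H(\emError{h^\star})$) is correct, and is stated more carefully than in the paper.

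However, there is a genuine gap in your treatment of the $O\bigl(\sqrt{w}\log w\bigr)$ overhead. Replacing $w_L(A(S))$ by the deterministic envelope $W_{\max}$ obtained by setting $H(\emError{h^\star})=1$ yields $W_{\max}\approx N$, so $\sqrt{W_{\max}}\log W_{\max}\approx\sqrt{N}\log N$. This is \emph{not} controlled by $\delta(N,d_0,\tdims)$: the first term of $\delta$ carries a factor $H(\errorRate)^{3/4}$ that can be arbitrarily small (and vanishes when $\errorRate=0$), while the remaining terms $d_0^2\log N + d_0\, n(\tdims)\log^{3/2}(\cdot)$ are $o(\sqrt{N})$ in the operative regime $N=\omega(d_0^2\log d_0)$, $N=o(1/\sqrt{\Dmax})$. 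So for small $H(\errorRate)$ your bound would be strictly weaker than claimed. The paper avoids this by keeping $H(\emError{h^\star})$ inside the $\sqrt{w}\log w$ expression: $\sqrt{w}$ then contributes $\sqrt{N\,H(\emError{h^\star})}$ rather than $\sqrt{N}$; the map $\emError{h^\star}\mapsto\sqrt{w}\log w$ is concave (composition of the concave nondecreasing $w\mapsto\sqrt{w}\log w$ on $w\ge 1$ with the concave $\emError{h^\star}\mapsto w$), so Jensen applies to the whole expression; and the resulting $\sqrt{N H(\errorRate)}\log(\cdot)$ term is then absorbed by $\delta$ in both regimes: when $N H(\errorRate)\ge 1$ it is at most $N^{3/4}H(\errorRate)^{3/4}\polylog(\cdot)$, and when $N H(\errorRate)<1$ it is $O(\log(\cdot))$, which is subsumed by $d_0\, n(\tdims)\log^{3/2}(\cdot)$. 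In short, take the pointwise bound with the random $H(\emError{h^\star})$ still present, apply Jensen to the entire concave expression, and only then substitute $H(\errorRate)$ — do not pass to a noise-independent envelope for the $\sqrt{\cdot}\log$ correction.
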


\begin{proof}
Using Shannon's source coding theorem: 
$$I\left(S;A\left(S\right)\right)\le H\left(A\left(S\right)\right)\le\mathbb{E}\left|A\left(S\right)\right|\,,$$
where $\abs{\lrule}$ denotes the number of bits in the encoding of $\lrule$.
Following \corref{cor: mem constnt data}, for a consistent $S$, $A\left(S\right)$ is a network with fixed depth and at most 
\begin{align*}
w & \triangleq w\rb{\tdims} + N \cdot H\rb{\emError{h^\star}} + 2 n \rb{\tdims} N^{3/4} H\rb{\emError{h^\star}}^{3/4} \mathrm{polylog} N \\
& \quad + O \rb{ d_0 \rb{d_0 + n \rb{\tdims}} \cdot \log N }
\end{align*}
weights and therefore, using the result from \corref{app-cor: fixed depth edges description length} and $\sqrt{w \rb{\tdims}} \le d_0 + n \rb{\tdims}$,
\begin{align*}
& \left|A\left(S\right)\right| \le w + O\rb{\sqrt{w}\log\rb{w}} \\
& = w\rb{\tdims} + N \cdot H\rb{\emError{h^\star}} + \!O\!\rb{n\rb{\tdims} \!\cdot\! N^{3/4} H\rb{\emError{h^\star}}^{3/4} \!\cdot\! \mathrm{polylog} \rb{N \!+\! n\! \rb{\tdims} \!+\! d_0}} \\
& \quad + \!O\! \rb{ d_0^2 \!\cdot\! \log\! N \!+\! d_0 n \rb{\tdims} \log \rb{n\rb{\tdims} \!+\! N \!+\! d_0}^{3/2}} \\
& = w \rb{\tdims} + N \cdot H \rb{\emError{h^\star}} + O \rb{\tilde{\delta} \rb{N, d_0, \tdims}} 
\end{align*}
where we grouped all lower order terms in $\tilde{\delta}$.
In case $S$ is inconsistent, $\lrule = \star$ so $\abs{\lrule} = O \rb{1}$. 
Taking the expected value and using Jensen's inequality gives 
\begin{align*}
\mathbb{E}\left|A\left(S\right)\right| &= \mathbb{E} \bb{ \abs{ \lrule } \cdot \ind{\incnsstnts} } + \mathbb{E} \bb{ \abs{ \lrule } \cdot \ind{\cnsstnts} }  \\
& \le O(1) + \bbE 
\big[
\underbrace{\ind{\cnsstnts}}_{\le 1} 
\underbrace{\rb{w \rb{\tdims} + N \cdot H \rb{\emError{h^\star}} + O \rb{\tilde{\delta} \rb{N, d_0, \tdims}}}
}_{\ge 0}
\big]
\\
& \le O(1) + \bbE \bb { w \rb{\tdims} + N \cdot H \rb{\emError{h^\star}} + O \rb{\tilde{\delta} \rb{N, d_0, \tdims}}}
\\
\text{[Jensen]} & \le w\rb{\tdims} + N\cdot H\left(\mathbb{E}\bb{\emError{h^{\star}}}\right) + O \rb{\delta \rb{N, d_0, \tdims}} \\
& = w\rb{\tdims} + N\cdot H\left(\varepsilon^{\star}\right) + O \rb{\delta \rb{N, d_0, \tdims}} \,. 
\end{align*}
\end{proof}

\newpage

With this result, we are ready to derive the generalization results.
\begin{recall} [\thmref{thm: minsize interpolator bound}]
Consider a distribution $\calD$ induced by a noisy teacher model of depth $L^\star$ and widths $\tdims$ (\asmref{asm:teacher}) with a noise level of $\errorRate < 1/2$.
%
Let $S\sim \calD^{N}$ be a training set such that ${N=o({\sqrt{1/\Dmax}})}$.
Then, for any fixed depth 
${L \ge \max \cb{L^\star, 14} + 2}$,
the generalization error of the min-size depth-$L$ NN interpolator satisfies the following.
\begin{itemize}[leftmargin=5mm]\itemsep.8pt
    \item \textbf{Under arbitrary label noise},
\begin{align*}
\mathbb{E}_{S}\left[{\cal L}_{{\cal D}}\left(A\left(S\right)\right)\right]
%
& \le
1-
2^{-{{H\rb{\varepsilon^\star}}/{\mathbb{P}_{S}\left( \cnsstnts \right)}}}
+ \bbP \rb{ \incnsstnts } 
+ O \rb{ C_{\mathrm{min}} \rb{N, d_0, \tdims} } \,.
\end{align*}

\item \textbf{Under independent label noise,}
\begin{align*}
&\left|\mathbb{E}_{S}\left[{\cal L}_{{\cal D}}\left(A\left(S\right)\right)\right]-2\varepsilon^{\star}
\left(1\!-\!\varepsilon^{\star}\right)\right|
\\
& \le
\left(1-2\varepsilon^{\star}\right)
\sqrt{\tfrac{O \rb{ C_{\mathrm{min}}\rb{N, d_0, \tdims} } + \mathbb{P}\left(\text{inconsistent \ensuremath{S}}\right)}{\mathbb{P}\left(\text{consistent \ensuremath{S}}\right)}} + \frac{\rb{N - 1} \Dmax}{3}
+ \bbP \rb{\incnsstnts} \,,
\end{align*}
\end{itemize}
where 
\begin{align*}
& C_{\mathrm{min}} \rb{N, d_0, \tdims} = \frac{w \rb{\tdims} + \delta \rb{N, d_0, \tdims}}{N}
\end{align*}
with $\delta \rb{N, d_0, \tdims}$ as defined in \lemref{app-lem: mutual info min size bound}.
\end{recall}

\medskip

\begin{remark}
The bound shown in \secref{sec:min-size-interpolators} is found by bounding $\mathbb{P}(\text{inconsistent }S) \le \frac{1}{2} N^2 \Dmax$ as in \lemref{app-lem: data consistency upper bound}.
Then using the Taylor approximation with small $N^2 \Dmax$
\begin{align*}
1 - 2^{-\frac{H\rb{\errorRate}}{\mathbb{P}(\cnsstnts)}} &\le 1 - 2^{-\frac{H\rb{\errorRate}}{1 - \frac{1}{2} N^2 \Dmax}} \\
& = 1 - 2^{ - H\rb{\errorRate} \rb{1 + O \rb{ N^2 \Dmax}}} \\
& = 1 - 2^{ - H\rb{\errorRate}} \rb{1 + O \rb{ N^2 \Dmax}} \\
& = 1 - 2^{ - H\rb{\errorRate}} + O \rb{ N^2 \Dmax}\,.
\end{align*}
\lemref{app-lem: data consistency upper bound} is used similarly to bound the error in the independent noise case.
Assuming that ${N = \omega \rb{ n\rb{\tdims}^4 H\rb{\errorRate}^3 \mathrm{polylog} \rb{n \rb{\tdims}} + d_0^2 \log d_0}}$ when $\errorRate > 0$ we can deduce that $N = \omega \rb{ w \rb{\tdims}}$ as well since
$$
w \rb{\tdims} \le \rb{ n\rb{\tdims} + d_0}^2 \le 4 \rb{\max \cb{n \rb{\tdims}, d_0}}^2 \,.
$$
Together with ${N = o \rb{\sqrt{1 / \Dmax}}}$ we get the desired form of the bounds.
Finally, note that when $\errorRate = 0$, the convergence rate of $\tilde{O}\rb{1/N}$ instead of $\tilde{O} \rb{1 / \sqrt[4]{N}}$, where $\tilde{O}$ hides logarithmic terms arising as artifacts of our analysis, and dependence on other parameters such as the input dimension $d_0$.
\end{remark}

\begin{proof}
Starting with the bound in the arbitrary noise setting, we combine \ref{app-lem: agnostic bound} with \ref{app-lem: mutual info min size bound}
\begin{align*}
-\log & \left(1-\mathbb{E}_{S}\left[{\cal L}_{{\cal D}}\left(A\left(S\right)\right)\mid\text{consistent } S \right]\right) 
\le
\frac{I\left(S;A\left(S\right)\right)}{N\cdot\mathbb{P}_{S}\left(\text{consistent } S \right)} 
\\
& \le \frac{w\rb{\tdims} + N\cdot H\left(\varepsilon^{\star}\right)+ O \rb{\delta \rb{N, d_0, \tdims}}}{N\cdot\mathbb{P}_{S}\left(\text{consistent } S \right)} \\
& = \frac{1}{\mathbb{P}_{S}\left(\text{consistent } S \right)} \cdot \rb{H\rb{\varepsilon^\star} + O \rb{C_\mathrm{min} \rb{N, d_0, \tdims}}} \,.
\end{align*}
Rearranging the above inequality and recalling \remref{app-rem: simplifying p consistent}, we have,
\begin{align*}
&\mathbb{E}_{S} \left[ {\cal L}_{{\cal D}} \left(A\left(S\right) \right) \mid\text{consistent } S \right] \le 1 - 2^{-\frac{H \rb{\varepsilon^{\star}}}{\mathbb{P}_{S}\left( \cnsstnts \right)} 
-
O \rb{ C_\mathrm{min} \rb{N, d_0, \tdims} }}
\,.
\end{align*}
Then, using \lemref{lem:entropy_power}, we get,
\begin{align*}
\mathbb{E}_{S}&\left[{\cal L}_{{\cal D}}\left(A\left(S\right)\right)\mid\text{consistent } S \right] \le 1 - 2^{ -\frac{H\rb{\varepsilon^\star}}{\mathbb{P}_{S}\left( \cnsstnts \right)} } + O \rb{ C_\mathrm{min} \rb{N, d_0, \tdims} } \,.
\end{align*}
The bound is derived using the following observation.
Since for a RV $X$ in $[0,1]$ and a binary RV $Y$ we have
$$\mathbb{E}[X]=\mathbb{E}[X\mid Y]\underbrace{\mathbb{P}(Y)}_{\le 1}
+\underbrace{\mathbb{E}[X\mid \neg Y]}_{\le 1}\mathbb{P}(\neg Y)
\le 
\mathbb{E}[X\mid Y] +
\mathbb{P}[\neg Y]
\,,
$$
we conclude the proof as
\begin{align*}
 &
 \mathbb{E}_{S}\left[{\cal L}_{{\cal D}}\left(A\left(S\right)\right)\right]
 \le
 \mathbb{E}_{S}\left[{\cal L}_{{\cal D}}\left(A\left(S\right)\right)
\mid
\text{consistent \ensuremath{S}}\right]
 +
\mathbb{P}\left(\text{inconsistent \ensuremath{S}}\right) \,.
\end{align*}

For the independent noise setting, we combine \lemref{app-lem: generalization bound conditional independent noise} and \lemref{app-lem: mutual info min size bound} to get 
\begin{align*}
&
\left|\mathbb{E}_{S}\left[{\cal L}_{{\cal D}}\left(A\left(S\right)\right)\mid\text{consistent \ensuremath{S}}\right]
-2\varepsilon^{\star}\left(1-\varepsilon^{\star}\right)\right|
\\
&
\hspace{10em}
\le\left(1-2\varepsilon^{\star}\right)
O \rb{ \sqrt{C\left(N\right)}}
+
\frac{\left(N-1\right)\Dmax}{3}\,,
\end{align*}
where
\begin{align*}
C & \left(N\right) = \frac{I\left(S;A\left(S\right)\right)-N\cdot\left(H\left(\varepsilon^{\star}\right)-\mathbb{P}\left(\text{inconsistent \ensuremath{S}}\right)\right)}{N
\left(
{1-
\mathbb{P}\left(\text{inconsistent \ensuremath{S}}\right)}
\right)} \\
& \le \tfrac{ w\rb{\tdims} + N\cdot H\left(\varepsilon^{\star}\right)+ O \rb{\delta \rb{N, d_0, \tdims}} -N \cdot \left( H \left( \varepsilon^{\star} \right) - \mathbb{P} \left( \text{inconsistent \ensuremath{S}} \right) \right)}{N
\left(
{1 - \mathbb{P} \left(\text{inconsistent \ensuremath{S}} \right)} \right)} \\
&= \frac{O\rb{\frac{w \rb{\tdims} + \delta \rb{N, d_0, \tdims}}{N}} + \mathbb{P}\left(\text{inconsistent \ensuremath{S}}\right)}{\mathbb{P}\left(\text{consistent \ensuremath{S}}\right)} \\
& = \frac{ O \rb{ C_{\mathrm{min}} \rb{N, d_0, \tdims}} + \mathbb{P}\left(\text{inconsistent \ensuremath{S}}\right)}{\mathbb{P}\left(\text{consistent \ensuremath{S}}\right)}
\end{align*}

Finally, using the inequality from
\lemref{lem:general_with_triangle},
we have,
\begin{align*}
&
\left|\mathbb{E}_{S,A\left(S\right)}\left[{\cal L}_{{\cal D}}\left(A\left(S\right)\right)\right]
-2\varepsilon^{\star}\left(1-\varepsilon^{\star}\right)
\right|
\\
&
\hspace{4em}
\le
\left|\mathbb{E}_{S,A\left(S\right)}\left[{\cal L}_{{\cal D}}\left(A\left(S\right)\right)\mid\text{consistent \ensuremath{S}}\right]
-2\varepsilon^{\star}\left(1-\varepsilon^{\star}\right)
\right|
+
\mathbb{P}(\text{inconsistent }S)
\end{align*}
\end{proof}

\newpage

\subsection{Derivation of the posterior sampling generalization bounds
 (\secref{sec:posterior-sampling})}
 \label{app-sec:posterior-proofs}

\begin{lemma}
\label{lem:posterior_info_bound}
For the posterior sampling algorithm 
\begin{align*}
I\left(S;A\left(S\right)\right) 
\le
&
~\mathbb{E}_{S}\left[\log\left(\frac{1}{p_{S}}\right)\middle\vert \cnsstnts \right]\mathbb{P}_{S}\left(\cnsstnts \right) +\frac{2}{e\ln2}\,.
\end{align*}
\end{lemma}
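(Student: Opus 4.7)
The plan is to apply the standard variational upper bound on mutual information: for any distribution $Q$ over the extended hypothesis space $\calH \cup \{\star\}$,
\[
I\rb{S; \lrule} = \mathbb{E}_S\bb{\KL\rb{\posterior \| \nu}} \le \mathbb{E}_S\bb{\KL\rb{\posterior \| Q}},
\]
where $\nu$ is the marginal distribution of $\lrule$. This follows because the difference equals $\KL\rb{\nu \| Q} \ge 0$. The freedom to choose $Q$ is what will drive the argument.

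The key obstacle is that for inconsistent $S$ the algorithm outputs $\star$, which lies outside the support of the prior $\prior$, so using $Q = \prior$ directly gives an infinite KL on inconsistent draws. To fix this, I would pick a mixture
\[
Q\rb{h} = \rb{1 - q}\prior\rb{h} \text{ for } h \in \calH, \quad Q\rb{\star} = q,
\]
for some parameter $q \in \rb{0, 1}$ to be optimized later. With this choice, on a consistent $s$ the posterior $\posterior$ is supported on $\cb{h : \emError{h} = 0}$ with mass $\prior\rb{h}/\interprob$, and a direct computation of the KL telescopes to
\[
\KL\rb{\posterior \| Q} = \log\rb{\tfrac{1}{\interprob}} + \log\rb{\tfrac{1}{1-q}},
\]
while on an inconsistent $s$, $\posterior$ is a point mass on $\star$, so $\KL\rb{\posterior \| Q} = \log\rb{1/q}$.

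Combining these and taking the expectation over $S$ yields
\[
I\rb{S; \lrule} \le \mathbb{E}_S\bb{\log\rb{1/\interprob} \midv \cnsstnts}\bbP\rb{\cnsstnts} + \bigrb{-\bbP\rb{\cnsstnts}\log\rb{1-q} - \bbP\rb{\incnsstnts}\log q}.
\]
The remaining step is to minimize the bracketed penalty over $q$. The stationarity condition gives $q = \bbP\rb{\incnsstnts}$, at which point the penalty equals the binary entropy $H\bigrb{\bbP\rb{\cnsstnts}}$. Finally, using the elementary bound $-x\log x \le \tfrac{1}{e\ln 2}$ for $x \in [0,1]$ twice, we obtain $H\bigrb{\bbP\rb{\cnsstnts}} \le \tfrac{2}{e\ln 2}$, which yields the claimed inequality. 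The whole argument is routine information theory once the mixture trick for handling $\star$ is in place; I would expect the minor bookkeeping around the KL on inconsistent $S$ to be the only subtle point.
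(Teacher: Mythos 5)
Your proposal is correct and takes a genuinely different route from the paper. The paper proves the lemma by computing the marginal distribution $\nu$ of $A(S)$ explicitly (splitting into $\nu(\star)=\bbP(\incnsstnts)$ and $\nu(h)=d\prior(h)\pi(h)$ for $h\neq\star$, where $\pi(h)=\mathbb{E}_S[\ind{p_S>0}\ind{\calL_S(h)=0}/p_S]$), then expands $I(S;A(S))$ directly as a double sum over the joint distribution, and bounds two residual terms — $-\bbP(\incnsstnts)\log\bbP(\incnsstnts)$ and $\mathbb{E}_{h\sim\prior}[-\pi(h)\log\pi(h)\,\ind{\pi(h)>0}]$ — each by $\tfrac{1}{e\ln 2}$. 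You instead invoke the variational identity $I(S;A(S)) = \mathbb{E}_S[D_{KL}(P_{A\mid S}\,\|\,\nu)] \le \mathbb{E}_S[D_{KL}(P_{A\mid S}\,\|\,Q)]$ for an arbitrary reference $Q$, and design $Q$ as a mixture $Q=(1-q)\prior + q\,\delta_\star$ to absorb the $\star$ token. Your KL computations are correct ($D_{KL}=\log(1/p_s)+\log(1/(1-q))$ on consistent $s$; $D_{KL}=\log(1/q)$ on inconsistent $s$), and optimizing $q=\bbP(\incnsstnts)$ collapses the penalty to the binary entropy $H\bigl(\bbP(\cnsstnts)\bigr)$. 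Your route has two advantages: it avoids the bookkeeping around the exotic quantity $\pi(h)$ (which can exceed $1$, and the paper must quietly note that $-t\log t\le\tfrac{1}{e\ln 2}$ still holds for $t>1$ because the left side is then negative); and the intermediate bound $H\bigl(\bbP(\cnsstnts)\bigr)\le 1$ is actually slightly sharper than the paper's $\tfrac{2}{e\ln 2}\approx 1.06$ — you relax to match the stated constant, but you could keep the $1$. The paper's computation, by contrast, does not need the optimization-over-$q$ step and gives the two $\tfrac{1}{e\ln 2}$ terms directly. One small notational caveat: you write $\posterior$ to denote the conditional law of $A(S)$ given $S$ even when $S$ is inconsistent, whereas the paper reserves that symbol for the case $p_S>0$; what you mean — the point mass on $\star$ — is clear from context, but it is worth making explicit.
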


\begin{proof}
Recall the definition of the marginal distribution of the algorithm's
output (a hypothesis $h$) is
\[
d\nu\left(h\right)=\sum_{s}dp\left(s,h\right)\,,
\]
where $s$ are all possible realizations of a (training) sample of size $N$.

For $h=\star$, we have
$
d\nu\left(\star\right)=\mathbb{P}_{S}\left(\text{inconsistent }S\right)$.

For $h\neq\star$, since ${\cal L}_{s}\left(h\right) = 0$ implies that $s$ is consistent, 
we have
\begin{align*}
d\nu\left(h\right) & \triangleq\sum_{s}dp\left(s,h\right)
 =
 \sum_{s}
 \frac{\ind{ \calL_s \rb{h} = 0 } }{p_{s}}d{\cal P}\left(h\right)d{\cal D}^{N}\left(s\right)
 \\
 & =\sum_{s:p_{s}>0}\frac{\ind{ \calL_s \rb{h} = 0} }{p_{s}}d{\cal P}\left(h\right)d{\cal D}^{N}\left(s\right)\\
 & =d{\cal P}\left(h\right)\sum_{s:p_{s}>0}\frac{\ind{ {\cal L}_{s}\left(h\right)=0} }{p_{s}}d{\cal D}^{N}\left(s\right)\\
 & =d\mathcal{P}\left(h\right)\mathbb{E}_{S\sim{\cal D}^{N}}\left[\frac{\ind{ p_{S}>0} }{p_{S}}\ind{ {\cal L}_{S}\left(h\right)=0} \right]\,.
\end{align*}
where, for ease of notation, we use the convention that $\frac{\ind{ p_{s}>0} }{p_{s}}=0$
when $p_{s}=0$. Denoting 
\[
\pi\left(h\right)\triangleq\mathbb{E}_{S\sim{\cal D}^{N}}\left[\frac{\ind{ p_{S}>0} }{p_{S}}\ind{ {\cal L}_{S}\left(h\right)=0} \right] \,,
\]
we get
\[
d\nu\left(h\right)=d\mathcal{P}\left(h\right)\pi\left(h\right) \,.
\]
Notice that if there exists some $s\in\mathrm{supp}\left({\cal D}^{N}\right)$
such that ${\cal L}_{s}\left(h\right)=0$ then $\pi\left(h\right)>0$.
Using the definition of the mutual information:
\begin{align*}
I&
\left(S;A\left(S\right)\right)
=\sum_{s}\sum_{h\in\mathcal{H}\cup\left\{ \star\right\} }dp\left(s,h\right)\log\left(\frac{dp\left(s,h\right)}{d\nu\left(h\right)d{\cal D}\left(s\right)}\right)\\
 & =\sum_{s:p_{s}=0}dp\left(s,\star\right)\log\left(\frac{dp\left(s,\star\right)}{d\nu\left(\star\right)d{\cal D}\left(s\right)}\right)+\sum_{s:p_{s}>0}\sum_{h\in\mathcal{H}}dp\left(s,h\right)\log\left(\frac{dp\left(s,h\right)}{d\nu\left(h\right)d{\cal D}\left(s\right)}\right)
 \\
 & 
 =\sum_{s:p_{s}=0}d{\cal D}\left(s\right)\log\left(\frac{d{\cal D}\left(s\right)}{\mathbb{P}_{S}\left(\text{inconsistent }S\right)d{\cal D}\left(s\right)}\right)+
 \\
 &
 \hspace{1.4em}
 \sum_{s:p_{s}>0}\sum_{h:{\cal L}_{s}\left(h\right)=0}\frac{1}{p_{s}}d{\cal P}\left(h\right)d{\cal D}\left(s\right)\log\left(\frac{\frac{1}{p_{s}}d{\cal P}\left(h\right)d{\cal D}\left(s\right)}{d\mathcal{P}\left(h\right)\pi\left(h\right)d{\cal D}\left(s\right)}\right)
 \\
 & =\sum_{s:p_{s}=0}d{\cal D}\left(s\right)\log\left(\tfrac{1}{\mathbb{P}_{S}\left(\text{inconsistent }S\right)}\right)+
 \sum_{s:p_{s}>0}\sum_{h:{\cal L}_{s}\left(h\right)=0}\frac{1}{p_{s}}d{\cal P}\left(h\right)d{\cal D}\left(s\right)\log\left(\frac{1}{p_{s}\pi\left(h\right)}\right)
 .
\end{align*}

Simplifying each term separately, the first sum immediately simplifies to 
\begin{align*}
-\mathbb{P}_{S}\left(\text{inconsistent }S\right)\log\left(\mathbb{P}_{S}\left(\text{inconsistent }S\right)\right) \le \frac{1}{e \ln{2}} \,,
\end{align*}
and 
\begin{align*}
 & \sum_{s:p_{s}>0}\sum_{h:{\cal L}_{s}\left(h\right)=0}\frac{1}{p_{s}}d{\cal P}\left(h\right)d{\cal D}\left(s\right)\log\left(\frac{1}{p_{s}\pi\left(h\right)}\right)
 \\
 & 
 =-\sum_{s:p_{s}>0}\sum_{h:{\cal L}_{s}\left(h\right)=0}\frac{1}{p_{s}}d{\cal P}\left(h\right)d{\cal D}\left(s\right)\log\left(p_{s}\right)-
 \!
 \sum_{s:p_{s}>0}\sum_{h:{\cal L}_{s}\left(h\right)=0}\frac{1}{p_{s}}d{\cal P}\left(h\right)d{\cal D}\left(s\right)\log\left(\pi\left(h\right)\right)\\
 & =-\sum_{s:p_{s}>0}\frac{1}{p_{s}}\log\left(p_{s}\right)d{\cal D}\left(s\right)\underset{=p_{s}}{\underbrace{\sum_{h:{\cal L}_{s}\left(h\right)=0}d{\cal P}\left(h\right)}} \\
 & \quad -\sum_{s:p_{s}>0}\sum_{h:\pi\left(h\right)>0}\frac{\ind{ {\cal L}_{s}\left(h\right)=0} }{p_{s}}d{\cal P}\left(h\right)d{\cal D}\left(s\right)\log\left(\pi\left(h\right)\right)\\
 & =-\sum_{s:p_{s}>0}\frac{1}{p_{s}}\log\left(p_{s}\right)d{\cal D}\left(s\right)p_{s}-\sum_{h:\pi\left(h\right)>0}\log\left(\pi\left(h\right)\right)d{\cal P}\left(h\right)\underset{=\pi\left(h\right)}{\underbrace{\sum_{s:p_{s}>0}\frac{\ind{ {\cal L}_{s}\left(h\right)=0} }{p_{s}}d{\cal D}\left(s\right)}}\\
 & =-\sum_{s:p_{s}>0}\log\left(p_{s}\right)d{\cal D}\left(s\right)-\sum_{h:\pi\left(h\right)>0}\pi\left(h\right)\log\left(\pi\left(h\right)\right)d{\cal P}\left(h\right)\\
 & =-\mathbb{E}_{S}\left[\log\left(p_{S}\right)\ind{ p_{S}>0} \right]-\mathbb{E}_{h\sim{\cal P}}\left[\ind{ \pi\left(h\right)>0} \pi\left(h\right)\log\left(\pi\left(h\right)\right)\right]\\
 & =\mathbb{E}_{S}\left[\log\left(\frac{1}{p_{S}}\right)\mid p_{S}>0\right]\mathbb{P}_{S}\left(p_{S}>0\right)+\underset{\le1/e\ln2}{\underbrace{\mathbb{E}_{h\sim{\cal P}}\left[-\pi\left(h\right)\log\left(\pi\left(h\right)\right)\ind{ \pi\left(h\right)>0} \right]}}\\
 & \le\mathbb{E}_{S}\left[\log\left(\frac{1}{p_{S}}\right)\mid\text{consistent \ensuremath{S}}\right]\mathbb{P}_{S}\left(\text{consistent \ensuremath{S}}\right)+\frac{1}{e\ln2}\,.
\end{align*}
Putting all of this together, 
\begin{align*}
I\left(S;A\left(S\right)\right) &
\le\mathbb{E}_{S}\left[\log\left(\frac{1}{p_{S}}\right)\middle\vert\text{consistent } S \right]\mathbb{P}_{S}\left(\text{consistent } S \right) + \frac{2}{e\ln2}\,.
\end{align*}
\end{proof}

\newpage

\begin{corollary} \label{app-cor: agnostic bound with info}
The generalization of posterior sampling satisfies 
\begin{align*}
-\log \left(1-\mathbb{E}_{S,A\left(S\right)}\left[{\cal L}_{{\cal D}}\left(A\left(S\right)\right)\mid \cnsstnts \right]\right) \le\frac{\mathbb{E}_{S}\left[\log\left(\frac{1}{p_{S}}\right)\middle\vert \cnsstnts \right] + 3}{N}\,.
\end{align*}
\end{corollary}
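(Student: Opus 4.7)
\textbf{Proof plan for Corollary~\ref{app-cor: agnostic bound with info}.} The plan is to directly chain two results already proved in the appendix: the generic information-theoretic generalization bound (Lemma~\ref{app-lem: agnostic bound}) and the posterior-sampling-specific mutual information bound (Lemma~\ref{lem:posterior_info_bound}). Since Lemma~\ref{app-lem: agnostic bound} applies to any interpolating learning rule, and posterior sampling is such a rule (conditional on consistency), we may instantiate it with $A$ being the posterior sampler.

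First, I would invoke Lemma~\ref{app-lem: agnostic bound} to obtain
\[
-\log\bigrb{1 - \mathbb{E}_{S,A(S)}\bb{\exError{A(S)} \mid \cnsstnts}} \le \frac{I(S; A(S))}{N \cdot \mathbb{P}_S(\cnsstnts)}\,.
\]
Then I would upper bound the numerator via Lemma~\ref{lem:posterior_info_bound}, which gives
\[
I(S; A(S)) \le \mathbb{E}_S\bb{\log(1/p_S) \midv \cnsstnts} \cdot \mathbb{P}_S(\cnsstnts) + \frac{2}{e \ln 2}\,.
\]
Dividing by $N \cdot \mathbb{P}_S(\cnsstnts)$ and splitting the resulting fraction yields
\[
-\log\bigrb{1 - \mathbb{E}_{S,A(S)}\bb{\exError{A(S)} \mid \cnsstnts}} \le \frac{\mathbb{E}_S\bb{\log(1/p_S) \midv \cnsstnts}}{N} + \frac{2}{e \ln 2 \cdot N \cdot \mathbb{P}_S(\cnsstnts)}\,.
\]

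Finally, I would absorb the residual error term into the additive constant $3$ in the stated bound using Remark~\ref{app-rem: simplifying p consistent}, which gives $\mathbb{P}_S(\cnsstnts) \ge 1/2$. This yields
\[
\frac{2}{e \ln 2 \cdot N \cdot \mathbb{P}_S(\cnsstnts)} \le \frac{4}{e \ln 2 \cdot N} \le \frac{3}{N}\,,
\]
since $4/(e \ln 2) < 3$. Combining the two terms over the common denominator $N$ produces exactly the claimed inequality.

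There is no real obstacle here beyond bookkeeping: the two lemmas are tailored for this purpose, and the only arithmetic needed is the numeric check that $4/(e\ln 2) \le 3$ so that the $\frac{2}{e\ln 2}$ slack term (rescaled by the $1/\mathbb{P}_S(\cnsstnts)$ factor) fits inside the stated ``$+3$''. The only place where one must be careful is not to lose the factor $\mathbb{P}_S(\cnsstnts)$ cancellation in the main term of Lemma~\ref{lem:posterior_info_bound}, which is precisely what makes the leading constant in front of $\mathbb{E}_S[\log(1/p_S) \mid \cnsstnts]$ equal to $1/N$ (rather than $1/(N \mathbb{P}_S(\cnsstnts))$) in the final bound.
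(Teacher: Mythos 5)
Your proof is correct and follows essentially the same path as the paper's: chain Lemma~\ref{app-lem: agnostic bound} with Lemma~\ref{lem:posterior_info_bound}, cancel the $\mathbb{P}_S(\cnsstnts)$ factor in the leading term, and absorb the residual $\frac{2}{e\ln 2}$ slack (after dividing by $\mathbb{P}_S(\cnsstnts) \ge 1/2$ via Remark~\ref{app-rem: simplifying p consistent}) into the additive $3$. The only cosmetic difference is that the paper first bounds $\frac{2}{e\ln 2} \le 1.5$ and then multiplies by $2$, whereas you compute $\frac{4}{e\ln 2} < 3$ directly; both are valid arithmetic.
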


\begin{proof}
Combining \lemref{app-lem: agnostic bound} and \lemref{lem:posterior_info_bound}
we get 
\begin{align*}
I\left(S;A\left(S\right)\right) &\ge -N\log\big( 1 - \mathbb{E}_{S,A\left(S\right)}\left[{\cal L}_{{\cal D}}\left(A\left(S\right)\right)\mid\text{consistent } S \right]\big) \mathbb{P}_{S}\left(\text{consistent } S \right)
\end{align*}
and
\begin{align*}
I\left(S;A\left(S\right)\right) & \le\mathbb{E}_{S}\left[\log\left(\frac{1}{p_{S}}\right)\middle\vert\text{consistent } S \right]\mathbb{P}_{S}\left(\text{consistent } S \right) +\frac{2}{e\ln2}
\end{align*}
so
\begin{align*}
-N\log & \left(1-\mathbb{E}_{S,A\left(S\right)}\left[{\cal L}_{{\cal D}}\left(A\left(S\right)\right)\mid\text{consistent } S \right]\right) \mathbb{P}_{S}\left(\text{consistent } S \right) \\
& \qquad \le\mathbb{E}_{S}\left[\log\left(\frac{1}{p_{S}}\right)\middle\vert\text{consistent } S \right]\mathbb{P}_{S}\left(\text{consistent } S \right)+\frac{2}{e\ln2}
\end{align*}
and finally, using $\nicefrac{2}{e\ln2} \le 1.5$ and recalling \ref{app-rem: simplifying p consistent} we get
\begin{align*}
-\log \left(1-\mathbb{E}_{S,A\left(S\right)}\left[{\cal L}_{{\cal D}}\left(A\left(S\right)\right)\mid\text{consistent } S \right]\right) \le\frac{\mathbb{E}_{S}\left[\log\left(\frac{1}{p_{S}}\right)\middle\vert\text{consistent } S \right] + 3}{N}\,.
\end{align*}
\end{proof}

\newpage

\newpage

Let $\bar{h}$ be a network with depth $L$, dimensions $\bar{\dims}$, and parameters $\bar{\btheta} = \cb{\bar{\mathbf{W}}_l, \bar{\mathbf{b}}_l, \bar{\boldsymbol{\gamma}}_l} \in \btnparams{\bar{\dims}}$. 
\linebreak
Let $\dims \ge \bar{\dims}$.
Similar to $\btnparams{\dims; h_1, h_2}$ introduced in \lemref{app-lem: nn of xor of nns}, let $\btnparams{\dims; \bar{h}} \subset \btnparams{\dims}$ be the set of parameters $\btheta$ that implement $\bar{h}$ by setting a subset of the parameters to be equal to $\bar{\btheta}$, and zero the effect of redundant neurons by setting their bias and neuron scaling terms to be 0.
This is illustrated in \figref{app-fig: subnetwork}.
In particular, in our notation, $\btnparams{\dims; h_1, h_2} = \btnparams{\dims; h_1 \oplus h_2}$.

\begin{figure}[!ht]
    \centering
    \includegraphics[width=0.5\linewidth]{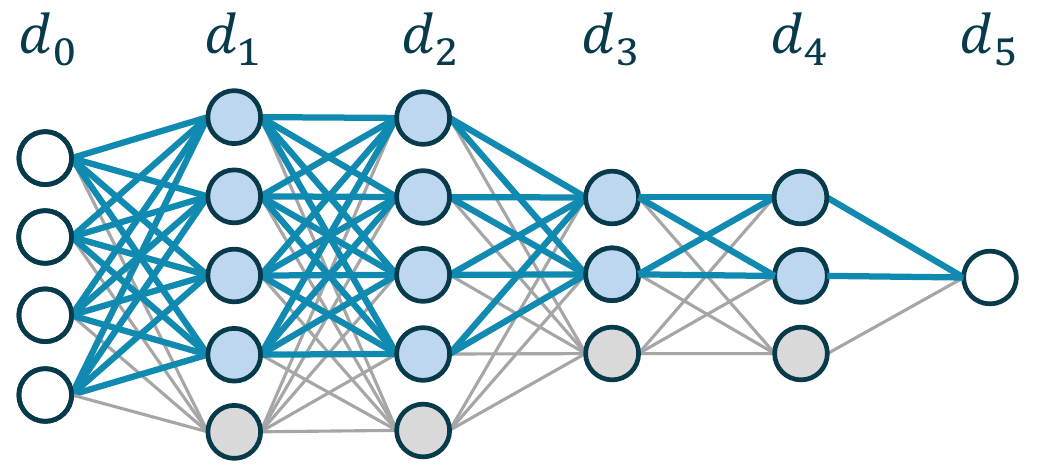}
    \caption{\textbf{Implementing a narrow network with a wider network.} Blue edges represent parameters set to equal the parameters of $\bar{h}$, gray nodes represent zero neuron scaling, and gray edges represent unconstrained parameters. }
    \label{app-fig: subnetwork}
\end{figure}

\begin{lemma} \label{app-lem: log constrained ratio bound}
Let $h$ be a network with depth $L$ and dimensions $\bar{\dims}$.
Let $\dims \ge \bar{\dims}$.
Then 
\begin{align*}
- \log \rb{\frac{\abs{\btnparams{\dims; \bar{h}}}}{\abs{\btnparams{\dims}}}} \le w \rb{\bar{\dims}} + O \rb{n \rb{\dims} \cdot \log \rb{\dimsmax + d_0}} \,.
\end{align*}
\end{lemma}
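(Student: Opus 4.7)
The plan is a direct counting argument comparing the cardinalities of the two parameter sets. Since $\btnparams{\dims; \bar{h}}$ is defined by pinning down a specific sub-block of the parameters (namely those corresponding to an embedded copy of $\bar{\btheta}$) while allowing the rest to vary subject only to the vanishing-scaling constraint on redundant neurons, the ratio $|\btnparams{\dims; \bar{h}}|/|\btnparams{\dims}|$ factors cleanly over layers and across parameter types.

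First I would compute $|\btnparams{\dims}|$ explicitly from Definition~\ref{def:ftn}:
\[
|\btnparams{\dims}| \;=\; \prod_{l=1}^{L} 2^{d_l d_{l-1}} \cdot (2d_{l-1})^{d_l} \cdot 3^{d_l},
\]
corresponding to the weights $\mathbf{W}^{(l)}\in\{0,1\}^{d_l\times d_{l-1}}$, biases $\mathbf{b}^{(l)}\in\{-d_{l-1}+1,\dots,d_{l-1}\}^{d_l}$, and scalings $\bgamma^{(l)}\in\{-1,0,1\}^{d_l}$. Next I would count $|\btnparams{\dims; \bar{h}}|$ by separating, at each layer $l$, the $\bar{d}_l$ ``active'' neurons (those implementing $\bar{h}$) from the $d_l-\bar{d}_l$ ``redundant'' ones: the $\bar{d}_l\bar{d}_{l-1}$ weights linking consecutive active blocks are pinned to the entries of $\bar{\mathbf{W}}^{(l)}$ (contributing a factor $2^{d_ld_{l-1}-\bar d_l\bar d_{l-1}}$ of free bits); the $\bar{d}_l$ active biases are pinned to $\bar{\mathbf{b}}^{(l)}$ (contributing $(2d_{l-1})^{d_l-\bar d_l}$); and every one of the $d_l$ scaling coefficients is pinned (the active ones to $\bar{\bgamma}^{(l)}$, the redundant ones to $0$), contributing a factor of $1$.

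Taking the ratio and then $-\log_2$, the unconstrained factors cancel and I obtain
\[
-\log\frac{|\btnparams{\dims;\bar h}|}{|\btnparams{\dims}|}
\;=\;
\sum_{l=1}^{L}\Bigl[\bar d_l \bar d_{l-1} \;+\; \bar d_l \log(2d_{l-1}) \;+\; d_l\log 3\Bigr]
\;=\;
w(\bar{\dims}) \;+\; \sum_{l=1}^L \bar d_l\log(2d_{l-1}) \;+\; (\log 3)\,n(\dims).
\]
The final step is to bound the two remaining sums. Using $d_{l-1}\le \max\{d_0,\dimsmax\}$ and $\sum_l \bar d_l \le n(\bar{\dims})\le n(\dims)$, I get $\sum_l \bar d_l \log(2d_{l-1}) \le n(\dims)\cdot O(\log(\dimsmax+d_0))$, while $(\log 3)\,n(\dims) = O(n(\dims))$. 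Combining yields the claimed $w(\bar{\dims}) + O(n(\dims)\log(\dimsmax+d_0))$.

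There is no serious obstacle; the only subtlety is bookkeeping the fact that, once redundant scalings are set to $0$, every configuration in $\btnparams{\dims;\bar{h}}$ genuinely computes $\bar h$ irrespective of the values of the ``gray'' weights and biases (as depicted in Figure~\ref{fig:network_overparameterized} and formalized in Lemma~\ref{app-lem: nn of xor of nns}); this is what allows us to treat those coordinates as fully free when counting. The rest is arithmetic.
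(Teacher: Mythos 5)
Your approach is identical to the paper's: enumerate the pinned parameters in $\btnparams{\dims;\bar h}$, divide by $|\btnparams{\dims}|$, take $-\log$, and bound the lower-order terms by $O(n(\dims)\log(\dimsmax+d_0))$. The arithmetic you arrive at, $w(\bar\dims)+\sum_l \bar d_l\log(2d_{l-1})+(\log 3)\,n(\dims)$, matches the paper's displayed expression exactly.

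One bookkeeping claim in your closing paragraph is, however, not right for the model as defined. You assert that once the redundant scalings $\gamma_i$ are set to $0$, "every configuration in $\btnparams{\dims;\bar h}$ genuinely computes $\bar h$ irrespective of the values of the gray weights \emph{and biases}." Under Definition~\ref{def:ftn}, the scaling is applied \emph{before} the threshold but \emph{not} to the bias: $h^{(l)}=\ind{\bgamma^{(l)}\odot(\mathbf W^{(l)}h^{(l-1)})+\mathbf b^{(l)}>\mathbf 0}$. A redundant neuron with $\gamma_i=0$ therefore outputs $\ind{b_i>0}$, which equals $1$ whenever $b_i>0$; that $1$ then multiplies the unconstrained ``gray'' weights feeding forward into the active subnetwork and can corrupt the computation of $\bar h$. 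This is precisely why the paper defines $\btnparams{\dims;\bar h}$ to pin the redundant \emph{biases} to $0$ as well as the scalings. (Your claim would be correct under the post-activation oBTN scaling of Definition~\ref{def:out-ftn}, but that is not the model in which the lemma is stated.) Consequently, in the count you should treat \emph{all} $n(\dims)$ biases as constrained, replacing $\sum_l\bar d_l\log(2d_{l-1})$ by $\sum_l d_l\log(2d_{l-1})$. This does not change the conclusion, since $\sum_l d_l\log(2d_{l-1})\le n(\dims)\log(2\dimsmax+2d_0)=O(n(\dims)\log(\dimsmax+d_0))$, so the final bound is unaffected; but without the fix, $\btnparams{\dims;\bar h}$ as you have counted it contains parameter settings that do not implement $\bar h$, which would break the downstream use of the lemma in bounding $p_S$.
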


\begin{proof}
We prove this by counting the number of constrained parameters in $\btnparams{\dims; \bar{h}}$.
The number of constrained weights is 
\[
\bar{d_1} d_0 +  \sum_{l=2}^{L} \bar{d}_{l} \bar{d}_{l-1} \,,
\]
which is exactly $w \rb{\bar{\dims}}$.
In addition, there are $n\left(\dims\right)$
constrained bias terms, and $n\left(\dims\right)$ constrained scaling terms. 
In total, after accounting for the quantization of each parameter, this means that 
\[
    \frac{\left|\btnparams{\dims; \bar{h}}\right|}{\left|\btnparams{\dims}\right|} \ge \left(\underbrace{2^{w\left(\bar{\dims}\right)}}_{\text{weights}} \cdot \underbrace{3^{n\left(\dims\right)}}_{\text{scaling terms}} \cdot \prod_{l=1}^{L} \left(2d_{l-1}\right)^{d_{l}}\right)^{-1}
\]
so 
\begin{align*}
&
-\log\left(\frac{\left|\btnparams{\dims; \bar{h}}\right|}{\left|\btnparams{\dims}\right|}\right) 
\\
&
\le w\big(\bar{\dims}\big) + n\big(\dims\big) \cdot \log3 + \sum_{l=1}^{L} \bar{d}_l \cdot \log\left(2d_{l-1}\right)
\\
& 
\le w\big(\bar{\dims}\big) + n\big(\dims\big) \cdot \log3 + n\rb{\dims} \cdot \log\left(2\dimsmax + 2d_0 \right) \\
& = w \rb{\bar{\dims}} + O \rb{n \rb{\dims} \cdot \log \rb{\dimsmax + d_0}} \,.
\end{align*}
\end{proof}

\newpage

Combining \lemref{app-lem: log constrained ratio bound} with \asmref{asm:teacher}, and \corref{cor: mem constnt data} gives the following lemma.

\begin{lemma}
\label{lem:posterior_log_upper_bound}
Consider a distribution $\calD$ induced by a noisy teacher model of depth $L^\star$ and widths $\tdims$ (\asmref{asm:teacher}) with a noise level of $\errorRate < 1/2$.
Let $S\sim \calD^{N}$ be a training set with effective training set label noise $\hat{\varepsilon}_{\mathrm{tr}}$ as defined in \eqref{eq: epstrain def}.
Then there exist constants $c_1, c_2 > 0$ such that
for any student network of depth 
${L \ge \max \cb{L^\star, 14} + 2}$
and widths $\dims\in \bbN^{L}$ satisfying
$$
\forall l = 1, \dots, L^{\star}\!-\!1\quad
d_l \ge d_l^\star + N^{3/4} \cdot \rb{\log N}^{c_1} + c_2 \cdot d_0 \cdot \log \rb{N} \,,
$$
it holds for posterior sampling with a uniform prior over parameters that 
\begin{align*}
\mathbb{E}_{S}& \left[\log\left(\frac{1}{p_{S}}\right)\mid\text{consistent \ensuremath{S}}\right] \\
& 
\quad \le 
w\left(\dims^{\star}\right) + N \cdot H\left(\hat{\varepsilon}_{\mathrm{tr}}\right) + 2 n \rb{\tdims} N^{3/4} \mathrm{polylog} N \\
& \qquad + O\rb{d_0 \rb{d_0 + n\rb{\tdims}} \cdot \log\rb{N} + n\rb{\dims} \cdot \log \rb{\dimsmax + d_0}}\,.
\end{align*}
\end{lemma}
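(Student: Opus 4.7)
The plan is to upper-bound $\log(1/p_S)$ pointwise on every consistent training set $S$ by exhibiting a fixed interpolator $\bar{h}_S$ that admits many parameter representations inside $\btnparams{\dims}$, and then to take a conditional expectation.

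First, for each consistent $S$, Corollary~\ref{cor: mem constnt data} produces a network $\bar{h}_S$ of depth $\max\{L^\star,14\}+2\le L$ and widths $\bar{\dims}_S$ whose maximum width is at most $\tdimsmax+N^{3/4}\polylog N+O(d_0\log N)$, and whose number of weights, after bounding the $H(\emError{h^\star})^{3/4}$ factor by $1$, is at most
\[
w(\bar{\dims}_S)\;\le\; w(\tdims)+N\cdot H(\emError{h^\star})+2n(\tdims)\,N^{3/4}\polylog N+O\!\big(d_0(d_0+n(\tdims))\log N\big).
\]
By choosing $c_1,c_2$ in the hypothesis large enough (and padding with identity layers via Remark~\ref{app-rem: elongating nns} to align $\bar{h}_S$ to depth exactly $L$), the width assumption on $\dims$ forces $\dims\ge\bar{\dims}_S$ coordinatewise.

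With $\dims\ge\bar{\dims}_S$, every parameter assignment in $\btnparams{\dims;\bar{h}_S}$ realizes precisely $\bar{h}_S$ and therefore interpolates $S$; consequently $p_S\ge |\btnparams{\dims;\bar{h}_S}|/|\btnparams{\dims}|$, and Lemma~\ref{app-lem: log constrained ratio bound} gives, for every consistent $S$,
\[
\log(1/p_S)\;\le\; w(\bar{\dims}_S)+O\!\big(n(\dims)\log(\dimsmax+d_0)\big).
\]
Taking $\mathbb{E}[\,\cdot\mid\cnsstnts]$ on both sides, the only random quantity on the right-hand side is $H(\emError{h^\star})$; concavity of the binary entropy together with the definition $\epstr=\mathbb{E}[\emError{h^\star}\mid\cnsstnts]$ yields $\mathbb{E}[H(\emError{h^\star})\mid\cnsstnts]\le H(\epstr)$ by Jensen's inequality, and the stated bound follows upon collecting terms.

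The main obstacle I expect is the seemingly innocuous first step: the width hypothesis is phrased only for $l\le L^\star-1$, whereas $\bar{h}_S$ has depth $\max\{L^\star,14\}+2$, so one must verify layer by layer---through the teacher-like prefix, the depth-$14$ memorizing sub-network from Theorem~\ref{thm: partial fctn net cxty}, and the two XOR merging layers of Lemma~\ref{lem: xor of two nns}---that $\dims$ dominates $\bar{\dims}_S$. The slack provided by the $N^{3/4}(\log N)^{c_1}$ and $c_2 d_0\log N$ terms is calibrated to absorb the memorization widths and the constant-size XOR overhead, but fixing the right constants and handling the layers $l\ge L^\star$ (tacit in the hypothesis) is the bookkeeping-heavy part of the argument.
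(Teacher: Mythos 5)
Your proof is correct and follows essentially the same route as the paper: lower-bound $p_S$ by $|\btnparams{\dims;\bar{h}_S}|/|\btnparams{\dims}|$ with $\bar{h}_S = h^\star \oplus \tilde{h}_S$ from Corollary~\ref{cor: mem constnt data}, invoke Lemma~\ref{app-lem: log constrained ratio bound}, and then take the conditional expectation using Jensen's inequality on the concave binary entropy. The paper also implicitly drops the $H(\emError{h^\star})^{3/4}$ factor (as the remark following the lemma explains) and, like your proposal, leaves the layer-by-layer width verification to the choice of constants $c_1,c_2$.
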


\begin{remark}
Unlike the bounds for min-size interpolators, there is no $H\rb{\epstr}^{3/4}$ term multiplying the $N^{3/4}$ term.
This is because the architecture of random interpolators is fixed, so in our setting we must assume that it is wide enough in order to guarantee interpolation of any noisy training set.
\end{remark}

\begin{proof}
Notice that for posterior sampling with uniform distribution over parameters, the interpolation probability $p_S$ can be lower bounded as
\begin{align*}
p_S \ge \frac{\left|\btnparams{\dims; h^\star \oplus \tilde{h}_S} \right|}{\left|\btnparams{\dims}\right|} \end{align*}
and therefore
\begin{align*}
\log \rb{\frac{1}{p_S}} \le - \log \rb{\frac{\left|\btnparams{\dims; h^\star \oplus \tilde{h}_S} \right|}{\left|\btnparams{\dims}\right|}}\,.
\end{align*}
Then, using the bounds from \lemref{app-lem: log constrained ratio bound} with the one from \corref{cor: mem constnt data}
\begin{align*}
\log\left(\frac{1}{p_{S}}\right) & \le w\rb{\tdims} + N \cdot H\rb{\emError{h^\star}} + 2 n \rb{\tdims} N^{3/4} \mathrm{polylog} N \\
& \quad + O \rb{ d_0 \rb{d_0 + n \rb{\tdims}} \cdot \log N + n \left( \dims \right) \cdot \log \left( \dimsmax + d_0 \right) } \,.
\end{align*}
By taking the expectation and using Jensen's inequality with the concave $H$ we arrive at 
\[
\mathbb{E}_{S}\left[H\left(\emError{h^\star}\right)\mid\text{consistent \ensuremath{S}}\right]
\le
H\left(\mathbb{E}_{S}\left[\emError{h^\star}\mid\text{consistent \ensuremath{S}}\right]\right) = H\left(\hat{\varepsilon}_{\mathrm{tr}}\right) \,.
\]
Hence
\begin{align*}
&
\mathbb{E}_{S}\left[\log\left(\frac{1}{p_{S}}\right)
\,\middle|\,
\text{consistent \ensuremath{S}}\right] 
\\
&
\le w\left(\dims^{\star}\right) + N \cdot H\left(\hat{\varepsilon}_{\mathrm{tr}}\right) + 2 n \rb{\tdims} N^{3/4} \mathrm{polylog} N \\
& \quad + O\rb{d_0 \rb{d_0 + n\rb{\tdims}} \cdot \log\rb{N} + n\rb{\dims} \cdot \log \rb{\dimsmax + d_0}}\,.
\end{align*}
\end{proof}

\pagebreak

\begin{recall}[\thmref{thm: posterior sampling bound}]
Consider a distribution $\calD$ induced by a noisy teacher model of depth $L^\star$ and widths $\tdims$ (\asmref{asm:teacher}) with a noise level of $\errorRate < 1/2$.
Let $S\sim \calD^{N}$ be a training set such that $N=o({\sqrt{1/\Dmax}})$.
Then,
there exist constants $c_1, c_2 > 0$ such that
for any student network of depth 
${L \ge \max \cb{L^\star, 14} + 2}$
and widths $\dims\in \bbN^{L}$ holding 
\begin{align} \label{app-eq: sufficient width}
\forall l = 1, \dots, L^{\star}\!-\!1\quad
d_l \ge d_l^\star + N^{3/4} \cdot \rb{\log N}^{c_1} + c_2 \cdot d_0 \cdot \log \rb{N} \,,
\end{align}
the generalization error of posterior sampling satisfies the following.

\begin{itemize}[leftmargin=5mm]\itemsep.8pt
    \item \textbf{Under arbitrary label noise},
\begin{align*}
\mathbb{E}_{S,A(S)}\left[{\cal L}_{{\cal D}}\left(A\left(S\right)\right)\right]
& \le
1 - 2^{-H\left(\varepsilon^{\star}\right)} + 2 N^2 \Dmax + O \rb{ C_{\mathrm{rand}} \rb{N} } \,.
\end{align*}

\item \textbf{Under independent label noise,}
\begin{align*}
&\left|\mathbb{E}_{S,A(S)}\left[{\cal L}_{{\cal D}}\left(A\left(S\right)\right)\right]-2\varepsilon^{\star}
\left(1\!-\!\varepsilon^{\star}\right)\right|
\\
&
\le
\left(1-2\varepsilon^{\star}\right)
\sqrt{\frac{O \rb{ C_{\mathrm{rand}}\rb{N}} + \mathbb{P}\left(\text{inconsistent \ensuremath{S}}\right)}{\mathbb{P}\left(\text{consistent \ensuremath{S}}\right)}}
+ \frac{\rb{N - 1} \Dmax}{3}
+ \bbP \rb{\incnsstnts} \,,
\end{align*}
\end{itemize}
where 
\begin{align*}
C_{\mathrm{rand}}\rb{N} = \frac{n\rb{\tdims} \!\cdot\! \mathrm{polylog} \rb{N}}{\sqrt[4]{N}} +  \frac{w\left(\dims^{\star}\right) \!+\! d_0 \rb{d_0 + n \rb{\tdims}} \!\cdot\! \log\rb{N} \!+\! n\rb{\dims} \!\cdot\! \log \rb{\dimsmax \!+\! d_0}}{N} \,.
\end{align*}
\end{recall}

\bigskip

\begin{remark}
The bound shown in \secref{sec:posterior-sampling} is found by bounding $\bbP \rb{\incnsstnts}$ as in \lemref{app-lem: data consistency upper bound}.
Assuming that ${N = \omega \rb{ n\rb{\tdims}^4 \mathrm{polylog} \rb{n \rb{\tdims}} + d_0^2 \log d_0}}$ we can deduce that $N = \omega \rb{ w \rb{\tdims}}$ as well since
$$
w \rb{\tdims} \le \rb{ n\rb{\tdims} + d_0}^2 \le 4 \rb{\max \cb{n \rb{\tdims}, d_0}}^2 \,.
$$
Together with ${N = o \rb{\sqrt{1 / \Dmax}}}$ we get the desired form of the bounds.
\end{remark}

\bigskip
\begin{proof}
\corref{cor: mem constnt data} implies that there exist $c_1, c_2 > 0$ such that a student NN satisfying \eqref{app-eq: sufficient width} can interpolate any consistent dataset, and so posterior sampling is interpolating for all consistent datasets.

We start by proving the bound for arbitrary label noise.
First, we notice that 
\begin{align*}
\epstr &= \bbP(Y_1 \neq h^{\star}(X_1) \mid \cnsstnts) = \frac{\bbP(Y_1 \neq h^{\star}(X_1), \cnsstnts)}{\bbP \rb{\cnsstnts}} \\
& 
\leq \frac{\bbP(Y_1 \neq h^{\star} (X_1))}{\bbP\rb{\cnsstnts}}= \frac{\errorRate}{\bbP \rb{\cnsstnts}}
\,.
\end{align*}

The entropy function $H$ is increasing in $\bb{0, \frac{1}{2}}$ and achieves its maximum at $\frac{1}{2}$, so together with the inequality above, we get,
\begin{align*}
H \rb{\epstr} 
&
\le 
H \rb{ 
\min\cb{\tfrac{\errorRate}{\bbP \rb{\cnsstnts}}, 
\tfrac{1}{2}}
}
=
H \rb{ 
\errorRate
+
\min\bigcb{\tfrac{\errorRate}{\bbP \rb{\cnsstnts}}-\errorRate, 
\tfrac{1}{2}-\errorRate}
}
\\
&
=
H \Bigrb{ 
\errorRate
+
\underbrace{
\min\bigcb{\errorRate\tfrac{\bbP \rb{\incnsstnts}}{\bbP \rb{\cnsstnts}}, 
\tfrac{1}{2}-\errorRate}
}_{\triangleq \Delta}
\,
}
=
H \rb{ 
\errorRate
+
\Delta
}\,.
\end{align*}
Employing the concavity of the entropy function, we get,
\begin{align*}
H \rb{\epstr} 
&
\le 
H \rb{ 
\errorRate
+
\Delta
}
\le
H(\errorRate)
+
H'(\errorRate)
\cdot \Delta
\le 
H(\errorRate)
+
\underbrace{
H'(\errorRate)
\cdot 
\errorRate}_{
\le \tfrac{1}{2},
\text{algebraically}}
\cdot 
\frac{\bbP \rb{\incnsstnts}}{\bbP \rb{\cnsstnts}} \,.
\end{align*}
By combining the above with 
\corref{app-cor: agnostic bound with info}, \lemref{lem:posterior_log_upper_bound}, we have that
\begin{align*}
 &-\log \left(1-\mathbb{E}_{\left(S,A\left(S\right)\right)}\left[{\cal L}_{{\cal D}}\left(A\left(S\right)\right)
\mid
\text{consistent \ensuremath{S}}\right]\right) 
\le 
\frac{\mathbb{E}_{S}\left[\log\left(\hfrac{1}{p_{S}}\right)\mid
\text{consistent } S \right] + 3}{N} 
\\
&
\qquad  
\le H \rb{\epstr} + \frac{1}{N} \Big( w\left(\dims^{\star}\right) + N \cdot H\left(\hat{\varepsilon}_{\mathrm{tr}}\right) + 2 n \rb{\tdims} N^{3/4} \mathrm{polylog} N \\
& \qquad \quad + O\rb{d_0 \rb{d_0 + n\rb{\tdims}} \cdot \log\rb{N} + n\rb{\dims} \cdot \log \rb{\dimsmax + d_0}} \Big)
\\
& 
\quad
\quad
\le H\left( \epstr \right)  + 
O 
\rb{\tfrac{n\rb{\tdims} \cdot \mathrm{polylog} \rb{N}}{\sqrt[4]{N}} + \tfrac{w\left(\dims^{\star}\right) + d_0 \rb{d_0 + n \rb{\tdims}} \cdot \log\rb{N} + n\rb{\dims} \cdot \log \rb{\dimsmax + d_0}}{N}} \!
\\
& 
\qquad
\le 
H(\errorRate)
+
\tfrac{\bbP \rb{\incnsstnts}}{2\bbP \rb{\cnsstnts}} \\
& \qquad \quad + O 
\rb{\tfrac{n\rb{\tdims} \cdot \mathrm{polylog} \rb{N}}{\sqrt[4]{N}} + \tfrac{w\left(\dims^{\star}\right) + d_0 \rb{d_0 + n \rb{\tdims}} \cdot \log\rb{N} + n\rb{\dims} \cdot \log \rb{\dimsmax + d_0}}{N}} \\
& 
\qquad 
= H(\errorRate) + \tfrac{\bbP \rb{\incnsstnts}}{2\bbP \rb{\cnsstnts}} + O \rb{C_{\mathrm{rand}} \rb{N}} \,.
\end{align*}

Rearranging the inequality results in 
\begin{align*}
\mathbb{E}_{\left(S,A\left(S\right)\right)} & \left[{\cal L}_{{\cal D}}\left(A\left(S\right)\right)
\mid
\text{consistent \ensuremath{S}}\right]
\\
&
\le 1-
2^{-H\left(\varepsilon^{\star}\right) - \frac{\bbP \rb{\incnsstnts}}{2 \bbP \rb{\cnsstnts}}
- O\rb{C_{\mathrm{rand}} \rb{N}}}
\end{align*}
Then, using \lemref{lem:entropy_power}, we get,
\begin{align*}
 &
 \mathbb{E}_{\left(S,A\left(S\right)\right)}\left[{\cal L}_{{\cal D}}\left(A\left(S\right)\right)
\mid
\text{consistent \ensuremath{S}}\right]
\\
&
\le
1-
2^{-H\left(\varepsilon^{\star}\right)}
+ \frac{\bbP \rb{\incnsstnts}}{2 \bbP \rb{\cnsstnts}} + O\rb{C_{\mathrm{rand}} \rb{N}} \,.
\end{align*}

Repeating the argument from the proof of \thmref{thm: minsize interpolator bound}, since for an RV $X$ in $[0,1]$ and a binary RV $Y$ we have
$$\mathbb{E}[X]=\mathbb{E}[X\mid Y]\underbrace{\mathbb{P}(Y)}_{\le 1}
+\underbrace{\mathbb{E}[X\mid \neg Y]}_{\le 1}\mathbb{P}(\neg Y)
\le 
\mathbb{E}[X\mid Y] +
\mathbb{P}(\neg Y)
\,,
$$
we have,
\begin{align*}
 &
 \mathbb{E}_{\left(S,A\left(S\right)\right)}\left[{\cal L}_{{\cal D}}\left(A\left(S\right)\right)\right]
 \le
 \mathbb{E}_{\left(S,A\left(S\right)\right)}\left[{\cal L}_{{\cal D}}\left(A\left(S\right)\right)
\mid
\text{consistent \ensuremath{S}}\right]
 +
\mathbb{P}\left(\text{inconsistent \ensuremath{S}}\right)
\\
&
\le
1-
2^{-H\left(\varepsilon^{\star}\right)}
+ \frac{\bbP \rb{\incnsstnts}}{2 \bbP \rb{\cnsstnts}} +
\mathbb{P}\left(\text{inconsistent \ensuremath{S}}\right)
+ O\rb{C_{\mathrm{rand}} \rb{N}}
\\
& \le 1-
2^{-H\left(\varepsilon^{\star}\right)}
+ 2
\frac{\bbP \rb{\incnsstnts}}{ \bbP \rb{\cnsstnts}}
+
O \rb{ C_{\mathrm{rand}} \rb{N} } \\
& \le 1 - 2^{-H\left(\varepsilon^{\star}\right)} +
2 \frac{\frac{1}{2} N^2 \Dmax}{1 - \frac{1}{2} N^2 \Dmax}
+
O \rb{ C_{\mathrm{rand}} \rb{N} } \\
& \le 1 - 2^{-H\left(\varepsilon^{\star}\right)} + 2 N^2 \Dmax + O \rb{ C_{\mathrm{rand}} \rb{N} }
\end{align*}
where in the last inequality we used $t / \rb{1 - t} \le 2t$ for $t \in \bb{0, 1/2}$.

\bigskip

Moving on to the independent noise setting, we combine \lemref{lem:posterior_info_bound}, \lemref{lem:posterior_log_upper_bound}, and $\epstr \le \errorRate < \frac{1}{2}$ from \lemref{lem:eps-tr-star}, to bound the mutual information as
\begin{align*}
I\left(S;A\left(S\right)\right) 
& \le
~\mathbb{E}_{S}\left[\log\left(\frac{1}{p_{S}}\right)\mid\text{consistent } S \right]
\overbrace{
\mathbb{P}_{S}\left(\text{consistent } S \right)
}^{\le 1}
+
\frac{2}{e\ln2}
\\
&
\le
~\mathbb{E}_{S}\left[\log\left(\frac{1}{p_{S}}\right)\mid\text{consistent } S \right]
+ 1.1
\\
&
\le~
N \cdot H\left(\errorRate\right) + O \rb{N \cdot C_{\mathrm{rand}} \rb{N}}
\,.
\end{align*}

Plugging the above into $C(N)$ of \lemref{app-lem: generalization bound conditional independent noise},
we get,
\begin{align*}
&
C\left(N\right) =\frac{I\left(S;A\left(S\right)\right)-N\cdot H\left(\varepsilon^{\star}\right)+N\cdot\mathbb{P}_{S\sim{\cal D}^{N}}\left(\text{inconsistent \ensuremath{S}}\right)}{N\cdot\mathbb{P}\left(\text{consistent \ensuremath{S}}\right)}
\\ 
&
\le
\tfrac{N \cdot H\left(\errorRate\right) + O\rb{N \cdot C_{\mathrm{rand}} \rb{N}}
-N\cdot H\left(\varepsilon^{\star}\right)
+
N\cdot\mathbb{P}
\left(\text{inconsistent \ensuremath{S}}\right)}{N\cdot\mathbb{P}\left(\text{consistent \ensuremath{S}}\right)}
\\
& = \frac{ O \rb{ C_{\mathrm{rand}} \rb{N}} + \mathbb{P} \left(\text{inconsistent \ensuremath{S}}\right)}{\mathbb{P} \left(\text{consistent \ensuremath{S}}\right)} \,.
\end{align*}
Then we continue as in the arbitrary noise setting to get the desired bound.
\end{proof}

\newpage

\section{Alignment with Dale's Law} \label{app-sec: dale}

In this section, we show that our results apply to a model resembling ``Dale's Law'' \citep{STRATA1999349}, \ie such that for each neuron, all outgoing weights have the same sign.
To this end, we define the following model, in which the main difference from \defref{def:ftn} is that neuron scaling is applied after the threshold activation.

\begin{definition}[Binary threshold networks with outgoing scaling]  \label{def:out-ftn}
For a depth $L$, widths ${\dims=\rb{d_1,\dots,d_L}}$, input dimension $d_0$, a scaled-neuron fully connected binary threshold NN with outgoing weight scaling (oBTN), is a mapping $\btheta \mapsto g_{\btheta}$ such that $g_{\btheta} : \cb{0, 1}^{d_0} \to \cb{-1, 0, 1}^{d_L}$, parameterized by
\begin{align*}
 \btheta = \left\{ \mathbf{W}^{\left(l\right)},\mathbf{b}^{\left(l\right)},\bgamma^{\left(l\right)} \right\}_{l=1}^{L} \,,
\end{align*}
where for every layer $l\in \bb{L}$,
\begin{align*}
    \mathbf{W}^{\left(l\right)} \!\in \calQ^W_l \!=\! \cb{0,1}^{d_{l}\times d_{l-1}}, \; \bgamma^{\left(l\right)} \!\in \calQ_l^{\gamma} \!=\! \cb{-1,0,1}^{d_{l}}, \; \mathbf{b}^{\left(l\right)} \!\in \calQ_l^b \!=\! \cb{-d_{l-1} + 1, \dots, d_{l-1}}^{d_{l}} \,.
\end{align*}
This mapping is defined recursively as $g_\theta \rb{\mathbf{x}} = g^{\rb{L}} \rb{\mathbf{x}}$ where
\begin{align*}
    g^{(0)} \left(\mathbf{x}\right) &= \mathbf{x} \,, \\
    \forall l\in\bb{L} \quad g^{(l)} \left(\mathbf{x}\right) &= \bgamma^{\rb{l}}\odot \ind{ \mathbf{W}^{\left(l\right)} g^{(l-1)} \left(\mathbf{x}\right) + \mathbf{b}^{\left(l\right)} > \mathbf{0} } \,.
\end{align*}
\end{definition}

\begin{lemma} \label{app-lem: obtn to btn}
    Let $g_{\btheta}$ be an oBTN as in \defref{def:out-ftn}.
    Then there exists a BTN $h_{\btheta^\prime}$ with the same dimensions and $\mathbf{b}^{\prime \rb{l}} \in \calQ_l^{2b} \triangleq \cb{-2d_{l-1} + 1, \dots, 2 d_l}$ such that $h_{\btheta^\prime} \equiv g_{\btheta} + s$ for $s \in \cb{0, 1}^{d_L}$ such that for all $i = 1, \dots, d_L$, $s_i = 1$ only if $\gamma^{(L)}_i = -1$.
\end{lemma}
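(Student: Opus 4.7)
The plan is to construct $\btheta'$ layer-by-layer with the same weight matrices $\mathbf{W}^{(l)}$ as $\btheta$, but modified biases and neuron scalars, so that the BTN hidden activations $h^{(l)}$ encode the oBTN hidden activations $g^{(l)}$ via the coordinatewise bijection $h^{(l)}_j = g^{(l)}_j + \ind{\gamma^{(l)}_j = -1}$. This sends the value $-1$ of an oBTN neuron (which can only occur when $\gamma_j = -1$) to $0$, the value $+1$ to $1$, and $0$ to either $0$ or $1$ depending on $\gamma_j$. Since $h^{(l)}_j \in \{0,1\}$ in each case, this is a legitimate BTN activation. The lemma's claim that $h_{\btheta'} = g_\btheta + s$ with $s_i = \ind{\gamma^{(L)}_i = -1}$ is then exactly the output-layer instance of this bijection, so it suffices to propagate the invariant through the layers by induction on $l$.

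For the inductive step, assume $h^{(l-1)} = g^{(l-1)} + s^{(l-1)}$ with $s^{(l-1)}_j = \ind{\gamma^{(l-1)}_j = -1}$. Writing $k^{(l-1)}_i \triangleq \sum_j W^{(l)}_{ij} \ind{\gamma^{(l-1)}_j = -1}$, we get
\[
(\mathbf{W}^{(l)} g^{(l-1)})_i = (\mathbf{W}^{(l)} h^{(l-1)})_i - k^{(l-1)}_i,
\]
so the oBTN pre-activation $(\mathbf{W}^{(l)} g^{(l-1)})_i + b^{(l)}_i$ equals $(\mathbf{W}^{(l)} h^{(l-1)})_i + \tilde{b}_i$ for $\tilde{b}_i \triangleq b^{(l)}_i - k^{(l-1)}_i \in \{-2d_{l-1}+1,\ldots,d_{l-1}\}$. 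Now split into three cases on $\gamma^{(l)}_i$: (i) if $\gamma^{(l)}_i = 1$, set $\gamma^{\prime(l)}_i = 1$ and $b^{\prime(l)}_i = \tilde{b}_i$ so that the BTN neuron outputs the ordinary activation, which matches $g^{(l)}_i$; (ii) if $\gamma^{(l)}_i = 0$, set $\gamma^{\prime(l)}_i = 0$ and any non-positive bias (e.g.\ $0$) so the neuron outputs $0$; (iii) if $\gamma^{(l)}_i = -1$, we need BTN output $1 - \ind{(\mathbf{W}^{(l)} h^{(l-1)})_i + \tilde{b}_i > 0}$.

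The only subtle point is case (iii). Since $(\mathbf{W}^{(l)} h^{(l-1)})_i + \tilde{b}_i$ is an integer (as a sum of $\{0,1\}$-weighted $\{0,1\}$-values plus an integer bias), we have
\[
1 - \ind{(\mathbf{W}^{(l)} h^{(l-1)})_i + \tilde{b}_i > 0} = \ind{(-1)\cdot(\mathbf{W}^{(l)} h^{(l-1)})_i + (-\tilde{b}_i + 1) > 0},
\]
so setting $\gamma^{\prime(l)}_i = -1$ and $b^{\prime(l)}_i = -\tilde{b}_i + 1 = -b^{(l)}_i + k^{(l-1)}_i + 1 \in \{-d_{l-1}+1,\ldots,2d_{l-1}\}$ realizes the desired output in BTN form. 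Combining the three cases, every $b^{\prime(l)}_i$ lies in $\{-2d_{l-1}+1,\ldots,2d_{l-1}\}\subseteq \calQ^{2b}_l$, and the invariant $h^{(l)} = g^{(l)} + s^{(l)}$ is preserved.

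The main obstacle I expect is precisely the strict-vs.-non-strict inequality conversion needed for the $\gamma^{(l)}_i = -1$ case; this is where integrality of pre-activations is essential and where the $+1$ shift forces the bias range to roughly double, accounting for the relaxed range $\calQ^{2b}_l$ appearing in the statement. The base case $l = 0$ is immediate since $h^{(0)} = g^{(0)} = \mathbf{x}$ and $s^{(0)} = 0$ (so $k^{(0)}_i = 0$), and iterating up to $l = L$ yields $h_{\btheta'} = h^{(L)} = g^{(L)} + s^{(L)} = g_\btheta + s$ with $s = s^{(L)}$ as required.
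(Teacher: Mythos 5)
Your proposal is correct and follows essentially the same inductive construction as the paper's proof: the paper tracks the offset $a$ with $A(\mathbf{x}) = A'(\mathbf{x}) - a$ (which is exactly your $s^{(l-1)} = \ind{\gamma^{(l-1)} = -1}$), absorbs the resulting shift $\sum_j w_{ij}a_j = k^{(l-1)}_i$ into the bias, and handles the three cases $\gamma_i \in \{1,0,-1\}$ with the same sign-and-bias flip for $\gamma_i = -1$ using integrality of the pre-activation. Your presentation is a bit more explicit about the invariant $h^{(l)} = g^{(l)} + s^{(l)}$ and the bias-range arithmetic, but the argument is the same.
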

\begin{proof}
We prove the lemma by induction on depth.
As we will see, the base case is a particular case of the step of the induction, so we start with the latter.
Let $l=1,\dots,L$. For ease of notation, we denote $C=g^{\left(l\right)}$
and $A=g^{\left(l-1\right)}$, as well as $C^{\prime}=h^{\left(l\right)}$,
$A^{\prime}=h^{\left(l-1\right)}$. In addition, we omit the superscripts
from the $l^{th}$ layer's parameters. Let $i=1,\dots,d_{l}$, then
by the induction hypothesis there exists some $a \in \cb{0, 1}^{d_{l-1}}$ such that $A\left(\mathbf{x}\right)=A^{\prime}\left(\mathbf{x}\right)-a$,
for all inputs $\mathbf{x}$,
\begin{align*}
C\left(\mathbf{x}\right)_{i} & =\gamma_{i}\cdot\ind{b_{i}+\sum_{j=1}^{d_{l-1}}w_{ij}A\left(\mathbf{x}\right)_{j}>0} =\gamma_{i}\cdot\ind{b_{i}+\sum_{j=1}^{d_{l-1}}w_{ij}\left(A^{\prime}\left(\mathbf{x}\right)_{j}-a_{j}\right)>0}\\
 & =\gamma_{i}\cdot\ind{\left(b_{i}-\sum_{j=1}^{d_{l-1}}w_{ij}a_{j}\right)+\sum_{j=1}^{d_{l-1}}w_{ij}A^{\prime}\left(\mathbf{x}\right)_{j}>0}\,.
\end{align*}
If $\gamma_{i}=+1$, choose $\gamma_{i}^{\prime}=+1$, and $b_{i}^{\prime}=b_{i}-\sum_{j=1}^{d_{l-1}}w_{ij}a_{j}$.
Clearly, since $w_{ij},a_{j}\in\left\{ 0,1\right\} $, it holds that
$\left|\sum_{j=1}^{d_{l-1}}w_{ij}a_{j}\right|\le d_{l-1}$ so $b_{i}^{\prime}\in{\cal Q}_{l}^{2b}$.
Then 
\[
C\left(\mathbf{x}\right)_{i}=\ind{b_{i}^{\prime}+\gamma_{i}^{\prime}\cdot\sum_{j=1}^{d_{l-1}}w_{ij}A^{\prime}\left(\mathbf{x}\right)_{j}>0}=C^{\prime}\left(\mathbf{x}\right)_{i}
\]
\ie the claim holds with $s_{i}=0$. If $\gamma_{i}=-1$ then 
\begin{align*}
C\left(\mathbf{x}\right)_{i} & =\gamma_{i}\cdot\ind{\left(b_{i}-\sum_{j=1}^{d_{l-1}}w_{ij}a_{j}\right)+\sum_{j=1}^{d_{l-1}}w_{ij}A^{\prime}\left(\mathbf{x}\right)_{j}>0}\\
 & =-\ind{\left(b_{i}-\sum_{j=1}^{d_{l-1}}w_{ij}a_{j}\right)+\sum_{j=1}^{d_{l-1}}w_{ij}A^{\prime}\left(\mathbf{x}\right)_{j}>0}\\
 & =-1+\ind{\left(b_{i}-\sum_{j=1}^{d_{l-1}}w_{ij}a_{j}\right)+\sum_{j=1}^{d_{l-1}}w_{ij}A^{\prime}\left(\mathbf{x}\right)_{j}\le0}\\
 & =-1+\ind{-\left(b_{i}-\sum_{j=1}^{d_{l-1}}w_{ij}a_{j}\right)-\sum_{j=1}^{d_{l-1}}w_{ij}A^{\prime}\left(\mathbf{x}\right)_{j}\ge0}\\
 & =-1+\ind{1-\left(b_{i}-\sum_{j=1}^{d_{l-1}}w_{ij}a_{j}\right)-\sum_{j=1}^{d_{l-1}}w_{ij}A^{\prime}\left(\mathbf{x}\right)_{j}>0}\,.
\end{align*}
Thus, we can construct the $l^{th}$ layer of $h$ by choosing $\gamma_{i}^{\prime}=-1$
and $b_{i}^{\prime}=1-\left(b_{i}-\sum_{j=1}^{d_{l-1}}w_{ij}a_{j}\right)$
so 
\begin{align*}
C\left(\mathbf{x}\right)_{i} & =-1+\ind{1-\left(b_{i}-\sum_{j=1}^{d_{l-1}}w_{ij}a_{j}\right)-\sum_{j=1}^{d_{l-1}}w_{ij}A^{\prime}\left(\mathbf{x}\right)_{j}>0}\\
 & =-1+\ind{b_{i}^{\prime}+\gamma_{i}^{\prime}\sum_{j=1}^{d_{l-1}}w_{ij}A^{\prime}\left(\mathbf{x}\right)_{j}>0}\\
 & =C^{\prime}\left(\mathbf{x}\right)_{i}-s_{i}
\end{align*}
with $s_{i}=1$. Finally, if $\gamma_{i}=0$, then $C_{i}$ is identically
$0$, so we can choose $\gamma_{i}^{\prime}=b_{i}^{\prime}=s_{i}=0$.
Notice that this construction also proves the base case $l=1$ where $a=\mathbf{0}$. 
\end{proof} 

\begin{corollary}
Let $\Theta^\prime$ be the set of oBTN parameters such that for all $\btheta \in \Theta^\prime$, $g_{\btheta} : \cb{0, 1}^{d_0} \to \cb{0, 1}^{d_L}$.
Then there exists a BTN, $h$ as in \lemref{app-lem: obtn to btn} such that $g_{\btheta} \equiv h$. 
\end{corollary}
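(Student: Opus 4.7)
The plan is to reduce the corollary to an immediate application of \lemref{app-lem: obtn to btn} by first ``cleaning up'' the final layer of $g_{\btheta}$. Concretely, I would argue: since $g_{\btheta}$ maps into $\cb{0,1}^{d_L}$, any index $i$ with $\gamma^{(L)}_i = -1$ is effectively wasted, because in that case $g^{(L)}\rb{\mathbf{x}}_i = -\ind{\cdot} \in \cb{-1, 0}$, and the requirement $g_{\btheta}\rb{\mathbf{x}} \in \cb{0,1}^{d_L}$ forces this indicator to be $0$ for every $\mathbf{x}$. Hence the $i$-th output of the oBTN is identically $0$, and we may safely replace $\gamma^{(L)}_i = -1$ by $\gamma^{(L)}_i = 0$ (say) without changing the function computed.

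Call the resulting modified parameters $\btheta''$. By construction $g_{\btheta''} \equiv g_{\btheta}$ pointwise, and $\gamma^{(L)}_i \in \cb{0, 1}$ for all $i \in \bb{d_L}$. Now apply \lemref{app-lem: obtn to btn} to $g_{\btheta''}$: we obtain a BTN $h = h_{\btheta'}$ of the same widths, with $\mathbf{b}^{\prime(l)} \in \calQ_l^{2b}$, such that $h \equiv g_{\btheta''} + s$ where $s_i = 1$ only if $\gamma^{(L)}_i = -1$ in $g_{\btheta''}$. But by our modification no such $i$ exists, so $s = \mathbf{0}$, giving $h \equiv g_{\btheta''} \equiv g_{\btheta}$, which is what we want.

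There is essentially no obstacle here beyond the short observation that the $\gamma^{(L)}_i = -1$ case is vacuous under the output constraint; the construction of $h$ itself is already handled by \lemref{app-lem: obtn to btn}. The only minor sanity check I would include is that the layer-by-layer $\mathbf{b}^{\prime(l)}$ remains in $\calQ_l^{2b}$ after the modification of $\bgamma^{(L)}$ — but since only the final-layer scaling is altered and the biases $\mathbf{b}^{\prime(l)}$ are produced by the recursion of \lemref{app-lem: obtn to btn} from the (unchanged) weights and biases, this is automatic.
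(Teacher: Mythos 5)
Your proposal is correct and follows essentially the same route as the paper: observe that the range constraint $g_{\btheta}(\vect{x}) \in \cb{0,1}^{d_L}$ forces the indicator to be identically zero at every output coordinate with $\gamma^{(L)}_i = -1$, so one may switch those entries to $0$ (equivalently, choose $\bgamma^{\prime(L)} \ge 0$) without changing the function, and then invoke \lemref{app-lem: obtn to btn} to get $s = \mathbf{0}$. Your version merely makes the ``why $\gamma^{(L)}_i = -1$ is vacuous'' step a little more explicit than the paper's one-line justification.
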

\begin{proof}
    Let $\btheta \in \Theta^\prime$. 
    Since $g_{\btheta} \rb{\mathbf{x}} \neq -1$ for all $\mathbf{x}$, there exist parameters $\btheta^{\prime} \in \Theta^\prime$ such that $g_{\btheta^{\prime}} \equiv g_{\btheta}$, and $\bgamma^{\prime (L)} \ge 0$.
    Hence, by \lemref{app-lem: obtn to btn} there exists a BTN $h$ such that $h \equiv g_{\btheta^\prime}$, \ie with $s = \mathbf{0}$.
\end{proof}

\begin{remark}
    Similar results can be shown in the other direction.
    That is, that BTNs can be represented as slightly larger oBTNs.
\end{remark}

Finally, recall from \appref{app-sec: gen res proofs}, that the cardinality of the hypothesis class is related to the error terms of \thmref{thm: minsize interpolator bound} and \thmref{thm: posterior sampling bound} only logarithmically, meaning that we can apply the results to \defref{def:out-ftn} without qualitatively changing them.


\end{document}